\documentclass{article}
\PassOptionsToPackage{numbers, compress}{natbib}
\usepackage[preprint]{neurips_2026}
\usepackage[utf8]{inputenc}
\usepackage[T1]{fontenc}
\usepackage{hyperref}
\usepackage{url}
\usepackage{booktabs}
\usepackage{amsfonts}
\usepackage{nicefrac}
\usepackage{microtype}
\usepackage{xcolor}
\usepackage{titletoc}
\usepackage{amsmath,mathtools,bm}
\usepackage{amsthm}
\usepackage[T1]{fontenc}
\usepackage{microtype}
\usepackage{enumitem}
\usepackage{booktabs,tabularx,array, makecell}
\usepackage{changepage}
\newcolumntype{P}[1]{>{\raggedright\arraybackslash}p{#1}}
\newcolumntype{Y}{>{\raggedright\arraybackslash}X}
\hypersetup{colorlinks=true, linkcolor=blue!50!black, urlcolor=blue!50!black, citecolor=blue!50!black}
\newtheorem{theorem}{Theorem}
\newtheorem{proposition}[theorem]{Proposition}
\newtheorem{lemma}[theorem]{Lemma}
\newtheorem{corollary}[theorem]{Corollary}
\newtheorem{remark}[theorem]{Remark}

\newcommand{\X}{\mathcal{S}}
\newcommand{\A}{\mathcal{A}}

\newcommand{\R}{\mathbb{R}}
\newcommand{\E}{\mathbb{E}}

\newcommand{\norm}[1]{\left\lVert #1 \right\rVert}

\title{A Finite-Iteration Theory for Asynchronous Categorical Distributional Temporal-Difference Learning}

\author{Ege C.~Kaya, Abolfazl~Hashemi  \\
  Elmore Family School of Electrical and Computer Engineering\\
  Purdue University\\
  West Lafayette, IN 47906 \\
  \texttt{kayae@purdue.edu, abolfazl@purdue.edu}}

\begin{document}
\maketitle

\begin{abstract}
We study finite-iteration behavior of the exact asynchronous recursions used by categorical distributional temporal-difference methods. The analysis covers scalar categorical TD in the Cram\'er geometry and multivariate signed-categorical TD in the maximum mean discrepancy geometry. Existing statewise isometric embeddings turn both methods into single-state stochastic-approximation recursions that contract in a block-supremum norm, but the categorical operators are contractive only on invariant representation domains. We establish the required restricted-domain theory and obtain discounted bounds under i.i.d.\ sampling and under a Markovian trajectory. A Poisson-equation decomposition handles trajectory dependence without an explicit mixing-time window. For undiscounted fixed-horizon policy evaluation, we establish analogous finite-iteration guarantees for horizon-stacked categorical methods under episodic sampling. Together, these results provide a unified non-asymptotic analysis of asynchronous categorical distributional TD across scalar, multivariate, discounted, and fixed-horizon settings.
\end{abstract}

\section{Introduction}
Categorical representations are among the central approximation families in distributional reinforcement learning (RL) \citep{bellemare2017distributional,rowland2018analysis,dabney2018distributional,dabney2018implicit,yang2019fully,lyle2019comparative,rowland2019statistics,bellemare2023distributional}. In the scalar case they underlie the categorical methods initiated by C51 \citep{bellemare2017distributional}, while in the multivariate case they support signed-measure constructions for vector-valued returns \citep{wiltzer2024foundations}. These methods replace infinite-dimensional return-distribution objects with finite-dimensional ones while preserving the geometry that governs projected distributional policy evaluation \citep{howard1972risk,jaquette1973markov,jaquette1976utility,sobel1982variance,morimura2010nonparametric}. For these categorical methods, the asymptotic picture is by now well understood \citep{gurvits1994incremental, dayan1994td, tsitsiklis1994asynchronous, jaakkola1993convergence, bertsekas1996neuro, littman1996generalized, rowland2018analysis,bellemare2023distributional}. The scalar categorical temporal-difference (CTD) method admits a projected Bellman contraction in the Cram\'er metric together with asymptotic guarantees \citep{rowland2018analysis,bellemare2023distributional}. The multivariate signed-categorical temporal-difference (MTD) method admits the analogous maximum mean discrepancy (MMD)-based contraction and theory \citep{wiltzer2024foundations}. What remains less well understood is the finite-iteration behavior of these TD methods.

Standard TD learning is usually not a synchronous full-state procedure \citep{sutton1988learning,sutton1998reinforcement,tsitsiklis1996analysis,borkar1998asynchronous}. In practice, each update modifies only the sampled state, and in online RL, the sampled states are generated by a Markovian trajectory. Finite-iteration guarantees for asynchronous, trajectory-driven updates are therefore the natural benchmark for understanding how quickly CTD and MTD approach their projected fixed point in regimes relevant to practice. Several recent results address neighboring finite-iteration questions \citep{qu2020finite,bhandari2018finite,srikant2019finite,patil2023finite,dalal2018finite,peng2024statistical,boeck2022speedy, zhang2023estimation, rowland2024near, peng2025finite}. In particular, the analysis of \citet{peng2025statisticalefficiencydistributionaltemporal} covers one-state CTD with independent samples and Markovian variants that use data dropping or variance reduction. What remains open in that line is a finite-iteration analysis of the unmodified every-transition Markovian recursion, as well as corresponding guarantees for signed-categorical MTD and undiscounted fixed-horizon learning.

The present paper is organized in two parts. The discounted part sets off from the contractive stochastic approximation (SA) tools of \citet{chen2020finite,chen2022finite,chen2024lyapunov} and makes them valid on the invariant categorical domains where the CTD and MTD estimates live. For Markovian trajectory sampling, we combine the smoothed block-supremum potential with a Poisson equation rather than a mixing-time comparison. The fixed-horizon part requires an entirely different argument, involving the freezing of each episode at its boundary, aggregating phase-dependent drifts through cumulative occupancy, and controlling the error created by the online within-episode updates.

\noindent\textbf{Sampling regimes.} The i.i.d.\ model is representative of a replay-based or generative-model benchmark, where updates are formed from independent samples drawn from a buffer or simulator. Markovian sampling follows one online behavior trajectory, standard for online TD learning. In the finite-horizon undiscounted case, the episodic regime is the standard reset-based formulation, where interaction proceeds for a fixed horizon $H$ and then restarts from an initial distribution.

\noindent\textbf{Contributions.}
\begin{enumerate}[leftmargin=*,labelsep=0.2em, align=left, itemsep=2pt]
    \item We prove finite-iteration bounds for the exact discounted asynchronous CTD and MTD recursions. The i.i.d.\ result is a direct specialization of contractive SA once the categorical domain issue is resolved. The Markovian result uses a Poisson equation decomposition to retain the iteration-rate exponents obtained under i.i.d.\ sampling, without an explicit logarithmic mixing-window loss.
    \item For fixed-horizon policy evaluation, we establish finite-iteration guarantees under episodic sampling and quantify the effects of state coverage and horizon length on convergence.
    \item We isolate the method-specific perturbation geometry. CTD has a uniform support-radius bound, whereas signed-categorical MTD has affine growth. The resulting constants expose their dependence on coverage, contraction gaps, trajectory dependence, and support geometry.
\end{enumerate}

\section{Related work}
Our finite-iteration arguments build on existing contractive SA theory. \citet{chen2020finite,chen2022finite,chen2024lyapunov} develop smooth-envelope analyses for arbitrary contraction norms under i.i.d.\ and Markovian noise, extending the broader SA and TD literature \citep{robbins1951stochastic,ljung2003analysis,borkar2008stochastic,kushner2003stochastic,bhandari2018finite,qu2020finite}. For the discounted Markovian result, we use the bounded-state Poisson-equation theorem of \citet{haque2024stochastic}, which avoids the explicit mixing window appearing in earlier Markovian SA bounds. Our additional work verifies these tools on invariant categorical domains and derives the method-specific perturbation constants.

The distributional ingredients also come from established foundations. \citet{rowland2018analysis,bellemare2023distributional} provide the Cram\'er geometry, categorical projection, and asymptotic CTD theory. \citet{wiltzer2024foundations} develop the signed-measure MMD geometry and almost-sure asynchronous MTD convergence. \citet{de2020fixed} motivate the horizon-indexed stack and online full-stack update in the fixed-horizon setting, but do not provide our episode-boundary distributional rates. 

Non-asymptotic distributional results answer neighboring questions under different sampling access or estimators. The analysis of \citet{peng2024statistical} gives high-probability rates for synchronous distributional TD and CTD with a generative model. The expanded analysis in \citep{peng2025statisticalefficiencydistributionaltemporal} also treats independent one-state sampling and Markovian data, using data dropping or a variance-reduced multi-epoch algorithm in the Markovian case. \citet{boeck2022speedy} analyze an accelerated categorical method, while \citet{zhang2023estimation,rowland2024near} study model-based and direct estimators. \citet{wu2023distributional} consider offline evaluation, \citet{peng2025finite} treat linear approximation, and \citet{kastner2025categorical} study KL-based categorical dynamics asymptotically. Our i.i.d.\ result is a contractive-SA analysis of the one-state categorical recursion rather than a competing minimax result. The distinct discounted contribution is the finite-iteration analysis of unmodified CTD and MTD along every transition of one Markovian trajectory. The fixed-horizon guarantees address a separate episodic setting.

\section{Discounted categorical policy evaluation}\label{sec:discounted}
We consider a discounted Markov decision process \citep{sutton1998reinforcement} $(\X,\A,P,R,\gamma)$ with finite state and action spaces $\X$, $\A$, reward function $R(s,a)$, transition kernel $P(\cdot\mid s,a)$, and discount factor $\gamma\in(0,1)$. We assume that $R:\X \times \A \to [0, 1]$ in the scalar, and $R:\X \times \A \to[0, 1]^q, q\ge2$ in the multivariate case. A policy $\pi$ is fixed throughout. At iteration $k$, let $S_k$ denote the queried state, $A_k\sim\pi(\cdot\mid S_k)$, write $R_k:=R(S_k,A_k)$, and let $S'_k\sim P(\cdot\mid S_k,A_k)$ denote its sampled successor. In the i.i.d.\ regime, the tuples $(S_k,A_k,R_k,S'_k)_{k\ge0}$ are i.i.d.\ with query-state marginal $S_k\sim\rho$, and $S'_k$ is distinct from the next queried state $S_{k+1}$. In the Markovian regime, $(S_k)_{k\ge0}$ is an irreducible state trajectory with transition matrix $Q$, stationary distribution $\mu_{\X}$, and $S'_k=S_{k+1}$. Irreducibility on the finite state space guarantees a unique stationary distribution with full support. Periodicity is allowed. Temporal dependence enters our bounds through the Poisson condition number $A_Q$ defined in Appendix~\ref{app:common}.

For both CTD and MTD, we work with a block-supremum contraction metric $\ell_\infty$, which admits a statewise isometric embedding $I$ to a product space with contraction norm $\norm{\cdot}_{2,\infty}$, and a statewise projection $\Pi^\Theta$ onto the space of $\Theta$-supported, state-indexed representations $\mathcal F^\X_\Theta$. We can then compose the distributional Bellman operator $T^\pi$ to obtain the embedded, projected operator
\begin{equation}
\mathcal O := I \circ \Pi^\Theta T^\pi \circ I^{-1}.
\end{equation}
This operator admits a one-step sampled Bellman target $\widehat T(U_k;s,(R_k, S'_k))$ computed from a current estimate $U_k$ and a random sample $(R_k, S'_k)$, satisfying
\begin{equation}
\E \left[ \widehat T(U_k; S_k, (R_k, S'_k)) \mid U_k, S_k =s\right] = (\mathcal O U_k)(s),
\end{equation}
and a recursion that takes the form
\begin{equation}\label{eq:recursion}
U_{k+1} = U_k + \alpha_k P_{S_k} (\widehat T(U_k; S_k, (R_k, S'_k)) - U_k(S_k)),
\end{equation}
where for each $s \in \X$, $P_s$ denotes the coordinate projector onto block $s$.

Our discounted analysis expresses each categorical update as an asynchronous contractive SA step. The proofs use contraction of the stationary averaged operator in $\lVert \cdot \rVert_{2,\infty}$, a Moreau-envelope smoothing of $\lVert \cdot \rVert_{2,\infty}$ based on $\lVert \cdot \rVert_{2,p}$ \citep{borkar2008stochastic,kushner2003stochastic,beck2017first,moreau1965proximite,bauschke2017convex}, and a uniform or affine perturbation bound depending on the method. For Markovian trajectory sampling, a Poisson equation separates the stationary drift from the temporal dependence without introducing a mixing-time window \citep{haque2024stochastic}. More precisely, with $p^\star := \max\{2,\lceil \log \lvert \X \rvert\rceil\}$, the smoothing argument introduces a parameter $\vartheta>0$ through the generalized Moreau envelope $M_{\vartheta,p^\star}$ of the squared block-supremum distance. More details about these common ingredients are deferred to Appendix~\ref{app:common}, while the formal verification of the discounted finite-iteration results for CTD and MTD is deferred to Appendices~\ref{app:ctd} and~\ref{app:mtd}.

\subsection{Discounted CTD}\label{sec:ctd-new2}
For each state $s\in\X$, fix an ordered support $\Theta(s)=\{\theta_1(s)<\cdots<\theta_d(s)\}\subset\mathbb R$. $\mathcal F^{\X}_{\mathrm C,\Theta}$ is the class of state-indexed categorical laws supported on these statewise grids. The contraction metric $\ell_{\mathrm C, \infty}$ is the supremum Cram\'er metric, the embedding $I_{\mathrm C}$ is the standard cumulative-mass isometry $I_{\mathrm C, s}$ \citep{rowland2018analysis, bellemare2023distributional} applied to all states, and the statewise projection $\Pi_{\mathrm C}^{\Theta}$ is the usual linear-interpolation categorical projection $\Pi_{\mathrm C}^{\Theta(s)}$ applied to all states.

Given a sampled transition $(S_k,A_k,R_k,S'_k)$, the sampled Bellman target is
\begin{equation}
\widehat T_{\mathrm C}(U_k;S_k,(R_k,S'_k)) :=I_{\mathrm C,S_k}\Bigl(\Pi_{\mathrm C}^{\Theta(S_k)}\bigl((f_{R_k,\gamma})_\# I_{\mathrm C, S'_k}^{-1}(U_k(S'_k))\bigr)\Bigr),
\end{equation}
where $f_{r,\gamma}(z)=r+\gamma z$ and $\#$ is the measure-pushforward operator. The next theorem states the discounted CTD convergence rates for various step size regimes. Appendix~\ref{app:ctd} verifies the contraction, Lipschitz continuity and perturbation ingredients and records the explicit constants.

\begin{theorem}[Discounted CTD]\label{thm:ctd-main2}
Let $\eta^\star \in \mathcal F_{\mathrm C,\Theta}^{\X}$ denote the unique fixed point of $\Pi_{\mathrm C}^{\Theta} T^\pi$. From any $\eta_0\in\mathcal F_{\mathrm C,\Theta}^{\X}$, let $U_0=I_{\mathrm C}(\eta_0)$ and let $\eta_k := I_{\mathrm C}^{-1}(U_k)$ be generated by the asynchronous recursion \eqref{eq:recursion} with $\widehat T = \widehat T_{\mathrm C}$.

\textup{(i) i.i.d.\ sampling.} Suppose the sampled tuples $(S_k,A_k,R_k,S'_k)_{k\ge 0}$ are i.i.d.\ with query-state marginal $\rho$ on $\X$, and $\min_{s \in \X}\rho(s) > 0$. There exist explicit constants $C^{\mathrm{iid}}_\mathrm C, c^{\mathrm{iid}}_\mathrm C, \bar \alpha^{\mathrm{iid}}_\mathrm C >0$ and explicit thresholds $h^{\mathrm{iid}}_\mathrm C(\alpha)$ and $h^{\mathrm{iid}}_\mathrm C(\alpha, z)$ for which the following results hold on three step size regimes.

\textup{Constant step size.} If $\alpha_k \equiv \alpha \le \bar \alpha ^{\mathrm{iid}}_\mathrm C$, the iterates converge geometrically to an $O(\alpha)$ neighborhood:
\begin{equation}
\E\bigl[\ell_{\mathrm C, \infty}(\eta_k, \eta^\star)^2 \bigr] \le C^{\mathrm{iid}}_\mathrm C \ell_{\mathrm C, \infty}(\eta_0, \eta^\star)^2(1- c^{\mathrm{iid}}_\mathrm C\alpha)^k + C^{\mathrm{iid}}_\mathrm C \alpha.
\end{equation}
\textup{Linearly-diminishing step size.} If $\alpha_k = \alpha / (k+h)$, $\alpha > 1/c^{\mathrm{iid}}_\mathrm C$, and $h \ge h^{\mathrm{iid}}_\mathrm C(\alpha)$, then the leading residual term decays in $O(1/k)$:
\begin{equation}
\E\bigl[\ell_{\mathrm C, \infty}(\eta_k, \eta^\star)^2 \bigr] \le C^{\mathrm{iid}}_\mathrm C \ell_{\mathrm C, \infty}(\eta_0, \eta^\star)^2\left(\frac{h}{k+h} \right)^{c^{\mathrm{iid}}_\mathrm C \alpha} + \frac{C^{\mathrm{iid}}_\mathrm C \alpha^2}{c^{\mathrm{iid}}_\mathrm C\alpha -1} \cdot \frac{1}{k+h}.
\end{equation}
\textup{Polynomially-diminishing step size.} If $\alpha_k = \alpha/(k+h)^z$, $z \in (0,1)$, and $h \ge h^{\mathrm{iid}}_\mathrm C(\alpha, z)$, then the leading residual term decays in $O(1/k^z)$:
\begin{equation}
\E\bigl[\ell_{\mathrm C, \infty}(\eta_k, \eta^\star)^2 \bigr] \le C^{\mathrm{iid}}_\mathrm C \ell_{\mathrm C, \infty}(\eta_0, \eta^\star)^2\exp\left(-\frac{c^{\mathrm{iid}}_\mathrm C \alpha}{1-z} \bigl((k+h)^{1-z} - h^{1-z} \bigr) \right) + \frac{C^{\mathrm{iid}}_\mathrm C \alpha}{(k+h)^z}.
\end{equation}

\textup{(ii) Markovian trajectory sampling.} Suppose $(S_k)_{k\ge 0}$ is an irreducible chain with stationary distribution $\mu_\X$. There exist explicit constants $C^{\mathrm{mk}}_\mathrm C, c^{\mathrm{mk}}_\mathrm C, \bar \alpha^{\mathrm{mk}}_\mathrm C >0$ and explicit thresholds $h^{\mathrm{mk}}_\mathrm C(\alpha)$ and $h^{\mathrm{mk}}_\mathrm C(\alpha ,z)$ for which the following results hold for every $k\ge1$.

\textup{Constant step size.} If $\alpha_k \equiv \alpha$ with $0<\alpha \le \bar \alpha ^{\mathrm{mk}}_\mathrm C$, then the iterates converge geometrically to an $O(\alpha)$ neighborhood:
\begin{equation}
\E\bigl[\ell_{\mathrm C, \infty}(\eta_k, \eta^\star)^2 \bigr] \le C^{\mathrm{mk}}_\mathrm C \bigl(1+\ell_{\mathrm C, \infty}(\eta_0, \eta^\star)^2\bigr)\exp\left(-c^{\mathrm{mk}}_\mathrm C\alpha k\right) + C^{\mathrm{mk}}_\mathrm C \alpha.
\end{equation}
\textup{Linearly-diminishing step size.} If $\alpha_k = \alpha / (k+h)$, $\alpha >1 /c^{\mathrm{mk}}_\mathrm C$, and $h \ge h^{\mathrm{mk}}_\mathrm C(\alpha)$, then the leading residual term decays in $O(1/k)$:
\begin{equation}
\E\bigl[\ell_{\mathrm C, \infty}(\eta_k, \eta^\star)^2 \bigr] \le C^{\mathrm{mk}}_\mathrm C \bigl(1+\ell_{\mathrm C, \infty}(\eta_0, \eta^\star)^2\bigr)\left(\frac{h}{k+h}\right)^{c^{\mathrm{mk}}_\mathrm C \alpha} + \frac{C^{\mathrm{mk}}_\mathrm C \alpha^2}{c^{\mathrm{mk}}_\mathrm C \alpha -1} \cdot \frac{1}{k+h}.
\end{equation}
\textup{Polynomially-diminishing step size.} If $\alpha_k = \alpha / (k+h)^z$, $\alpha>0$, $z \in (0, 1)$, and $h \ge h^{\mathrm{mk}}_\mathrm C(\alpha, z)$, define $\chi_{\mathrm C}:=z/\alpha+2c^{\mathrm{mk}}_\mathrm C$. Then the leading residual term decays in $O(1/k^z)$:
\begin{equation}
\resizebox{\linewidth}{!}{$\displaystyle \E\bigl[\ell_{\mathrm C, \infty}(\eta_k, \eta^\star)^2 \bigr] \le C^{\mathrm{mk}}_\mathrm C \bigl(1+\ell_{\mathrm C, \infty}(\eta_0, \eta^\star)^2\bigr)\exp\left(-\frac{c^{\mathrm{mk}}_\mathrm C \alpha}{1-z} \bigl((k+h)^{1-z} - h^{1-z} \bigr) \right) + \frac{C^{\mathrm{mk}}_\mathrm C \alpha(1+\chi_{\mathrm C})}{(k+h)^z}.$}
\end{equation}
\end{theorem}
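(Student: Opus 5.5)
The plan is to recast the asynchronous CTD recursion in the embedded coordinates as a contractive stochastic-approximation iteration $U_{k+1}=U_k+\alpha_k(F(U_k,Y_k)-U_k)$, where $Y_k=(S_k,A_k,R_k,S'_k)$ and $F(U,y)=U+P_s(\widehat T_{\mathrm C}(U;s,(r,s'))-U(s))$. Then I would invoke the smoothed-envelope theorems of \citet{chen2020finite,chen2024lyapunov} for i.i.d.\ noise and the Poisson-equation theorem of \citet{haque2024stochastic} for Markovian noise. Because $I_{\mathrm C}$ is a statewise isometry, $\ell_{\mathrm C,\infty}(\eta_k,\eta^\star)=\norm{U_k-U^\star}_{2,\infty}$ with $U^\star=I_{\mathrm C}(\eta^\star)$, so bounds on $\E\norm{U_k-U^\star}_{2,\infty}^2$ transfer verbatim to the statement.

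\textbf{Step 1 (invariant domain).} Let $\mathcal D:=I_{\mathrm C}(\mathcal F^{\X}_{\mathrm C,\Theta})$. Blockwise, this is the convex set of nondecreasing cumulative-mass vectors in $[0,1]^d$ whose last coordinate equals $1$. $\widehat T_{\mathrm C}(U;\cdot)$ lies in $I_{\mathrm C,s}(\mathcal P(\Theta(s)))$, and each update is a convex combination in block $S_k$ whenever $\alpha_k\le 1$. Hence $U_k\in\mathcal D$ almost surely. On $\mathcal D$, the sampled target has bounded increments uniformly: $\norm{\widehat T_{\mathrm C}(U;s,\cdot)-U(s)}_2$ is bounded by a support-radius constant depending on $\max_s(\theta_d(s)-\theta_1(s))$. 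This is the uniform perturbation bound, and it gives noise with second moment $\le \sigma^2$ independent of $\norm{U-U^\star}$.

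\textbf{Step 2 (averaged operator).} The averaged map under $\rho$ (resp.\ $\mu_\X$) is $\bar F(U)=U+D(\mathcal O U-U)$, with $D=\mathrm{diag}(\rho(s))$ acting blockwise. The restriction of $\mathcal O$ to $\mathcal D$ is a $\sqrt\gamma$-contraction in $\norm{\cdot}_{2,\infty}$, since the Cramér contraction of $\Pi_{\mathrm C}^\Theta T^\pi$ carries over through the isometry. It follows that $\norm{\bar F(U)-U^\star}_{2,\infty}\le(1-\rho_{\min}(1-\sqrt\gamma))\norm{U-U^\star}_{2,\infty}$ on $\mathcal D$.

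\textbf{Step 3 (restricted-domain SA).} Here I redo the cited arguments on $\mathcal D$. The Moreau envelope $M_{\vartheta,p^\star}$ of $\tfrac12\norm{\cdot}_{2,\infty}^2$ is $L$-smooth with respect to $\norm{\cdot}_{2,p^\star}$, and it is sandwiched between multiples of $\norm{\cdot}_{2,\infty}^2$ with factors $|\X|^{2/p^\star}\le e^2$. A one-step expansion, combined with the contraction of Step 2 and the noise bound of Step 1, gives
\[
\E[M(U_{k+1}-U^\star)]\le(1-c\alpha_k)\E[M(U_k-U^\star)]+C\alpha_k^2 ,
\]
valid because every iterate stays in $\mathcal D$, where both the contraction and the perturbation bounds hold. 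Solving this recursion for constant, $\alpha/(k+h)$, and $\alpha/(k+h)^z$ steps yields the three i.i.d.\ bounds. The thresholds $h^{\mathrm{iid}}_{\mathrm C}$ ensure $\alpha_k\le\bar\alpha$ and the usual ratio conditions.

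\textbf{Step 4 (Markovian).} This is the main obstacle. The approach is to write $F(U,Y_k)-\bar F(U)=\hat g(U,Y_k)-(Q\hat g)(U,Y_{k})$ plus a telescoping term, where $\hat g$ solves the Poisson equation for the finite irreducible chain. The solution satisfies $\norm{\hat g(U,\cdot)}\le A_Q\,\sup_y\norm{F(U,y)-\bar F(U)}$, and $\hat g$ is Lipschitz in $U$ because $\widehat T_{\mathrm C}$ is nonexpansive. The cross term $\langle\nabla M(U_k-U^\star),F(U_k,Y_k)-\bar F(U_k)\rangle$ then splits into three parts: a martingale difference, a telescoping sum, and a remainder of order $\alpha_k$ coming from the change $U_{k+1}-U_k$ and $\alpha_{k+1}-\alpha_k$. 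The remainder contributes the extra $\alpha_k^2$ and $\alpha_k^{1+z}$ terms, which is where the factor $\chi_{\mathrm C}=z/\alpha+2c$ enters. The boundary terms of the telescoping sum produce the $(1+\ell_{\mathrm C,\infty}(\eta_0,\eta^\star)^2)$ prefactor. The delicate point is checking that all of these bounds hold uniformly on the bounded domain $\mathcal D$ even though the time-inhomogeneous steps change $U$. I would settle this by using the boundedness of $\mathcal D$ to absorb the $\norm{U}$-dependence into constants, and then apply \citet{haque2024stochastic} directly with $\mathcal D$ as its bounded state set.
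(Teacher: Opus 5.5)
Your proposal follows essentially the same route as the paper: restriction to the invariant domain $I_{\mathrm C}(\mathcal F^{\X}_{\mathrm C,\Theta})$, the $\sqrt{\gamma}$ block-supremum contraction giving the averaged factor $1-\rho_{\min}(1-\sqrt{\gamma})$ (resp.\ $\mu_{\min}$), the $p^\star$-Moreau envelope, and the uniform support-radius noise bound, followed by \citet{chen2020finite} for i.i.d.\ and \citet{haque2024stochastic} for Markovian sampling. The one minor difference is in the Markovian step. The paper solves the Poisson equation only for the state-dependent drift on the chain $(S_k)$, which is where $A_Q$ comes from, and treats $\widehat T_{\mathrm C}(U_k;S_k,(R_k,S_{k+1}))-(\mathcal O_{\mathrm C}U_k)(S_k)$ as a separate conditionally mean-zero term with $A_3=0$ and $B_3=2B_{\mathrm C}$. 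You instead put all the sampling noise into a Poisson equation for the transition-tuple chain. That also works, but it carries that chain's Poisson condition number (and nonzero offsets $B_1,B_2$) in place of the $A_Q$ you quote.
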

\begin{proof}[Proof sketch]
The Cram\'er embedding makes the projected Bellman map a $\sqrt{\gamma}$-contraction. Under i.i.d.\ sampling, the averaged asynchronous map contracts according to the minimum state mass. Under Markovian trajectory sampling, a Poisson equation decomposes the state-dependent drift into its stationary mean and a telescoping correction. The sampled target is Lipschitz and its centered perturbation is uniformly bounded by the support radius. A sufficiently small smoothing parameter makes the Moreau-envelope drift positive, while the step size conditions control its quadratic remainder. Appendix~\ref{app:ctd} gives the exact constants and thresholds.
\end{proof}
The following finite-sample corollary follows from the linearly-diminishing step size bounds.

\begin{corollary}\label{cor:disc-ctd}
To guarantee
$\mathbb{E}[\ell_{\mathrm C,\infty}(\eta_k,\eta^\star)] \le \varepsilon$ with the discounted CTD recursion, it suffices to take $k = O(\varepsilon^{-2})$ samples under either i.i.d.\ or Markovian sampling.
\end{corollary}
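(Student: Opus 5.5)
The plan is to read Corollary~\ref{cor:disc-ctd} off the linearly-diminishing step size parts of Theorem~\ref{thm:ctd-main2}, for both (i) and (ii). The one extra step is Jensen's inequality, which converts a second-moment bound into a first-moment bound: $\E[\ell_{\mathrm C,\infty}(\eta_k,\eta^\star)] \le (\E[\ell_{\mathrm C,\infty}(\eta_k,\eta^\star)^2])^{1/2}$. It therefore suffices to make the right-hand side of the linearly-diminishing bound at most $\varepsilon^2$.

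Fix $\alpha$ strictly above $1/c_{\mathrm C}$, say $\alpha = 2/c_{\mathrm C}$, so that $c_{\mathrm C}\alpha = 2$. Here $c_{\mathrm C}$ means $c^{\mathrm{iid}}_{\mathrm C}$ or $c^{\mathrm{mk}}_{\mathrm C}$ as appropriate. Then take $h = h_{\mathrm C}(\alpha)$, which is a constant independent of $\varepsilon$. With this choice the bound reads
\[
\E\bigl[\ell_{\mathrm C,\infty}(\eta_k,\eta^\star)^2\bigr] \le C_{\mathrm C} D_0 \frac{h^2}{(k+h)^2} + \frac{C_{\mathrm C}\alpha^2}{k+h}.
\]
Here $D_0 = \ell_{\mathrm C,\infty}(\eta_0,\eta^\star)^2$ in the i.i.d.\ case and $D_0 = 1+\ell_{\mathrm C,\infty}(\eta_0,\eta^\star)^2$ in the Markovian case.

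$D_0$ is finite and uniformly bounded. All categorical laws on the grids $\Theta(s)$ are supported in $[\min_s\theta_1(s), \max_s\theta_d(s)]$. Hence the Cramér distance between any two such laws is at most the support width raised to the power $1/2$. This gives a constant independent of $\varepsilon$.

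Each term is now bounded separately. The transient term is $O(1/k^2)$, which is at most $\varepsilon^2/2$ once $k \gtrsim \varepsilon^{-1}$. The residual term $C_{\mathrm C}\alpha^2/(k+h)$ is at most $\varepsilon^2/2$ once $k \ge 2C_{\mathrm C}\alpha^2\varepsilon^{-2}$. Taking the maximum of the two gives $k = O(\varepsilon^{-2})$, with the hidden constant depending on $C_{\mathrm C}$, $c_{\mathrm C}$, $h$ and the support width, but not on $\varepsilon$.

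The Markovian case is identical, because the theorem's Markovian linear bound holds for every $k\ge1$. There is no real obstacle here. The only point needing care is that $\alpha$ and $h$ are chosen independently of $\varepsilon$, so that all $\varepsilon$-dependence sits in $k$.
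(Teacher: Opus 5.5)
Your proposal is correct and follows the paper's proof. Both specialize the linearly-diminishing bounds of Theorem~\ref{thm:ctd-main2}(i) and (ii), use $c_{\mathrm C}\alpha>1$ to make the transient term no worse than $O(1/(k+h))$, and pass from the second moment to the first via Jensen. Your explicit choice $\alpha=2/c_{\mathrm C}$ and your uniform support-width bound on the initial distance are harmless refinements of that argument.
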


\subsection{Discounted MTD}\label{sec:mtd-new2}
Let $q \ge 2$. For each state $s\in\X$, fix a $d$-point support $\Theta(s)=\{\theta_1(s),\cdots,\theta_d(s)\}\subset\mathbb R^q$. $\mathcal F^{\X}_{\mathrm M,\Theta}$ is the class of state-indexed, mass-$1$ signed categorical laws supported on these points. The contraction metric $\ell_{\mathrm M, \infty}$ is the supremum MMD metric with a continuous characteristic kernel $\kappa$ induced by a shift-invariant, $c$-homogeneous semimetric of strong negative type \citep{smola2007hilbert,gretton2012kernel,muandet2017kernel,wiltzer2024foundations}.

The embedding $I_{\mathrm M}$ is the per-state embedding $I_{\mathrm M, s}(\eta(s)):=K_s^{1/2}p(s)$ applied to all states, $K_s:=\bigl(\kappa(\theta_i(s),\theta_j(s))\bigr)_{i,j=1}^d$ is the Gram matrix on the support $\Theta(s)$ \citep{wiltzer2024foundations}. The statewise projection $\Pi_{\mathrm M}^{\Theta}$ is
\begin{equation}
\Pi^{\Theta(s)}_{\mathrm M}\nu := \arg\inf_{p\in\mathbb R_1^d} \mathrm{MMD}_{\kappa}\Bigl(\sum_{i=1}^d p_i\delta_{\theta_i(s)},\nu\Bigr),
\end{equation}
applied to all states, where $\mathbb R_1^d$ is the affine subspace of $\mathbb{R}^d$ with unit total mass. The projection is well-defined for $\mathcal F^\X_{\mathrm M,\Theta}$, as shown by \citet{wiltzer2024foundations}.

Given a sampled transition $(S_k,A_k,R_k,S'_k)$, the sampled Bellman target is
\begin{equation}
\widehat T_{\mathrm M}(U_k;S_k,(R_k,S'_k)) := I_{\mathrm M, S_k}\Bigl(\Pi_{\mathrm M}^{\Theta(S_k)}\bigl((f_{R_k,\gamma})_\# I_{\mathrm M, S'_k}^{-1}(U_k(S'_k))\bigr)\Bigr).
\end{equation}

The next theorem gives the MTD rates analogous to Theorem~\ref{thm:ctd-main2}. The proof backbone is the same as for CTD, but the perturbation geometry is affine in the centered error.

\begin{theorem}[Discounted MTD]\label{thm:mtd-main}
Let $\eta^\star \in \mathcal F_{\mathrm M,\Theta}^{\X}$ denote the unique fixed point of $\Pi_{\mathrm M}^{\Theta} T^\pi$. From any $\eta_0\in\mathcal F_{\mathrm M,\Theta}^{\X}$, let $U_0=I_{\mathrm M}(\eta_0)$ and let $\eta_k := I_{\mathrm M}^{-1}(U_k)$ be generated by \eqref{eq:recursion} with $\widehat T=\widehat T_{\mathrm M}$.

\textup{(i) i.i.d.\ sampling.} Suppose the sampled tuples $(S_k,A_k,R_k,S'_k)_{k\ge 0}$ are i.i.d.\ with query-state marginal $\rho$ on $\X$, and $\min_{s \in \X}\rho(s) > 0$. There exist explicit constants $C^{\mathrm{iid}}_\mathrm M, c^{\mathrm{iid}}_\mathrm M, \bar \alpha^{\mathrm{iid}}_\mathrm M >0$ and explicit thresholds $h^{\mathrm{iid}}_\mathrm M(\alpha)$ and $h^{\mathrm{iid}}_\mathrm M(\alpha, z)$ for which the following results hold on three step size regimes.

\textup{Constant step size.} If $\alpha_k \equiv \alpha \le \bar \alpha ^{\mathrm{iid}}_\mathrm M$, the iterates converge geometrically to an $O(\alpha)$ neighborhood:
\begin{equation}
\E\bigl[\ell_{\mathrm M, \infty}(\eta_k, \eta^\star)^2 \bigr] \le C^{\mathrm{iid}}_\mathrm M \ell_{\mathrm M, \infty}(\eta_0, \eta^\star)^2(1- c^{\mathrm{iid}}_\mathrm M\alpha)^k + C^{\mathrm{iid}}_\mathrm M \alpha.
\end{equation}
\textup{Linearly-diminishing step size.} If $\alpha_k = \alpha / (k+h)$, $\alpha > 1/c^{\mathrm{iid}}_\mathrm M$, and $h \ge h^{\mathrm{iid}}_\mathrm M(\alpha)$, then the leading residual term decays in $O(1/k)$:
\begin{equation}
\E\bigl[\ell_{\mathrm M, \infty}(\eta_k, \eta^\star)^2 \bigr] \le C^{\mathrm{iid}}_\mathrm M \ell_{\mathrm M, \infty}(\eta_0, \eta^\star)^2\left(\frac{h}{k+h} \right)^{c^{\mathrm{iid}}_\mathrm M \alpha} + \frac{C^{\mathrm{iid}}_\mathrm M \alpha^2}{c^{\mathrm{iid}}_\mathrm M\alpha -1} \cdot \frac{1}{k+h}.
\end{equation}
\textup{Polynomially-diminishing step size.} If $\alpha_k = \alpha/(k+h)^z$, $z \in (0,1)$, and $h \ge h^{\mathrm{iid}}_\mathrm M(\alpha, z)$, then the leading residual term decays in $O(1/k^z)$:
\begin{equation}
\E\bigl[\ell_{\mathrm M, \infty}(\eta_k, \eta^\star)^2 \bigr] \le C^{\mathrm{iid}}_\mathrm M \ell_{\mathrm M, \infty}(\eta_0, \eta^\star)^2\exp\left(-\frac{c^{\mathrm{iid}}_\mathrm M \alpha}{1-z} \bigl((k+h)^{1-z} - h^{1-z} \bigr) \right) + \frac{C^{\mathrm{iid}}_\mathrm M \alpha}{(k+h)^z}.
\end{equation}

\textup{(ii) Markovian trajectory sampling.} Suppose $(S_k)_{k\ge 0}$ is an irreducible chain with stationary distribution $\mu_\X$. There exist explicit constants $C^{\mathrm{mk}}_\mathrm M, c^{\mathrm{mk}}_\mathrm M, \bar \alpha^{\mathrm{mk}}_\mathrm M >0$ and explicit thresholds $h^{\mathrm{mk}}_\mathrm M(\alpha)$ and $h^{\mathrm{mk}}_\mathrm M(\alpha ,z)$ for which the following results hold for every $k\ge1$.

\textup{Constant step size.} If $\alpha_k \equiv \alpha$ with $0<\alpha \le \bar \alpha ^{\mathrm{mk}}_\mathrm M$, then the iterates converge geometrically to an $O(\alpha)$ neighborhood:
\begin{equation}
\E\bigl[\ell_{\mathrm M, \infty}(\eta_k, \eta^\star)^2 \bigr] \le C^{\mathrm{mk}}_\mathrm M \bigl(1+\ell_{\mathrm M, \infty}(\eta_0, \eta^\star)^2\bigr)\exp\left(-c^{\mathrm{mk}}_\mathrm M\alpha k\right) + C^{\mathrm{mk}}_\mathrm M \alpha.
\end{equation}
\textup{Linearly-diminishing step size.} If $\alpha_k = \alpha / (k+h)$, $\alpha >1 /c^{\mathrm{mk}}_\mathrm M$, and $h \ge h^{\mathrm{mk}}_\mathrm M(\alpha)$, then the leading residual term decays in $O(1/k)$:
\begin{equation}
\E\bigl[\ell_{\mathrm M, \infty}(\eta_k, \eta^\star)^2 \bigr] \le C^{\mathrm{mk}}_\mathrm M \bigl(1+\ell_{\mathrm M, \infty}(\eta_0, \eta^\star)^2\bigr)\left(\frac{h}{k+h}\right)^{c^{\mathrm{mk}}_\mathrm M \alpha} + \frac{C^{\mathrm{mk}}_\mathrm M \alpha^2}{c^{\mathrm{mk}}_\mathrm M \alpha -1} \cdot \frac{1}{k+h}.
\end{equation}
\textup{Polynomially-diminishing step size.} If $\alpha_k = \alpha / (k+h)^z$, $\alpha>0$, $z \in (0, 1)$, and $h \ge h^{\mathrm{mk}}_\mathrm M(\alpha, z)$, define $\chi_{\mathrm M}:=z/\alpha+2c^{\mathrm{mk}}_\mathrm M$. Then the leading residual term decays in $O(1/k^z)$:
\begin{equation}
\resizebox{\linewidth}{!}{$\displaystyle \E\bigl[\ell_{\mathrm M, \infty}(\eta_k, \eta^\star)^2 \bigr] \le C^{\mathrm{mk}}_\mathrm M \bigl(1+\ell_{\mathrm M, \infty}(\eta_0, \eta^\star)^2\bigr)\exp\left(-\frac{c^{\mathrm{mk}}_\mathrm M \alpha}{1-z} \bigl((k+h)^{1-z} - h^{1-z} \bigr) \right) + \frac{C^{\mathrm{mk}}_\mathrm M \alpha(1+\chi_{\mathrm M})}{(k+h)^z}.$}
\end{equation}
\end{theorem}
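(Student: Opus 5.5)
We follow the same backbone as for Theorem~\ref{thm:ctd-main2}: reduce the embedded MTD recursion \eqref{eq:recursion} to an asynchronous contractive SA recursion on an invariant domain, then apply the Moreau-envelope drift argument (i.i.d.) and its Poisson-equation variant (Markovian). Concretely, we set $e_k := U_k - U^\star$ with $U^\star := I_{\mathrm M}(\eta^\star)$. By the statewise isometry $\norm{I_{\mathrm M}(\eta)-I_{\mathrm M}(\eta')}_{2,\infty} = \ell_{\mathrm M,\infty}(\eta,\eta')$, it suffices to bound $\E\norm{e_k}_{2,\infty}^2$.

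\textbf{Domain.} The natural invariant domain is $\mathcal D := I_{\mathrm M}(\mathcal F^{\X}_{\mathrm M,\Theta})$, the image of the mass-one signed categorical laws. This is an affine subspace, and each update is a convex combination of $U_k(S_k)$ with an element of $I_{\mathrm M,S_k}(\mathcal F_{\mathrm M,\Theta(S_k)})$, so the iterates stay in $\mathcal D$ for every $\alpha_k\in[0,1]$. The first ingredient is the contraction of $\mathcal O_{\mathrm M}$ on $\mathcal D$. Using $c$-homogeneity and shift invariance of the semimetric, the pushforward $f_{r,\gamma}$ scales MMD by $\gamma^{c/2}$ \citep{wiltzer2024foundations}. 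The MMD projection $\Pi_{\mathrm M}^{\Theta(s)}$ is an orthogonal projection in the RKHS onto an affine set, hence nonexpansive. Together these give $\norm{\mathcal O_{\mathrm M}U-\mathcal O_{\mathrm M}V}_{2,\infty}\le \gamma^{c/2}\norm{U-V}_{2,\infty}$ on $\mathcal D$. The same two facts make the sampled target $\widehat T_{\mathrm M}(\cdot;s,(r,s'))$ $\gamma^{c/2}$-Lipschitz from block $s'$ to block $s$.

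\textbf{Averaged operator and perturbation.} We define the averaged asynchronous map $\bar F(U) := \sum_s \rho(s) P_s(\mathcal O_{\mathrm M}U - U)$. For the i.i.d.\ case it contracts with modulus $1-\rho_{\min}(1-\gamma^{c/2})$. For the Markovian case we use the same map with $\mu_\X$ in place of $\rho$. The perturbation differs from CTD. The centered noise $\widehat T_{\mathrm M}(U;S,(R,S')) - (\mathcal O_{\mathrm M}U)(S)$ is bounded by $B_0 + B_1\norm{e}_{2,\infty}$ rather than by a constant, because signed masses are unbounded. Here $B_0$ is computed at $U^\star$ via the support diameter, the kernel growth bound $\kappa$ on $\Theta(s)+[0,1]^q$, and the norms of the fixed projection matrices $K_s^{1/2}$ and $K_s^{-1/2}$. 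The term $B_1$ follows from Lipschitzness. This affine bound is exactly the form the generalized-Moreau-envelope framework of \citet{chen2020finite,chen2024lyapunov} accepts: the second-moment term becomes $2B_0^2+2B_1^2\norm{e_k}^2$. The $B_1^2$ part is absorbed into the drift by requiring $\alpha\le\bar\alpha^{\mathrm{iid}}_{\mathrm M}$, or by taking $h$ large. With smoothing parameter $\vartheta$ chosen small relative to the contraction gap, and with $p^\star$ fixed as in Section~\ref{sec:discounted}, we get a one-step inequality $\E[M(e_{k+1})]\le(1-c\alpha_k)\E[M(e_k)]+C\alpha_k^2$. The three step-size regimes then follow by the standard recursions. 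Unrolling the constant step gives $(1-c\alpha)^k$. For $\alpha_k=\alpha/(k+h)$ we use $\prod(1-c\alpha_j)\le (h/(k+h))^{c\alpha}$ and sum the tail to get the $\alpha^2/(c\alpha-1)$ term. The polynomial case is analogous.

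\textbf{Markovian part (main obstacle).} The drift term $\langle\nabla M(e_k), P_{S_k}(\cdot)\rangle$ is not conditionally centered. We solve the Poisson equation $g-Qg = G - \bar G$ for the state-dependent drift $G(U,s)=P_s(\mathcal O_{\mathrm M}U-U)$, with $\norm{g}\le A_Q(\ldots)$. This lets us write the bias as a martingale difference plus a telescoping term $g(U_k,S_k)-g(U_{k+1},S_{k+1})$ plus a correction from $U_{k+1}-U_k$. We then invoke the bounded-state Poisson theorem of \citet{haque2024stochastic}. The hard step is that MTD's perturbations grow affinely, so $\norm{g(U,\cdot)}\lesssim A_Q(1+\norm{e}_{2,\infty})$, and the telescoping boundary terms only control $\E\norm{e_k}^2$ up to $(1+\norm{e_0}^2)$. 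This is the source of the $1+\ell_{\mathrm M,\infty}(\eta_0,\eta^\star)^2$ factor. My first attempt would be to bound the correction by Lipschitzness of $U\mapsto g(U,s)\nabla M$ and absorb it into the drift via the threshold conditions on $\bar\alpha^{\mathrm{mk}}_{\mathrm M}$ and $h^{\mathrm{mk}}_{\mathrm M}$. In the polynomial regime the step-size ratio $\alpha_{k}/\alpha_{k+1}-1\le z/(k+h)$ produces the extra $\chi_{\mathrm M}$ factor in the residual.
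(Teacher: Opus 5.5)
Your proposal is correct and follows essentially the paper's route. Both arguments use the invariant affine domain $\mathcal D_{\mathrm M}$, the $\gamma^{c/2}$-contraction and samplewise Lipschitz bound, and the affine centered perturbation $2\beta_{\mathrm M}\lVert W_k\rVert_{2,\infty}+B_{\mathrm M}^\star$, which gives $A_{\mathrm M}^{\mathrm{iid}}=8\beta_{\mathrm M}^2+2(B_{\mathrm M}^\star)^2$. They then apply the Moreau-envelope bound of \citet{chen2020finite} for i.i.d.\ sampling and the Poisson-equation theorem of \citet{haque2024stochastic}, with $A_3=2\beta_{\mathrm M}$ and $B_3=B_{\mathrm M}^\star$, for Markovian sampling.

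Two small corrections. First, $K_s$ may be singular, so bound your $B_0$ by continuity and compactness (or via $K_s^{\dagger/2}$, as the paper does) rather than through $K_s^{-1/2}$. Second, the Poisson equation is solved for the expected drift, which vanishes at $U^\star$, so its solution grows linearly rather than affinely in $\lVert e\rVert_{2,\infty}$ (the paper's $B_2=0$). The affine noise coefficient enters only through $\mathsf A=(3+A_3)^2$, and the factor $1+\ell_{\mathrm M,\infty}(\eta_0,\eta^\star)^2$ is not caused by the affine growth, since it already appears in the CTD Markovian bound.
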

\begin{proof}[Proof sketch]
The Gram-matrix embedding turns the projected MTD Bellman map into a $\gamma^{c/2}$-contraction in the block-supremum MMD geometry. The signed-categorical projection gives an affine centered perturbation bound unlike the uniform CTD bound. Under Markovian trajectory sampling, the Poisson decomposition used for CTD applies, with the affine coefficient entering the admissible step size. The smoothing parameter makes the Moreau-envelope drift positive, and the schedule thresholds control the quadratic remainder. Appendix~\ref{app:mtd} gives the exact constants and thresholds.
\end{proof}

Once again, the following finite-sample result is a simple consequence of the previous theorem.
\begin{corollary}\label{cor:disc-mtd}
To guarantee
$\mathbb{E}[\ell_{\mathrm M,\infty}(\eta_k,\eta^\star)] \le \varepsilon$ with the discounted MTD recursion, it suffices to take $k = O(\varepsilon^{-2})$ samples under either i.i.d.\ or Markovian sampling.
\end{corollary}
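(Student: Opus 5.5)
The plan is to read Corollary~\ref{cor:disc-mtd} directly off the linearly-diminishing step size bounds in Theorem~\ref{thm:mtd-main}, in the same way as Corollary~\ref{cor:disc-ctd} follows from Theorem~\ref{thm:ctd-main2}. We use the two linearly-diminishing regimes. In the i.i.d.\ case we fix any $\alpha>1/c^{\mathrm{iid}}_\mathrm M$, for instance $\alpha=2/c^{\mathrm{iid}}_\mathrm M$, and take $h=h^{\mathrm{iid}}_\mathrm M(\alpha)$. In the Markovian case we make the analogous choice with $c^{\mathrm{mk}}_\mathrm M$ and $h^{\mathrm{mk}}_\mathrm M(\alpha)$. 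With these choices all quantities other than $k$ are constants depending only on the MDP, the supports $\Theta$, the kernel, $\gamma$, $\rho$ (or $Q$), and the initial error $\ell_{\mathrm M,\infty}(\eta_0,\eta^\star)$.

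The first step is to bound the second moment. Since $c\alpha\ge 2$, we have $(h/(k+h))^{c\alpha}\le h/(k+h)$. Hence the linearly-diminishing bound gives
\begin{equation*}
\E\bigl[\ell_{\mathrm M,\infty}(\eta_k,\eta^\star)^2\bigr]\le \frac{D}{k+h},
\end{equation*}
where $D:=C_\mathrm M\bigl(1+\ell_{\mathrm M,\infty}(\eta_0,\eta^\star)^2\bigr)h+C_\mathrm M\alpha^2/(c_\mathrm M\alpha-1)$. The same form holds in both sampling regimes, with the corresponding constants; in the i.i.d.\ case the factor $1+\ell^2$ may simply be replaced by $\ell^2$.

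The second step passes to the first moment by Jensen's inequality:
\begin{equation*}
\E[\ell_{\mathrm M,\infty}(\eta_k,\eta^\star)]\le\sqrt{\E[\ell_{\mathrm M,\infty}(\eta_k,\eta^\star)^2]}\le\sqrt{D/(k+h)}.
\end{equation*}
This is at most $\varepsilon$ once $k\ge D\varepsilon^{-2}$, so $k=O(\varepsilon^{-2})$ samples suffice. Because each iteration of \eqref{eq:recursion} consumes exactly one transition sample, iterations and samples coincide.

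There is no genuine obstacle here. The only point requiring care is that the $O(\cdot)$ hides constants, such as $h^{\mathrm{mk}}_\mathrm M(\alpha)$ and $C^{\mathrm{mk}}_\mathrm M$, which depend on the Poisson condition number $A_Q$, the contraction gap $1-\gamma^{c/2}$, and the affine perturbation coefficient. We should also confirm that these thresholds are finite for the chosen $\alpha$, which Theorem~\ref{thm:mtd-main} guarantees.
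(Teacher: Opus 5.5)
Your proposal is correct and follows essentially the same route as the paper: both invoke the linearly-diminishing bounds of Theorem~\ref{thm:mtd-main} in the i.i.d.\ and Markovian regimes, use $c\alpha>1$ to make the transient term $O(1/(k+h))$, and then pass from the second moment to the first (Jensen) to conclude $k=O(\varepsilon^{-2})$. Your explicit choice $\alpha=2/c$ and explicit constant $D$ simply make the paper's $O(\cdot)$ bookkeeping concrete.
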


\subsection{Explicit discounted dependence}
The constants above can be read directly in natural problem parameters. Let $\beta_{\mathrm C}=\sqrt\gamma$ and $\beta_{\mathrm M}=\gamma^{c/2}$. For method $\star\in\{\mathrm C,\mathrm M\}$, set
\begin{equation}
\delta_{\star,\rho}:=\rho_{\min}(1-\beta_\star),\qquad \delta_{\star,\mu}:=\mu_{\min}(1-\beta_\star).
\end{equation}
With the explicit smoothing choice in Proposition~\ref{prop:explicit-discounted-constants}, the constant-step i.i.d.\ residual is
\begin{equation}
O\left(\frac{\log\lvert\X\rvert\,A_\star^{\mathrm{iid}}}{\delta_{\star,\rho}^2}\alpha\right),
\end{equation}
where $A_{\mathrm C}^{\mathrm{iid}}=4B_{\mathrm C}^2$ and $A_{\mathrm M}^{\mathrm{iid}}=8\beta_{\mathrm M}^2+2(B_{\mathrm M}^\star)^2$. The corresponding Markovian residual is
\begin{equation}
O\left(\frac{\log\lvert\X\rvert\,A_Q(B_{\star}^{\mathrm{mk}})^2}{\delta_{\star,\mu}^2}\alpha\right),
\end{equation}
where $B_{\mathrm C}^{\mathrm{mk}}=2B_{\mathrm C}$ and $B_{\mathrm M}^{\mathrm{mk}}=B_{\mathrm M}^\star$. The factor $A_Q$ measures trajectory dependence through the Poisson equation and replaces the usual logarithmic mixing window in Markovian analyses \citep{chen2022finite, chen2024lyapunov}. Choosing the linear-schedule coefficient as a fixed multiple of the inverse drift gap gives, after the explicit offset and initialization transient,
\begin{equation}
k_{\mathrm{iid}}=O\left(\frac{\log\lvert\X\rvert\,A_\star^{\mathrm{iid}}}{\delta_{\star,\rho}^3\varepsilon^2}\right),\qquad k_{\mathrm{mk}}=O\left(\frac{\log\lvert\X\rvert\,A_Q(B_\star^{\mathrm{mk}})^2}{\delta_{\star,\mu}^3\varepsilon^2}\right)
\end{equation}
asynchronous updates for expected metric error at most $\varepsilon$. The support size $d$ and reward dimension $q$ do not enter through the block-supremum smoothing step. Their influence is localized in $B_{\mathrm C}$ or in the MTD support-geometry constant $B_{\mathrm M}^\star$. For CTD with uniform queries and a standard return support of width $O((1-\gamma)^{-1})$, the conservative i.i.d.\ bound scales as $\tilde O(\lvert\X\rvert^3(1-\gamma)^{-4}\varepsilon^{-2})$ updates.

Table~\ref{tab:discounted-comparison} compares the closest finite-sample results for tabular scalar distributional policy evaluation. Offline evaluation, function approximation, and asymptotic analyses are not included in the table because they are not directly comparable. For compactness, we write $\mathfrak D_{\mathrm C}^{\mathrm{iid}}:=\log\lvert\X\rvert B_{\mathrm C}^2\delta_{\mathrm C,\rho}^{-3}$ and $\mathfrak D_{\mathrm C}^{\mathrm{mk}}:=\log\lvert\X\rvert A_QB_{\mathrm C}^2\delta_{\mathrm C,\mu}^{-3}$.
\begin{table}[t]
\centering
\caption{Discounted tabular scalar distributional policy-evaluation guarantees. The final column counts individual environment transitions after converting synchronous sweeps and per-state sample counts. The $\tilde O$ rates suppress both logarithmic factors and method-specific support conditions.}
\label{tab:discounted-comparison}
\scriptsize
\begin{tabularx}{\linewidth}{@{}>{\raggedright\arraybackslash}p{0.16\linewidth}>{\raggedright\arraybackslash}p{0.19\linewidth}>{\raggedright\arraybackslash}X>{\raggedright\arraybackslash}p{0.31\linewidth}@{}}
\toprule
Work & Method and sampling & Error criterion & Leading transition complexity \\
\midrule
\citet{boeck2022speedy} & accelerated SCPE, synchronous generative model & Cram\'er to projected fixed point, high probability & $\tilde O(\lvert\X\rvert\lvert\A\rvert\varepsilon^{-2}(1-\gamma)^{-3})$ \\
\citet{zhang2023estimation} & plug-in estimator, generative model & supremum $1$-Wasserstein to true return law, high probability & $\tilde O(\lvert\X\rvert\lvert\A\rvert\varepsilon^{-2}(1-\gamma)^{-4})$ \\
\citet{rowland2024near} & DCFP, generative model & supremum $1$-Wasserstein to true return law, high probability & $\tilde O(\lvert\X\rvert\varepsilon^{-2}(1-\gamma)^{-3})$ \\
\citet{peng2025statisticalefficiencydistributionaltemporal} & CTD, one-state i.i.d.\ queries & Cram\'er to true return law, high probability & $\tilde O(\rho_{\min}^{-1}\varepsilon^{-2}(1-\gamma)^{-5/2})$ \\
\citet{peng2025statisticalefficiencydistributionaltemporal} & VR-CTD, one Markovian trajectory & supremum $1$-Wasserstein to true return law, high probability & \makecell[l]{$\tilde O\!\left(\mu_{\X,\min}^{-1}\varepsilon^{-2}(1-\gamma)^{-3}\right)$\\${}+\tilde O\left(\mu_{\X,\min}^{-1}t_{\mathrm{mix}}(1-\gamma)^{-1}\right)$} \\
This work$^\dagger$ & CTD, one-state i.i.d.\ queries & Cram\'er to projected fixed point, expectation & $O(\mathfrak D_{\mathrm C}^{\mathrm{iid}}\varepsilon^{-2})$ one-state updates \\
This work$^\dagger$ & CTD, one Markovian trajectory & Cram\'er to projected fixed point, expectation & $O(\mathfrak D_{\mathrm C}^{\mathrm{mk}}\varepsilon^{-2})$ transitions \\
\bottomrule
\multicolumn{4}{@{}l@{}}{$^\dagger$The paper also proves analogous guarantees for multivariate MTD. The displayed rows report the scalar CTD specialization.}
\end{tabularx}
\end{table}

\section{Undiscounted fixed-horizon categorical policy evaluation}\label{sec:undisc}

We now turn to the undiscounted fixed-horizon setting, where the value object is indexed by the remaining horizon \citep{sutton1998reinforcement,azar2017minimax,de2020fixed}. The aim of this section is to recover a comparable finite-iteration picture without relying on discounting, which means the learned object must now be treated as a horizon-indexed stack rather than a single per-state quantity. Fix a horizon $H \in \mathbb N$ and a stationary policy $\pi(\cdot \mid s)$. For each $h \in \{0,1,\dots,H\}$ and each $s \in \X$, let $\eta^h(s)$ denote the $h$-step return-distribution object under $\pi$, with terminal layer $\eta^0(s)=\delta_0$.

For $h \ge 1$, we define the undiscounted fixed-horizon distributional Bellman operator by
\begin{equation}
(T_H^\pi \eta)^h(s) := \sum_{a\in \A}\pi(a \mid s)\sum_{s'\in \X} P(s' \mid s,a)\, (f_{R(s,a),1})_\# \eta^{h-1}(s').
\end{equation}
Thus the $h$th horizon bootstraps from the $(h-1)$st horizon, as in the fixed-horizon TD formulation of \citet{de2020fixed}.

For both CTD and MTD, we use exactly the same local categorical projections and local statewise isometric embeddings as in Section~\ref{sec:discounted}, now applied separately at each horizon and then stacked over $h=1,\dots,H$. For each $s \in \X$, the corresponding embedded iterate is
\begin{equation}
U(s) := \bigl(U^1(s), \ldots, U^H(s)\bigr), \qquad U := \bigl(U^h(s)\bigr)_{1 \le h \le H,\ s \in \X}.
\end{equation}

The contraction of $\mathcal O$ in the discounted setting is entirely due to the discount factor $\gamma \in (0,1)$ and hence does not carry over to the undiscounted case. However, after introducing a suitable weighted block metric, we recover a contraction. Set $ \lambda:=H/(H+1)$ and define the weighted block metric and weighted embedded norm by
\begin{equation}
\ell_{H,\infty}(\eta,\eta') := \max_{1\le h\le H,\ s\in \X} \lambda^h \ell\bigl(\eta^h(s),{\eta'}^h(s)\bigr), \qquad \lVert U \rVert_{H,2,\infty} := \max_{1\le h\le H,\ s\in \X} \lambda^h \lVert U^h(s) \rVert_2.
\end{equation}
The natural unweighted error is
\begin{equation}
\ell_{H,\infty}^{\mathrm{unw}}(\eta,\eta'):=\max_{1\le h\le H,\ s\in\X}\ell\bigl(\eta^h(s),{\eta'}^h(s)\bigr).
\end{equation}
Since $(1+1/H)^H\le e$, this choice gives
\begin{equation}
\ell_{H,\infty}(\eta,\eta')\le\ell_{H,\infty}^{\mathrm{unw}}(\eta,\eta')\le\lambda^{-H}\ell_{H,\infty}(\eta,\eta')\le e\,\ell_{H,\infty}(\eta,\eta').
\end{equation}
Let $\Pi_H^\Theta$ and $I_H$ denote the horizonwise projection and horizonwise embedding, and define
\begin{equation}
\mathcal O_H:=I_H\circ \Pi_H^\Theta T_H^\pi \circ I_H^{-1}.
\end{equation}

\begin{proposition}\label{prop:undisc-contract}
For both CTD and MTD,
\begin{equation}
\ell_{H,\infty}(\Pi_H^\Theta T_H^\pi \eta,\Pi_H^\Theta T_H^\pi \eta') \le \lambda \ell_{H,\infty}(\eta,\eta').
\end{equation}
By construction, $I_H$ is an isometric embedding from the fixed-horizon distribution space equipped with $\ell_{H,\infty}$ into the embedding space equipped with $\lVert \cdot \rVert_{H,2,\infty}$. Hence the preceding statement is equivalent to
\begin{equation}
\lVert \mathcal O_H U-\mathcal O_H U'\rVert_{H,2,\infty} \le \lambda \lVert U-U'\rVert_{H,2,\infty}.
\end{equation}
\end{proposition}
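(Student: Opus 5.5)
Because the weighted metric is a maximum over pairs $(h,s)$, it suffices to fix $h\in\{1,\dots,H\}$ and $s\in\X$ and bound the single weighted term $\lambda^h\,\ell\bigl((\Pi_H^\Theta T_H^\pi\eta)^h(s),(\Pi_H^\Theta T_H^\pi\eta')^h(s)\bigr)$ by $\lambda\,\ell_{H,\infty}(\eta,\eta')$. Here $\ell$ denotes the Cram\'er metric for CTD and the MMD metric for MTD. The argument combines three facts already available from the discounted analysis, each used with $\gamma=1$.

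\emph{Step 1 (projection is nonexpansive).} For CTD, $\Pi_{\mathrm C}^{\Theta(s)}$ is an orthogonal projection in the Cram\'er geometry (a Hilbert-space projection after the cumulative-mass embedding). It is therefore $1$-Lipschitz in $\ell_{\mathrm C}$, at least on distributions whose mass lies inside the support hull; for mass outside the hull I would use the standard fact from \citet{rowland2018analysis} that it remains nonexpansive. For MTD, $\Pi_{\mathrm M}^{\Theta(s)}$ is the MMD-orthogonal projection onto an affine subspace of the RKHS, so it is $1$-Lipschitz in MMD \citep{wiltzer2024foundations}. This removes the projection from the bound.

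\emph{Step 2 (the Bellman map with $\gamma=1$ is nonexpansive).} The map $(f_{r,1})_\#$ is a pure shift. Cram\'er distance is shift-invariant, and MMD with a shift-invariant kernel is shift-invariant, so the shift preserves $\ell$. Both squared metrics are jointly convex under mixtures: squared Cram\'er is a squared $L^2$ norm of CDF differences, and squared MMD is a squared RKHS norm of mean-embedding differences. Hence
\begin{equation*}
\ell\bigl((T_H^\pi\eta)^h(s),(T_H^\pi\eta')^h(s)\bigr)\le \max_{s'}\ell\bigl(\eta^{h-1}(s'),\eta'^{h-1}(s')\bigr).
\end{equation*}
Equivalently, this is the contraction of the discounted proofs with $\beta=1$.

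\emph{Step 3 (weight bookkeeping).} For $h=1$ the right-hand side vanishes, since $\eta^0=\eta'^0=\delta_0$ is fixed. For $h\ge2$, multiplying by $\lambda^h=\lambda\cdot\lambda^{h-1}$ gives a bound of $\lambda\,\lambda^{h-1}\max_{s'}\ell(\eta^{h-1}(s'),\eta'^{h-1}(s'))\le\lambda\,\ell_{H,\infty}(\eta,\eta')$. Taking the maximum over $(h,s)$ yields the first inequality. Note that the factor $\lambda<1$ arises purely from the index shift $h\mapsto h-1$; any $\lambda\in(0,1)$ would work, and the choice $H/(H+1)$ only matters for the equivalence constant $e$.

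\emph{Step 4 (embedded form).} The statewise maps $I_{\mathrm C,s}$ and $I_{\mathrm M,s}$ are isometries at each horizon. Applying them horizonwise and using the same weights $\lambda^h$ shows that $\|I_H\eta-I_H\eta'\|_{H,2,\infty}=\ell_{H,\infty}(\eta,\eta')$. Substituting $\eta=I_H^{-1}U$ turns the first inequality into the second.

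The main obstacle is Step 1 in the signed MTD case, together with the requirement that Step 2 be valid for signed measures. The mixture and shift arguments go through linearly in the RKHS embedding. The delicate point is that $\Pi_{\mathrm M}^{\Theta}$ is well defined on the pushforwards that actually arise, which I would take from \citet{wiltzer2024foundations}, exactly as in the discounted MTD contraction.
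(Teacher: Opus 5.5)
Your proposal is correct and follows the same route as the paper's appendix proofs (Propositions~\ref{prop:fh-ctd-contract-app} and~\ref{prop:fh-mtd-contract-app}). That route is: isometry plus a nonexpansive statewise projection, then shift invariance and mixture convexity of the statewise metric, then the bootstrap shift $h\mapsto h-1$ contributing exactly one factor $\lambda$. You also treat the $h=1$ case explicitly via $\eta^0=\eta'^0=\delta_0$, which the paper leaves implicit. Your hedge about mass outside the support hull for CTD is unnecessary. For any input law, the projected CDF on $[\theta_j,\theta_{j+1})$ equals the cell average of the original CDF, and it equals $0$ or $1$ outside the hull. Nonexpansiveness in the Cram\'er metric therefore follows from Jensen's inequality applied cell by cell.
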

\begin{proof}[Proof sketch] 
The Bellman update moves horizon $h$ to $h-1$, and the weight $\lambda^h$ therefore contributes exactly one factor $\lambda$. The local categorical projections are nonexpansive in the underlying statewise metric, so the weighted stack inherits a contraction. Details are recorded in Appendices~\ref{app:fh-ctd} and~\ref{app:fh-mtd}.
\end{proof}

For each sampled transition $(S_k,A_k,R_k,S_{k+1})$, let
\begin{equation}
\widehat T_H\bigl(U_k; S_k, (R_k,S_{k+1})\bigr) := \bigl(\widehat T_H^1\bigl(U_k; S_k, (R_k,S_{k+1})\bigr), \ldots, \widehat T_H^H\bigl(U_k; S_k, (R_k,S_{k+1})\bigr)\bigr)
\end{equation}
denote the stacked one-step sampled Bellman target, where the horizonwise components are defined in the CTD and MTD subsections below. Then, for every $s \in \X$,
\begin{equation}
\mathbb{E}\bigl[\widehat T_H\bigl(U_k; S_k, (R_k,S_{k+1})\bigr)\mid U_k,\ S_k = s \bigr] = (\mathcal O_H U_k)(s).
\end{equation}
Thus a single sampled transition supplies a Bellman sample for the full stack at the visited state: the observed first reward is shared across all horizons, while the continuation term for horizon $h$ is the current $(h-1)$-step estimate at the next state. If $P_s^H$ denotes the coordinate projector onto the full horizon stack at state $s$, the online fixed-horizon recursion takes the form
\begin{equation}\label{eq:undisc-recur}
U_{k+1} = U_k + \alpha_k P_{S_k}^H \Bigl(\widehat T_H\bigl(U_k; S_k, (R_k,S_{k+1})\bigr) - U_k(S_k) \Bigr).
\end{equation}

We consider only the standard episodic sampling model in which episodes have fixed horizon $H$, reset distribution $\nu_0$, are i.i.d.\ across resets, and follow the stationary policy $\pi$ within each episode. The recursion in \eqref{eq:undisc-recur} is still performed after each transition, but its averaged drift depends on the within-episode phase. We therefore analyze the episode-boundary sequence $(U_{mH})_{m \ge 0}$. To formalize this phase dependence, define the phase distributions
\begin{equation}
\rho_t(s) := \Pr(S_t = s), \qquad 0 \le t \le H-1, \qquad s \in \X,
\end{equation}
and the average episode occupancy
\begin{equation}
\bar\rho_H(s):=\frac{1}{H}\sum_{t=0}^{H-1}\rho_t(s),\qquad \rho_{H,\min}:=\min_{s\in\X}\bar\rho_H(s)>0.
\end{equation}
The cumulative occupancy condition permits the phase distributions to have different supports while ensuring that every state is visited with positive probability over a complete episode. For each phase $t \in \{0,\ldots,H-1\}$, define the phasewise averaged increment and phasewise averaged map by
\begin{equation}
\Gamma_t(U) := \sum_{s \in \X} \rho_t(s)\, P_s^H \bigl((\mathcal O_H U)(s) - U(s) \bigr), \qquad G_t(U) := U + \Gamma_t(U).
\end{equation}
For each episode $m \ge 0$, define the episodewise first- and second-order step size masses
\begin{equation}
\bar\alpha_m := \sum_{u=0}^{H-1}\alpha_{mH+u}, \qquad \bar\alpha_m^{(2)} := \sum_{u=0}^{H-1}\alpha_{mH+u}^2.
\end{equation}
These are the episodewise analogues of the single-step quantities $\alpha_k$ and $\alpha_k^2$ in the discounted analysis. The phase dependence enters through the averaged maps $G_t$, while the weighted block-supremum geometry from Appendix~\ref{app:common} remains unchanged.
The maps $G_t$ are auxiliary proof objects for the episodewise drift analysis rather than additional algorithmic iterates.

\subsection{Undiscounted fixed-horizon CTD}

For CTD, each state-horizon pair $(h,s)$ carries an ordered, possibly horizon-dependent scalar support $\Theta_h(s)=\{\theta_{h,1}(s)<\cdots<\theta_{h,d}(s)\}\subset\mathbb R$. The statewise Cram\'er metric $\ell_{H, \mathrm C, \infty}$, cumulative-mass embedding $I_{H, \mathrm C, h, s}$, and linear-interpolation projection $\Pi^{\Theta_h(s)}_{H, \mathrm C}$ are defined exactly as in Section~\ref{sec:ctd-new2}, only now applied separately at each horizon and stacked over $h=1,\dots,H$ \citep{rowland2018analysis,bellemare2023distributional}. We write
\begin{equation}
B_{H,\mathrm C}:=\max_{1\le h\le H,\ s\in\X}\lambda^h\sqrt{\theta_{h,d}(s)-\theta_{h,1}(s)}
\end{equation}
for the weighted support radius. Given a sampled transition $(S_k,A_k,R_k,S_{k+1})$, the sampled Bellman target is computed at every horizon $h=1,\dots,H$ by
\begin{equation}
\widehat T_{H,\mathrm C}^h(U_k;S_k,(R_k,S_{k+1})) := I_{H,\mathrm C,h,S_k} \Bigl(\Pi_{H,\mathrm C}^{\Theta_h(S_k)}\bigl((f_{R_k,1})_\# I_{H,\mathrm C,h-1,S_{k+1}}^{-1}(U_k^{h-1}(S_{k+1}))\bigr)\Bigr),
\end{equation}
with the terminal convention $I_{H,\mathrm C,0,s}^{-1}(U_k^0(s)):=\delta_0$ for every $s$ and $k$. The learned horizon stack therefore consists of layers $1,\dots,H$, with $\delta_0$ providing the fixed base case. Stacking these horizonwise targets gives
\begin{equation}
\widehat T_{H,\mathrm C}(U_k;S_k,(R_k,S_{k+1})) := \bigl(\widehat T_{H,\mathrm C}^1(U_k;S_k,(R_k,S_{k+1})), \dots, \widehat T_{H,\mathrm C}^H(U_k;S_k,(R_k,S_{k+1}))\bigr).
\end{equation}
One sampled transition therefore updates the full block at $S_k$: all horizons share the same first step and differ only in the bootstrapped tail. The next theorem states the resulting episode-boundary rates.

\begin{theorem}[Undiscounted fixed-horizon episodic CTD]\label{thm:undisc-ctd}
Assume the episodic sampling model described above and $\rho_{H,\min}>0$. Let $\eta_{H,\mathrm C}^\star$ denote the unique fixed point of $\Pi_{H,\mathrm C}^{\Theta} T_H^\pi$. From any $\eta_0\in\mathcal F_{H,\mathrm C,\Theta}^{\X}$, let $U_0=I_{H,\mathrm C}(\eta_0)$ and let $\eta_k := I_{H,\mathrm C}^{-1}(U_k)$ be generated by \eqref{eq:undisc-recur} with $\widehat T_{H} = \widehat T_{H,\mathrm C}$. There exist explicit constants $C^H_\mathrm C, c^H_\mathrm C, \bar \alpha ^H_\mathrm C > 0$ and explicit thresholds $g^H_\mathrm C(\alpha)$ and $g^H_\mathrm C(\alpha, z)$ for which the following episode-boundary results hold in the natural unweighted error.

\textup{Constant step size.} If $\alpha_k \equiv \alpha \le \bar \alpha^H_\mathrm C$, then the episode-boundary error decays geometrically to an $O(\alpha)$ neighborhood:
\begin{equation}
\E\bigl[\ell_{H, \mathrm C, \infty}^{\mathrm{unw}}(\eta_{mH}, \eta_{H,\mathrm C}^\star)^2 \bigr] \le C^{H}_\mathrm C \ell_{H,\mathrm C, \infty}^{\mathrm{unw}}(\eta_0, \eta_{H,\mathrm C}^\star)^2(1- c^{H}_\mathrm C H\alpha)^m + C^{H}_\mathrm C \alpha.
\end{equation}
For the diminishing step size regimes below, write $\tau_m := mH+g+H-1$, so $\tau_0 = g+H-1$, where $g$ is the step-size offset.

\textup{Linearly-diminishing step size.} If $\alpha_k = \alpha / (k+g)$, $\alpha > 1/c^{H}_\mathrm C$, and $g \ge \max\{H-1,g^{H}_\mathrm C(\alpha)\}$, then the leading episode-boundary residual term decays in $O(1/m)$:
\begin{equation}
\E\bigl[\ell_{H,\mathrm C, \infty}^{\mathrm{unw}}(\eta_{mH}, \eta_{H,\mathrm C}^\star)^2 \bigr] \le C^{H}_\mathrm C \ell_{H, \mathrm C, \infty}^{\mathrm{unw}}(\eta_0, \eta_{H,\mathrm C}^\star)^2\left(\frac{\tau_0}{\tau_m} \right)^{c^{H}_\mathrm C\alpha} + \frac{C^{H}_\mathrm C \alpha^2}{c^{H}_\mathrm C\alpha -1} \cdot \frac{1}{\tau_m}.
\end{equation}
\textup{Polynomially-diminishing step size.} If $\alpha_k = \alpha/(k+g)^z$, $z \in (0,1)$, and $g \ge \max\{H-1,g^{H}_\mathrm C(\alpha, z)\}$, then the leading episode-boundary residual term decays in $O(1/m^z)$:
\begin{equation}
\E\bigl[\ell_{H, \mathrm C, \infty}^{\mathrm{unw}}(\eta_{mH}, \eta_{H,\mathrm C}^\star)^2 \bigr] \le C^{H}_\mathrm C \ell_{H, \mathrm C, \infty}^{\mathrm{unw}}(\eta_0, \eta_{H,\mathrm C}^\star)^2\exp\left(-\frac{c^{H}_\mathrm C \alpha}{1-z} \bigl(\tau^{1-z}_m - \tau^{1-z}_0 \bigr) \right) + \frac{C^{H}_\mathrm C \alpha}{\tau_m^z}.
\end{equation}
\end{theorem}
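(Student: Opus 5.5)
The plan is to analyze the episode-boundary sequence $(U_{mH})_{m\ge0}$ with the same smoothed block-supremum Lyapunov function as in the discounted case. The norm is now the weighted norm $\lVert\cdot\rVert_{H,2,\infty}$, and Proposition~\ref{prop:undisc-contract} supplies the $\lambda$-contraction of $\mathcal O_H$. Write $U^\star=I_{H,\mathrm C}(\eta^\star_{H,\mathrm C})$ and $M:=M_{\vartheta,p^\star}$ for the generalized Moreau envelope of $\frac12\lVert\cdot\rVert_{H,2,\infty}^2$. We use the usual envelope facts on the stacked space with $\lvert\X\rvert H$ blocks: $M$ is $L_\vartheta$-smooth, $M$ is equivalent to $\frac12\lVert\cdot\rVert^2_{H,2,\infty}$ up to constants $\ell_{cs},u_{cs}$, and the gradient inequality $\langle\nabla M(x),G(x)-x\rangle\le-(1-\lambda')\cdot 2M(x)$ holds for contractive maps. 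The first step is domain invariance. Each target $\widehat T^h_{H,\mathrm C}$ lies in $I_{H,\mathrm C,h,s}$ of the categorical simplex on $\Theta_h(s)$, which is convex. Hence for $\alpha_k\le1$ the iterates stay in the invariant domain, exactly as in Appendix~\ref{app:ctd}. On that domain the centered perturbation $\widehat T_H-(\mathcal O_HU)(s)$ at block $(h,s)$ is bounded in $\ell_2$ by $\sqrt{\theta_{h,d}(s)-\theta_{h,1}(s)}$. After weighting, it is bounded by $B_{H,\mathrm C}$ uniformly, and $\lVert U_k-U^\star\rVert_{H,2,\infty}\le 2B_{H,\mathrm C}$.

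The second step is freezing. Within episode $m$, write $U_{mH+u}=U_{mH}+E_{m,u}$ with $\lVert E_{m,u}\rVert\le 2B_{H,\mathrm C}\sum_{v<u}\alpha_{mH+v}$, since each update moves one block by at most $\alpha\cdot 2B_{H,\mathrm C}$ in weighted norm. Conditionally on $\mathcal F_{mH}$, a fresh episode starts from $\nu_0$, so $S_{mH+u}\sim\rho_u$. The sampled increment at step $mH+u$ then has conditional mean $\Gamma_u(U_{mH+u})$. Replacing $U_{mH+u}$ by $U_{mH}$ costs a Lipschitz error of order $\alpha_{mH+u}\lVert E_{m,u}\rVert$. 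The map $\mathcal O_H$ is $\lambda$-Lipschitz and $P_s^H$ acts blockwise, so the Lipschitz constant is $1+\lambda\le2$. Summing over $u$ gives
\[
U_{(m+1)H}-U^\star=U_{mH}-U^\star+\sum_{u=0}^{H-1}\alpha_{mH+u}\Gamma_u(U_{mH})+W_m+D_m .
\]
Here $W_m$ is a martingale-difference term and $\lVert D_m\rVert\lesssim B_{H,\mathrm C}\bar\alpha_m^{2}$. The drift is aggregated through occupancy: for constant steps $\sum_u\alpha\Gamma_u=H\alpha\sum_s\bar\rho_H(s)P^H_s(\mathcal O_HU-U)$. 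Blockwise, this is a convex combination of $U$ and $\mathcal O_HU$ with weight at least $H\alpha\rho_{H,\min}$ on $\mathcal O_HU$. Its contraction factor is therefore $1-H\alpha\rho_{H,\min}(1-\lambda)$, and this gives $c^H_\mathrm C\propto\rho_{H,\min}(1-\lambda)/\text{(envelope constants)}$. For diminishing steps the step sizes vary within an episode by a factor at most $(1+H/g)$. The condition $g\ge H-1$ keeps this ratio bounded by $2$, so $\sum_u\alpha_{mH+u}\rho_u(s)\ge\frac12\bar\alpha_m\bar\rho_H(s)$, with the slack absorbed into $c^H_\mathrm C$.

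The third step is the envelope recursion. Expanding $M$ by smoothness gives
\[
\E[M(U_{(m+1)H}-U^\star)]\le(1-c\,\bar\alpha_m)\E[M(U_{mH}-U^\star)]+C\,\bar\alpha_m^{2}.
\]
The cross term with $W_m$ vanishes in conditional expectation up to within-episode correlation, which is controlled by the same freezing bound. The quadratic terms are $L_\vartheta(\bar\alpha_m^{(2)}B^2+\lVert D_m\rVert^2)$ plus the cross term of $D_m$, which is $\lesssim\bar\alpha_m^2$. Note $\bar\alpha_m^{(2)}\le\bar\alpha_m^2$, so $\bar\alpha_m^2\le H\bar\alpha_m^{(2)}$ up to constants. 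Choosing $\vartheta$ small makes the drift coefficient positive, as in Appendix~\ref{app:common}. Unrolling this recursion uses the standard lemmas for the three schedules. For constant steps, $\bar\alpha_m=H\alpha$ yields the rate $(1-c^H_\mathrm CH\alpha)^m$; the residual is $O(H\alpha)$, and its $H$-dependence is absorbed into $C^H_\mathrm C$. For linear steps, $\bar\alpha_m\approx H\alpha/\tau_m$, and the products $\prod(1-c\bar\alpha_j)$ compare to $(\tau_0/\tau_m)^{c\alpha}$ because $\sum_j\bar\alpha_j=\sum_{k<mH}\alpha_k\ge\alpha\log(\tau_m/\tau_0)$ up to an offset shift. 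Choosing $\tau_m=mH+g+H-1$ makes the integral comparison clean. The polynomial schedule is handled analogously with $\int x^{-z}$. Finally, we convert to the unweighted metric using $\ell_{H,\infty}\le\ell^{\mathrm{unw}}_{H,\infty}\le e\,\ell_{H,\infty}$ and the isometry, which costs a factor $e^2$ in $C^H_\mathrm C$.

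The main obstacle is the second step. The within-episode updates are correlated with the episode's own state sequence, so $W_m$ is not a martingale difference with respect to $\mathcal F_{mH}$ once the iterate has moved. The plan is to condition at each within-episode step on $\mathcal F_{mH+u}$, where the next state's law given the past is $P^\pi(S_{mH+u},\cdot)$ and not $\rho_u$. Because of this, the phase marginal $\rho_u$ only appears after taking $\E[\cdot\mid\mathcal F_{mH}]$ of a function of the frozen $U_{mH}$. Every use of the actual iterate must first be replaced by $U_{mH}$, paying $O(B\bar\alpha_m)$ each time. This is why the error is second order in $\bar\alpha_m$, and it determines the thresholds $\bar\alpha^H_\mathrm C$ and $g^H_\mathrm C$.
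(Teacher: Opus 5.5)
Your proposal is correct and follows essentially the paper's route. You freeze the iterate at the episode boundary and split each online increment into a frozen term with conditional mean $\Gamma_u(U_{mH})$ plus a within-episode movement correction of total size $O(B_{H,\mathrm C}\bar\alpha_m^2)$. You then aggregate the phase drifts through the step-weighted occupancy (factor $1/2$ from $g\ge H-1$), run the Moreau-envelope drift on the $H\lvert\X\rvert$ flattened blocks, unroll the three schedules, and pay $e^2$ to return to the unweighted metric.

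The only repair is the one you already flag at the end. Drop the intermediate claim that the increment has conditional mean $\Gamma_u(U_{mH+u})$. Justify the freezing step by samplewise Lipschitz continuity of the sampled target (projection and reward shift are Cram\'er-nonexpansive), not by the Lipschitz constant of $\mathcal O_H$.
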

\begin{proof}[Proof sketch]
Weighting horizon $h$ by $\lambda^h$ turns the horizon-to-horizon bootstrap into a contraction, and $\lambda=H/(H+1)$ converts the resulting bound to the unweighted metric with a universal factor. The proof freezes the iterate at the episode boundary and combines all phasewise means into one averaged episode map. The contraction in this case is governed by the average occupancy $\rho_{H,\min}$, while the difference from the online trajectory is controlled by a mean-zero frozen term and within-episode movement. Details are given in Appendix~\ref{app:fh-ctd}.
\end{proof}

\begin{corollary}\label{cor:undisc-ctd}
To guarantee $\mathbb{E}[\ell_{H, \mathrm C,\infty}^{\mathrm{unw}}(\eta_{mH},\eta_{H, \mathrm C}^\star)] \le \varepsilon$ with the undiscounted fixed-horizon CTD recursion, it suffices to take $m = O(\varepsilon^{-2})$ episodes. The constant is polynomial in $H$, $1/\rho_{H,\min}$, $1+\log(H\lvert\X\rvert)$, and the support radius $B_{H,\mathrm C}$. Equivalently, the transition count is $k=mH$.
\end{corollary}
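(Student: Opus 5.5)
We derive Corollary~\ref{cor:undisc-ctd} from the linearly-diminishing branch of Theorem~\ref{thm:undisc-ctd}, using Jensen's inequality to move from the squared error to the first moment. Concretely, $\E[\ell^{\mathrm{unw}}_{H,\mathrm C,\infty}(\eta_{mH},\eta^\star_{H,\mathrm C})] \le (\E[\ell^{\mathrm{unw}}_{H,\mathrm C,\infty}(\eta_{mH},\eta^\star_{H,\mathrm C})^2])^{1/2}$, so it suffices to make the right-hand side of the linear-schedule bound at most $\varepsilon^2$.

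First, fix the schedule. We take $\alpha = 2/c^H_{\mathrm C}$, which gives $c^H_{\mathrm C}\alpha = 2 > 1$ and $c^H_{\mathrm C}\alpha-1 = 1$. We take $g = \max\{H-1, g^H_{\mathrm C}(\alpha)\}$. Then $\tau_m = mH+g+H-1 \ge mH$, and the residual term is bounded by
\begin{equation*}
\frac{C^H_{\mathrm C}\alpha^2}{\tau_m} \le \frac{4C^H_{\mathrm C}}{(c^H_{\mathrm C})^2\, mH}.
\end{equation*}
The transient term is bounded by $C^H_{\mathrm C} D_0^2 (\tau_0/\tau_m)^2 \le C^H_{\mathrm C} D_0^2 \tau_0^2/(mH)^2$. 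Here $D_0 := \ell^{\mathrm{unw}}_{H,\mathrm C,\infty}(\eta_0,\eta^\star_{H,\mathrm C})$. By the norm equivalence above, $D_0 \le e\, \ell_{H,\mathrm C,\infty}(\eta_0,\eta^\star_{H,\mathrm C})$, which is at most a constant multiple of $\lambda^{-H}B_{H,\mathrm C} \le e B_{H,\mathrm C}$, because both laws live on the same grids. The transient is of order $m^{-2}$, so it is dominated by the $m^{-1}$ residual once $m \gtrsim \tau_0^2 D_0^2 c^H_{\mathrm C}/H$.

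Setting each term at most $\varepsilon^2/2$ gives
\begin{equation*}
m \ge \max\Bigl\{\frac{8C^H_{\mathrm C}}{(c^H_{\mathrm C})^2 H\varepsilon^2},\ \frac{\sqrt{2C^H_{\mathrm C}}\,D_0\,\tau_0}{H\varepsilon}\Bigr\},
\end{equation*}
which is $O(\varepsilon^{-2})$ for $\varepsilon \le 1$. The transition count is then $k = mH$.

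The main obstacle is the claimed polynomial dependence of the constant, since it requires tracking the explicit forms of $C^H_{\mathrm C}$, $c^H_{\mathrm C}$ and $g^H_{\mathrm C}(\alpha)$ from Appendix~\ref{app:fh-ctd}. The plan for this step:
\begin{itemize}
\item For $c^H_{\mathrm C}$, the drift gap should scale like $\rho_{H,\min}(1-\lambda)$ divided by the Moreau-envelope smoothing factor. Since $1-\lambda = 1/(H+1)$, its inverse is polynomial in $H$ and $1/\rho_{H,\min}$.
\item The smoothing with $p^\star = \max\{2,\lceil\log(H|\X|)\rceil\}$ over the $H|\X|$ blocks contributes the $1+\log(H|\X|)$ factor.
\item $C^H_{\mathrm C}$ collects the norm-equivalence factor $e^2$, the uniform perturbation bound $B_{H,\mathrm C}^2$, and the within-episode movement term. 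The latter is at most $H$ steps of size $O(\alpha_k B_{H,\mathrm C})$, hence polynomial in $H$.
\item The offset $g^H_{\mathrm C}(\alpha)$, and therefore $\tau_0$, should be polynomial in these quantities as well.
\end{itemize}
If these explicit forms hold, both lower bounds on $m$ are polynomial in $H$, $1/\rho_{H,\min}$, $1+\log(H|\X|)$ and $B_{H,\mathrm C}$, multiplied by $\varepsilon^{-2}$, which completes the argument.
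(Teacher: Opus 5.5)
Your proposal is correct and follows the paper's route: apply the linearly-diminishing episode-boundary bound (Theorem~\ref{thm:undisc-ctd}, equivalently Corollary~\ref{cor:fh-ctd-steps}(b)), pass from the squared error to the mean via Jensen, and read the polynomial dependence off the explicit constants in Proposition~\ref{prop:fh-ctd-abstract}. Your version is more explicit than the paper's one-line $O(1/(m+1))$ argument. You fix $\alpha=2/c^H_{\mathrm C}$, which is also the choice made in the paper's explicit-horizon discussion, and you bound the initial error by $D_0\le eB_{H,\mathrm C}$, so the transient term's constant is also covered by the stated parameter list.
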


\subsection{Undiscounted fixed-horizon MTD}
For MTD, each state-horizon pair $(h,s)$ carries a possibly horizon-dependent $d$-point support $\Theta_h(s)=\{\theta_{h,1}(s),\dots,\theta_{h,d}(s)\}\subset\mathbb R^q$. The statewise MMD metric $\ell_{H, \mathrm M, \infty}$ with a characteristic kernel $\kappa$ induced by a shift-invariant, $c$-homogeneous semimetric of strong negative type, the signed-categorical projection $\Pi^{\Theta_h(s)}_{H, \mathrm M}$, and the Gram-matrix embedding $I_{H, \mathrm M, h, s}$ are defined exactly as in Section~\ref{sec:mtd-new2}, now applied separately at each horizon and stacked over $h=1,\dots,H$.

Given a sampled transition $(S_k,A_k,R_k,S_{k+1})$, the sampled Bellman target is computed at every horizon $h=1,\dots,H$ by
\begin{equation}
\widehat T_{H,\mathrm M}^h(U_k;S_k,(R_k,S_{k+1})) := I_{H,\mathrm M,h,S_k} \Bigl(\Pi_{H,\mathrm M}^{\Theta_h(S_k)}\bigl((f_{R_k,1})_\# I_{H,\mathrm M,h-1,S_{k+1}}^{-1}(U_k^{h-1}(S_{k+1}))\bigr)\Bigr),
\end{equation}
again with the terminal convention $I_{H,\mathrm M,0,s}^{-1}(U_k^0(s)):=\delta_0$ for every $s$ and $k$. Stacking these horizonwise targets gives
\begin{equation}
\widehat T_{H,\mathrm M}(U_k;S_k,(R_k,S_{k+1})) := \bigl(\widehat T_{H,\mathrm M}^1(U_k;S_k,(R_k,S_{k+1})),\dots,\widehat T_{H,\mathrm M}^H(U_k;S_k,(R_k,S_{k+1}))\bigr).
\end{equation}
The next theorem follows the  same episode-boundary viewpoint as CTD. The weighted contraction mechanism is unchanged, while the perturbation term is now affine in the current boundary iterate.

\begin{theorem}[Undiscounted fixed-horizon episodic MTD]\label{thm:undisc-mtd}
Assume the episodic sampling model described above and $\rho_{H,\min}>0$. Let $\eta_{H,\mathrm M}^\star$ denote the unique fixed point of $\Pi_{H,\mathrm M}^{\Theta} T_H^\pi$. From any $\eta_0\in\mathcal F_{H,\mathrm M,\Theta}^{\X}$, let $U_0=I_{H,\mathrm M}(\eta_0)$ and let $\eta_k := I_{H,\mathrm M}^{-1}(U_k)$ be generated by \eqref{eq:undisc-recur} with $\widehat T_{H} = \widehat T_{H,\mathrm M}$. There exist explicit constants $C^H_\mathrm M, c^H_\mathrm M, \bar \alpha ^H_\mathrm M > 0$ and explicit thresholds $g^H_\mathrm M(\alpha)$ and $g^H_\mathrm M(\alpha, z)$ for which the following episode-boundary results hold in the natural unweighted error.

\textup{Constant step size.} If $\alpha_k \equiv \alpha \le \bar \alpha^H_\mathrm M$, then the episode-boundary error decays geometrically to an $O(\alpha)$ neighborhood:
\begin{equation}
\E\bigl[\ell_{H, \mathrm M, \infty}^{\mathrm{unw}}(\eta_{mH}, \eta_{H,\mathrm M}^\star)^2 \bigr] \le C^{H}_\mathrm M \ell_{H,\mathrm M, \infty}^{\mathrm{unw}}(\eta_0, \eta_{H,\mathrm M}^\star)^2(1- c^{H}_\mathrm M H\alpha)^m + C^{H}_\mathrm M \alpha.
\end{equation}
For the diminishing step size regimes below, write $\tau_m := mH+g+H-1$, so $\tau_0 = g+H-1$, where $g$ is the step-size offset.

\textup{Linearly-diminishing step size.} If $\alpha_k = \alpha / (k+g)$, $\alpha > 1/c^{H}_\mathrm M$, and $g \ge \max\{H-1,g^{H}_\mathrm M(\alpha)\}$, then the leading episode-boundary residual term decays in $O(1/m)$:
\begin{equation}
\E\bigl[\ell_{H,\mathrm M, \infty}^{\mathrm{unw}}(\eta_{mH}, \eta_{H,\mathrm M}^\star)^2 \bigr] \le C^{H}_\mathrm M \ell_{H, \mathrm M, \infty}^{\mathrm{unw}}(\eta_0, \eta_{H,\mathrm M}^\star)^2\left(\frac{\tau_0}{\tau_m} \right)^{c^{H}_\mathrm M\alpha} + \frac{C^{H}_\mathrm M \alpha^2}{c^{H}_\mathrm M\alpha -1} \cdot \frac{1}{\tau_m}.
\end{equation}
\textup{Polynomially-diminishing step size.} If $\alpha_k = \alpha/(k+g)^z$, $z \in (0,1)$, and $g \ge \max\{H-1,g^{H}_\mathrm M(\alpha, z)\}$, then the leading episode-boundary residual term decays in $O(1/m^z)$:
\begin{equation}
\E\bigl[\ell_{H, \mathrm M, \infty}^{\mathrm{unw}}(\eta_{mH}, \eta_{H,\mathrm M}^\star)^2 \bigr] \le C^{H}_\mathrm M \ell_{H, \mathrm M, \infty}^{\mathrm{unw}}(\eta_0, \eta_{H,\mathrm M}^\star)^2\exp\left(-\frac{c^{H}_\mathrm M \alpha}{1-z} \bigl(\tau^{1-z}_m - \tau^{1-z}_0 \bigr) \right) + \frac{C^{H}_\mathrm M \alpha}{\tau_m^z}.
\end{equation}
\end{theorem}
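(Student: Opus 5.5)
The proof follows the episode-boundary template of Theorem~\ref{thm:undisc-ctd}. It runs in the weighted embedded norm $\lVert\cdot\rVert_{H,2,\infty}$ and converts to the unweighted error at the end via $\ell^{\mathrm{unw}}_{H,\infty}\le e\,\ell_{H,\infty}$, which costs a factor $e^2$ in the squared bounds. Write $U^\star=I_{H,\mathrm M}(\eta^\star_{H,\mathrm M})$ and $E_k=U_k-U^\star$. Proposition~\ref{prop:undisc-contract} gives that $\mathcal O_H$ is a $\lambda$-contraction. Hence the episode-averaged increment $\bar\Gamma(U):=\frac1H\sum_{t=0}^{H-1}\Gamma_t(U)=\sum_s\bar\rho_H(s)P^H_s((\mathcal O_HU)(s)-U(s))$ makes $U+\beta\bar\Gamma(U)$ a contraction with modulus $1-\beta\rho_{H,\min}(1-\lambda)=1-\beta\rho_{H,\min}/(H+1)$ for $\beta\le1$. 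This is the blockwise-convex-combination argument used in the discounted analysis, now applied with the cumulative occupancy in place of $\rho$.

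\textbf{Step 1 (frozen one-episode expansion).} Take the Moreau envelope $M_{\vartheta,p^\star}$ of $\tfrac12\lVert\cdot\rVert^2_{H,2,\infty}$ from Appendix~\ref{app:common}, with $p^\star$ now built from $H|\X|$ blocks, and expand it across one episode:
\begin{equation}
E_{(m+1)H}=E_{mH}+\sum_{u=0}^{H-1}\alpha_{mH+u}\bigl(\widehat T_{H,\mathrm M}(U_{mH+u};\cdot)-U_{mH+u}(S_{mH+u})\bigr)_{S_{mH+u}}.
\end{equation}
In each summand, replace $U_{mH+u}$ by the frozen $U_{mH}$. Since episodes are i.i.d.\ and the fresh episode is independent of $U_{mH}$, the conditional mean of the frozen sum is $\sum_u\alpha_{mH+u}\Gamma_u(U_{mH})$. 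I then compare this sum with $\bar\alpha_m\bar\Gamma(U_{mH})$. For constant step the two coincide exactly. For diminishing steps the discrepancy is $O(\bar\alpha^{(2)}_m\cdot H\,\lVert E_{mH}\rVert)$, because consecutive step sizes differ by $O(\alpha_k^2/\alpha)$. Together with the contraction, this gives the negative drift $-c\,\bar\alpha_m\rho_{H,\min}(1-\lambda)M(E_{mH})$, after choosing $\vartheta$ small as in the discounted proof.

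\textbf{Step 2 (perturbation and drift errors).} Three error sources remain, all of order $\bar\alpha^{(2)}_m$. The first is the mean-zero frozen noise, whose second moment is controlled by the MTD centered perturbation bound. As in Theorem~\ref{thm:mtd-main}, that bound is affine rather than uniform: $\lVert\widehat T-\mathcal O_HU\rVert\le A_1\lVert U-U^\star\rVert+B^\star_{H,\mathrm M}$, now in the weighted norm. This holds because the projection $\Pi^{\Theta_h(s)}_{H,\mathrm M}$ is nonexpansive in MMD, pushforward by $f_{r,1}$ is an MMD isometry for the shift-invariant kernel, and the horizon shift gains one factor $\lambda$. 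The second is the within-episode movement $U_{mH+u}-U_{mH}$. Using the affine bound and Lipschitz continuity of $\widehat T$ inductively over $u<H$, its norm is at most $C\sum_{v<u}\alpha_{mH+v}(1+\lVert E_{mH}\rVert)$. Multiplying by the Lipschitz constant of the target gives an $O(\bar\alpha_m^2)\le O(H\bar\alpha^{(2)}_m)$ contribution. The third is the smoothness (quadratic) remainder of $M_{\vartheta,p^\star}$. All three are bounded by $C_H\bar\alpha^{(2)}_m(1+M(E_{mH}))$.

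\textbf{Step 3 (recursion).} Collecting the pieces gives
\begin{equation}
\E[M(E_{(m+1)H})]\le\bigl(1-c\bar\alpha_m+C\bar\alpha_m^{(2)}\bigr)\E[M(E_{mH})]+C\bar\alpha_m^{(2)}.
\end{equation}
Under $\alpha\le\bar\alpha^H_\mathrm M$, or $g\ge g^H_\mathrm M$, the $C\bar\alpha^{(2)}_m$ term is absorbed into half the drift; this absorption is where the affine coefficient restricts the admissible step size. The recursion is then unrolled with the standard Chen et al.\ schedule lemmas. For the constant step, $\bar\alpha_m=H\alpha$ gives the factor $(1-c^H_\mathrm M H\alpha)^m$. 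For the diminishing steps, I bound $\bar\alpha_m\ge\int_{mH+g}^{(m+1)H+g}\alpha/(x+?)\,dx$ and telescope. Requiring $g\ge H-1$ makes every step in an episode comparable to $\alpha/\tau_m$, which yields the exponents in $\tau_0/\tau_m$ and in $\tau_m^{1-z}-\tau_0^{1-z}$. The final step converts the envelope back to the norm and the norm to the unweighted metric.

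The main obstacle is Step 2. The affine MTD perturbation makes both the noise variance and the within-episode drift grow with $\lVert E_{mH}\rVert$. I therefore need a uniform-in-$u$ moment bound $\E\lVert E_{mH+u}\rVert^2\le C(1+\E\lVert E_{mH}\rVert^2)$, obtained by a discrete Gr\"onwall argument over the $H$ inner steps, so that every cross term is controlled by $M(E_{mH})$ with an explicit $H$-polynomial constant.
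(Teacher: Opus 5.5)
Your proposal is correct and follows essentially the paper's route: freeze the iterate at the episode boundary, identify the conditional mean of the frozen increments through the phasewise maps $\Gamma_t$, control within-episode movement with the affine MTD residual bound and a Gr\"onwall step, run the Moreau-envelope drift on the flattened $H\lvert\X\rvert$-block space, absorb the affine term into half the drift, unroll the recursion, and convert to the unweighted metric at a cost of $e^2$. The one local difference is how you handle step sizes that vary within an episode: you compare $\sum_u\alpha_{mH+u}\Gamma_u$ with $\bar\alpha_m\bar\Gamma$ and pay a second-order discrepancy, whereas the paper uses the step-size-weighted occupancy $\rho_m^\alpha$, which makes the frozen episode map an exact $(1-\kappa_H\bar\alpha_m)$-contraction at the price of the regularity condition $\alpha_{mH+H-1}\ge\alpha_{mH}/2$ and a factor $1/2$ on $\rho_{H,\min}$.
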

\begin{proof}[Proof sketch]
This follows the frozen episode-boundary CTD argument after replacing the Cram\'er embedding by the Gram-matrix embedding. The cumulative-occupancy contraction and universal weighted-to-unweighted conversion remain unchanged, while the signed-categorical perturbation bound is affine in the current boundary iterate. Details are given in Appendix~\ref{app:fh-mtd}.
\end{proof}

\begin{corollary}\label{cor:undisc-mtd}
To guarantee $\mathbb{E}[\ell_{H, \mathrm M,\infty}^{\mathrm{unw}}(\eta_{mH},\eta_{H, \mathrm M}^\star)] \le \varepsilon$ with the undiscounted fixed-horizon MTD recursion, it suffices to take $m = O(\varepsilon^{-2})$ episodes. The constant is polynomial in $H$, $1/\rho_{H,\min}$, $1+\log(H\lvert\X\rvert)$, and the support-geometry constants. Equivalently, the transition count is $k=mH$.
\end{corollary}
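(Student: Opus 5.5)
The plan is to derive the statement from the MTD episode-boundary bound, Theorem~\ref{thm:undisc-mtd}, under the linearly-diminishing schedule, together with Jensen's inequality. The argument mirrors Corollary~\ref{cor:undisc-ctd}.

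\textbf{Step 1 (choose the schedule).} Fix $\alpha := 2/c^H_{\mathrm M}$, so that $c^H_{\mathrm M}\alpha = 2 > 1$. Fix the offset $g := \max\{H-1, g^H_{\mathrm M}(\alpha)\}$. Both quantities depend only on the problem constants.

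\textbf{Step 2 (apply the theorem and Jensen).} With these choices, Theorem~\ref{thm:undisc-mtd} gives
\begin{equation}
\E\bigl[\ell_{H,\mathrm M,\infty}^{\mathrm{unw}}(\eta_{mH},\eta_{H,\mathrm M}^\star)^2\bigr] \le C^H_{\mathrm M}\,\ell_{H,\mathrm M,\infty}^{\mathrm{unw}}(\eta_0,\eta_{H,\mathrm M}^\star)^2 \left(\frac{\tau_0}{\tau_m}\right)^{2} + \frac{C^H_{\mathrm M}\alpha^2}{\tau_m}.
\end{equation}
By Jensen's inequality, $\E[\ell] \le (\E[\ell^2])^{1/2}$.

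\textbf{Step 3 (read off the episode count).} Since $\tau_m \ge mH \ge m$, the right-hand side is at most $K_1/m^2 + K_2/m$, where $K_1 := C^H_{\mathrm M}\,\ell_{H,\mathrm M,\infty}^{\mathrm{unw}}(\eta_0,\eta_{H,\mathrm M}^\star)^2\,\tau_0^2$ and $K_2 := C^H_{\mathrm M}\alpha^2$. Requiring each of the two terms to be at most $\varepsilon^2/2$ gives
\begin{equation}
m \ge \max\Bigl\{\sqrt{2K_1}/\varepsilon,\; 2K_2/\varepsilon^2\Bigr\},
\end{equation}
which is $O(\varepsilon^{-2})$.

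\textbf{Step 4 (initial error and polynomial dependence).} The initial error is finite and bounded by the support-geometry constants. At each horizon, $\eta_0$ and $\eta^\star$ are mass-one signed measures on a fixed finite support, and after embedding the MMD distance is a Euclidean norm of $K^{1/2}p$. The main obstacle is checking that $C^H_{\mathrm M}$, $1/c^H_{\mathrm M}$ and $g^H_{\mathrm M}$ are polynomial in $H$, $1/\rho_{H,\min}$, $1+\log(H|\X|)$ and the support constants, with no exponential dependence on $H$. I would verify this from the explicit constants in Appendix~\ref{app:fh-mtd}, using three facts:
\begin{itemize}
\item The contraction gap $1-\lambda = 1/(H+1)$ is polynomial in $H$.
\item The weighted-to-unweighted conversion factor is at most $e$.
\item The smoothing exponent is $p^\star \sim \log(H|\X|)$, since there are $H|\X|$ blocks.
\end{itemize}

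\textbf{Step 5 (affine perturbation).} The affine MTD perturbation bound contributes only through the admissible step-size threshold and the constants. Its growth term is absorbed into the constants because boundary iterates remain in the invariant domain, where the error is bounded in expectation by the theorem itself. Finally, since each episode contains $H$ transitions, the transition count is $k = mH$.
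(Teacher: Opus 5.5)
Your main line is the same as the paper's. The paper invokes the linearly-diminishing case (Corollary~\ref{cor:fh-mtd-steps}(b), equivalently Theorem~\ref{thm:undisc-mtd}) to get $\E[\ell_{H,\mathrm M,\infty}^{\mathrm{unw}}(\eta_{mH},\eta_{H,\mathrm M}^\star)^2]=O(1/(m+1))$, applies Jensen, and reads off $m=O(\varepsilon^{-2})$ and $k=mH$. Your explicit choice $\alpha=2/c^H_{\mathrm M}$ and your two-term split are a correct concrete version of that argument.

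Two of your side justifications are wrong, although neither is load-bearing.

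In Step~4, the initial error is \emph{not} bounded by support-geometry constants for MTD. The domain $\mathcal D_{H,\mathrm M}$ is an unbounded affine set of mass-one signed coefficient vectors. For example, $p_0=p^\star+t(e_1-e_2)$ at one block gives error $\lvert t\rvert\,\mathrm{MMD}_\kappa(\delta_{\theta_1},\delta_{\theta_2})$, which is unbounded in $t$. That claim holds for CTD, not here. The correct treatment is to keep $\ell_{H,\mathrm M,\infty}^{\mathrm{unw}}(\eta_0,\eta_{H,\mathrm M}^\star)$ as an explicit parameter. It enters only $K_1$, and hence only the lower-order requirement $m\ge\sqrt{2K_1}/\varepsilon$. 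The leading coefficient $2K_2=2C^H_{\mathrm M}\alpha^2$ does not depend on the initialization. The paper's $O(1/(m+1))$ silently absorbs this dependence as well.

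In Step~5, membership in the invariant domain gives no bound on the error, and appealing to the theorem's own conclusion to justify absorbing the affine growth is circular. The affine growth is actually absorbed inside the proof of Proposition~\ref{prop:fh-mtd-drift}, through the envelope bound on $Y_m^2$ together with the third entry in the definition of $\bar\alpha_{H,\mathrm M}$. For the corollary you should simply cite the theorem and drop that step.
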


\subsection{Explicit horizon dependence}
The choice $\lambda=H/(H+1)$ gives the episode-level contraction gap $\kappa_H=\rho_{H,\min}/(2(H+1))$ and the universal metric conversion $\lambda^{-H}\le e$. Let $p_H^\star:=\max\{2,\lceil\log(H\lvert\X\rvert)\rceil\}$, $q_H:=(H\lvert\X\rvert)^{2/p_H^\star}$, and $L_H^{\mathrm{sm}}:=(p_H^\star-1)q_H^2=O(1+\log(H\lvert\X\rvert))$. Appendix~\ref{sec:fh-ctd-drift-app} establishes
\begin{equation}
c_{\mathrm C}^H\ge\frac{\kappa_H}{4},\qquad \bar\alpha_{\mathrm C}^H\ge\min\left\{\frac{1}{H},\frac{\kappa_H^2}{32HL_H^{\mathrm{sm}}}\right\},\qquad \frac{a_{H,\mathrm C,4}^{\mathrm{fh}}}{a_{H,\mathrm C,2}^{\mathrm{fh}}}\le\frac{1536e^2HL_H^{\mathrm{sm}}B_{H,\mathrm C}^2}{\kappa_H^3}.
\end{equation}
Thus the constant-step residual is $O(HL_H^{\mathrm{sm}}B_{H,\mathrm C}^2\kappa_H^{-3}\alpha)$. For the linear schedule with $\alpha=2/a_{H,\mathrm C,2}^{\mathrm{fh}}$, its leading episode-boundary residual is
\begin{equation}
O\left(\frac{H^2L_H^{\mathrm{sm}}B_{H,\mathrm C}^2}{\kappa_H^4}\cdot\frac{1}{m}\right)=O\left(\frac{H^2(H+1)^4L_H^{\mathrm{sm}}B_{H,\mathrm C}^2}{\rho_{H,\min}^4}\cdot\frac{1}{m}\right).
\end{equation}
For MTD, let $G_{H,\mathrm M}:=\overline C_{H,\mathrm M}^{\mathrm{pot}}(1+2\lVert U_{H,\mathrm M}^\star\rVert_{H,2,\infty}^2)$, where Appendix~\ref{sec:fh-mtd-drift-app} defines $\overline C_{H,\mathrm M}^{\mathrm{pot}}$ explicitly in the support-geometry constants. The corresponding constant-step and linear-schedule residuals are
\begin{equation}
O\left(\frac{HG_{H,\mathrm M}}{\kappa_H}\alpha\right),\qquad O\left(\frac{H^2G_{H,\mathrm M}}{\kappa_H^2}\cdot\frac{1}{m}\right),
\end{equation}
respectively. Moreover, $\overline C_{H,\mathrm M}^{\mathrm{pot}}=O(L_H^{\mathrm{sm}}((C_{H,\mathrm M}^{\mathrm{move}})^2/\kappa_H^2+(C_{H,\mathrm M}^{\mathrm{end}})^2/\kappa_H))$. Thus no factor of the form $\lambda^{-H}$ or $\exp(1/\kappa_H)$ remains, and the displayed dependence outside the named support-geometry quantities is polynomial in $H$.

\section{Experiments}\label{sec:experiments}
We include a small tabular study to check that the behavior predicted by the finite-iteration bounds is visible in practice. For the discounted experiments, we use a five-state sticky ring with self-transition probability $0.94$, discount factor $\gamma=0.9$, deterministic state-dependent rewards, and $51$ uniformly spaced categorical atoms. We compare i.i.d. uniform state queries with a single Markovian trajectory. The fixed-horizon experiment uses $H=5$, two states at each phase, and cumulative occupancy $\rho_{H,\min}=0.1$. We report the mean squared statewise supremum Cram\'er error over $32$ independent runs, with interquartile bands.

In the discounted experiments, the constant, linear, and polynomial schedules are $0.04$, $100/(k+400)$, and $0.25(400/(k+400))^{0.7}$. Their fixed-horizon counterparts are $0.05$, $150/(k+600)$, and $0.25(600/(k+600))^{0.7}$. Figure~\ref{fig:synthetic-rates} shows a persistent constant-step neighborhood and decay under both diminishing schedules. Log-log fits over the final $40\%$ of each run give slopes $-1.01$, $-1.00$, and $-0.97$ for the linear schedule in the i.i.d., Markovian, and fixed-horizon settings. The corresponding polynomial-schedule slopes are $-0.70$, $-0.71$, and $-0.70$. These observations agree with the $O(1/k)$, $O(1/k^z)$, $O(1/m)$, and $O(1/m^z)$ squared-error rates, including under cumulative rather than phasewise support.

\begin{figure}[t]
\centering
\includegraphics[width=\linewidth]{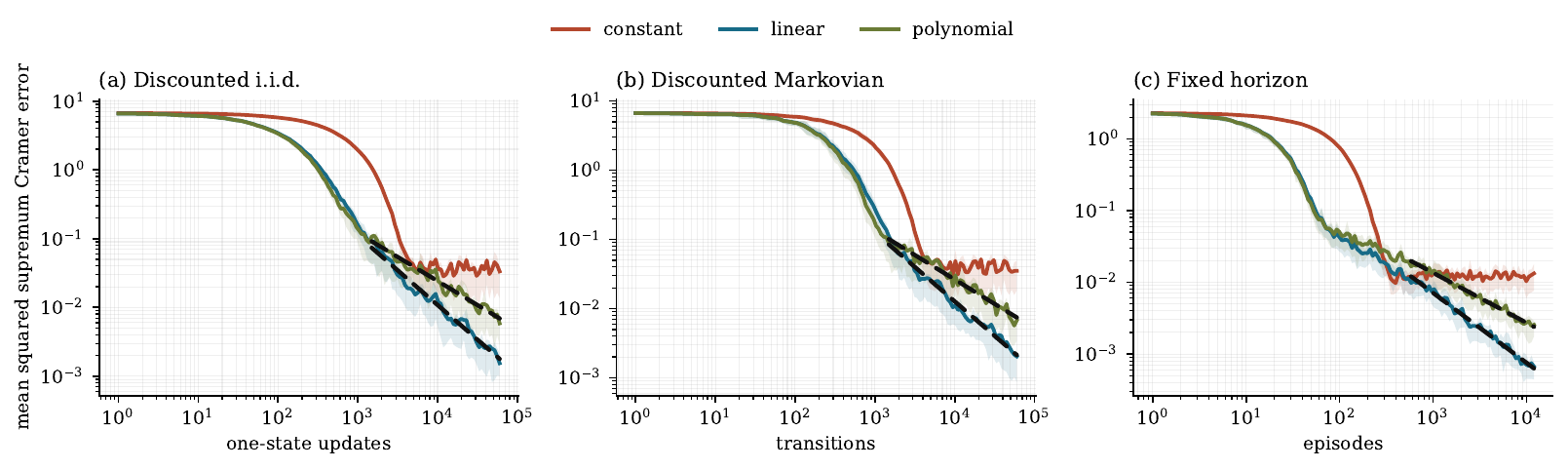}
\caption{Finite-iteration behavior of tabular CTD. Solid colored curves show means over $32$ runs, shaded regions show interquartile ranges, and black dashed segments show log-log fits over the final $40\%$ of the linear and polynomial runs. The fixed-horizon example has phase-dependent state occupancy.}
\label{fig:synthetic-rates}
\end{figure}

\section{Discussion of the results}
Theorems~\ref{thm:ctd-main2} and~\ref{thm:mtd-main} show that the standard categorical recursions are structurally simpler than they may initially appear. After suitable statewise isometric embeddings, both CTD and MTD become asynchronous SA schemes that update one state block at a time and contract in a block-supremum norm. This common structure is what makes a unified finite-iteration analysis possible. The real point of departure is the perturbation geometry. CTD lives in a bounded-noise regime because bounded categorical supports and the Cram\'er geometry give uniform samplewise control, whereas the signed-categorical MTD perturbation is affine in the centered error $\lVert U-U^\star \rVert_{2,\infty}$.

In the undiscounted fixed-horizon half, the learned object is a horizon-indexed stack, and one sampled transition updates the whole stack block at the visited state because every horizon shares the same first reward and differs only in the bootstrapped tail. There is no discount-driven contraction to exploit. A weighted horizon norm supplies the proof metric, while the final guarantees are converted to the natural unweighted metric with a universal factor because $\lambda=H/(H+1)$. The theory is stated at episode boundaries because the averaged drift changes with phase. Across all settings, constant steps give a controllable $O(\alpha)$ neighborhood, linearly-diminishing steps give $O(1/k)$ squared error, and polynomially-diminishing steps give $O(1/k^z)$. Further discussion is deferred to Appendix~\ref{app:discus}.

\section{Conclusion}
We established finite-iteration guarantees for asynchronous categorical distributional temporal-difference learning in the scalar Cramér and multivariate signed-categorical MMD settings. The scope of the present theory is tabular finite-state policy evaluation. We do not treat control or policy improvement scenarios, and our guarantees are expectation bounds rather than high-probability bounds. Within that scope, the discounted results cover i.i.d.\ sampled transition tuples and Markovian trajectories, while the undiscounted results treat the fixed-horizon episodic regime under the exact online recursions.

\section*{Acknowledgments}

The authors are grateful to Prof. Zaiwei Chen for helpful feedback during the preparation of this manuscript.

\newpage
\bibliography{refs}

\newpage
\appendix
\section*{Appendix Table of Contents}
\addcontentsline{toc}{section}{Appendix}
\markboth{Appendix}{Appendix}
\startcontents[appendix]
\printcontents[appendix]{l}{1}{\setcounter{tocdepth}{3}}

\section{Common finite-iteration ingredients in the block-supremum geometry}
\label{app:common}

We collect the abstract ingredients \citep{robbins1951stochastic, ljung2003analysis, borkar2008stochastic, kushner2003stochastic} used in the proofs of Theorems~\ref{thm:ctd-main2} and~\ref{thm:mtd-main}. Throughout this appendix, let
\begin{equation}
V := \prod_{s\in \X} E(s)
\end{equation}
be a finite product of Euclidean state blocks. For $U\in V$ and $p\in[2,\infty)$, define
\begin{equation}
\lVert U \rVert_{2,\infty} := \max_{s\in \X}\lVert U(s) \rVert_2, \qquad \lVert U \rVert_{2,p} := \left(\sum_{s\in \X}\lVert U(s) \rVert_2^p\right)^{1/p}.
\end{equation}

Because $V$ is finite-dimensional and each block $E(s)$ is Euclidean, all gradients, smoothness statements, and Moreau-envelope constructions below are taken with respect to the product Euclidean inner product on $V$. In particular, $\lVert \cdot \rVert_{2,p}$ is a genuine norm on a finite-dimensional Euclidean space, so the standard smoothness results for squared $\ell_p$-type norms apply directly.

For each state $s\in \X$, let $P_s:V\to V$ denote the coordinate projector onto block $s$.

\subsection{Norm comparison and smoothing}

\begin{lemma}[Block-norm comparison]
\label{lem:block-norm-comparison}
For every $U\in V$ and every $p\in[2,\infty)$,
\begin{equation}
\lVert U \rVert_{2,\infty} \le \lVert U \rVert_{2,p} \le \lvert \X \rvert^{1/p}\lVert U \rVert_{2,\infty}.
\end{equation}
\end{lemma}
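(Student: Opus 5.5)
The plan is to reduce both inequalities to elementary facts about the finite vector of block norms $a := (\lVert U(s)\rVert_2)_{s\in\X} \in [0,\infty)^{\X}$. By the definitions at the start of this appendix, $\lVert U\rVert_{2,\infty} = \max_s a_s$ and $\lVert U\rVert_{2,p} = (\sum_s a_s^p)^{1/p}$. The Euclidean structure inside each block therefore plays no role beyond producing the nonnegative scalars $a_s$. The statement becomes the classical comparison $\lVert a\rVert_\infty \le \lVert a\rVert_p \le \lvert\X\rvert^{1/p}\lVert a\rVert_\infty$ for vectors in $\R^{\lvert\X\rvert}$.

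For the lower bound, we pick $s^\star \in \arg\max_s a_s$, which exists because $\X$ is finite. Since every summand is nonnegative, dropping all terms except the one at $s^\star$ gives
\[
\sum_{s} a_s^p \ge a_{s^\star}^p = \lVert U\rVert_{2,\infty}^p .
\]
Taking $p$-th roots, which is monotone on $[0,\infty)$, yields $\lVert U\rVert_{2,\infty}\le\lVert U\rVert_{2,p}$.

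For the upper bound, we bound each summand by its maximum, $a_s^p \le \lVert U\rVert_{2,\infty}^p$. Summing over the $\lvert\X\rvert$ states gives
\[
\sum_s a_s^p \le \lvert\X\rvert\,\lVert U\rVert_{2,\infty}^p ,
\]
and taking $p$-th roots gives the factor $\lvert\X\rvert^{1/p}$.

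Neither step poses a real obstacle. The only care needed is to use nonnegativity of the block norms and monotonicity of $t\mapsto t^{1/p}$. The restriction $p\ge 2$ is not used here; the argument holds for all $p\ge1$, and $p\ge 2$ matters only later for the smoothness of $\lVert\cdot\rVert_{2,p}^2$. We will also note the consequence used downstream: with $p=p^\star=\max\{2,\lceil\log\lvert\X\rvert\rceil\}$, the factor satisfies $\lvert\X\rvert^{1/p^\star}\le e$ (when $p^\star\ge\log\lvert\X\rvert$), so the two norms agree up to a universal constant.
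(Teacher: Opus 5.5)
Your proposal is correct and follows essentially the same route as the paper. The paper likewise gets the left inequality from the fact that the maximum of finitely many nonnegative numbers is at most their $\ell_p$ norm, and the right inequality from bounding each summand $\lVert U(s)\rVert_2^p$ by $\lVert U\rVert_{2,\infty}^p$ and taking $p$th roots.
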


\begin{proof}
The left inequality is immediate because the maximum of finitely many nonnegative numbers is bounded above by their $\ell_p$ norm. For the right inequality,
\begin{equation}
\lVert U \rVert_{2,p}^p = \sum_{s\in \X}\lVert U(s) \rVert_2^p \le \sum_{s\in \X}\lVert U \rVert_{2,\infty}^p = \lvert \X \rvert \lVert U \rVert_{2,\infty}^p.
\end{equation}
Taking $p$th roots proves the claim.
\end{proof}

\begin{proposition}[Choice of the smoothing exponent]
\label{prop:pstar-choice}
Let
\begin{equation}
p^\star := \max\{2,\lceil \log \lvert \X \rvert\rceil\}.
\end{equation}
Then
\begin{equation}
\lvert \X \rvert^{2/p^\star} \le e^2.
\end{equation}
Consequently,
\begin{equation}
(p^\star-1)\lvert \X \rvert^{2/p^\star} \le e^2 \log \lvert \X \rvert \qquad\text{for }\lvert \X \rvert\ge 2.
\end{equation}
\end{proposition}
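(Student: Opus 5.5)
The plan is a direct case analysis on which term attains the maximum in $p^\star=\max\{2,\lceil\log\lvert\X\rvert\rceil\}$, where $\log$ denotes the natural logarithm. I will write
\begin{equation}
\lvert\X\rvert^{2/p^\star}=\exp\bigl(2\log\lvert\X\rvert/p^\star\bigr),
\end{equation}
so the first claim reduces to showing $\log\lvert\X\rvert\le p^\star$. This holds in every case: $p^\star\ge\lceil\log\lvert\X\rvert\rceil\ge\log\lvert\X\rvert$. Hence $2\log\lvert\X\rvert/p^\star\le 2$, which gives $\lvert\X\rvert^{2/p^\star}\le e^2$. The trivial case $\lvert\X\rvert=1$ gives $1\le e^2$ directly.

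For the second claim, assume $\lvert\X\rvert\ge2$ and split into two cases.

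\emph{Case 1: $p^\star=\lceil\log\lvert\X\rvert\rceil\ge3$.} Since $p^\star$ is the ceiling of $\log\lvert\X\rvert$, we have $p^\star-1<\log\lvert\X\rvert$. Combining this with the first claim immediately gives
\begin{equation}
(p^\star-1)\lvert\X\rvert^{2/p^\star}\le e^2\log\lvert\X\rvert.
\end{equation}

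\emph{Case 2: $p^\star=2$.} This happens exactly when $\lceil\log\lvert\X\rvert\rceil\le2$, that is, when $2\le\lvert\X\rvert\le e^2$. In this case $p^\star-1=1$ and $\lvert\X\rvert^{2/p^\star}=\lvert\X\rvert$, so the target inequality becomes
\begin{equation}
\frac{\lvert\X\rvert}{\log\lvert\X\rvert}\le e^2 .
\end{equation}
Case 2 is the only spot needing care. The crude bound $p^\star-1\le\log\lvert\X\rvert$ fails here when $\lvert\X\rvert<e$, so the first claim cannot simply be reused. I will instead bound the function $x\mapsto x/\log x$ on the interval $[2,e^2]$ directly. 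Its derivative is $(\log x-1)/(\log x)^2$, so the function decreases on $[2,e]$ and increases on $[e,e^2]$. Its maximum on the interval is therefore attained at an endpoint, and the endpoint values are
\begin{equation}
\frac{2}{\log 2}\approx 2.89,\qquad \frac{e^2}{2}\approx 3.69 .
\end{equation}
Both are below $e^2$, which completes Case 2 and hence the proposition.
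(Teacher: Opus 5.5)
Your proof is correct and follows essentially the same route as the paper: the same split into $p^\star=2$ (equivalently $2\le\lvert\X\rvert\le e^2$) and $p^\star=\lceil\log\lvert\X\rvert\rceil\ge3$, with the large case handled by combining $p^\star-1<\log\lvert\X\rvert$ with the first claim. The differences are cosmetic. You prove the first claim uniformly from $\log\lvert\X\rvert\le p^\star$ rather than by cases. In the small case you bound $x/\log x$ using its unimodality on $[2,e^2]$, whereas the paper observes that $x\mapsto e^2\log x-x$ is increasing on $[2,e^2]$ and positive at $x=2$.
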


\begin{proof}
If $\lvert \X \rvert\le e^2$, then $p^\star=2$ and the claim is immediate. If $\lvert \X \rvert>e^2$, then $p^\star\ge \log \lvert \X \rvert$, hence
\begin{equation}
\lvert \X \rvert^{2/p^\star} = \exp\left(\frac{2\log\lvert \X \rvert}{p^\star}\right) \le e^2.
\end{equation}
For the second claim, first suppose $2\le\lvert \X\rvert\le e^2$. Then $p^\star=2$, so its left-hand side equals $\lvert \X\rvert$. The function $x\mapsto e^2\log x-x$ is increasing on $[2,e^2]$ and is positive at $x=2$, hence $\lvert \X\rvert\le e^2\log\lvert \X\rvert$. If $\lvert \X\rvert>e^2$, then $p^\star-1\le\log\lvert \X\rvert$, and multiplying this inequality by $\lvert \X\rvert^{2/p^\star}\le e^2$ gives the result.
\end{proof}

\begin{proposition}[Smooth block potential]
\label{prop:block-smooth-potential}
For $p\in[2,\infty)$, define
\begin{equation}
g_p(U) := \frac{1}{2}\lVert U \rVert_{2,p}^2.
\end{equation}
Then $g_p$ is convex and $(p-1)$-smooth \citep{beck2017first} with respect to $\lVert \cdot \rVert_{2,p}$.
\end{proposition}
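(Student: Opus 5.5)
Convexity is immediate: $g_p$ is the composition of the convex, nonnegative function $U\mapsto\lVert U\rVert_{2,p}$ with the nondecreasing convex map $t\mapsto t^2/2$ on $[0,\infty)$, so $g_p$ is convex. For smoothness, the plan is to reduce to the classical fact that $x\mapsto\frac12\lVert x\rVert_p^2$ is $(p-1)$-smooth on $\R^n$ with respect to $\lVert\cdot\rVert_p$ for $p\ge2$ \citep{beck2017first}, and then lift this from scalar coordinates to Euclidean blocks.

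The first key step is a rotation-invariance reduction. Write $N(U):=(\lVert U(s)\rVert_2)_{s\in\X}\in\R^{\lvert\X\rvert}$, so that $g_p(U)=\frac12\lVert N(U)\rVert_p^2$. I would prove smoothness in the equivalent form of the two-point bound
\begin{equation}
g_p(U+W)\le g_p(U)+\langle\nabla g_p(U),W\rangle+\tfrac{p-1}{2}\lVert W\rVert_{2,p}^2 ,
\end{equation}
where the gradient is $\nabla g_p(U)(s)=\lVert U\rVert_{2,p}^{2-p}\lVert U(s)\rVert_2^{p-2}U(s)$ away from $U=0$ (and $g_p$ is differentiable everywhere, since $p\ge2$). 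It suffices to check the inequality along the segment $t\mapsto U+tW$ via a second-order bound $\frac{d^2}{dt^2}g_p(U+tW)\le(p-1)\lVert W\rVert_{2,p}^2$ at points where all nonzero blocks stay nonzero, with the remaining points handled by continuity and a density argument.

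The second step is the Hessian computation. With $u_s:=\lVert U(s)\rVert_2$ and $S:=\sum_s u_s^p$, the second directional derivative of $\frac12 S^{2/p}$ is
\begin{equation}
\tfrac{2-p}{p^2}\,S^{2/p-2}\bigl(\textstyle\sum_s\partial_t u_s^p\bigr)^2+\tfrac1p S^{2/p-1}\textstyle\sum_s\partial_t^2 u_s^p .
\end{equation}
The first term is nonpositive because $p\ge2$, so it can be dropped. For the second term, the blockwise Hessian of $x\mapsto\lVert x\rVert_2^p$ is $p\lVert x\rVert^{p-2}I+p(p-2)\lVert x\rVert^{p-4}xx^\top\preceq p(p-1)\lVert x\rVert^{p-2}I$, which gives $\partial_t^2u_s^p\le p(p-1)u_s^{p-2}\lVert W(s)\rVert_2^2$. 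Hölder's inequality with exponents $p/(p-2)$ and $p/2$ then yields
\begin{equation}
\textstyle\sum_s u_s^{p-2}\lVert W(s)\rVert_2^2\le S^{(p-2)/p}\lVert W\rVert_{2,p}^2 ,
\end{equation}
and the powers of $S$ cancel to give the bound $(p-1)\lVert W\rVert_{2,p}^2$.

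The main obstacle is the non-smooth locus, namely blocks with $U(s)=0$ along the segment and the point $U=0$ itself. There $\lVert\cdot\rVert_2^p$ is still $C^1$ with a locally Lipschitz gradient for $p\ge2$, but it is not twice differentiable at the origin when $p<3$ or so. My first approach would be to smooth by replacing $u_s$ with $(u_s^2+\epsilon)^{1/2}$, redo the same Hessian bound (which still holds because $p\ge2$), and let $\epsilon\to0$ in the integrated two-point inequality. If that becomes messy, the fallback is to cite the vector-valued $\ell_p(\ell_2)$ smoothness directly, since $\ell_p$ of a Hilbert space has the same modulus of smoothness constant $p-1$.
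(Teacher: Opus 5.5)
Your proof is correct, and it is more hands-on than the paper's. The paper's argument is essentially a citation. It identifies $V$ with $\mathbb R^N$ via orthonormal bases in the blocks, calls $\lVert\cdot\rVert_{2,p}$ ``a norm of $\ell_p$ type on grouped coordinates,'' and invokes the standard $(p-1)$-smoothness of $\frac12\lVert x\rVert_p^2$ to write down the descent-lemma inequality. Strictly, the scalar $\ell_p$ fact does not literally cover the mixed $\ell_p(\ell_2)$ norm, and your computation is what actually justifies the transfer. Discarding $\frac{2-p}{p^2}S^{2/p-2}(\partial_t S)^2\le0$, using the blockwise bound $\nabla^2\lVert x\rVert_2^p\preceq p(p-1)\lVert x\rVert_2^{p-2}I$, and applying H\"older with exponents $p/(p-2)$ and $p/2$ give exactly the constant $p-1$ and show where $p\ge2$ is used. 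Working with the descent-lemma form is fine: for convex functions it is equivalent to a dual-norm Lipschitz gradient, and it is also the form the paper states.

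One correction to your diagnosis of the obstacle. The map $x\mapsto\lVert x\rVert_2^p$ is $C^2$ on each block for every $p\ge2$: its gradient $p\lVert x\rVert_2^{p-2}x$ is $o(\lVert x\rVert_2)$ when $p>2$, and its Hessian is bounded by $p(p-1)\lVert x\rVert_2^{p-2}\to0$. So zero blocks cause no trouble; only third derivatives fail for $p<3$. The single genuine singularity is the outer power $S^{2/p}$ at $U=0$. Your $\epsilon$-regularization handles it, since the same Hessian bound holds for $(\lVert x\rVert_2^2+\epsilon)^{p/2}$ because $xx^\top\preceq(\lVert x\rVert_2^2+\epsilon)I$. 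A simpler fix: $\phi(t):=g_p(U+tW)$ is $C^1$ and satisfies $0\le\phi''\le(p-1)\lVert W\rVert_{2,p}^2$ except at most at one $t$, so $\phi'$ is Lipschitz and the bound integrates directly.

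The paper's route buys brevity; yours gives a self-contained proof of the block version that the paper takes for granted.
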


\begin{proof}
This is the standard smoothness of the squared $\ell_p$ norm for $p\ge 2$, applied to the mixed norm $\lVert \cdot \rVert_{2,p}$ on the finite-dimensional product space $V$. Equivalently, after choosing orthonormal bases in each block $E(s)$, the space $V$ identifies with $\mathbb R^N$ for some finite $N$, and $\lVert \cdot \rVert_{2,p}$ becomes a norm of $\ell_p$ type on grouped coordinates. The standard $\ell_p$ smoothness estimate therefore gives
\begin{equation}
g_p(U') \le g_p(U) + \langle \nabla g_p(U), U'-U\rangle + \frac{p-1}{2}\lVert U'-U \rVert_{2,p}^2
\end{equation}
for all $U,U'\in V$.
\end{proof}

\begin{proposition}[Generalized Moreau envelope \citep{moreau1965proximite, bauschke2017convex} for block-supremum norms]
\label{prop:moreau-envelope}
Fix $U^\star\in V$, $\vartheta>0$, and $p\in[2,\infty)$. Define
\begin{equation}
M_{\vartheta,p}(U) := \inf_{W\in V}\Bigl\{\frac{1}{2}\lVert W-U^\star \rVert_{2,\infty}^2 + \frac{1}{2\vartheta\lvert\X\rvert^{2/p}}\lVert U-W \rVert_{2,p}^2\Bigr\}.
\end{equation}
Then $M_{\vartheta,p}$ is convex and $(p-1)/\vartheta$-smooth with respect to $\lVert \cdot \rVert_{2,p}$. Moreover, for every $U\in V$,
\begin{equation}
(1+\vartheta)M_{\vartheta,p}(U) \le \frac{1}{2}\lVert U-U^\star \rVert_{2,\infty}^2 \le (1+\vartheta \lvert \X \rvert^{2/p})M_{\vartheta,p}(U).
\end{equation}
\end{proposition}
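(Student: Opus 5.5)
This is the generalized Moreau envelope argument of Chen et al., specialized to the block geometry. Write $f(W):=\frac12\lVert W-U^\star\rVert_{2,\infty}^2$ and $g(V):=\frac{1}{2\vartheta\lvert\X\rvert^{2/p}}\lVert V\rVert_{2,p}^2 = \frac{1}{\vartheta\lvert\X\rvert^{2/p}}g_p(V)$, so that $M_{\vartheta,p}=f\,\square\, g$ is an infimal convolution. The proof has three parts: convexity, smoothness, and the sandwich bounds.

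\textbf{Convexity.} The function $f$ is convex as the square of a norm composed with a translation, and $g$ is convex by Proposition~\ref{prop:block-smooth-potential}. The infimal convolution of convex functions is convex: the map $(U,W)\mapsto f(W)+g(U-W)$ is jointly convex, and partial minimization preserves convexity. The infimum is attained and finite, by coercivity of $g$ on the finite-dimensional space $V$.

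\textbf{Smoothness.} By Proposition~\ref{prop:block-smooth-potential}, $g$ is $L$-smooth with respect to $\lVert\cdot\rVert_{2,p}$, where $L=(p-1)/(\vartheta\lvert\X\rvert^{2/p})$. Smoothness of an infimal convolution is inherited from either summand whenever the other is convex and proper; this is the standard fact used by \citet{chen2020finite}. Concretely, let $W_U$ be a minimizer for $U$. For any $U'$, plug the (possibly suboptimal) point $W=W_U+(U'-U)$ into the definition:
\begin{equation}
M(U')\le f(W_U+U'-U)+g(U-W_U).
\end{equation}
This naive choice does not isolate the smoothness of $g$. Instead, keep $W=W_U$ fixed:
\begin{equation}
M(U')\le f(W_U)+g(U'-W_U)\le M(U)+\langle\nabla g(U-W_U),U'-U\rangle+\tfrac{L}{2}\lVert U'-U\rVert_{2,p}^2.
\end{equation}
Together with convexity of $M$, which shows $\nabla g(U-W_U)$ is a subgradient and hence that $M$ is differentiable, this upper quadratic bound gives $L$-smoothness. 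Finally, $L\le (p-1)/\vartheta$ because $\lvert\X\rvert^{2/p}\ge1$. I expect this differentiability and subgradient step to be the main technical point. It follows from the standard result that a convex function squeezed above by a supporting hyperplane and below a quadratic upper model is differentiable with Lipschitz gradient. Here the gradient Lipschitz property is in the dual norm pairing, and it is valid in the non-Euclidean geometry; I would cite \citet{beck2017first} for this.

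\textbf{Sandwich.} Set $u=\lVert U-U^\star\rVert_{2,\infty}$. Lemma~\ref{lem:block-norm-comparison} gives $\lVert V\rVert_{2,\infty}^2\le\lVert V\rVert_{2,p}^2\le\lvert\X\rvert^{2/p}\lVert V\rVert_{2,\infty}^2$, which yields
\begin{equation}
\tfrac{1}{2\vartheta\lvert\X\rvert^{2/p}}\lVert V\rVert_{2,\infty}^2\;\le\; g(V)\;\le\;\tfrac{1}{2\vartheta}\lVert V\rVert_{2,\infty}^2 .
\end{equation}
For the left inequality of the sandwich, use the upper bound on $g$. Restrict $W$ to the segment $W=U^\star+t(U-U^\star)$ for $t\in[0,1]$. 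This gives
\begin{equation}
M\le \min_t \tfrac12\bigl(t^2+(1-t)^2/\vartheta\bigr)u^2=\frac{u^2}{2(1+\vartheta)} ,
\end{equation}
so $(1+\vartheta)M\le u^2/2$.

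For the right inequality, use the lower bound on $g$ together with the triangle inequality, $\lVert W-U^\star\rVert_{2,\infty}+\lVert U-W\rVert_{2,\infty}\ge u$. Setting $a=\vartheta\lvert\X\rvert^{2/p}$, we have $\min_{x+y\ge u}\frac12(x^2+y^2/a)=\frac{u^2}{2(1+a)}$. Hence $M\ge u^2/(2(1+\vartheta\lvert\X\rvert^{2/p}))$, which is the claim. Both one-dimensional minimizations are elementary; for example, the first is attained at $t=1/(1+\vartheta)$.
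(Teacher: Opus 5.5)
Your proposal is correct and follows the paper's proof. The paper likewise treats $M_{\vartheta,p}$ as the Chen et al.\ envelope of $\frac12\lVert\cdot-U^\star\rVert_{2,\infty}^2$ with the scaled potential $\frac12\lVert\cdot\rVert_{2,p}^2$, takes the smoothness constant $(p-1)/(\vartheta\lvert\X\rvert^{2/p})\le(p-1)/\vartheta$, and gets the sandwich by using Lemma~\ref{lem:block-norm-comparison} to compare with same-norm envelopes equal to $\lVert E\rVert_{2,\infty}^2/(2(1+\vartheta\lvert\X\rvert^{2/p}))$ and $\lVert E\rVert_{2,\infty}^2/(2(1+\vartheta))$. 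The only difference is that you prove what the paper cites or asserts: you derive smoothness of the infimal convolution from the frozen-minimizer upper model plus uniqueness of the subgradient, and you compute the two closed-form infima via the segment restriction and the triangle inequality.
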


\begin{proof}
This is the generalized Moreau-envelope approximation theorem used by \citet{chen2020finite, chen2024lyapunov}, specialized to
\begin{equation}
h_1(U) := \frac{1}{2}\lVert U-U^\star \rVert_{2,\infty}^2, \qquad h_2(U) := \frac{1}{2}\lVert U \rVert_{2,p}^2.
\end{equation}
The proximal parameter is $\vartheta\lvert\X\rvert^{2/p}$, so Proposition~\ref{prop:block-smooth-potential} gives the exact smoothness constant $(p-1)/(\vartheta\lvert\X\rvert^{2/p})$. Since $\lvert\X\rvert^{2/p}\ge1$, the larger constant $(p-1)/\vartheta$ is also valid. To verify the comparison directly, set $E:=U-U^\star$, $Z:=W-U^\star$, and $q:=\lvert\X\rvert^{2/p}$. Lemma~\ref{lem:block-norm-comparison} gives
\begin{equation}
\inf_Z\left\{\frac12\lVert Z\rVert_{2,\infty}^2+\frac{1}{2\vartheta q}\lVert E-Z\rVert_{2,\infty}^2\right\}\le M_{\vartheta,p}(U)\le\inf_Z\left\{\frac12\lVert Z\rVert_{2,\infty}^2+\frac{1}{2\vartheta}\lVert E-Z\rVert_{2,\infty}^2\right\}.
\end{equation}
The two infima equal $\lVert E\rVert_{2,\infty}^2/(2(1+\vartheta q))$ and $\lVert E\rVert_{2,\infty}^2/(2(1+\vartheta))$, respectively, which proves the stated inequalities.
\end{proof}
\begin{remark}[Use in the fixed-horizon appendices]
In Appendices~\ref{app:fh-ctd} and~\ref{app:fh-mtd}, the fixed-horizon recursions are first rewritten on a flattened weighted product space indexed by horizon-state pairs $(h,s)$. After this flattening, the weighted fixed-horizon norm becomes an ordinary block-supremum norm on a finite product of Euclidean blocks, so the present appendix applies verbatim with the number of blocks equal to the cardinality of that flattened index set.
\end{remark}
The categorical recursions do not range over all of the ambient product space $V$. For CTD, valid iterates lie in the embedded product of probability simplices. For MTD, they lie in the embedded affine space of mass-$1$ signed categorical measures. The general finite-iteration theorems are stated on a vector space and might therefore appear to require contraction and perturbation estimates at points outside these categorical domains, where the estimates used below need not hold. The following lemma closes this gap, by showing that samplewise preservation and convexity keep every iterate in the categorical domain, while the proof of the general theorem uses operator assumptions only at such iterates and their comparisons with the fixed point. The Moreau envelope and its smoothness remain defined on $V$, so restricting the operator assumptions does not alter the Lyapunov calculation or its constants.
\begin{lemma}[Invariant-domain restriction]
\label{lem:invariant-domain}
Let $\mathcal D\subseteq V$ be nonempty, closed, and convex. Suppose that $U_0\in\mathcal D$ and that a full-sample map $\widehat H$ satisfies
\begin{equation}
\widehat H(U;y)\in\mathcal D \qquad\text{for every }U\in\mathcal D\text{ and every admissible sample }y.
\end{equation}
If $0\le\alpha_k\le1$ and
\begin{equation}
U_{k+1}=(1-\alpha_k)U_k+\alpha_k\widehat H(U_k;Y_k),
\end{equation}
then $U_k\in\mathcal D$ for every $k$. Consequently, the contractive SA arguments of \citet{chen2020finite,chen2024lyapunov} remain valid when their contraction, Lipschitz, and perturbation assumptions are imposed only on $\mathcal D$, provided that the fixed point belongs to $\mathcal D$ and the sampled recursion preserves $\mathcal D$.
\end{lemma}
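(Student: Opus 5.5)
The invariance claim is a straightforward induction on $k$. The base case is the hypothesis $U_0\in\mathcal D$. For the inductive step, suppose $U_k\in\mathcal D$. By the samplewise preservation hypothesis, $\widehat H(U_k;Y_k)\in\mathcal D$ for every realization of $Y_k$. Since $0\le\alpha_k\le 1$, the point $U_{k+1}=(1-\alpha_k)U_k+\alpha_k\widehat H(U_k;Y_k)$ is a convex combination of two points of $\mathcal D$, so convexity gives $U_{k+1}\in\mathcal D$ surely, not merely almost surely. Closedness is not needed for this step. It is recorded so that $\mathcal D$ contains limits of iterates and the fixed point obtained by Banach iteration of the averaged map restricted to $\mathcal D$. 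The asynchronous recursion \eqref{eq:recursion} has exactly this form with $\widehat H(U;(s,y)):=U+P_s(\widehat T(U;s,y)-U(s))$. This map replaces block $s$ by the sampled target and leaves the other blocks untouched. So for product domains $\mathcal D=\prod_s\mathcal D(s)$, preservation reduces to checking that $\widehat T(U;s,y)\in\mathcal D(s)$.

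For the ``consequently'' part, I would re-read the one-step Lyapunov argument of \citet{chen2020finite,chen2024lyapunov} and record every point at which an operator assumption is used. These assumptions are contraction of the averaged map $\bar{\mathcal O}$ (or of the Poisson-corrected drift), Lipschitz continuity of $\widehat H$, and the uniform or affine bound on the centered noise. I would then check that each use occurs either at an iterate $U_k$ or at a pair $(U_k,U^\star)$. The drift inequality $\langle\nabla M_{\vartheta,p}(U_k-U^\star),\bar{\mathcal O}(U_k)-U_k\rangle\le -c\,M_{\vartheta,p}(U_k-U^\star)$ needs $\lVert\bar{\mathcal O}(U_k)-U^\star\rVert\le\beta\lVert U_k-U^\star\rVert$, and this only requires contraction on $\mathcal D$ together with $U^\star=\bar{\mathcal O}(U^\star)\in\mathcal D$. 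The second-order term uses $\lVert\widehat H(U_k;Y_k)-U_k\rVert$, which is bounded by Lipschitz and noise estimates at $U_k$ and $U^\star$.

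The smoothness inequality for $M_{\vartheta,p}$ (Proposition~\ref{prop:moreau-envelope}) holds on all of $V$. Its application between $U_k$ and $U_{k+1}$ therefore involves no operator evaluation off $\mathcal D$, and the constants are unchanged.

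The main obstacle is the Markovian case. There the Poisson-equation correction evaluates the solution $\hat g(U,s)$ of $\hat g-Q\hat g=\widehat H-\bar{\mathcal O}$, along with its Lipschitz constant in $U$, at the consecutive pair $(U_k,U_{k+1})$. One must confirm that both points lie in $\mathcal D$; this is given by the induction above. One must also confirm that the Lipschitz bound for $\hat g$ is derived from Lipschitz bounds of $\widehat H$ that hold on $\mathcal D$. I would handle this by observing that $\hat g(\cdot,s)=\sum_{t\ge0}(Q^t-\mathbf 1\mu_\X^\top)(\widehat H-\bar{\mathcal O})(\cdot,\cdot)$ is a finite linear combination, with coefficients bounded via $A_Q$, of evaluations of $\widehat H(U,\cdot)$ at the same $U$. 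Its Lipschitz constant on $\mathcal D$ is therefore inherited directly from that of $\widehat H$ on $\mathcal D$. With these checks, every inequality in the cited proofs is applied only on $\mathcal D$, and the conclusion follows.
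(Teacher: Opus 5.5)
Your proposal is correct and follows essentially the paper's route. You prove invariance by induction via convexity. You then audit the cited arguments to show that contraction, Lipschitz, and perturbation assumptions are only ever invoked at iterates or against $U^\star$, which the paper phrases by centering with $\mathcal D^\star:=\mathcal D-U^\star$, while the envelope smoothness remains an ambient statement on $V$.

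One small fix is needed in your Markovian check. The series $\sum_{t\ge0}(Q^t-\mathbf 1\mu_{\X}^{\mathsf T})$ diverges for periodic chains, which the paper explicitly allows. Instead, write the Poisson solution as $V_W=Z_Q\bigl(h(W,\cdot)-\bar h(W)\bigr)$ with $Z_Q=(I-Q+\mathbf 1\mu_{\X}^{\mathsf T})^{-1}$. This is still a fixed linear map applied to evaluations at the same $W$, so your Lipschitz-inheritance argument goes through unchanged.
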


\begin{proof}
The invariance claim follows by induction because $U_{k+1}$ is a convex combination of two points in $\mathcal D$. For the second claim, translate the recursion by its fixed point $U^\star$ and set $\mathcal D^\star:=\mathcal D-U^\star$. Then $0\in\mathcal D^\star$, every centered iterate belongs to $\mathcal D^\star$, and every comparison of two iterates or of an iterate with the fixed point remains inside $\mathcal D^\star$. The proofs in \citet{chen2020finite,chen2024lyapunov} use contractivity and samplewise Lipschitz continuity only for such comparisons. Their smooth Lyapunov function is defined on the ambient Euclidean space $V$, but it is evaluated only at centered iterates and their increments. Thus the Moreau-envelope smoothness and norm-comparison steps remain ambient-space statements, while every operator assumption is needed only on $\mathcal D^\star$. Repeating the same one-step inequalities gives the same constants and recursions under the restricted-domain assumptions.
\end{proof}

\subsection{Abstract i.i.d.\ finite-iteration framework}

Assume first that $(S_k,\xi_k)_{k\ge 0}$ are i.i.d., that the query-state marginal is $S_k\sim\rho$, and let
\begin{equation}
\rho_{\min} := \min_{s\in \X}\rho(s) > 0.
\end{equation}

Let $\mathcal D\subseteq V$ be nonempty, closed, and convex. Let $\mathcal O:\mathcal D\to\mathcal D$ be a contraction in $\lVert \cdot \rVert_{2,\infty}$ with modulus $\beta\in(0,1)$ and fixed point $U^\star\in\mathcal D$:
\begin{equation}
\lVert \mathcal O U-\mathcal O U'\rVert_{2,\infty} \le \beta \lVert U-U'\rVert_{2,\infty} \quad\text{ for all } U,U'\in \mathcal D.
\end{equation}
Let $\widehat T(U;s,\xi)$ be a sampled target satisfying
\begin{equation}
\mathbb E\left[ \widehat T(U_k;S_k,\xi_k) \mid U_k, S_k=s\right] = (\mathcal O U_k)(s).
\end{equation}
Consider the asynchronous recursion
\begin{equation}
U_{k+1} = U_k + \alpha_k P_{S_k}\left(\widehat T(U_k;S_k,\xi_k)-U_k(S_k)\right).
\end{equation}
Define
\begin{equation}
\widehat H(U;s,\xi):=U+P_s\bigl(\widehat T(U;s,\xi)-U(s)\bigr),
\end{equation}
and assume that $U_0\in\mathcal D$ and $\widehat H(U;s,\xi)\in\mathcal D$ for every $U\in\mathcal D$ and every admissible $(s,\xi)$.

For the centered recursion, write
\begin{equation}
W_k := U_k-U^\star.
\end{equation}
Assume there exists a finite constant $A^{\mathrm{iid}}\ge 0$ such that
\begin{equation}
\mathbb E\left[\left\lVert\widehat T(U_k;S_k,\xi_k)-(\mathcal O U_k)(S_k)\right\rVert_2^2\mid U_k, S_k\right] \le A^{\mathrm{iid}}\bigl(1+\lVert W_k \rVert_{2,\infty}^2\bigr) \qquad\text{a.s.}
\end{equation}

\begin{proposition}[Abstract i.i.d.\ finite-iteration bound]
\label{prop:abstract-iid}
Fix any $\vartheta > 0$ and define
\begin{equation}
\beta_{\rho} := 1 - \rho_{\min}(1-\beta), \qquad a_1 := \frac{1 + \vartheta \lvert \X \rvert^{2/p^\star}}{1+\vartheta}, \qquad a_2 := 1 - \beta_{\rho}\sqrt{\frac{1 + \vartheta \lvert \X \rvert^{2/p^\star}}{1+\vartheta}},
\end{equation}
and
\begin{equation}
a_3 := \frac{4(p^\star-1)\lvert \X \rvert^{2/p^\star}(A^{\mathrm{iid}}+2)(1+\vartheta)}{\vartheta}, \qquad a_4 := \frac{2(p^\star-1)\lvert \X \rvert^{2/p^\star}A^{\mathrm{iid}}(1+\vartheta)}{\vartheta}.
\end{equation}
Since $a_2 \to 1-\beta_{\rho}>0$ as $\vartheta\downarrow 0$, the condition $a_2 >0$ holds for all sufficiently small $\vartheta$. If $a_2 > 0$, $(\alpha_k)_{k \ge 0}$ is nonincreasing, and $\alpha_0 \le \min\{1,a_2/a_3\}$, then for all $k \ge 0$,
\begin{equation}
\mathbb{E}\left[\lVert W_k \rVert_{2,\infty}^2\right] \le a_1 \lVert W_0 \rVert_{2,\infty}^2 \prod_{j=0}^{k-1}(1-a_2\alpha_j) + a_4 \sum_{i=0}^{k-1} \alpha_i^2 \prod_{j=i+1}^{k-1}(1-a_2\alpha_j).
\end{equation}
In particular:
\begin{enumerate}
\item if $\alpha_k \equiv \alpha$ and $\alpha \le \min\{1,a_2/a_3\}$, then for all $k \ge 0$,
\begin{equation}
\mathbb{E}\left[\lVert W_k \rVert_{2,\infty}^2\right] \le a_1 \lVert W_0 \rVert_{2,\infty}^2 (1-a_2\alpha)^k + \frac{a_4}{a_2}\alpha,
\end{equation}

\item if $\alpha_k = \alpha/(k+h)$, $\alpha > 1/a_2$, and
\begin{equation}
h \ge \max\left\{1,\alpha,\frac{\alpha a_3}{a_2}\right\},
\end{equation}
then for all $k \ge 0$,
\begin{equation}
\mathbb{E}\left[\lVert W_k \rVert_{2,\infty}^2\right] \le a_1 \lVert W_0 \rVert_{2,\infty}^2 \left(\frac{h}{k+h}\right)^{a_2\alpha} + \frac{4e\,\alpha^2 a_4}{a_2\alpha-1}\cdot \frac{1}{k+h},
\end{equation}

\item if $\alpha_k = \alpha/(k+h)^z$ with $z \in (0,1)$ and
\begin{equation}
h \ge \max\left\{1,\alpha^{1/z}, \left(\frac{\alpha a_3}{a_2}\right)^{1/z}, \left(\frac{2z}{a_2\alpha}\right)^{1/(1-z)} \right\},
\end{equation}
then for all $k \ge 0$,
\begin{equation}
\mathbb{E}\left[\lVert W_k \rVert_{2,\infty}^2\right] \le a_1 \lVert W_0 \rVert_{2,\infty}^2 \exp\left(-\frac{a_2\alpha}{1-z}\bigl((k+h)^{1-z} - h^{1-z}\bigr)\right) + \frac{2\alpha a_4}{a_2}\cdot \frac{1}{(k+h)^z}.
\end{equation}
\end{enumerate}
\end{proposition}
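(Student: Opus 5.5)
The argument is a Lyapunov drift analysis with the generalized Moreau envelope $M:=M_{\vartheta,p^\star}$ of Proposition~\ref{prop:moreau-envelope}, centered at $U^\star$. By Lemma~\ref{lem:invariant-domain}, $\alpha_k\le1$ and the preservation of $\mathcal D$ by $\widehat H$ keep every iterate in $\mathcal D$. The contraction and noise assumptions are therefore only ever evaluated where they hold.

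\textbf{Averaged operator.} Since $\alpha_k\le1$, write $U_{k+1}=U_k+\alpha_k(\widehat H(U_k;S_k,\xi_k)-U_k)$. Let $\bar H(U):=\E[\widehat H(U;S,\xi)]=U+\sum_s\rho(s)P_s((\mathcal O U)(s)-U(s))$. Blockwise,
\[
\bar H(U)(s)-U^\star(s)=(1-\rho(s))\bigl(U(s)-U^\star(s)\bigr)+\rho(s)\bigl((\mathcal OU)(s)-U^\star(s)\bigr).
\]
Using contraction of $\mathcal O$ and $\rho(s)\ge\rho_{\min}$, this gives $\lVert \bar H(U)-U^\star\rVert_{2,\infty}\le\beta_\rho\lVert W\rVert_{2,\infty}$.

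\textbf{One-step smoothness inequality.} Apply the $(p^\star-1)/\vartheta$-smoothness of $M$ in $\lVert\cdot\rVert_{2,p^\star}$:
\[
M(U_{k+1})\le M(U_k)+\alpha_k\langle\nabla M(U_k),\widehat H-U_k\rangle+\tfrac{(p^\star-1)\alpha_k^2}{2\vartheta}\lVert\widehat H-U_k\rVert_{2,p^\star}^2 .
\]
Take the conditional expectation given $U_k$.

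\textbf{Cross term.} It becomes $\langle\nabla M(U_k),\bar H(U_k)-U_k\rangle$. Following \citet{chen2020finite}, use convexity of $M$, i.e.\ $\langle\nabla M(U),\bar H(U)-U\rangle\le M(\bar H(U))-M(U)$. Then combine the sandwich $(1+\vartheta)M\le\frac12\lVert\cdot-U^\star\rVert_{2,\infty}^2\le(1+\vartheta\lvert\X\rvert^{2/p^\star})M$ with the $\beta_\rho$-contraction:
\[
M(\bar H(U))\le\tfrac{1}{1+\vartheta}\cdot\tfrac12\beta_\rho^2\lVert W\rVert_{2,\infty}^2\le\beta_\rho^2 a_1 M(U).
\]
This yields a drift of at least $-(1-\beta_\rho^2a_1)M\le -2a_2M$ up to the convention used. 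The cleaner route of \citet{chen2020finite} uses the dual-norm gradient identity to get $-2(1-\beta_\rho\sqrt{a_1})M=-2a_2M$ directly, and I will follow that route since it matches the stated $a_2$.

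\textbf{Quadratic term.} Block $S_k$ is the only one that changes, so $\lVert\widehat H-U_k\rVert_{2,p^\star}=\lVert\widehat T-U_k(S_k)\rVert_2$. Split this as
\[
\widehat T-U_k(S_k)=\bigl(\widehat T-(\mathcal OU_k)(S_k)\bigr)+\bigl((\mathcal OU_k)(S_k)-U^\star(S_k)\bigr)-W_k(S_k).
\]
Bound its conditional second moment by
\[
2A^{\mathrm{iid}}(1+\lVert W_k\rVert^2)+4(\beta^2+1)\lVert W_k\rVert^2\le 2A^{\mathrm{iid}}+(2A^{\mathrm{iid}}+8)\lVert W_k\rVert_{2,\infty}^2 .
\]
Convert $\lVert W_k\rVert_{2,\infty}^2\le2(1+\vartheta\lvert\X\rvert^{2/p^\star})M\le 2\lvert\X\rvert^{2/p^\star}(1+\vartheta)M$. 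This produces the recursion
\[
\E M(U_{k+1})\le(1-2a_2\alpha_k+a_3'\alpha_k^2)\E M(U_k)+a_4'\alpha_k^2 ,
\]
with constants of the form $a_3,a_4$ after the factor $2$ bookkeeping. Imposing $\alpha_k\le a_2/a_3$ absorbs the quadratic coefficient, giving contraction factor $1-a_2\alpha_k$.

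\textbf{Unrolling and schedules.} Unrolling the recursion and converting back via the sandwich inequalities produces the general bound, with $a_1$ as the ratio of the two sandwich constants. For the three schedules I will evaluate the products and sums:
\begin{itemize}
\item Constant step: use the geometric series bound $\sum(1-a_2\alpha)^{j}\le1/(a_2\alpha)$.
\item Linear step: use $\prod_{j=i+1}^{k-1}(1-a_2\alpha/(j+h))\le((i+1+h)/(k+h))^{a_2\alpha}$, then bound $\sum_i (i+h)^{-2}(i+1+h)^{a_2\alpha}\le e\,(\cdot)$ by an integral comparison. This is where $h\ge\alpha$ and the factor $4e/(a_2\alpha-1)$ enter.
\item Polynomial step: bound the product by $\exp(-a_2\alpha\int)$ and show inductively that the residual $\sum_i\alpha_i^2\prod$ is at most $2\alpha/(a_2(k+h)^z)$. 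The inductive step requires $\alpha_k^2 a_4$ to be dominated by the difference of consecutive targets; this is guaranteed by $h\ge(2z/(a_2\alpha))^{1/(1-z)}$.
\end{itemize}

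\textbf{Main obstacle.} The delicate step is the cross-term drift: obtaining exactly $a_2=1-\beta_\rho\sqrt{a_1}$ with the asynchronous averaged operator. This requires the gradient/dual-norm identity of the generalized Moreau envelope from \citet{chen2020finite} (namely $\langle\nabla M(U),V\rangle\le\lVert U-U^\star\rVert_c\lVert V\rVert_c\cdot\text{const}$), applied to the $\beta_\rho$-contractive map $\bar H$. Since that step is exactly their Theorem, I will cite it with $\mathcal D$-restriction justified by Lemma~\ref{lem:invariant-domain}.
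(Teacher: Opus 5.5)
Your proposal is correct and follows essentially the paper's route: keep the iterates in $\mathcal D$ via Lemma~\ref{lem:invariant-domain}, show the averaged asynchronous operator is a $\beta_\rho$-contraction, and run the Moreau-envelope drift argument of \citet{chen2020finite}, whose dual-norm bound yields exactly $a_2=1-\beta_\rho\sqrt{a_1}$. The paper simply invokes their Corollary~2.1, whereas you re-derive the smoothness, noise and schedule computations, and your resulting constants are dominated by the stated $a_3$ and $a_4$; the only slip is in your convexity aside, where the inequality is reversed ($1-\beta_\rho^2a_1\le 2a_2$, so that route gives a weaker drift than $-2a_2M$), which is why the switch to the dual-norm argument is genuinely needed rather than a matter of convention.
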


\begin{proof}
Lemma~\ref{lem:invariant-domain} shows that every iterate belongs to $\mathcal D$.
Define the averaged asynchronous operator
\begin{equation}
H_{\rho}(U) := U + \sum_{s\in \X}\rho(s)P_s\bigl((\mathcal O U)(s)-U(s)\bigr).
\end{equation}
For each state block $s$,
\begin{equation}
\bigl(H_\rho(U)-H_\rho(V)\bigr)(s) =(1-\rho(s))(U(s)-V(s)) + \rho(s)\bigl((\mathcal O U)(s)-(\mathcal O V)(s)\bigr).
\end{equation}
Since $\mathcal O$ is a $\beta$-contraction in $\lVert \cdot \rVert_{2,\infty}$,
\begin{equation}
\begin{aligned}
\left\lVert\bigl(H_\rho(U)-H_\rho(V)\bigr)(s) \right\rVert_2 &\le \bigl((1-\rho(s))+\rho(s)\beta\bigr)\lVert U-V \rVert_{2,\infty}\\
&= \bigl(1-\rho(s)(1-\beta)\bigr)\lVert U-V \rVert_{2,\infty}\\
&\le \beta_\rho \lVert U-V \rVert_{2,\infty}.
\end{aligned}
\end{equation}
Taking the maximum over $s$ on both sides gives
\begin{equation}
\lVert H_\rho(U)-H_\rho(V) \rVert_{2,\infty} \le \max_{s\in\X}\bigl(1-\rho(s)(1-\beta)\bigr)\lVert U-V \rVert_{2,\infty} = \beta_\rho \lVert U-V \rVert_{2,\infty},
\end{equation}
since the coefficient $1-\rho(s)(1-\beta)$ is decreasing in $\rho(s)$. By Lemma~\ref{lem:invariant-domain}, Corollary~2.1 of \citet{chen2020finite} applies to the centered recursion on $\mathcal D-U^\star$, with contraction norm $\lVert \cdot \rVert_{2,\infty}$, smoothing norm $\lVert \cdot \rVert_{2,p^\star}$, Lyapunov function $M_{\vartheta,p^\star}$ from Proposition~\ref{prop:moreau-envelope}, and noise constant $A^{\mathrm{iid}}$. The approximation factors in Proposition~\ref{prop:moreau-envelope} and the smoothness constant in Proposition~\ref{prop:block-smooth-potential} produce exactly the displayed constants. The constant-step, linearly-diminishing, and polynomially-diminishing step size bounds follow by specializing the displayed recursion.
\end{proof}

\subsection{Poisson-equation Markovian finite-iteration framework}

Let $(S_k)_{k\ge0}$ be an irreducible Markov chain on $\X$ with transition matrix $Q$ and stationary distribution $\mu_{\X}$. Aperiodicity is not needed in this subsection. Define
\begin{equation}
\mu_{\min}:=\min_{s\in\X}\mu_{\X}(s)>0,\qquad
Z_Q:=\bigl(I-Q+\bm 1\mu_{\X}^{\mathsf T}\bigr)^{-1},\qquad
C_Q^{\mathrm{Pois}}:=\max_{s\in\X}\sum_{x\in\X}\lvert (Z_Q)_{sx}\rvert.
\end{equation}
The inverse exists for every finite irreducible chain, including periodic chains. The quantity $C_Q^{\mathrm{Pois}}$ is an explicit Poisson condition number that records the temporal dependence of the trajectory.

We retain the invariant domain $\mathcal D$, the fixed point $U^\star$, and the sampled full-replacement map $\widehat H$ from the i.i.d.\ framework. Let us write $W_k:=U_k-U^\star$ and define
\begin{equation}
h(W,s):=P_s\bigl((\mathcal O(U^\star+W))(s)-U^\star(s)-W(s)\bigr),\qquad
\bar h(W):=\sum_{s\in\X}\mu_{\X}(s)h(W,s).
\end{equation}
The centered recursion is
\begin{equation}
W_{k+1}=W_k+\alpha_k\bigl(h(W_k,S_k)+\Delta_k\bigr),
\end{equation}
where
\begin{equation}
\Delta_k:=\widehat H(U_k;S_k,\xi_k)-U_k-h(W_k,S_k).
\end{equation}
Assume
\begin{equation}
\mathbb E[\Delta_k\mid\mathcal G_k]=0,\qquad
\lVert\Delta_k\rVert_{2,\infty}\le A_3\lVert W_k\rVert_{2,\infty}+B_3
\end{equation}
almost surely, where $\mathcal G_k$ contains the history through $U_k$ and $S_k$ before the transition sample at time $k$ is revealed. For each $W\in\mathcal D-U^\star$, define the vector-valued Poisson solution
\begin{equation}
V_W:=Z_Q\bigl(h(W,\cdot)-\bar h(W)\bigr),
\end{equation}
which satisfies
\begin{equation}
V_W(s)-\sum_{x\in\X}Q(s,x)V_W(x)=h(W,s)-\bar h(W)
\end{equation}
and
\begin{equation}
\max_{s\in\X}\lVert V_W(s)-V_{W'}(s)\rVert_{2,\infty}\le A_Q\lVert W-W'\rVert_{2,\infty},\qquad
A_Q:=\max\{1,4C_Q^{\mathrm{Pois}}\}.
\end{equation}
Indeed, both $h(\cdot,s)$ and its stationary average are $2$-Lipschitz because $\mathcal O$ is contractive, and $h(0,s)=V_0(s)=0$.

\begin{proposition}[Poisson-equation Markovian finite-iteration bound]
\label{prop:abstract-markov}
Let
\begin{equation}
q_{\X}:=\lvert\X\rvert^{2/p^\star},\qquad
\beta_\mu:=1-\mu_{\min}(1-\beta).
\end{equation}
Fix $\vartheta>0$ and define
\begin{equation}
r_\mu:=\frac{1+\vartheta q_{\X}}{1+\vartheta},\qquad
\omega_\mu:=1-\beta_\mu\sqrt{r_\mu}.
\end{equation}
Assume $\omega_\mu>0$. Set
\begin{equation}
\begin{gathered}
\eta_\mu:=2\omega_\mu,\qquad
L_\mu:=\frac{p^\star-1}{\vartheta},\qquad
l_\mu:=2(1+\vartheta),\qquad
u_\mu:=2(1+\vartheta q_{\X}),\\
\mathsf A:=(3+A_3)^2,\qquad
\mathsf B:=2B_3^2,\qquad
K_{\mu,1}:=u_\mu L_\mu q_{\X}A_Q,\\
K_{\mu,0}:=\frac{2u_\mu(1+2\mathsf A K_{\mu,1})}{l_\mu}\lVert W_0\rVert_{2,\infty}^2+4\mathsf B K_{\mu,1}.
\end{gathered}
\end{equation}
Suppose $U_0\in\mathcal D$, every full-sample replacement preserves $\mathcal D$, and the step sizes $\alpha_k\le 1$ for all $k$. Then the following bounds hold.

If $\alpha_k\equiv\alpha$ with $\alpha>0$ and
\begin{equation}
\alpha\le\min\left\{1,\frac{\eta_\mu}{\mathsf A(5+2\eta_\mu)K_{\mu,1}}\right\},
\end{equation}
then, for all $k\ge0$,
\begin{equation}
\mathbb E\left[\lVert W_{k+1}\rVert_{2,\infty}^2\right]\le
K_{\mu,0}\exp\left(-\frac{\eta_\mu\alpha k}{2}\right)
+18\mathsf B K_{\mu,1}\alpha+\frac{40\mathsf B K_{\mu,1}}{\eta_\mu}\alpha.
\end{equation}

If $\alpha_k=\alpha/(k+h)$, $\alpha>2/\eta_\mu$, and
\begin{equation}
h\ge\max\left\{\mathsf A\alpha(5\alpha+8)K_{\mu,1},2\right\},
\end{equation}
then, for all $k\ge0$,
\begin{equation}
\begin{gathered}
\mathbb E\left[\lVert W_{k+1}\rVert_{2,\infty}^2\right]\le
K_{\mu,0}\left(\frac{h}{k+h}\right)^{\eta_\mu\alpha/2}
+\frac{2\mathsf B K_{\mu,1}\alpha}{k+h}
+\frac{8e\mathsf B(5+4\eta_\mu)K_{\mu,1}\alpha^2}{(\eta_\mu\alpha/2-1)(k+h)}.
\end{gathered}
\end{equation}

If $\alpha_k=\alpha/(k+h)^z$ with $\alpha>0$ and $z\in(0,1)$, define $\chi_\mu:=z/\alpha+\eta_\mu$. If
\begin{equation}
h\ge\max\left\{\left(\frac{2\mathsf A\alpha(5+2\chi_\mu)K_{\mu,1}}{\eta_\mu}\right)^{1/z},2\right\},
\end{equation}
then, for all $k\ge0$,
\begin{equation}
\begin{gathered}
\mathbb E\left[\lVert W_{k+1}\rVert_{2,\infty}^2\right]\le
K_{\mu,0}\exp\left(-\frac{\eta_\mu\alpha}{2(1-z)}
\bigl((k+h)^{1-z}-h^{1-z}\bigr)\right)\\
+\frac{2\mathsf B K_{\mu,1}\alpha}{(k+h)^z}
+\frac{8\mathsf B(5+2\chi_\mu)K_{\mu,1}\alpha}{\eta_\mu(k+h)^z}.
\end{gathered}
\end{equation}
\end{proposition}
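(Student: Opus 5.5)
The plan is to run a Lyapunov drift analysis with the Moreau envelope $M:=M_{\vartheta,p^\star}$ of Proposition~\ref{prop:moreau-envelope} (centered at $U^\star$), but to replace the mixing-window comparison by a Poisson-equation split of the Markovian drift. Everything is done on the invariant domain: by Lemma~\ref{lem:invariant-domain} every $W_k$ lies in $\mathcal D-U^\star$, so contraction and Lipschitz estimates are only used there.

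\textbf{One-step expansion.} By $(p^\star-1)/\vartheta$-smoothness in $\lVert\cdot\rVert_{2,p^\star}$ and the norm comparison $\lVert\cdot\rVert_{2,p^\star}^2\le q_{\X}\lVert\cdot\rVert_{2,\infty}^2$,
\begin{equation}
M(W_{k+1})\le M(W_k)+\alpha_k\langle\nabla M(W_k),h(W_k,S_k)+\Delta_k\rangle+\tfrac{L_\mu q_{\X}}{2}\alpha_k^2\lVert h(W_k,S_k)+\Delta_k\rVert_{2,\infty}^2 .
\end{equation}
The $\Delta_k$ cross term has zero conditional mean given $\mathcal G_k$.

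The quadratic term is controlled using two bounds. First, $h(\cdot,s)$ is $2$-Lipschitz with $h(0,s)=0$. Second, $\lVert\Delta_k\rVert\le A_3\lVert W_k\rVert+B_3$. Together these give a bound of the form $\mathsf A\lVert W_k\rVert^2+\mathsf B$.

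We then write $h(W_k,S_k)=\bar h(W_k)+\bigl(h(W_k,S_k)-\bar h(W_k)\bigr)$. For the stationary part $\bar h(W)=H_\mu(U^\star+W)-U^\star-W$, where $H_\mu$ is the $\mu$-averaged asynchronous operator. The computation in Proposition~\ref{prop:abstract-iid} shows $H_\mu$ is a $\beta_\mu$-contraction in $\lVert\cdot\rVert_{2,\infty}$. The standard Moreau-envelope argument of \citet{chen2020finite} then yields
\begin{equation}
\langle\nabla M(W),\bar h(W)\rangle\le-2\omega_\mu M(W)=-\eta_\mu M(W).
\end{equation}

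\textbf{Poisson correction.} Using $V_W$, we write
\begin{equation}
h(W_k,S_k)-\bar h(W_k)=V_{W_k}(S_k)-(QV_{W_k})(S_k).
\end{equation}
We add and subtract $V_{W_k}(S_{k+1})$. The piece $\langle\nabla M(W_k),V_{W_k}(S_{k+1})-(QV_{W_k})(S_k)\rangle$ is a martingale difference, since $S_{k+1}$ given the past has law $Q(S_k,\cdot)$, and it vanishes in expectation. The remaining piece is $\alpha_k\langle\nabla M(W_k),V_{W_k}(S_k)-V_{W_k}(S_{k+1})\rangle$, which we handle by summation by parts: we reindex to $\alpha_{k+1}\langle\nabla M(W_{k+1}),V_{W_{k+1}}(S_{k+1})\rangle$ and pay for three differences:
\begin{itemize}
\item the change $\nabla M(W_{k+1})-\nabla M(W_k)$, controlled by smoothness times $\lVert W_{k+1}-W_k\rVert=O(\alpha_k(\lVert W_k\rVert+1))$;
\item the change $V_{W_{k+1}}-V_{W_k}$, which is $A_Q$-Lipschitz;
\item the step-size difference $\alpha_k-\alpha_{k+1}$, which is $O(\alpha_k^2)$ for linear schedules and $O(z\alpha_k/(k+h))$ for polynomial ones. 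This is the source of $\chi_\mu=z/\alpha+\eta_\mu$.
\end{itemize}
Each of these is bounded via $\lVert\nabla M(W)\rVert_{*}\lesssim L_\mu\sqrt{q_{\X}}\lVert W\rVert$ and $\lVert V_W\rVert\le A_Q\lVert W\rVert$, giving $O(\alpha_k^2 K_{\mu,1}(\mathsf A\lVert W_k\rVert^2+\mathsf B))$.

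\textbf{Modified Lyapunov function and unrolling.} This suggests the modified Lyapunov function
\begin{equation}
\Phi_k:=M(W_k)+\alpha_k\langle\nabla M(W_k),V_{W_k}(S_k)\rangle .
\end{equation}
It is sandwiched between $(1\pm O(\alpha_kK_{\mu,1}))M(W_k)$, and this sandwich is where the step-size caps $\alpha\lesssim\eta_\mu/(\mathsf AK_{\mu,1})$ enter. It satisfies
\begin{equation}
\E\Phi_{k+1}\le(1-\eta_\mu\alpha_k/2)\E\Phi_k+c\,\alpha_k^2\mathsf BK_{\mu,1}.
\end{equation}
Unrolling gives the three schedules, exactly as in Proposition~\ref{prop:abstract-iid}. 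The boundary term at $k=0$ and the conversion through $l_\mu,u_\mu$ produce $K_{\mu,0}$, and the extra $\alpha$-order terms ($18\mathsf BK_{\mu,1}\alpha$, $2\mathsf BK_{\mu,1}\alpha/(k+h)$) come from the final boundary term of the summation by parts.

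The main obstacle is the bookkeeping of the Poisson correction terms: making sure every remainder is quadratic in $\alpha_k$ with coefficient affine in $\lVert W_k\rVert^2$, so that it can be absorbed into the $\eta_\mu$ drift under the stated thresholds on $\alpha$ and $h$. I would first verify the sandwich bound on $\Phi_k$ and the gradient dual-norm bound, then match constants.
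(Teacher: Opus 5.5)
Your proposal is essentially the paper's argument: the paper checks exactly the ingredients you use and then invokes \citet[Theorem~A.2]{haque2024stochastic}, whose proof is the Poisson-split, martingale-difference and telescoping argument you reconstruct; those ingredients are the $\beta_\mu$-contraction of the $\mu_{\X}$-averaged map, the drift bound $\langle\nabla M,\bar h(W)\rangle\le-\eta_\mu M$, $L_\mu$-smoothness with the $l_\mu,u_\mu$ sandwich, $\lVert h(W,s)\rVert_{2,\infty}\le2\lVert W\rVert_{2,\infty}$, the $A_Q$-Lipschitz Poisson solution, the affine martingale noise, and the invariant-domain restriction. The stated numbers ($18$, $40$, $5+2\eta_\mu$, $8e$, the form of $K_{\mu,0}$) come from substituting $A_1=2$, $B_1=0$, $A_2=A_Q$, $B_2=0$ into that theorem, so your closing ``match constants'' step really means redoing its bookkeeping, and your own $\Phi_k$ unrolling gives the same rates but not automatically these constants; in particular, with $B_2=0$ your boundary term $\alpha_{k+1}\langle\nabla M(W_{k+1}),V_{W_{k+1}}(S_{k+1})\rangle$ is only a multiplicative $O(\alpha_{k+1}K_{\mu,1})M$ perturbation, so it is not the source of the additive $18\mathsf B K_{\mu,1}\alpha$ term.
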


\begin{proof}
The stationary asynchronous map $W\mapsto W+\bar h(W)$ is a $\beta_\mu$-contraction. The same dual-norm argument used in the proof of Proposition~\ref{prop:abstract-iid} gives
\begin{equation}
\langle\nabla M_{\vartheta,p^\star}(U^\star+W),\bar h(W)\rangle\le-\eta_\mu M_{\vartheta,p^\star}(U^\star+W).
\end{equation}
The envelope is $L_\mu$-smooth in $\lVert\cdot\rVert_{2,p^\star}$ and satisfies
\begin{equation}
l_\mu M_{\vartheta,p^\star}(U^\star+W)\le\lVert W\rVert_{2,\infty}^2\le u_\mu M_{\vartheta,p^\star}(U^\star+W).
\end{equation}
Moreover,
\begin{equation}
\lVert h(W,s)\rVert_{2,\infty}\le2\lVert W\rVert_{2,\infty},
\end{equation}
the Poisson solution is $A_Q$-Lipschitz, and the martingale term has coefficients $A_3,B_3$. The proof of \citet[Theorem A.2]{haque2024stochastic} therefore applies with their constants (from Assumptions 3.1–3.3) $A_1=2$, $B_1=0$, $A_2=A_Q$, $B_2=0$, $A=(A_1+A_3+1)^2=\mathsf A$, and $B=2(B_1+B_3+B_2/A_2)^2=\mathsf B$. Lemma~\ref{lem:invariant-domain} permits the proof to be restricted to $\mathcal D-U^\star$ because every operator and Poisson comparison is evaluated only at iterates in the invariant domain. Substituting the three step-size schedules into that theorem gives the displayed bounds.
\end{proof}

For compact theorem statements, define $c_\mu:=\eta_\mu/2$ and
\begin{equation}
\begin{gathered}
C_{\mu,0}:=\max\left\{\frac{2u_\mu(1+2\mathsf A K_{\mu,1})}{l_\mu},4\mathsf B K_{\mu,1}\right\}, \\
C_{\mu,\mathrm{cs}}:=18\mathsf B K_{\mu,1}+\frac{40\mathsf B K_{\mu,1}}{\eta_\mu}, \\
C_{\mu,\mathrm{lin}}:=2\mathsf B K_{\mu,1}+8e\mathsf B(5+4\eta_\mu)K_{\mu,1}, \\
C_{\mu,\mathrm{poly}}:=2\mathsf B K_{\mu,1}+\frac{40\mathsf B K_{\mu,1}}{\eta_\mu}, \\
C_\mu^{\mathrm{mk}}:=2e\max\{C_{\mu,0},C_{\mu,\mathrm{cs}},C_{\mu,\mathrm{lin}},C_{\mu,\mathrm{poly}}\}.
\end{gathered}
\end{equation}
The bounds in Proposition~\ref{prop:abstract-markov} then have the compact forms used in Theorems~\ref{thm:ctd-main2} and~\ref{thm:mtd-main}. Those theorems are written for the $k$th iterate rather than the $(k+1)$st iterate. Replacing $k$ in Proposition~\ref{prop:abstract-markov} by $k-1$ introduces factors of at most $e$ in the transient terms and at most $2$ in the residual terms because $c_\mu<1$, the admissible schedules have $\alpha_0\le1$, and $h\ge2$. These factors are included in $C_\mu^{\mathrm{mk}}$. For the linear schedule, $c_\mu\alpha>1$ also implies $\alpha\le\alpha^2/(c_\mu\alpha-1)$. For the polynomial schedule, $5+2\chi_\mu\le5(1+\chi_\mu)$. These estimates give the residual terms displayed in the main text with all dependence on $k$ explicit.

\subsection{Explicit dependence of the discounted constants}

The preceding propositions retain exact constants for arbitrary smoothing parameters. The following specialization makes their dependence on the statistically relevant quantities explicit.

\begin{proposition}[Explicit smoothing choice]
\label{prop:explicit-discounted-constants}
Let
\begin{equation}
q_{\X}:=\lvert\X\rvert^{2/p^\star},\qquad \delta_\rho:=\rho_{\min}(1-\beta),\qquad \delta_\mu:=\mu_{\min}(1-\beta).
\end{equation}
For the i.i.d. result, choose $\vartheta_\rho:=\delta_\rho/q_{\X}$. Then the constants in Proposition~\ref{prop:abstract-iid} satisfy
\begin{equation}
\begin{gathered}
a_1\le2,\qquad a_2\ge\frac{\delta_\rho}{2},\qquad a_3\le\frac{8(p^\star-1)q_{\X}^2(A^{\mathrm{iid}}+2)}{\delta_\rho},\qquad a_4\le\frac{4(p^\star-1)q_{\X}^2A^{\mathrm{iid}}}{\delta_\rho}.
\end{gathered}
\end{equation}
In particular, the sufficient constant-step condition
\begin{equation}
\alpha\le\min\left\{1,\frac{\delta_\rho^2}{16(p^\star-1)q_{\X}^2(A^{\mathrm{iid}}+2)}\right\}
\end{equation}
gives
\begin{equation}
\mathbb E\left[\lVert W_k\rVert_{2,\infty}^2\right]\le2\lVert W_0\rVert_{2,\infty}^2\exp\left(-\frac{\delta_\rho\alpha k}{2}\right)+\frac{8(p^\star-1)q_{\X}^2A^{\mathrm{iid}}}{\delta_\rho^2}\alpha.
\end{equation}

For the Markovian result, choose $\vartheta_\mu:=\delta_\mu/q_{\X}$. Then the constants in Proposition~\ref{prop:abstract-markov} satisfy
\begin{equation}
\eta_\mu\ge\delta_\mu,\qquad L_\mu=\frac{(p^\star-1)q_{\X}}{\delta_\mu},\qquad u_\mu\le4,\qquad
K_{\mu,1}\le\frac{4(p^\star-1)q_{\X}^2A_Q}{\delta_\mu}.
\end{equation}
Consequently, the sufficient constant-step condition
\begin{equation}
\alpha\le\min\left\{1,\frac{\delta_\mu^2}{36\mathsf A(p^\star-1)q_{\X}^2A_Q}\right\}
\end{equation}
implies that the constant-step residual obeys
\begin{equation}
18\mathsf B K_{\mu,1}\alpha+\frac{40\mathsf B K_{\mu,1}}{\eta_\mu}\alpha
\le\frac{232(p^\star-1)q_{\X}^2A_Q\mathsf B}{\delta_\mu^2}\alpha.
\end{equation}
Thus the Poisson analysis removes the explicit $t_\alpha$ factor. The trajectory dependence is instead isolated in $A_Q=\max\{1,4C_Q^{\mathrm{Pois}}\}$. Moreover, $q_{\X}\le e^2$ and $(p^\star-1)q_{\X}^2=O(\log\lvert\X\rvert)$ by Proposition~\ref{prop:pstar-choice}.
\end{proposition}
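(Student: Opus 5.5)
Fix $\vartheta_\rho=\delta_\rho/q_{\X}$ and $\vartheta_\mu=\delta_\mu/q_{\X}$ as in the statement. The plan is to substitute these values into the closed-form constants of Propositions~\ref{prop:abstract-iid} and~\ref{prop:abstract-markov} and bound each one using only two elementary facts: $q_{\X}\ge1$, and $\delta\le1$ (because $\rho_{\min},\mu_{\min}\le1$ and $\beta\in(0,1)$). Since $\vartheta q_{\X}=\delta\le 1$ and $\vartheta\le\delta$, every factor of the form $1+\vartheta$ or $1+\vartheta q_{\X}$ is at most $2$. This gives $a_1\le 2$ and $u_\mu=2(1+\delta_\mu)\le4$ at once. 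It also gives $(1+\vartheta_\rho)/\vartheta_\rho\le 2q_{\X}/\delta_\rho$, and inserting this into the definitions of $a_3$ and $a_4$ yields the claimed bounds $8(p^\star-1)q_{\X}^2(A^{\mathrm{iid}}+2)/\delta_\rho$ and $4(p^\star-1)q_{\X}^2A^{\mathrm{iid}}/\delta_\rho$. Similarly, $L_\mu=(p^\star-1)/\vartheta_\mu=(p^\star-1)q_{\X}/\delta_\mu$ holds exactly, so $K_{\mu,1}=u_\mu L_\mu q_{\X}A_Q\le 4(p^\star-1)q_{\X}^2A_Q/\delta_\mu$.

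The contraction gaps require a slightly finer estimate, and I expect this to be the only real obstacle. Write $r=(1+\delta)/(1+\vartheta)\le1+\delta$. Then $\sqrt r\le 1+\delta/2$, and therefore
\begin{equation}
a_2=1-(1-\delta)\sqrt r\ge 1-(1-\delta)(1+\delta/2)=\frac{\delta}{2}+\frac{\delta^2}{2}\ge\frac{\delta}{2}.
\end{equation}
This proves $a_2\ge\delta_\rho/2$. For the Markovian gap the same computation gives $\omega_\mu\ge\delta_\mu/2$, hence $\eta_\mu=2\omega_\mu\ge\delta_\mu$. These lower bounds also confirm the standing positivity hypotheses $a_2>0$ and $\omega_\mu>0$.

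Now substitute into the step-size conditions. For the i.i.d.\ case, $a_2/a_3\ge(\delta_\rho/2)\cdot\delta_\rho/\bigl(8(p^\star-1)q_{\X}^2(A^{\mathrm{iid}}+2)\bigr)$, so the displayed condition on $\alpha$ implies the hypothesis $\alpha\le\min\{1,a_2/a_3\}$ of Proposition~\ref{prop:abstract-iid}. Part~1 of that proposition then applies. Using $(1-a_2\alpha)^k\le e^{-a_2\alpha k}\le e^{-\delta_\rho\alpha k/2}$ and $a_4/a_2\le 8(p^\star-1)q_{\X}^2A^{\mathrm{iid}}/\delta_\rho^2$ gives the stated bound.

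For the Markovian constant-step condition, use $\eta_\mu\le2$, so $5+2\eta_\mu\le9$, together with $\eta_\mu\ge\delta_\mu$ and the bound on $K_{\mu,1}$. These give
\begin{equation}
\frac{\eta_\mu}{\mathsf A(5+2\eta_\mu)K_{\mu,1}}\ge\frac{\delta_\mu^2}{36\mathsf A(p^\star-1)q_{\X}^2A_Q},
\end{equation}
so the displayed condition is sufficient. For the residual, $18\mathsf BK_{\mu,1}\le 18\mathsf BK_{\mu,1}/\delta_\mu$ because $\delta_\mu\le1$, and $40\mathsf BK_{\mu,1}/\eta_\mu\le 40\mathsf BK_{\mu,1}/\delta_\mu$. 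Adding these gives $58\mathsf BK_{\mu,1}/\delta_\mu\le 232(p^\star-1)q_{\X}^2A_Q\mathsf B/\delta_\mu^2$.

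The final remarks, $q_{\X}\le e^2$ and $(p^\star-1)q_{\X}^2=O(\log\lvert\X\rvert)$, follow from Proposition~\ref{prop:pstar-choice} after multiplying its second inequality by $q_{\X}\le e^2$.
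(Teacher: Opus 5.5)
Your proposal is correct and follows the paper's proof essentially line by line. You substitute $\vartheta=\delta/q_{\X}$, use $1+\vartheta q_{\X}=1+\delta\le2$ and $\sqrt{1+\delta}\le1+\delta/2$ to get the gap bounds $a_2,\omega_\mu\ge\delta/2$, and then plug into the step-size thresholds and residuals using $\eta_\mu\le2$ and $\delta_\mu\le1$, spelling out arithmetic such as $5+2\eta_\mu\le9$ and $58\cdot4=232$ that the paper leaves as ``direct substitution.''
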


\begin{proof}
For either $\delta\in\{\delta_\rho,\delta_\mu\}$ and $\vartheta=\delta/q_{\X}$,
\begin{equation}
\frac{1+\vartheta q_{\X}}{1+\vartheta}\le1+\delta\le2
\end{equation}
and
\begin{equation}
\sqrt{\frac{1+\vartheta q_{\X}}{1+\vartheta}}\le\sqrt{1+\vartheta q_{\X}}\le1+\frac{\delta}{2}.
\end{equation}
Consequently,
\begin{equation}
1-(1-\delta)\sqrt{\frac{1+\vartheta q_{\X}}{1+\vartheta}}\ge1-(1-\delta)\left(1+\frac{\delta}{2}\right)\ge\frac{\delta}{2}.
\end{equation}
Substituting $\vartheta=\delta_\rho/q_{\X}$ and $1+\vartheta q_{\X}=1+\delta_\rho\le2$ into $a_3$ and $a_4$ gives the i.i.d. bounds. For the Markovian bounds, $1+\vartheta_\mu q_{\X}=1+\delta_\mu\le2$ gives $u_\mu\le4$, while the preceding lower bound gives $\eta_\mu\ge\delta_\mu$. Direct substitution into Proposition~\ref{prop:abstract-markov}, together with $\eta_\mu\le2$ and $\delta_\mu\le1$, gives the stated threshold and residual estimates.
\end{proof}

\section{CTD details and proof of Theorem~\ref{thm:ctd-main2}}
\label{app:ctd}

We verify the hypotheses of Appendix~\ref{app:common} for the discounted scalar categorical temporal-difference recursion and thereby prove Theorem~\ref{thm:ctd-main2}. The proof has three parts. First, we record the statewise Cram\'er isometry and the induced block-supremum contraction of the projected Bellman operator. Second, we verify the samplewise Lipschitz property of the sampled target map. Third, we verify the centered perturbation bounds required by the i.i.d.\ and Markovian finite-iteration arguments.

\subsection{Statewise Cram\'er isometry and contraction}

For each state $s\in \X$, let
\begin{equation}
\Theta(s)=\{\theta_1(s)<\cdots<\theta_d(s)\}\subset\mathbb R
\end{equation}
and let $\mathcal F^{\X}_{\mathrm C,\Theta}$ denote the class of state-indexed categorical laws supported on these statewise grids. For
\begin{equation}
\eta(s)=\sum_{i=1}^d p_i(s)\delta_{\theta_i(s)},
\end{equation}
define the cumulative masses and the grid separations
\begin{equation}
c^\eta_j(s) := \sum_{i=1}^j p_i(s),
\qquad
\Delta_j(s) := \theta_{j+1}(s)-\theta_j(s),
\qquad
j=1,\dots,d-1.
\end{equation}
The statewise Cram\'er embedding is
\begin{equation}
I_s(\eta(s)) := \bigl(\sqrt{\Delta_1(s)}\,c^\eta_1(s), \dots, \sqrt{\Delta_{d-1}(s)}\,c^\eta_{d-1}(s),0\bigr)^\top \in\mathbb R^d.
\end{equation}

\begin{proposition}[Statewise Cram\'er isometry]
\label{prop:ctd-isometry}
For every state $s\in \X$ and all $\mu,\nu\in \mathcal F_{\mathrm C,\Theta(s)}$,
\begin{equation}
\ell_{\mathrm C}(\mu,\nu)^2 = \sum_{j=1}^{d-1}\Delta_j(s)\bigl(c^\mu_j(s)-c^\nu_j(s)\bigr)^2 = \lVert I_s(\mu)-I_s(\nu) \rVert_2^2.
\end{equation}
Consequently, with the product embedding
\begin{equation}
I_{\mathrm C}(\eta):=(I_s(\eta(s)))_{s\in \X},
\end{equation}
we have
\begin{equation}
\lVert I_{\mathrm C}(\eta)-I_{\mathrm C}(\eta') \rVert_{2,\infty} = \ell_{\mathrm C,\infty}(\eta,\eta').
\end{equation}
\end{proposition}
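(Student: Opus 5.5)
The plan is to compute the Cramér distance directly from the CDFs of two categorical laws on the same grid, and then read off the embedding norm. Fix $s$ and write $\Theta(s)=\{\theta_1<\cdots<\theta_d\}$. For $\mu=\sum_i p_i\delta_{\theta_i}$ the CDF is the step function
\begin{equation}
F_\mu(x)=c^\mu_j(s)\quad\text{for }x\in[\theta_j,\theta_{j+1}),\ j=1,\dots,d-1.
\end{equation}
It equals $0$ for $x<\theta_1$ and $1$ for $x\ge\theta_d$, and the same description holds for $\nu$.

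The key step is to use the definition $\ell_{\mathrm C}(\mu,\nu)^2=\int_{\mathbb R}(F_\mu(x)-F_\nu(x))^2\,dx$ from \citet{rowland2018analysis,bellemare2023distributional}. On $(-\infty,\theta_1)$ both CDFs vanish, and on $[\theta_d,\infty)$ both equal $1$, since both measures have total mass $1$. The integrand is therefore supported on $[\theta_1,\theta_d)$. On each cell $[\theta_j,\theta_{j+1})$ it is the constant $(c^\mu_j(s)-c^\nu_j(s))^2$ and the cell has length $\Delta_j(s)$. Summing over cells gives the middle expression in the statement.

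The right-hand equality follows by expanding $\lVert I_s(\mu)-I_s(\nu)\rVert_2^2$ coordinatewise. The $j$th coordinate difference is $\sqrt{\Delta_j(s)}(c^\mu_j(s)-c^\nu_j(s))$ for $j\le d-1$. The last coordinate is $0$ for both measures, which is consistent with $c_d=1$ always cancelling.

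For the product statement, take the square root of the per-state identity and maximize over $s\in\X$. This gives
\begin{equation}
\max_s\lVert I_s(\eta(s))-I_s(\eta'(s))\rVert_2=\max_s\ell_{\mathrm C}(\eta(s),\eta'(s)),
\end{equation}
which is exactly $\ell_{\mathrm C,\infty}(\eta,\eta')$ by the definitions of $\lVert\cdot\rVert_{2,\infty}$ and the supremum Cramér metric.

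The only point needing care is the tail argument: the integrand vanishes beyond $\theta_d$ only because both laws have the same total mass. For that reason I would state explicitly that members of $\mathcal F_{\mathrm C,\Theta(s)}$ are probability measures. I do not expect any other obstacle.
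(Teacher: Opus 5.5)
Your proposal is correct and follows the same route as the paper: write both CDFs as step functions on the cells $[\theta_j(s),\theta_{j+1}(s))$, integrate the squared difference cell by cell to obtain $\sum_{j=1}^{d-1}\Delta_j(s)\bigl(c^\mu_j(s)-c^\nu_j(s)\bigr)^2$, identify this with the squared norm of the embedding difference, and maximize over $s$. Your explicit tail argument fills in a step the paper leaves implicit: both CDFs vanish below $\theta_1(s)$ and both equal $1$ from $\theta_d(s)$ on, because both laws are probability measures.
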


\begin{proof}
For laws supported on the ordered grid $\Theta(s)$, the cumulative distribution functions are piecewise
constant on the intervals
\begin{equation}
[\theta_j(s),\theta_{j+1}(s)), \qquad j=1,\dots,d-1,
\end{equation}
with values $c^\eta_j(s)$ and $c^\nu_j(s)$ respectively. Therefore,
\begin{equation}
\ell_{\mathrm C}(\mu,\nu)^2 = \int_{\mathbb R}\bigl(F_\mu(z)-F_\nu(z)\bigr)^2\,\mathrm dz = \sum_{j=1}^{d-1}\Delta_j(s)\bigl(c^\mu_j(s)-c^\nu_j(s)\bigr)^2,
\end{equation}
which is exactly $\lVert I_s(\mu)-I_s(\nu) \rVert_2^2$. Taking the maximum over $s\in \X$ gives the product-space identity.
\end{proof}

Recall from Section~\ref{sec:ctd-new2} that
\begin{equation}
\mathcal O_{\mathrm C} := I_{\mathrm C}\circ \Pi_{\mathrm C}^{\Theta}T^\pi \circ I_{\mathrm C}^{-1}.
\end{equation}

Define the embedded categorical domain
\begin{equation}
\mathcal D_{\mathrm C}:=I_{\mathrm C}(\mathcal F^{\X}_{\mathrm C,\Theta})\subseteq V.
\end{equation}

\begin{proposition}[Invariance of the CTD domain]
\label{prop:ctd-domain}
The set $\mathcal D_{\mathrm C}$ is compact and convex, $\mathcal O_{\mathrm C}(\mathcal D_{\mathrm C})\subseteq\mathcal D_{\mathrm C}$, and the full-sample replacement map
\begin{equation}
\widehat H_{\mathrm C}(U;s,(r,s')):=U+P_s\bigl(\widehat T_{\mathrm C}(U;s,(r,s'))-U(s)\bigr)
\end{equation}
satisfies
\begin{equation}
\widehat H_{\mathrm C}(U;s,(r,s'))\in\mathcal D_{\mathrm C}
\end{equation}
for every $U\in\mathcal D_{\mathrm C}$ and every admissible $(s,r,s')$. Consequently, if $U_0\in\mathcal D_{\mathrm C}$ and $0\le\alpha_k\le1$, then every CTD iterate belongs to $\mathcal D_{\mathrm C}$.
\end{proposition}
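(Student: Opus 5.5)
We will describe $\mathcal D_{\mathrm C}$ explicitly as a product of statewise sets. For a fixed state $s$, the map $p\mapsto I_s(p)$ is the composition of the linear cumulative-sum map $p\mapsto (c_1,\dots,c_{d-1})$ with the diagonal scaling $c_j\mapsto\sqrt{\Delta_j(s)}\,c_j$, followed by appending the zero coordinate. Hence $I_s$ is the restriction to the simplex of an affine (indeed linear) map $\mathbb R^d\to\mathbb R^d$. The image of the probability simplex $\Delta^{d-1}$ is therefore a compact convex polytope, namely $\{(\sqrt{\Delta_1}c_1,\dots,\sqrt{\Delta_{d-1}}c_{d-1},0):0\le c_1\le\cdots\le c_{d-1}\le1\}$. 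Since $\mathcal D_{\mathrm C}=\prod_{s}I_s(\Delta^{d-1})$ is a finite product of compact convex sets, it is compact and convex. Injectivity of $I_s$ on the simplex, with $p_d=1-c_{d-1}$, makes $I_{\mathrm C}^{-1}$ well defined on $\mathcal D_{\mathrm C}$. We note this for later use.

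Next we show that every sampled target lands in the statewise image. Given $U\in\mathcal D_{\mathrm C}$ and $(s,r,s')$, the object $I_{\mathrm C,s'}^{-1}(U(s'))$ is a probability law on $\Theta(s')$. Its pushforward under $f_{r,\gamma}$ is a probability law with finite support. The linear-interpolation projection $\Pi_{\mathrm C}^{\Theta(s)}$ sends any finitely supported probability measure to a probability law on $\Theta(s)$: each atom's mass is split nonnegatively between the two neighbouring grid points, or assigned entirely to an endpoint if the atom lies outside $[\theta_1(s),\theta_d(s)]$, so total mass is conserved. Applying $I_{\mathrm C,s}$ then yields a point of $I_s(\Delta^{d-1})$. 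Consequently $\widehat T_{\mathrm C}(U;s,(r,s'))\in I_s(\Delta^{d-1})$.

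The full-sample replacement $\widehat H_{\mathrm C}$ substitutes this block for $U(s)$ and leaves the other blocks, which are already in their statewise images, unchanged. By the product structure, $\widehat H_{\mathrm C}(U;s,(r,s'))\in\mathcal D_{\mathrm C}$.

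For $\mathcal O_{\mathrm C}(\mathcal D_{\mathrm C})\subseteq\mathcal D_{\mathrm C}$ we argue blockwise. The vector $(\mathcal O_{\mathrm C}U)(s)$ is $I_s$ applied to $\Pi_{\mathrm C}^{\Theta(s)}(T^\pi\eta)(s)$. Here $(T^\pi\eta)(s)$ is a finite mixture, over $a$, $s'$ and the deterministic reward $R(s,a)$, of pushforwards of probability laws, so it is again a finitely supported probability measure. The projection argument of the previous paragraph applies verbatim.

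Alternatively, one can use the identity $(\mathcal O_{\mathrm C}U)(s)=\E[\widehat T_{\mathrm C}(U;s,\cdot)]$ together with the convexity of $I_s(\Delta^{d-1})$. For this, note that $I_s\circ\Pi_{\mathrm C}^{\Theta(s)}$ is affine in mixtures, because the projection is linear in the measure and $I_s$ is linear.

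Finally, with $U_0\in\mathcal D_{\mathrm C}$ and $\alpha_k\in[0,1]$, the recursion \eqref{eq:recursion} can be written as $U_{k+1}=(1-\alpha_k)U_k+\alpha_k\widehat H_{\mathrm C}(U_k;S_k,(R_k,S'_k))$. This holds because the $P_{S_k}$ update only changes block $S_k$. Lemma~\ref{lem:invariant-domain} then gives invariance by induction.

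The only genuinely substantive step is checking that $\Pi_{\mathrm C}^{\Theta(s)}$ preserves nonnegativity and unit mass, including for atoms falling outside the grid range. We will verify this directly from the definition of the interpolation weights $\bigl(\theta_{i+1}-z\bigr)/\Delta_i$ and $\bigl(z-\theta_i\bigr)/\Delta_i$, both of which lie in $[0,1]$ and sum to one.
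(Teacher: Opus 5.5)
Your proposal is correct and follows essentially the same route as the paper. The domain is the image of a finite product of simplices under the linear cumulative-mass embedding, hence compact and convex; every projected target, sampled or averaged, is a categorical law on the prescribed grid; and Lemma~\ref{lem:invariant-domain} then gives invariance of the iterates through the convex-combination form of the recursion. Compared with the paper's terse proof, you additionally write out the explicit polytope description of $I_s(\Delta^{d-1})$ and check that the interpolation weights preserve nonnegativity and unit mass.
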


\begin{proof}
The class $\mathcal F^{\X}_{\mathrm C,\Theta}$ is a finite product of probability simplices, and the cumulative-mass embedding is linear on each simplex. Its image $\mathcal D_{\mathrm C}$ is therefore compact and convex. The projected Bellman operator and every sampled projected target are categorical laws on the prescribed support. Hence $\mathcal O_{\mathrm C}$ maps $\mathcal D_{\mathrm C}$ into itself, and replacing one state block by a sampled embedded target gives $\widehat H_{\mathrm C}(U;s,(r,s'))\in\mathcal D_{\mathrm C}$. The final claim follows from Lemma~\ref{lem:invariant-domain}.
\end{proof}

\begin{proposition}[Embedded CTD contraction]
\label{prop:ctd-contraction-app}
For all $U,U'\in I_{\mathrm C}(\mathcal F^{\X}_{\mathrm C,\Theta})$,
\begin{equation}
\lVert \mathcal O_{\mathrm C}U-\mathcal O_{\mathrm C}U'\rVert_{2,\infty} \le \sqrt{\gamma}\,\lVert U-U'\rVert_{2,\infty}.
\end{equation}
\end{proposition}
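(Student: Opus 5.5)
By Proposition~\ref{prop:ctd-isometry}, the claim is equivalent to the distributional statement
\begin{equation}
\ell_{\mathrm C,\infty}\bigl(\Pi_{\mathrm C}^{\Theta}T^\pi\eta,\Pi_{\mathrm C}^{\Theta}T^\pi\eta'\bigr)\le\sqrt\gamma\,\ell_{\mathrm C,\infty}(\eta,\eta'),
\end{equation}
where $\eta=I_{\mathrm C}^{-1}(U)$ and $\eta'=I_{\mathrm C}^{-1}(U')$. The reduction is valid because $I_{\mathrm C}$ is a bijection from $\mathcal F^{\X}_{\mathrm C,\Theta}$ onto $\mathcal D_{\mathrm C}$. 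It is also isometric between $\ell_{\mathrm C,\infty}$ and $\lVert\cdot\rVert_{2,\infty}$, and Proposition~\ref{prop:ctd-domain} guarantees that $\mathcal O_{\mathrm C}$ stays in $\mathcal D_{\mathrm C}$. I then prove the distributional inequality in two steps: first for the Bellman operator alone, then for the projection.

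\textbf{Step 1: the Bellman operator is a $\sqrt\gamma$-contraction.} Fix a state $s$. Then $(T^\pi\eta)(s)=\sum_{a,s'}\pi(a\mid s)P(s'\mid s,a)(f_{R(s,a),\gamma})_\#\eta(s')$. I use three facts.
\begin{itemize}
\item The squared Cram\'er distance $\ell_2^2$ is jointly convex under mixtures. The CDF of a mixture is the mixture of the CDFs, and $(x,y)\mapsto(x-y)^2$ is convex, so Jensen applies pointwise before integrating.
\item The distance is invariant under a common translation $z\mapsto z+r$.
\item It scales as $\ell_2(\gamma X,\gamma Y)=\sqrt\gamma\,\ell_2(X,Y)$. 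This follows from $F_{\gamma X}(z)=F_X(z/\gamma)$ and the change of variables $dz=\gamma\,du$.
\end{itemize}
Combining these gives
\begin{equation}
\ell_{\mathrm C}\bigl((T^\pi\eta)(s),(T^\pi\eta')(s)\bigr)^2\le\sum_{a,s'}\pi(a\mid s)P(s'\mid s,a)\,\gamma\,\ell_{\mathrm C}\bigl(\eta(s'),\eta'(s')\bigr)^2\le\gamma\,\ell_{\mathrm C,\infty}(\eta,\eta')^2 .
\end{equation}
This is the argument of \citet{rowland2018analysis,bellemare2023distributional}.

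\textbf{Step 2: the projection is nonexpansive.} I need $\ell_{\mathrm C}(\Pi^{\Theta(s)}_{\mathrm C}\mu,\Pi^{\Theta(s)}_{\mathrm C}\nu)\le\ell_{\mathrm C}(\mu,\nu)$ for distributions $\mu,\nu$ with finite first moments. The intermediate laws $(f_{r,\gamma})_\#\eta(s')$ may fall outside $[\theta_1(s),\theta_d(s)]$, and the projection then places the excess mass on the boundary atoms. The key fact is that linear-interpolation projection is the orthogonal projection, in the Hilbert space of CDF differences with $L^2(\mathbb R)$ inner product, onto the affine set of CDFs that are piecewise constant on the grid cells. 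Concretely, on each interval $[\theta_j,\theta_{j+1})$ the projected cumulative mass equals the interval average of $F_\mu$. This is the hat-function identity: the mass assigned to atom $j$ is $\int h_j\,d\mu$, and summing gives
\begin{equation}
c_j=\frac{1}{\Delta_j}\int_{\theta_j}^{\theta_{j+1}}F_\mu .
\end{equation}
Averaging over cells is an $L^2$ contraction. Outside $[\theta_1,\theta_d]$, the projected CDF difference is $0$, which only lowers the integral. Together these give the nonexpansiveness.

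I expect the main obstacle to be verifying the interval-average identity carefully, including the boundary mass outside the support. If it becomes messy, I will instead cite Proposition~1 of \citet{rowland2018analysis}, which states that $\Pi_{\mathrm C}$ is an orthogonal projection in Cram\'er geometry.

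\textbf{Conclusion.} Combining the two steps statewise and taking the maximum over $s$ gives the distributional inequality. Transferring it through the isometry yields the stated $\sqrt\gamma$-contraction of $\mathcal O_{\mathrm C}$ on $\mathcal D_{\mathrm C}$.
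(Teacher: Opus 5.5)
Your proposal is correct and follows the paper's route. Like the paper, you reduce via the Cram\'er isometry of Proposition~\ref{prop:ctd-isometry} to the $\sqrt{\gamma}$-contraction of $\Pi_{\mathrm C}^{\Theta}T^\pi$ in $\ell_{\mathrm C,\infty}$ and transfer the bound back. The only difference is that the paper cites that contraction from \citet{bellemare2023distributional}, whereas you reprove it: mixture convexity, translation invariance and $\sqrt{\gamma}$ scaling handle $T^\pi$, and the cell-average identity $c_j=\Delta_j^{-1}\int_{\theta_j}^{\theta_{j+1}}F_\mu$ (which holds even with mass outside $[\theta_1,\theta_d]$) gives nonexpansiveness of the projection.
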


\begin{proof}
Let $\eta:=I_{\mathrm C}^{-1}(U)$ and $\eta':=I_{\mathrm C}^{-1}(U')$. By Proposition~\ref{prop:ctd-isometry} and the contraction of $\mathcal O_{\mathrm C}$ in the supremum Cram\'er metric \citep{bellemare2023distributional},
\begin{equation}
\lVert \mathcal O_{\mathrm C}U-\mathcal O_{\mathrm C}U'\rVert_{2,\infty} = \ell_{\mathrm C,\infty}(\Pi_{\mathrm C}^{\Theta}T^\pi\eta,\Pi_{\mathrm C}^{\Theta}T^\pi\eta') \le \sqrt{\gamma}\,\ell_{\mathrm C,\infty}(\eta,\eta') = \sqrt{\gamma}\,\lVert U-U'\rVert_{2,\infty}.
\end{equation}
\end{proof}

\subsection{Unbiased sampled target and samplewise Lipschitz continuity}

Recall the sampled CTD target from Section~\ref{sec:ctd-new2}:
\begin{equation}
\widehat T_{\mathrm C}(U_k;S_k,(R_k,S'_k)) := I_{S_k} \Bigl(\Pi_{\mathrm C}^{\Theta(S_k)}\bigl((f_{R_k,\gamma})_\# I_{S'_k}^{-1}(U_k(S'_k))\bigr)\Bigr),
\end{equation}
where $f_{r,\gamma}(z)=r+\gamma z$.

\begin{proposition}[Conditional unbiasedness of the sampled CTD target]
\label{prop:ctd-unbiased}
For every $U\in I_{\mathrm C}(\mathcal F^{\X}_{\mathrm C,\Theta})$ and every state $s\in \X$,
\begin{equation}
\mathbb E\left[\widehat T_{\mathrm C}(U;s,(R_k,S'_k))\mid U_k=U, S_k=s\right] = (\mathcal O_{\mathrm C}U)(s).
\end{equation}
\end{proposition}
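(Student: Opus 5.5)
The plan is to reduce the claim to the fact that the categorical projection, pushforward, and embedding are all affine in the underlying mixture weights, so that conditional expectation commutes with them. Fix $U\in I_{\mathrm C}(\mathcal F^{\X}_{\mathrm C,\Theta})$ and $s$, and set $\eta:=I_{\mathrm C}^{-1}(U)$. Conditional on $U_k=U$, $S_k=s$, the pair $(R_k,S'_k)$ has law $\sum_a\pi(a\mid s)P(s'\mid s,a)\,\delta_{(R(s,a),s')}$. In both sampling regimes this is the correct law: in the i.i.d.\ regime by construction of the tuple, and in the Markovian regime because $S'_k=S_{k+1}$ is drawn given $S_k$ independently of the past that determines $U_k$. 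Hence
\[
\mathbb E\bigl[\widehat T_{\mathrm C}(U;s,(R_k,S'_k))\mid U_k=U,S_k=s\bigr]=\sum_{a\in\A}\sum_{s'\in\X}\pi(a\mid s)P(s'\mid s,a)\,I_s\Bigl(\Pi_{\mathrm C}^{\Theta(s)}\bigl((f_{R(s,a),\gamma})_\#\eta(s')\bigr)\Bigr),
\]
which is a finite convex combination.

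The key step is to pull this finite mixture inside $I_s\circ\Pi_{\mathrm C}^{\Theta(s)}$. First I would note that $I_s$ is affine on the simplex of $\Theta(s)$-supported laws. Indeed, cumulative masses are linear in the weights, and the last coordinate is the constant $0$. Hence $I_s$ commutes with convex combinations. Second, I would use that the linear-interpolation projection $\Pi_{\mathrm C}^{\Theta(s)}$ is affine on mixtures. It acts on a Dirac $\delta_z$ by splitting mass between the two neighbouring atoms (or clamping to the end atoms), and it extends to a general law $\nu$ by $\Pi\nu=\int\Pi\delta_z\,\nu(\mathrm dz)$. Therefore $\Pi\bigl(\sum_i w_i\nu_i\bigr)=\sum_i w_i\Pi\nu_i$ for convex weights $w_i$ \citep{rowland2018analysis,bellemare2023distributional}. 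Combining the two facts, the expectation equals
\[
I_s\Bigl(\Pi_{\mathrm C}^{\Theta(s)}\Bigl(\sum_{a,s'}\pi(a\mid s)P(s'\mid s,a)(f_{R(s,a),\gamma})_\#\eta(s')\Bigr)\Bigr)=I_s\bigl((\Pi_{\mathrm C}^{\Theta}T^\pi\eta)(s)\bigr)=(\mathcal O_{\mathrm C}U)(s),
\]
by the definition of $T^\pi$ and of $\mathcal O_{\mathrm C}$.

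The only genuinely delicate point is the affineness of $\Pi_{\mathrm C}^{\Theta(s)}$ on mixtures. I would check it directly from the Dirac-based definition, or alternatively via the Cram\'er characterization: the cumulative masses of $\Pi\nu$ at the atoms are averages of $F_\nu$ over the grid cells, which is linear in $\nu$. Measurability and finiteness pose no issue because all sums are finite, and the rewards are deterministic given $(s,a)$.
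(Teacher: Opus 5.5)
Your proposal is correct and follows essentially the same route as the paper. Both arguments condition to recover the one-step transition law, then pass the expectation through the categorical projection and the cumulative-mass embedding $I_s$, using that both are linear (affine on mixtures) in the input measure. Your version spells out two things the paper asserts in a single line: why the conditional law is correct in both the i.i.d.\ and Markovian regimes, and the Dirac-based affineness of $\Pi_{\mathrm C}^{\Theta(s)}$.
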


\begin{proof}
Condition on $U_k=U$ and $S_k=s$. By definition,
\begin{equation}
\widehat T_{\mathrm C}(U;s,(R_k,S'_k)) = I_s\Bigl(\Pi_{\mathrm C}^{\Theta(s)}\bigl((f_{R_k,\gamma})_\# I_{S'_k}^{-1}(U(S'_k))\bigr)\Bigr).
\end{equation}
Taking conditional expectation with respect to the one-step transition and reward law under $\pi$ gives exactly the Bellman expectation at state $s$. Applying the deterministic statewise projection and then the embedding $I_s$ yields
\begin{equation}
\mathbb E\left[\widehat T_{\mathrm C}(U;s,(R_k,S'_k))
\mid U_k=U, S_k=s\right] = I_s\bigl((\Pi_{\mathrm C}^{\Theta}T^\pi I_{\mathrm C}^{-1}U)(s)\bigr) = (\mathcal O_{\mathrm C}U)(s).
\end{equation}
Here the expectation commutes with the categorical projection and the cumulative-mass embedding because both maps are linear in the input measure.
\end{proof}

\begin{proposition}[CTD samplewise Lipschitz continuity]
\label{prop:ctd-lipschitz}
For every $s\in \X$, every $(r,s')\in[0,1]\times \X$, and every $U,U'\in I_{\mathrm C}(\mathcal F^{\X}_{\mathrm C,\Theta})$,
\begin{equation}
\left\lVert \widehat T_{\mathrm C}(U;s,(r,s')) - \widehat T_{\mathrm C}(U';s,(r,s'))\right\rVert_2 \le \lVert U-U'\rVert_{2,\infty}.
\end{equation}
Consequently, if
\begin{equation}
\widehat H_{\mathrm C}(U;s,(r, s')) := U+P_s\bigl(\widehat T_{\mathrm C}(U;s,(r, s'))-U(s)\bigr),
\end{equation}
then
\begin{equation}
\lVert\widehat H_{\mathrm C}(U;s,(r, s'))-\widehat H_{\mathrm C}(U';s,(r, s'))\rVert_{2,\infty} \le \lVert U-U'\rVert_{2,\infty}.
\end{equation}
\end{proposition}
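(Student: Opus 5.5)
The plan is to push the comparison back to the level of measures, using the statewise isometry of Proposition~\ref{prop:ctd-isometry}, and then to chain three elementary nonexpansiveness facts. Set $\eta:=I_{\mathrm C}^{-1}(U)$ and $\eta':=I_{\mathrm C}^{-1}(U')$; both lie in $\mathcal F^{\X}_{\mathrm C,\Theta}$ because $U,U'\in\mathcal D_{\mathrm C}$. By Proposition~\ref{prop:ctd-isometry} applied at state $s$,
\begin{equation}
\left\lVert \widehat T_{\mathrm C}(U;s,(r,s'))-\widehat T_{\mathrm C}(U';s,(r,s'))\right\rVert_2
=\ell_{\mathrm C}\Bigl(\Pi_{\mathrm C}^{\Theta(s)}(f_{r,\gamma})_\#\eta(s'),\,\Pi_{\mathrm C}^{\Theta(s)}(f_{r,\gamma})_\#\eta'(s')\Bigr).
\end{equation}
This identity is available because both projected targets are categorical laws supported on $\Theta(s)$.

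First, I will use that $\Pi_{\mathrm C}^{\Theta(s)}$ is nonexpansive in the Cram\'er metric; this is the standard fact from \citet{rowland2018analysis,bellemare2023distributional}. The underlying reason is that the projection acts as an orthogonal projection of CDFs in $L^2$ onto an affine set. Second, I will show that the pushforward by $f_{r,\gamma}$ scales Cram\'er distance by $\sqrt{\gamma}$. The change of variables $F_{(f_{r,\gamma})_\#\mu}(z)=F_\mu((z-r)/\gamma)$ gives $\int(F-G)^2\,dz=\gamma\int(F_\mu-F_\nu)^2$. Combining the two steps, the right-hand side above is at most $\sqrt{\gamma}\,\ell_{\mathrm C}(\eta(s'),\eta'(s'))$.

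Third, I will apply Proposition~\ref{prop:ctd-isometry} again at state $s'$. This gives $\ell_{\mathrm C}(\eta(s'),\eta'(s'))=\lVert U(s')-U'(s')\rVert_2\le\lVert U-U'\rVert_{2,\infty}$. Since $\sqrt\gamma\le1$, the first claim follows, and in fact with the sharper constant $\sqrt{\gamma}$.

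For $\widehat H_{\mathrm C}$, I will work block by block. For every block $x\neq s$, the two maps agree with $U(x)$ and $U'(x)$, so the difference there is $\lVert U(x)-U'(x)\rVert_2$. At block $s$, the difference equals the target difference, which the first part bounds by $\lVert U-U'\rVert_{2,\infty}$. Taking the maximum over blocks gives the claim.

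The only delicate point is the nonexpansiveness of the projection, specifically when the pushed-forward support $r+\gamma\theta_i(s')$ falls outside $[\theta_1(s),\theta_d(s)]$. There the projection clamps mass to the endpoints, so it is not literally an $L^2$ projection on all of $\mathbb R$. I would handle this by citing the established result directly (Lemma 5.23 / Proposition 5.15 in \citet{bellemare2023distributional}). If a self-contained argument is needed instead, I would write the projected CDF at the grid points as $c_j=\frac{1}{\Delta_j}\int_{\theta_j}^{\theta_{j+1}}F$ for interior cells, with clamping at the boundary. Jensen's inequality then gives the bound on each interior cell, and the boundary cells only discard mass, so they cannot increase the distance.
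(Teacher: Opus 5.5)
Your proposal is correct and follows the paper's proof essentially step for step: the isometry at $s$, nonexpansiveness of $\Pi_{\mathrm C}^{\Theta(s)}$, the $\sqrt{\gamma}$ scaling of the pushforward, the isometry at $s'$, and then the blockwise maximum for $\widehat H_{\mathrm C}$. Your optional self-contained argument for the projection step also works. In fact the cell-average identity $c_j=\Delta_j^{-1}\int_{\theta_j(s)}^{\theta_{j+1}(s)}F$ holds for every $j=1,\dots,d-1$ even when mass falls outside $[\theta_1(s),\theta_d(s)]$. So the out-of-range region simply drops out of the integral, and no separate treatment of boundary cells is needed.
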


\begin{proof}
Fix $s\in \X$ and $(r,s')\in[0,1]\times \X$. Let
\begin{equation}
\mu := I_{s'}^{-1}(U(s')), \qquad \nu := I_{s'}^{-1}(U'(s')).
\end{equation}
Then
\begin{equation}
\widehat T_{\mathrm C}(U;s,(r,s')) = I_s\bigl(\Pi_{\mathrm C}^{\Theta(s)}((f_{r,\gamma})_\# \mu)\bigr)
\end{equation}
and similarly for $U'$.
Using the statewise Cram\'er isometry, the nonexpansiveness of the categorical projection in the Cram\'er metric, and the deterministic pushforward contraction by $\sqrt{\gamma}$,
\begin{equation}
\begin{aligned}
\left\lVert\widehat T_{\mathrm C}(U;s,(r,s')) - \widehat T_{\mathrm C}(U';s,(r,s'))\right\rVert_2 &= \ell_{\mathrm C,s}\bigl(\Pi_{\mathrm C}^{\Theta(s)}((f_{r,\gamma})_\# \mu), \Pi_{\mathrm C}^{\Theta(s)}((f_{r,\gamma})_\# \nu)\bigr)\\
&\le \ell_{\mathrm C,s}((f_{r,\gamma})_\# \mu,(f_{r,\gamma})_\# \nu)\\
&\le\sqrt{\gamma}\,\ell_{\mathrm C,s'}(\mu,\nu)\\
&=\sqrt{\gamma}\,\lVert U(s')-U'(s') \rVert_2\\
&\le\lVert U-U'\rVert_{2,\infty}.
\end{aligned}
\end{equation}
For the full-sample map, every block other than $s$ is unchanged, while the $s$th block is replaced by the sampled target. Therefore,
\begin{equation}
\resizebox{\linewidth}{!}{$\displaystyle
\lVert \widehat H_{\mathrm C}(U;s,(r, s'))-\widehat H_{\mathrm C}(U';s,(r, s'))\rVert_{2,\infty} = \max\Bigl\{\max_{x\ne s}\lVert U(x)-U'(x) \rVert_2,\lVert\widehat T_{\mathrm C}(U;s,(r, s'))-\widehat T_{\mathrm C}(U';s,(r, s'))\rVert_2\Bigr\},
$}
\end{equation}
which is bounded by $\lVert U-U'\rVert_{2,\infty}$ by the previous result.
\end{proof}

\subsection{Uniform perturbation bound}

\begin{proposition}[Uniform bound on embedded categorical blocks]
\label{prop:ctd-block-bound}
For every state $s\in \X$ and every embedded categorical law $U(s)\in I_s(\mathcal F_{\mathrm C,\Theta(s)})$,
\begin{equation}
\lVert U(s) \rVert_2^2 \le \theta_d(s)-\theta_1(s).
\end{equation}
\end{proposition}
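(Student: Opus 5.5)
The bound follows directly from the explicit form of the statewise Cramér embedding in Proposition~\ref{prop:ctd-isometry}. Fix $s\in\X$ and write $U(s)=I_s(\eta(s))$ with $\eta(s)=\sum_{i=1}^d p_i(s)\delta_{\theta_i(s)}$ a probability vector on the grid. By the definition of $I_s$, the last coordinate is zero and the remaining coordinates are $\sqrt{\Delta_j(s)}\,c^\eta_j(s)$, so
\begin{equation}
\lVert U(s)\rVert_2^2=\sum_{j=1}^{d-1}\Delta_j(s)\,c^\eta_j(s)^2 .
\end{equation}

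Next I use that the cumulative masses of a probability vector lie in $[0,1]$. Each $p_i(s)\ge 0$ and $\sum_i p_i(s)=1$, so $0\le c^\eta_j(s)\le 1$ and therefore $c^\eta_j(s)^2\le 1$. Since the grid is strictly ordered, every $\Delta_j(s)$ is positive, and the sum is at most $\sum_{j=1}^{d-1}\Delta_j(s)$.

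Finally, this sum telescopes: $\sum_{j=1}^{d-1}(\theta_{j+1}(s)-\theta_j(s))=\theta_d(s)-\theta_1(s)$, which is the claimed bound.

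The statement concerns the embedding itself rather than a difference of embeddings, so the proof never needs the isometry identity. I do not expect any step to be a real obstacle.
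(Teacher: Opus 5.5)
Your proposal is correct and follows essentially the same argument as the paper. Both proofs expand $\lVert U(s)\rVert_2^2=\sum_{j=1}^{d-1}\Delta_j(s)\,c^\eta_j(s)^2$, bound each cumulative mass by $1$ (using $\Delta_j(s)>0$), and telescope $\sum_{j=1}^{d-1}\Delta_j(s)$ to $\theta_d(s)-\theta_1(s)$.
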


\begin{proof}
Write
\begin{equation}
U(s)=I_s(\eta(s)) = \bigl( \sqrt{\Delta_1(s)}\,c^\eta_1(s), \dots, \sqrt{\Delta_{d-1}(s)}\,c^\eta_{d-1}(s),0\bigr)^\top.
\end{equation}
Because $\eta(s)$ is a probability law, every cumulative mass $c^\eta_j(s)$ lies in $[0,1]$. Hence
\begin{equation}
\lVert U(s) \rVert_2^2 = \sum_{j=1}^{d-1}\Delta_j(s)\bigl(c^\eta_j(s)\bigr)^2 \le \sum_{j=1}^{d-1}\Delta_j(s) = \theta_d(s)-\theta_1(s).
\end{equation}
\end{proof}

\begin{proposition}[CTD centered perturbation bound]
\label{prop:ctd-perturbation}
Let the CTD expected full-sample map be
\begin{equation}
F_{\mathrm C}(U,s) := U + P_s\bigl((\mathcal O_{\mathrm C}U)(s)-U(s)\bigr)
\end{equation}
and
\begin{equation}
\Delta_k^{\mathrm C} := \widehat H_{\mathrm C}(U_k;S_k,(R_k,S'_k)) - F_{\mathrm C}(U_k,S_k).
\end{equation}
Then
\begin{equation}
\mathbb E\left[ \Delta_k^{\mathrm C} \,\mid\, \mathcal G_k \right] = 0
\end{equation}
and
\begin{equation}
\lVert \Delta_k^{\mathrm C} \rVert_{2,\infty} \le 2B_{\mathrm C} \qquad\text{a.s. for all }k,
\end{equation}
where
\begin{equation}
B_{\mathrm C} := \max_{s\in \X}\sqrt{\theta_d(s)-\theta_1(s)}.
\end{equation}
Consequently,
\begin{equation}
\mathbb E\left[ \left\lVert \widehat T_{\mathrm C}(U_k;S_k,(R_k,S'_k))-(\mathcal O_{\mathrm C}U_k)(S_k) \right\rVert_2^2 \,\mid\, U_k,\ S_k \right] \le 4B_{\mathrm C}^2.
\end{equation}
\end{proposition}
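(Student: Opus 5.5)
The plan has three parts: show the conditional mean is zero, bound the perturbation blockwise with the support radius, and deduce the second-moment estimate.

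\emph{Mean zero.} Only block $S_k$ differs between $\widehat H_{\mathrm C}(U_k;S_k,(R_k,S'_k))$ and $F_{\mathrm C}(U_k,S_k)$. Hence
\begin{equation}
\Delta_k^{\mathrm C}=P_{S_k}\bigl(\widehat T_{\mathrm C}(U_k;S_k,(R_k,S'_k))-(\mathcal O_{\mathrm C}U_k)(S_k)\bigr).
\end{equation}
The $\sigma$-algebra $\mathcal G_k$ contains $U_k$ and $S_k$. I will argue that, given $(U_k,S_k)$, the fresh transition sample $(A_k,R_k,S'_k)$ is drawn from $\pi(\cdot\mid S_k)$ and $P(\cdot\mid S_k,A_k)$ independently of the remaining history. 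In the i.i.d.\ regime this holds by construction. In the Markovian regime it holds because $S'_k=S_{k+1}$ is generated from $S_k$ alone. Therefore the conditional expectation given $\mathcal G_k$ reduces to the one given $(U_k,S_k)$. Proposition~\ref{prop:ctd-unbiased} then gives $\E[\Delta_k^{\mathrm C}\mid\mathcal G_k]=0$, since $P_{S_k}$ is a $\mathcal G_k$-measurable linear map.

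\emph{Uniform bound.} Because $\Delta_k^{\mathrm C}$ is supported on the single block $S_k$, its block-supremum norm equals $\lVert \widehat T_{\mathrm C}(U_k;S_k,\cdot)-(\mathcal O_{\mathrm C}U_k)(S_k)\rVert_2$. Both vectors lie in $I_{S_k}(\mathcal F_{\mathrm C,\Theta(S_k)})$:
\begin{itemize}
\item the sampled target is the embedding of a projected categorical law on $\Theta(S_k)$;
\item $(\mathcal O_{\mathrm C}U_k)(S_k)$ is an embedded categorical law by Proposition~\ref{prop:ctd-domain}, which also guarantees $U_k\in\mathcal D_{\mathrm C}$, so every object involved is well defined.
\end{itemize}
By the triangle inequality and Proposition~\ref{prop:ctd-block-bound}, each vector has norm at most $\sqrt{\theta_d(S_k)-\theta_1(S_k)}\le B_{\mathrm C}$, so the difference is at most $2B_{\mathrm C}$.

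A sharper route gives $B_{\mathrm C}$ instead of $2B_{\mathrm C}$. The difference of two embedded laws has coordinates $\sqrt{\Delta_j}(c_j-c'_j)$ with $|c_j-c'_j|\le1$, which directly yields the bound $B_{\mathrm C}$. However, $2B_{\mathrm C}$ is what the statement asks for, so the triangle-inequality argument suffices.

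\emph{Second moment.} Squaring the almost-sure bound and taking conditional expectation given $(U_k,S_k)$ gives the bound $4B_{\mathrm C}^2$. This verifies the noise assumption of Proposition~\ref{prop:abstract-iid} with $A^{\mathrm{iid}}=4B_{\mathrm C}^2$. It also verifies the assumption of Proposition~\ref{prop:abstract-markov} with $A_3=0$ and $B_3=2B_{\mathrm C}$.

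The only delicate point is the measurability and conditioning in the Markovian case: one must check that $\mathcal G_k$ carries no information about $S'_k=S_{k+1}$ beyond $S_k$. This is the Markov property, applied with $\mathcal G_k$ defined to stop before the transition at time $k$ is revealed.
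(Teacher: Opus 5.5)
Your proposal is correct and follows the paper's proof. The steps match: reduce $\Delta_k^{\mathrm C}$ to the single block $S_k$, get the zero conditional mean from Proposition~\ref{prop:ctd-unbiased}, bound each embedded law by $B_{\mathrm C}$ via Proposition~\ref{prop:ctd-block-bound} and the triangle inequality, and then square. Your two additions are also valid refinements the paper does not spell out: justifying the reduction from $\mathcal G_k$ to $(U_k,S_k)$ via the Markov property, and noting that $\lvert c_j-c'_j\rvert\le1$ already gives the sharper bound $B_{\mathrm C}$ in place of $2B_{\mathrm C}$.
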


\begin{proof}
The conditional mean identity follows from Proposition~\ref{prop:ctd-unbiased}. For the pathwise bound, only the $S_k$th block is nonzero, so
\begin{equation}
\lVert \Delta_k^{\mathrm C} \rVert_{2,\infty} = \left\lVert \widehat T_{\mathrm C}(U_k;S_k,(R_k,S'_k)) - (\mathcal O_{\mathrm C}U_k)(S_k) \right\rVert_2.
\end{equation}
Both terms on the right are embedded categorical laws at state $S_k$. By Proposition~\ref{prop:ctd-block-bound}, each has Euclidean norm at most $B_{\mathrm C}$. Therefore, the triangle inequality yields
\begin{equation}
\lVert \Delta_k^{\mathrm C} \rVert_{2,\infty} \le 2B_{\mathrm C}.
\end{equation}
Squaring gives the conditional second-moment bound.
\end{proof}

\subsection{Completion of the proof of Theorem~\ref{thm:ctd-main2}}

\begin{proof}[Proof of Theorem~\ref{thm:ctd-main2}]
We now verify the hypotheses of Appendix~\ref{app:common} for the discounted CTD recursion.

Proposition~\ref{prop:ctd-domain} verifies the restricted-domain hypotheses with $\mathcal D=\mathcal D_{\mathrm C}$. In particular, the sampled recursion remains in $\mathcal D_{\mathrm C}$ whenever $U_0\in\mathcal D_{\mathrm C}$ and the step sizes are at most one. All contraction, Lipschitz, and perturbation estimates below are therefore required only on $\mathcal D_{\mathrm C}$.

First, Proposition~\ref{prop:ctd-contraction-app} gives the block-supremum contraction
\begin{equation}
\lVert \mathcal O_{\mathrm C} U - \mathcal O_{\mathrm C} U' \rVert_{2,\infty} \le \beta_{\mathrm C} \lVert U-U' \rVert_{2,\infty}, \qquad \beta_{\mathrm C} := \sqrt{\gamma}.
\end{equation}

Second, Proposition~\ref{prop:ctd-lipschitz} gives the target-level estimate
\begin{equation}
\lVert \widehat T_{\mathrm C}(U;s,(r,s')) - \widehat T_{\mathrm C}(U';s,(r,s')) \rVert_2 \le \beta_{\mathrm C} \lVert U-U' \rVert_{2,\infty} \le \lVert U-U' \rVert_{2,\infty}.
\end{equation}

Third, Proposition~\ref{prop:ctd-perturbation} gives the centered pathwise perturbation bound
\begin{equation}
\mathbb{E}[\Delta_k^{\mathrm C} \mid \mathcal G_k] = 0, \qquad \lVert \Delta_k^{\mathrm C} \rVert_{2,\infty} \le 2B_{\mathrm C} \quad \text{a.s.},
\end{equation}
so Proposition~\ref{prop:abstract-markov} applies with
\begin{equation}
A_{3,\mathrm C}=0,\qquad B_{3,\mathrm C}=2B_{\mathrm C}.
\end{equation}
In particular, the i.i.d. conditional second-moment condition in Proposition~\ref{prop:abstract-iid} holds with
\begin{equation}
A_{\mathrm C}^{\mathrm{iid}} := 4B_{\mathrm C}^2.
\end{equation}

Let $q_{\X}:=\lvert\X\rvert^{2/p^\star}$ and choose
\begin{equation}
\vartheta_{\mathrm C,\rho}:=\frac{\rho_{\min}(1-\beta_{\mathrm C})}{q_{\X}}.
\end{equation}
Applying Proposition~\ref{prop:abstract-iid} with
\begin{equation}
\beta = \beta_{\mathrm C}, \qquad A^{\mathrm{iid}} = A_{\mathrm C}^{\mathrm{iid}}, \qquad \vartheta = \vartheta_{\mathrm C,\rho}, \qquad p^\star = \max\{2,\lceil \log \lvert \X \rvert \rceil\},
\end{equation}
yields Theorem~\ref{thm:ctd-main2}(i) with
\begin{equation}
a_{\mathrm C,1}^{\mathrm{iid}} := \frac{1 + \vartheta_{\mathrm C,\rho}\lvert \X \rvert^{2/p^\star}}{1+\vartheta_{\mathrm C,\rho}}, \qquad a_{\mathrm C,2}^{\mathrm{iid}} := 1-\bar\beta_{\mathrm C,\rho}\sqrt{\frac{1 + \vartheta_{\mathrm C,\rho}\lvert \X \rvert^{2/p^\star}}{1+\vartheta_{\mathrm C,\rho}}},
\end{equation}
where
\begin{equation}
\bar\beta_{\mathrm C,\rho} := 1-\rho_{\min}(1-\beta_{\mathrm C}),
\end{equation}
and
\begin{equation}
\begin{gathered}
a_{\mathrm C,3}^{\mathrm{iid}} := \frac{4(p^\star-1)\lvert \X \rvert^{2/p^\star}(4B_{\mathrm C}^2+2)(1+\vartheta_{\mathrm C,\rho}\lvert \X \rvert^{2/p^\star})}{\vartheta_{\mathrm C,\rho}},\\
a_{\mathrm C,4}^{\mathrm{iid}} := \frac{8(p^\star-1)\lvert \X \rvert^{2/p^\star}B_{\mathrm C}^2(1+\vartheta_{\mathrm C,\rho}\lvert \X \rvert^{2/p^\star})}{\vartheta_{\mathrm C,\rho}}.
\end{gathered}
\end{equation}
The compact constants and thresholds in Theorem~\ref{thm:ctd-main2}(i) can therefore be chosen as
\begin{equation}
\begin{gathered}
c_{\mathrm C}^{\mathrm{iid}}:=a_{\mathrm C,2}^{\mathrm{iid}},\qquad
\bar\alpha_{\mathrm C}^{\mathrm{iid}}:=\min\left\{1,\frac{a_{\mathrm C,2}^{\mathrm{iid}}}{a_{\mathrm C,3}^{\mathrm{iid}}}\right\},\\
C_{\mathrm C}^{\mathrm{iid}}:=\max\left\{a_{\mathrm C,1}^{\mathrm{iid}},4e a_{\mathrm C,4}^{\mathrm{iid}},\frac{2a_{\mathrm C,4}^{\mathrm{iid}}}{a_{\mathrm C,2}^{\mathrm{iid}}}\right\},\\
h_{\mathrm C}^{\mathrm{iid}}(\alpha):=\left\lceil\max\left\{1,\alpha,\frac{\alpha a_{\mathrm C,3}^{\mathrm{iid}}}{a_{\mathrm C,2}^{\mathrm{iid}}}\right\}\right\rceil,\\
h_{\mathrm C}^{\mathrm{iid}}(\alpha,z):=\left\lceil\max\left\{1,\alpha^{1/z},\left(\frac{\alpha a_{\mathrm C,3}^{\mathrm{iid}}}{a_{\mathrm C,2}^{\mathrm{iid}}}\right)^{1/z},\left(\frac{2z}{a_{\mathrm C,2}^{\mathrm{iid}}\alpha}\right)^{1/(1-z)}\right\}\right\rceil.
\end{gathered}
\end{equation}

For the Markovian case, Proposition~\ref{prop:abstract-markov} applies with
\begin{equation}
A_{3,\mathrm C}=0,\qquad B_{3,\mathrm C}=2B_{\mathrm C},\qquad
\mathsf A_{\mathrm C}=9,\qquad \mathsf B_{\mathrm C}=8B_{\mathrm C}^2.
\end{equation}
Choose
\begin{equation}
\vartheta_{\mathrm C,\mu}:=\frac{\mu_{\min}(1-\beta_{\mathrm C})}{q_{\X}},
\end{equation}
and let $\eta_{\mathrm C,\mu}$, $K_{\mathrm C,0}$, and $K_{\mathrm C,1}$ be the explicit constants in Proposition~\ref{prop:abstract-markov} under these substitutions. The theorem drift constant and admissible constant step size can be taken as
\begin{equation}
c_{\mathrm C}^{\mathrm{mk}}:=\frac{\eta_{\mathrm C,\mu}}{2},\qquad
\bar\alpha_{\mathrm C}^{\mathrm{mk}}:=\min\left\{1,\frac{\eta_{\mathrm C,\mu}}{9(5+2\eta_{\mathrm C,\mu})K_{\mathrm C,1}}\right\}.
\end{equation}
For the diminishing schedules, one may take
\begin{equation}
h_{\mathrm C}^{\mathrm{mk}}(\alpha):=\left\lceil\max\left\{9\alpha(5\alpha+8)K_{\mathrm C,1},2\right\}\right\rceil
\end{equation}
and, with $\chi_{\mathrm C,\mu}:=z/\alpha+\eta_{\mathrm C,\mu}$,
\begin{equation}
h_{\mathrm C}^{\mathrm{mk}}(\alpha,z):=\left\lceil\max\left\{\left(\frac{18\alpha(5+2\chi_{\mathrm C,\mu})K_{\mathrm C,1}}{\eta_{\mathrm C,\mu}}\right)^{1/z},2\right\}\right\rceil.
\end{equation}
Set $C_{\mathrm C}^{\mathrm{mk}}:=C_\mu^{\mathrm{mk}}$ with $\mathsf A=\mathsf A_{\mathrm C}$, $\mathsf B=\mathsf B_{\mathrm C}$, $\eta_\mu=\eta_{\mathrm C,\mu}$, and $K_{\mu,1}=K_{\mathrm C,1}$ in the explicit definition following Proposition~\ref{prop:abstract-markov}. This coefficient is independent of $k$ and of the step size.
The constant, linearly-diminishing, and polynomially-diminishing step size bounds in Theorem~\ref{thm:ctd-main2} are therefore obtained by the corresponding specializations of Proposition~\ref{prop:abstract-iid} and Proposition~\ref{prop:abstract-markov}.
\end{proof}
\subsection{Proof of Corollary~\ref{cor:disc-ctd}}

\begin{proof}
We use the linearly-diminishing step size bounds in Theorem~\ref{thm:ctd-main2}.

For part \textup{(i)}, the linearly-diminishing i.i.d.\ bound in Theorem~\ref{thm:ctd-main2}(i) gives
\begin{equation}
\mathbb E\left[\ell_{\mathrm C,\infty}(\eta_k,\eta^\star)^2\right] \le a_{\mathrm C,1}^{\mathrm{iid}}\ell_{\mathrm C,\infty}(\eta_0,\eta^\star)^2 \left(\frac{h}{k+h}\right)^{a_{\mathrm C,2}^{\mathrm{iid}}\alpha} + \frac{4e\alpha^2 a_{\mathrm C,4}^{\mathrm{iid}}}{a_{\mathrm C,2}^{\mathrm{iid}}\alpha-1}\cdot \frac{1}{k+h}.
\end{equation}
Since $\alpha>1/a_{\mathrm C,2}^{\mathrm{iid}}$, one has $a_{\mathrm C,2}^{\mathrm{iid}}\alpha>1$, and therefore the first term is also $O((k+h)^{-1})$. Hence
\begin{equation}
\mathbb E\left[\ell_{\mathrm C,\infty}(\eta_k,\eta^\star)^2\right] = O\left(\frac{1}{k+h}\right).
\end{equation}
Thus $\mathbb E[\ell_{\mathrm C,\infty}(\eta_k,\eta^\star)]\le \varepsilon$ is guaranteed once
\begin{equation}
\mathbb E\left[\ell_{\mathrm C,\infty}(\eta_k,\eta^\star)^2\right]\le \varepsilon^2,
\end{equation}
which holds for $k=O(\varepsilon^{-2})$.

For part \textup{(ii)}, the linearly-diminishing Markovian bound in Theorem~\ref{thm:ctd-main2}, part \textup{(ii)}, gives
\begin{equation}
\mathbb E\left[\ell_{\mathrm C,\infty}(\eta_k,\eta^\star)^2\right] \le C_{\mathrm C}^{\mathrm{mk}}\bigl(1+\ell_{\mathrm C,\infty}(\eta_0,\eta^\star)^2\bigr)\left(\frac{h}{k+h}\right)^{c_{\mathrm C}^{\mathrm{mk}}\alpha}+\frac{C_{\mathrm C}^{\mathrm{mk}}\alpha^2}{c_{\mathrm C}^{\mathrm{mk}}\alpha-1}\cdot\frac{1}{k+h}.
\end{equation}
Since $\alpha>1/c_{\mathrm C}^{\mathrm{mk}}$, the first term is $O((k+h)^{-1})$. Therefore
\begin{equation}
\mathbb E\left[\ell_{\mathrm C,\infty}(\eta_k,\eta^\star)^2\right]=O\left(\frac{1}{k+h}\right),
\end{equation}
and hence $\mathbb E[\ell_{\mathrm C,\infty}(\eta_k,\eta^\star)]\le\varepsilon$ is ensured for $k=O(\varepsilon^{-2})$.
\end{proof}

\section{MTD details and proof of Theorem~\ref{thm:mtd-main}}
\label{app:mtd}

We now verify the abstract hypotheses of Appendix~\ref{app:common} for the discounted multivariate signed-categorical recursion.

\subsection{Statewise MMD isometry and contraction}

For each state $s\in \X$, let
\begin{equation}
\Theta(s)=\{\theta_1(s),\dots,\theta_d(s)\}\subset\mathbb R^q, \qquad \mathbb R_1^d := \Bigl\{ p\in\mathbb R^d : \sum_{i=1}^d p_i=1 \Bigr\}.
\end{equation}
Let $\mathcal M(\R^q)$ denote the space of finite signed Borel measures on $\R^q$. Define
\begin{equation}
\mathcal F^{\X}_{\mathrm M,\Theta} := \Bigl\{ \eta : \X\to \mathcal M(\mathbb R^q) : \eta(s)=\sum_{i=1}^d p_i(s)\delta_{\theta_i(s)}, \ p(s)\in \mathbb R_1^d \text{ for all }s\in \X \Bigr\}.
\end{equation}

Let $\kappa:\mathbb R^q\times\mathbb R^q\to\mathbb R$ be the characteristic kernel from Section~\ref{sec:mtd-new2}. For each state $s$, let
\begin{equation}
K_s := \bigl( \kappa(\theta_i(s),\theta_j(s)) \bigr)_{i,j=1}^d
\end{equation}
be the Gram matrix on $\Theta(s)$. For
\begin{equation}
\eta(s)=\sum_{i=1}^d p_i(s)\delta_{\theta_i(s)},
\end{equation}
define
\begin{equation}
I_s(\eta(s)) := K_s^{1/2}p(s)\in\mathbb R^d.
\end{equation}

\begin{proposition}[Statewise MMD isometry]
\label{prop:mtd-isometry}
For every state $s\in \X$ and all signed-categorical laws $\mu,\nu$ supported on $\Theta(s)$,
\begin{equation}
\mathrm{MMD}_\kappa(\mu,\nu)^2 = \lVert I_s(\mu)-I_s(\nu) \rVert_2^2.
\end{equation}
Consequently, with
\begin{equation}
I_{\mathrm M}(\eta):=(I_s(\eta(s)))_{s\in \X},
\end{equation}
one has
\begin{equation}
\lVert I_{\mathrm M}(\eta)-I_{\mathrm M}(\eta') \rVert_{2,\infty} = \ell_{\mathrm M,\infty}(\eta,\eta').
\end{equation}
\end{proposition}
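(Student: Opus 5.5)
The identity follows from the kernel-mean-embedding expression for $\mathrm{MMD}_\kappa$ between finitely supported signed measures, followed by a factorization of the resulting quadratic form through $K_s^{1/2}$. Fix $s$ and write $\mu=\sum_i p_i\delta_{\theta_i(s)}$ and $\nu=\sum_i p'_i\delta_{\theta_i(s)}$ with $p,p'\in\mathbb R_1^d$. The difference $\mu-\nu$ is a finite signed measure supported on $\Theta(s)$, with coefficient vector $p-p'$ summing to zero.

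The first step is to expand the squared MMD. Writing $\mathrm{MMD}_\kappa(\mu,\nu)$ as the RKHS norm of the embedded difference $\int\kappa(\cdot,x)\,(\mu-\nu)(\mathrm dx)$ and expanding the double integral against the finitely supported measure gives
\begin{equation}
\mathrm{MMD}_\kappa(\mu,\nu)^2=\sum_{i,j=1}^d (p_i-p'_i)(p_j-p'_j)\,\kappa(\theta_i(s),\theta_j(s))=(p-p')^\top K_s(p-p').
\end{equation}
For finitely supported signed measures the integrals are finite sums, so there are no integrability issues. Since $\kappa$ is a positive-definite kernel, $K_s$ is symmetric positive semidefinite, so its symmetric square root $K_s^{1/2}$ exists. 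Therefore
\begin{equation}
(p-p')^\top K_s(p-p')=\lVert K_s^{1/2}(p-p')\rVert_2^2=\lVert I_s(\mu)-I_s(\nu)\rVert_2^2,
\end{equation}
where the last equality uses linearity of $I_s$ in $p$.

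The one point needing care, which I expect to be the main obstacle, is that $\kappa$ is the kernel induced by a semimetric of strong negative type, $\kappa(x,y)=\tfrac12(\rho(x,x_0)+\rho(y,x_0)-\rho(x,y))$, which is positive definite but possibly only conditionally so in some conventions. If only conditional positive definiteness were available, $K_s$ could fail to be PSD on all of $\mathbb R^d$. In that case I would restrict to the zero-sum subspace $\{v:\sum_i v_i=0\}$, on which all differences $p-p'$ lie, and on which the energy-distance form $-\tfrac12 v^\top(\rho(\theta_i,\theta_j))v$ agrees with $v^\top K_s v$ and is nonnegative. This shows the quadratic form is PSD where it is used. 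Since the induced $\kappa$ of \citet{wiltzer2024foundations} is a genuine positive-definite kernel, however, $K_s^{1/2}$ is well defined and the identity holds directly. Injectivity of $I_s$ on $\mathbb R_1^d$, needed for $I_{\mathrm M}^{-1}$, would follow from the characteristic property when the support points are distinct, but it is not required for the displayed identity.

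Finally, take the maximum over $s\in\X$. By definition, $\ell_{\mathrm M,\infty}(\eta,\eta')=\max_s\mathrm{MMD}_\kappa(\eta(s),\eta'(s))$ and $\lVert I_{\mathrm M}(\eta)-I_{\mathrm M}(\eta')\rVert_{2,\infty}=\max_s\lVert I_s(\eta(s))-I_s(\eta'(s))\rVert_2$. Taking square roots in the statewise identity and then maxima yields the product-space identity.
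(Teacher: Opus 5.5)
Your proposal is correct and follows essentially the same route as the paper's proof: expand $\mathrm{MMD}_\kappa(\mu,\nu)^2$ as the quadratic form $(p-p')^\top K_s(p-p')$, factor it as $\lVert K_s^{1/2}(p-p')\rVert_2^2$ using positive semidefiniteness of $K_s$, and take the maximum over $s\in\X$. Your added remarks on the distance-induced kernel and on injectivity are sound side checks, but they are not needed, since the paper likewise relies only on $K_s$ being positive semidefinite.
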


\begin{proof}
Write
\begin{equation}
\mu=\sum_{i=1}^d p_i\delta_{\theta_i(s)}, \qquad \nu=\sum_{i=1}^d q_i\delta_{\theta_i(s)}.
\end{equation}
Then
\begin{equation}
\mu-\nu=\sum_{i=1}^d (p_i-q_i)\delta_{\theta_i(s)}.
\end{equation}
By the definition of MMD,
\begin{equation}
\mathrm{MMD}_\kappa(\mu,\nu)^2 = \sum_{i=1}^d\sum_{j=1}^d (p_i-q_i)(p_j-q_j)\kappa(\theta_i(s),\theta_j(s)) = (p-q)^\top K_s(p-q).
\end{equation}
Since $K_s$ is positive semidefinite,
\begin{equation}
(p-q)^\top K_s(p-q) = \lVert K_s^{1/2}(p-q) \rVert_2^2 = \lVert I_s(\mu)-I_s(\nu) \rVert_2^2.
\end{equation}
Taking the maximum over $s\in \X$ gives the product isometry.
\end{proof}

Recall from Section~\ref{sec:mtd-new2} that
\begin{equation}
\mathcal O_{\mathrm M} := I_{\mathrm M}\circ \Pi_{\mathrm M}^{\Theta}T^\pi \circ I_{\mathrm M}^{-1}.
\end{equation}

Define the embedded signed-categorical domain
\begin{equation}
\mathcal D_{\mathrm M}:=I_{\mathrm M}(\mathcal F^{\X}_{\mathrm M,\Theta})\subseteq V.
\end{equation}

\begin{proposition}[Invariance of the MTD domain]
\label{prop:mtd-domain}
The set $\mathcal D_{\mathrm M}$ is a closed affine subset of $V$, $\mathcal O_{\mathrm M}(\mathcal D_{\mathrm M})\subseteq\mathcal D_{\mathrm M}$, and the full-sample replacement map
\begin{equation}
\widehat H_{\mathrm M}(U;s,(r,s')):=U+P_s\bigl(\widehat T_{\mathrm M}(U;s,(r,s'))-U(s)\bigr)
\end{equation}
satisfies
\begin{equation}
\widehat H_{\mathrm M}(U;s,(r,s'))\in\mathcal D_{\mathrm M}
\end{equation}
for every $U\in\mathcal D_{\mathrm M}$ and every admissible $(s,r,s')$. Consequently, if $U_0\in\mathcal D_{\mathrm M}$ and $0\le\alpha_k\le1$, then every MTD iterate belongs to $\mathcal D_{\mathrm M}$.
\end{proposition}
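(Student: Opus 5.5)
The plan mirrors the CTD domain argument (Proposition~\ref{prop:ctd-domain}), with the simplex replaced by the affine mass-one constraint. There are four claims to check: closedness and affineness of $\mathcal D_{\mathrm M}$, invariance under $\mathcal O_{\mathrm M}$, invariance under $\widehat H_{\mathrm M}$, and invariance of the iterates.

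\textbf{Step 1: $\mathcal D_{\mathrm M}$ is closed and affine.} $\mathcal F^{\X}_{\mathrm M,\Theta}$ is parametrized by the product $\prod_{s}\mathbb R^d_1$, which is an affine subspace of $\prod_s\mathbb R^d$. The embedding acts blockwise by the linear maps $p(s)\mapsto K_s^{1/2}p(s)$. So $\mathcal D_{\mathrm M}=\prod_s K_s^{1/2}\mathbb R^d_1$. The image of an affine subspace under a linear map is an affine subspace of the finite-dimensional space $V$. Every affine subspace of a finite-dimensional space is closed, and it is trivially convex.

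One subtlety concerns $I_{\mathrm M}^{-1}$ if $K_s$ is singular. I would first try to invoke the kernel assumptions: a characteristic kernel from a semimetric of strong negative type, with distinct support points, makes $K_s$ positive definite on the mass-zero differences. Together with the mass-one constraint, this makes $I_s$ injective on $\mathbb R^d_1$. I expect this to be the only non-routine point. If positive definiteness is unavailable, I would instead interpret $I_s^{-1}$ as any preimage and note that all quantities depend only on the MMD class, by Proposition~\ref{prop:mtd-isometry}.

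\textbf{Step 2: invariance under $\mathcal O_{\mathrm M}$ and $\widehat T_{\mathrm M}$.} For $U\in\mathcal D_{\mathrm M}$, $I_{\mathrm M}^{-1}U$ is a mass-one signed categorical law at each state. The pushforward $(f_{r,\gamma})_\#$ and the $\pi,P$-mixture in $T^\pi$ preserve total mass one. The projection $\Pi^{\Theta(s)}_{\mathrm M}$ returns, by definition, a vector in $\mathbb R^d_1$; it is well-defined here by \citet{wiltzer2024foundations}. Hence $(\Pi_{\mathrm M}^\Theta T^\pi\eta)(s)$ lies in $\mathcal F_{\mathrm M,\Theta(s)}$, and applying $I_s$ lands in $K_s^{1/2}\mathbb R^d_1$. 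The same reasoning applied to the single sampled pushforward shows that $\widehat T_{\mathrm M}(U;s,(r,s'))\in K_s^{1/2}\mathbb R^d_1$.

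\textbf{Step 3: invariance under $\widehat H_{\mathrm M}$.} The map $\widehat H_{\mathrm M}$ replaces block $s$ of $U$ by this target and leaves the other blocks unchanged. Since $\mathcal D_{\mathrm M}$ is a product over states, $\widehat H_{\mathrm M}(U;s,(r,s'))\in\mathcal D_{\mathrm M}$.

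\textbf{Step 4: invariance of the iterates.} The MTD recursion \eqref{eq:recursion} can be written as $U_{k+1}=(1-\alpha_k)U_k+\alpha_k\widehat H_{\mathrm M}(U_k;S_k,(R_k,S'_k))$. Lemma~\ref{lem:invariant-domain} then applies to the closed convex set $\mathcal D_{\mathrm M}$ with $0\le\alpha_k\le1$. Affineness would in fact allow any real $\alpha_k$ for this step.
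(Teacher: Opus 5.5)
Your proposal is correct and follows essentially the same route as the paper: the blockwise linear image of $\prod_s\mathbb R^d_1$ is closed and affine, total mass one is preserved by the pushforward, the $\pi,P$-mixture and the projection, block replacement respects the product structure, and Lemma~\ref{lem:invariant-domain} gives the final claim. Your extra remark on injectivity of $I_s$ on $\mathbb R^d_1$ (via strict negative type of the semimetric on distinct atoms) covers a point the paper leaves implicit, and your observation that affineness permits any real $\alpha_k$ is a harmless strengthening.
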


\begin{proof}
At each state, the coefficient vectors of mass-$1$ signed categorical laws form the affine hyperplane $\mathbb R_1^d$. Its image under the linear Gram-matrix embedding is affine and closed, so their finite product $\mathcal D_{\mathrm M}$ is closed, affine, and therefore convex. The signed-categorical projection has mass $1$, both for the expected Bellman target and for each sampled Bellman target. Thus $\mathcal O_{\mathrm M}$ and every full-sample replacement preserve $\mathcal D_{\mathrm M}$. Lemma~\ref{lem:invariant-domain} gives the final claim.
\end{proof}

\begin{proposition}[Embedded MTD contraction]
\label{prop:mtd-contraction-app}
For all $U,U'\in I_{\mathrm M}(\mathcal F^{\X}_{\mathrm M,\Theta})$,
\begin{equation}
\lVert \mathcal O_{\mathrm M}U-\mathcal O_{\mathrm M}U'\rVert_{2,\infty} \le \gamma^{c/2}\lVert U-U'\rVert_{2,\infty}.
\end{equation}
\end{proposition}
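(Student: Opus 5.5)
The plan mirrors the proof of Proposition~\ref{prop:ctd-contraction-app}: transport the problem through the isometry and invoke the known MMD contraction of the projected signed-categorical Bellman operator. Let $\eta:=I_{\mathrm M}^{-1}(U)$ and $\eta':=I_{\mathrm M}^{-1}(U')$, both in $\mathcal F^{\X}_{\mathrm M,\Theta}$. Since $U,U'\in\mathcal D_{\mathrm M}$, they come from mass-one coefficient vectors. Because $K_s$ is positive definite for a characteristic kernel on distinct support points, $K_s^{1/2}$ is invertible, so $I_{\mathrm M}^{-1}$ is well defined on $\mathcal D_{\mathrm M}$. Proposition~\ref{prop:mtd-domain} gives $\mathcal O_{\mathrm M}U\in\mathcal D_{\mathrm M}$. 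Then Proposition~\ref{prop:mtd-isometry}, applied at each state, yields
\begin{equation*}
\lVert \mathcal O_{\mathrm M}U-\mathcal O_{\mathrm M}U'\rVert_{2,\infty}=\ell_{\mathrm M,\infty}\bigl(\Pi_{\mathrm M}^{\Theta}T^\pi\eta,\Pi_{\mathrm M}^{\Theta}T^\pi\eta'\bigr),
\end{equation*}
and likewise $\lVert U-U'\rVert_{2,\infty}=\ell_{\mathrm M,\infty}(\eta,\eta')$. It therefore suffices to show that $\Pi_{\mathrm M}^{\Theta}T^\pi$ is a $\gamma^{c/2}$-contraction in $\ell_{\mathrm M,\infty}$ on $\mathcal F^{\X}_{\mathrm M,\Theta}$.

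I would split this into two steps, as in \citet{wiltzer2024foundations}.

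\emph{Step 1: $T^\pi$ contracts.} The kernel is induced by a shift-invariant, $c$-homogeneous semimetric $\rho$, so $\mathrm{MMD}_\kappa^2(\mu,\nu)=-\tfrac12\iint \rho(x-y)\,(\mu-\nu)(\mathrm dx)(\mu-\nu)(\mathrm dy)$ on mass-one signed measures. The constant part of $\kappa$ drops out because $\mu-\nu$ has zero mass. Shift invariance and homogeneity give
\begin{equation*}
\mathrm{MMD}_\kappa\bigl((f_{r,\gamma})_\#\mu,(f_{r,\gamma})_\#\nu\bigr)=\gamma^{c/2}\,\mathrm{MMD}_\kappa(\mu,\nu).
\end{equation*}
For the mixture over $(a,s')$, write $(T^\pi\eta)(s)-(T^\pi\eta')(s)$ as a convex combination of such pushforward differences. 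The triangle inequality for the RKHS norm of the mean embedding then bounds it by $\gamma^{c/2}\max_{s'}\mathrm{MMD}_\kappa(\eta(s'),\eta'(s'))$.

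\emph{Step 2: $\Pi_{\mathrm M}^{\Theta(s)}$ is nonexpansive.} On mass-one signed measures it is the orthogonal projection, in the RKHS, of the mean embedding onto the affine subspace spanned by $\{\kappa(\theta_i(s),\cdot)\}$ with unit total mass. This is well defined per \citet{wiltzer2024foundations}. Projection onto a closed affine set in a Hilbert space is $1$-Lipschitz.

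The main obstacle is the subtlety that $T^\pi\eta$ consists of signed measures whose support leaves $\Theta(s)$. The MMD must therefore be handled as a semimetric-energy expression on all finite signed mass-one measures with suitable moment conditions. The projection argument must also be carried out in the RKHS rather than in coordinates. I would cite the corresponding lemmas of \citet{wiltzer2024foundations} for the homogeneity identity and the projection nonexpansiveness, and only verify that mass-one is preserved. Composing the two steps and taking the maximum over $s$ gives the claim.
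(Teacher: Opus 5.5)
Your proposal is correct and follows the paper's route: the paper also transports the claim through the statewise MMD isometry and then cites the $\gamma^{c/2}$ contraction of $\Pi_{\mathrm M}^{\Theta}T^\pi$ from \citet{wiltzer2024foundations}; you additionally unpack that citation into the homogeneity and shift-invariance scaling of the pushforward plus nonexpansiveness of the affine RKHS projection. One correction to a side remark: $K_s$ need not be positive definite. For example, $\kappa(x,y)=\tfrac12(\lVert x\rVert^c+\lVert y\rVert^c-\lVert x-y\rVert^c)$ vanishes identically when one argument is $0$, which is why the paper works with $K_s^\dagger$. Injectivity of $I_{\mathrm M}$ on $\mathcal D_{\mathrm M}$ still holds, however, because differences of mass-one laws have zero total mass, and strong negative type then gives $(p-q)^\top K_s(p-q)>0$ for $p\neq q$ in $\mathbb R_1^d$.
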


\begin{proof}
Let $\eta:=I_{\mathrm M}^{-1}(U)$ and $\eta':=I_{\mathrm M}^{-1}(U')$. By Proposition~\ref{prop:mtd-isometry} and the contraction of $\mathcal O_{\mathrm M}$ in the supremum MMD metric \citep{wiltzer2024foundations},
\begin{equation}
\lVert \mathcal O_{\mathrm M}U-\mathcal O_{\mathrm M}U'\rVert_{2,\infty} = \ell_{\mathrm M,\infty}(\Pi_{\mathrm M}^{\Theta}T^\pi\eta,\Pi_{\mathrm M}^{\Theta}T^\pi\eta') \le \gamma^{c/2}\ell_{\mathrm M,\infty}(\eta,\eta') = \gamma^{c/2}\lVert U-U'\rVert_{2,\infty}.
\end{equation}
\end{proof}

\subsection{Unbiased sampled target, Lipschitz continuity, and perturbation control}

Recall the sampled MTD target from Section~\ref{sec:mtd-new2}:
\begin{equation}
\widehat T_{\mathrm M}(U_k;S_k,(R_k,S'_k)) := I_{S_k} \Bigl( \Pi_{\mathrm M}^{\Theta(S_k)} \bigl( (f_{R_k,\gamma})_\# I_{S'_k}^{-1}(U_k(S'_k)) \bigr) \Bigr).
\end{equation}

\begin{proposition}[Embedded MTD projection as an affine Euclidean projector]
\label{prop:mtd-proj-affine}
For each state $s \in \X$, let
\begin{equation}
\mathcal A_s := K_s^{1/2}\mathbb R_1^d \subset \mathbb R^d .
\end{equation}
For every mass-$1$ finite signed measure $\nu$ on $\mathbb R^q$, define
\begin{equation}
b_s(\nu) := \left( \int \kappa(\theta_i(s),y)\,d\nu(y) \right)_{i=1}^d \in \mathbb R^d
\end{equation}
and
\begin{equation}
\zeta_s(\nu) := K_s^{\dagger/2} b_s(\nu),
\end{equation}
where $K_s^\dagger$ is the Moore-Penrose pseudoinverse of $K_s$. Then $b_s(\nu) \in \operatorname{range}(K_s)$ and
\begin{equation}
I_{\mathrm M,s}\bigl(\Pi_{\mathrm M}^{\Theta(s)}\nu\bigr) = \operatorname{proj}_{\mathcal A_s}\bigl(\zeta_s(\nu)\bigr),
\end{equation}
where $\operatorname{proj}_{\mathcal A_s}$ is the Euclidean orthogonal projection onto $\mathcal A_s$. Consequently, the map
\begin{equation}
J_s(\nu) := I_{\mathrm M,s}\bigl(\Pi_{\mathrm M}^{\Theta(s)}\nu\bigr)
\end{equation}
is affine on the affine space of mass-$1$ finite signed measures.
\end{proposition}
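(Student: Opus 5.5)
The approach is to write the MMD objective in the coordinates $p\in\mathbb R_1^d$ as a quadratic, and then recognize its minimization as a Euclidean projection after the change of variables $x=K_s^{1/2}p$.

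For $\mu_p:=\sum_i p_i\delta_{\theta_i(s)}$ and a mass-$1$ signed measure $\nu$, expanding the RKHS norm gives
\begin{equation}
\mathrm{MMD}_\kappa(\mu_p,\nu)^2 = p^\top K_s p - 2p^\top b_s(\nu) + \iint\kappa\,d\nu\,d\nu .
\end{equation}
The last term does not depend on $p$, so only the first two terms matter.

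The first step is to show $b_s(\nu)\in\operatorname{range}(K_s)$. Since $\operatorname{range}(K_s)=\ker(K_s)^\perp$, it suffices to check $v^\top b_s(\nu)=0$ whenever $K_sv=0$. If $K_s v=0$, then $\lVert\sum_i v_i\kappa(\theta_i(s),\cdot)\rVert_{\mathcal H}^2=v^\top K_s v=0$. Hence the function $\sum_i v_i\kappa(\theta_i(s),\cdot)$ vanishes identically, and so $v^\top b_s(\nu)=\int\sum_i v_i\kappa(\theta_i(s),y)\,d\nu(y)=0$. This step needs $\nu$ to have finite total variation and $\kappa(\cdot,y)$ to be $\nu$-integrable. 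Both hold because $\nu$ arises as a pushforward of a finitely supported measure.

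With $b_s(\nu)$ in the range, one has $K_s^{1/2}K_s^{\dagger/2}b_s(\nu)=b_s(\nu)$. Putting $x=K_s^{1/2}p$ and $\zeta=\zeta_s(\nu)$ then gives
\begin{equation}
p^\top K_sp-2p^\top b_s(\nu)=\lVert x\rVert_2^2-2x^\top\zeta=\lVert x-\zeta\rVert_2^2-\lVert\zeta\rVert_2^2 .
\end{equation}
As $p$ ranges over $\mathbb R_1^d$, $x$ ranges over $\mathcal A_s$. Moreover, $I_{\mathrm M,s}(\mu_p)=x$ by definition. So minimizing the MMD over $p\in\mathbb R_1^d$ is the same as minimizing $\lVert x-\zeta\rVert_2$ over $x\in\mathcal A_s$. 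The set $\mathcal A_s$ is a nonempty closed affine subspace, as the linear image of an affine subspace in finite dimensions, so this minimum is attained at the unique point $\operatorname{proj}_{\mathcal A_s}(\zeta)$.

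The main obstacle is interpreting $\Pi^{\Theta(s)}_{\mathrm M}$ when $K_s$ is singular: the minimizer $p$ may then be non-unique. Every minimizer has the same embedded image, namely the unique projected point. Since the paper (following \citet{wiltzer2024foundations}) treats the projection as well-defined, the identity holds for whichever minimizer is selected. If $\kappa$ is characteristic and the atoms are distinct, $K_s$ is in fact nonsingular and there is no ambiguity.

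For affinity, note first that $\nu\mapsto b_s(\nu)$ is linear in $\nu$, so $\zeta_s$ is linear. Next, orthogonal projection onto an affine subspace has the form $x\mapsto a+P(x-a)$ with $P$ an orthogonal linear projector, which is affine. The composition $J_s=\operatorname{proj}_{\mathcal A_s}\circ\zeta_s$ is therefore affine on the affine space of mass-$1$ signed measures.
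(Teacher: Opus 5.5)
Your proof is correct and follows the paper's argument. Both show $\mathrm{MMD}_\kappa(\mu_p,\nu)^2=\lVert K_s^{1/2}p-\zeta_s(\nu)\rVert_2^2+c_s(\nu)$ with $c_s(\nu)$ independent of $p$, and then read off the projection and the affinity. The only difference is technique: you obtain $b_s(\nu)\in\operatorname{range}(K_s)$ from the nullspace/reproducing-property argument and complete the square in coordinates, whereas the paper writes $b_s(\nu)=\Phi_s^*P_sm_\nu$ and uses Pythagoras with $P_s=\Phi_sK_s^\dagger\Phi_s^*$.

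One aside is false, though nothing depends on it because your main argument already handles singular $K_s$. Distinct atoms and a characteristic kernel do not force $K_s$ to be nonsingular. For the semimetric-induced kernel $\kappa(x,y)=\tfrac12[\rho(x,z_0)+\rho(y,z_0)-\rho(x,y)]$ one has $\kappa(z_0,\cdot)\equiv0$, so an atom at $z_0$ produces a zero row. What the characteristic property does give is positive definiteness of $K_s$ on $\{v:\sum_iv_i=0\}$, and that still makes the minimizer over $\mathbb R_1^d$ unique.
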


\begin{proof}
Let $\mathcal H_\kappa$ be the RKHS of $\kappa$, let $\varphi : \mathbb R^q \to \mathcal H_\kappa$ denote the feature map, and define the linear operator
\begin{equation}
\Phi_s : \mathbb R^d \to \mathcal H_\kappa, \qquad \Phi_s p := \sum_{i=1}^d p_i \varphi(\theta_i(s)).
\end{equation}
Then
\begin{equation}
K_s = \Phi_s^*\Phi_s .
\end{equation}
For a mass-$1$ finite signed measure $\nu$, define its kernel mean embedding by
\begin{equation}
m_\nu := \int \varphi(y)\,d\nu(y) \in \mathcal H_\kappa .
\end{equation}
Let $P_s$ denote the orthogonal projection in $\mathcal H_\kappa$ onto $\operatorname{range}(\Phi_s)$. Since $\Phi_s^*(I-P_s)=0$, we have
\begin{equation}
b_s(\nu) = \Phi_s^* m_\nu = \Phi_s^* P_s m_\nu \in \operatorname{range}(\Phi_s^*\Phi_s) = \operatorname{range}(K_s).
\end{equation}

Now fix $p \in \mathbb R_1^d$ and write
\begin{equation}
\mu_p := \sum_{i=1}^d p_i \delta_{\theta_i(s)} .
\end{equation}
By the reproducing-kernel representation of MMD,
\begin{equation}
\mathrm{MMD}_\kappa(\mu_p,\nu)^2 = \lVert \Phi_s p - m_\nu \rVert_{\mathcal H_\kappa}^2 = \lVert \Phi_s p - P_s m_\nu \rVert_{\mathcal H_\kappa}^2 + \lVert (I-P_s)m_\nu \rVert_{\mathcal H_\kappa}^2 .
\end{equation}

We next identify $P_s m_\nu$ in coefficient form. Since $K_s=\Phi_s^*\Phi_s$, the standard Moore-Penrose identity gives
\begin{equation}
\Phi_s K_s^\dagger \Phi_s^* = P_s .
\end{equation}
Therefore,
\begin{equation}
\Phi_s\bigl(K_s^\dagger b_s(\nu)\bigr) = \Phi_s K_s^\dagger \Phi_s^* m_\nu = P_s m_\nu .
\end{equation}
Substituting this into the previous display yields
\begin{equation}
\lVert \Phi_s p - P_s m_\nu \rVert_{\mathcal H_\kappa} = \lVert \Phi_s(p-K_s^\dagger b_s(\nu)) \rVert_{\mathcal H_\kappa}.
\end{equation}
Since $\lVert \Phi_s x \rVert_{\mathcal H_\kappa}^2 = x^\top K_s x = \lVert K_s^{1/2}x \rVert_2^2$ for every $x \in \mathbb R^d$, we obtain
\begin{equation}
\lVert \Phi_s p - P_s m_\nu \rVert_{\mathcal H_\kappa} = \lVert K_s^{1/2}(p-K_s^\dagger b_s(\nu)) \rVert_2 = \lVert K_s^{1/2}p - K_s^{1/2}K_s^\dagger b_s(\nu) \rVert_2 .
\end{equation}
Because $K_s$ is symmetric positive semidefinite, spectral calculus gives
\begin{equation}
K_s^{1/2}K_s^\dagger = K_s^{\dagger/2}.
\end{equation}
Hence
\begin{equation}
\lVert \Phi_s p - P_s m_\nu \rVert_{\mathcal H_\kappa} = \lVert K_s^{1/2}p - K_s^{\dagger/2} b_s(\nu) \rVert_2 = \lVert K_s^{1/2}p - \zeta_s(\nu) \rVert_2 .
\end{equation}
Therefore
\begin{equation}
\mathrm{MMD}_\kappa(\mu_p,\nu)^2 = \lVert K_s^{1/2}p - \zeta_s(\nu) \rVert_2^2 + c_s(\nu),
\end{equation}
where
\begin{equation}
c_s(\nu) := \lVert (I-P_s)m_\nu \rVert_{\mathcal H_\kappa}^2
\end{equation}
does not depend on $p$.

Minimizing over $p \in \mathbb R_1^d$ is therefore equivalent to Euclidean projection of $\zeta_s(\nu)$ onto
\begin{equation}
\mathcal A_s = K_s^{1/2}\mathbb R_1^d .
\end{equation}
Since $I_{\mathrm M,s}\bigl(\sum_{i=1}^d p_i\delta_{\theta_i(s)}\bigr)=K_s^{1/2}p$, it follows that
\begin{equation}
I_{\mathrm M,s}\bigl(\Pi_{\mathrm M}^{\Theta(s)}\nu\bigr) = \operatorname{proj}_{\mathcal A_s}\bigl(\zeta_s(\nu)\bigr).
\end{equation}

Finally, $\nu \mapsto b_s(\nu)$ is linear, hence so is $\nu \mapsto \zeta_s(\nu)=K_s^{\dagger/2}b_s(\nu)$. Since orthogonal projection onto an affine subspace is an affine map, the composition
\begin{equation}
J_s(\nu) = \operatorname{proj}_{\mathcal A_s}\bigl(\zeta_s(\nu)\bigr)
\end{equation}
is affine on the affine space of mass-$1$ finite signed measures.
\end{proof}

\begin{proposition}[Conditional unbiasedness of the sampled MTD target]
\label{prop:mtd-unbiased}
For every admissible $U\in I_{\mathrm M}(\mathcal F^{\X}_{\mathrm M,\Theta})$ and every state $s\in \X$,
\begin{equation}
\mathbb E\left[ \widehat T_{\mathrm M}(U;s,(R_k,S'_k)) \,\mid\, U_k=U,\ S_k=s \right] = (\mathcal O_{\mathrm M}U)(s).
\end{equation}
\end{proposition}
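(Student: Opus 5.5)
Conditional on $U_k=U$ and $S_k=s$, we have $\widehat T_{\mathrm M}(U;s,(R_k,S'_k)) = J_s\bigl((f_{R_k,\gamma})_\# \nu_{S'_k}\bigr)$, where $\nu_{x}:=I_{\mathrm M,x}^{-1}(U(x))$ and $J_s$ is the map from Proposition~\ref{prop:mtd-proj-affine}. The plan mirrors the CTD argument of Proposition~\ref{prop:ctd-unbiased}, with one change. The MTD projection is not linear on all signed measures, but Proposition~\ref{prop:mtd-proj-affine} shows it is affine on the affine space of mass-$1$ signed measures. That is exactly what we need, because the expectation is a convex combination.

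\emph{Step 1 (mass and finiteness).} Each $\nu_x$ is a mass-$1$ signed categorical law, since $U\in\mathcal D_{\mathrm M}$. First we check that $I_{\mathrm M,x}^{-1}$ is well defined on $\mathcal D_{\mathrm M}$, i.e.\ that we can pick coefficients in $\mathbb R_1^d$; if $K_x$ is singular, the ambiguity lies in the kernel of $K_x$, and the MMD-relevant object is unaffected. Pushforward by $f_{r,\gamma}$ preserves total mass, so every candidate argument $(f_{r,\gamma})_\#\nu_{x}$ is a mass-$1$ finite signed measure.

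\emph{Step 2 (finite mixture).} Conditional on $S_k=s$, the pair $(R_k,S'_k)$ takes finitely many values, because $\A$ and $\X$ are finite and $R_k=R(s,A_k)$. Call these values $(R(s,a),s')$, with weights $w_{a,s'}:=\pi(a\mid s)P(s'\mid s,a)$ summing to $1$. Hence
\[
\E\bigl[\widehat T_{\mathrm M}\mid U_k=U,S_k=s\bigr]=\sum_{a,s'} w_{a,s'}\, J_s\bigl((f_{R(s,a),\gamma})_\#\nu_{s'}\bigr).
\]

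\emph{Step 3 (affinity).} An affine map on an affine space commutes with convex combinations. So the sum above equals $J_s\bigl(\sum_{a,s'} w_{a,s'}(f_{R(s,a),\gamma})_\#\nu_{s'}\bigr)=J_s\bigl((T^\pi\eta)(s)\bigr)$, where $\eta=I_{\mathrm M}^{-1}(U)$. By the definitions of $\Pi_{\mathrm M}^{\Theta}$ (applied statewise) and $\mathcal O_{\mathrm M}$, this is $I_{\mathrm M,s}\bigl((\Pi_{\mathrm M}^{\Theta}T^\pi\eta)(s)\bigr)=(\mathcal O_{\mathrm M}U)(s)$.

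The main obstacle is Step 3's use of affinity. One must be sure the convex combination stays inside the domain where Proposition~\ref{prop:mtd-proj-affine} applies, namely mass-$1$ finite signed measures with the kernel integrals $b_s(\nu)$ finite. For the finite mixture here this is immediate from linearity of $\nu\mapsto b_s(\nu)$ and continuity of $\kappa$ on the finitely many atoms. A secondary point is that $\Pi_{\mathrm M}^{\Theta(s)}$ may have non-unique minimizing coefficients when $K_s$ is singular. The embedded value $\operatorname{proj}_{\mathcal A_s}(\zeta_s(\nu))$ is unique regardless, so the identity holds at the level of embedded vectors, which is what the statement asserts.
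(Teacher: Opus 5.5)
Your proposal is correct and follows essentially the same route as the paper. Both arguments rest on the affinity of $J_s$ on mass-$1$ signed measures from Proposition~\ref{prop:mtd-proj-affine}, which lets the conditional expectation pass inside $J_s$ and yields $J_s\bigl((T^\pi\eta)(s)\bigr)=(\mathcal O_{\mathrm M}U)(s)$. Your extra bookkeeping (the finite mixture with weights $\pi(a\mid s)P(s'\mid s,a)$, mass preservation under $f_{r,\gamma}$, and uniqueness of the embedded projection when $K_s$ is singular) just spells out what the paper's short argument leaves implicit.
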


\begin{proof}
Let $\eta:=I_{\mathrm M}^{-1}(U)$ and condition on $U_k=U$ and $S_k=s$. By Proposition~\ref{prop:mtd-proj-affine}, the statewise map
\begin{equation}
J_s(\nu):=I_s\bigl(\Pi_{\mathrm M}^{\Theta(s)}\nu\bigr)
\end{equation}
is affine on mass-$1$ finite signed measures. Hence
\begin{equation}
\begin{aligned}
\mathbb E\left[ \widehat T_{\mathrm M}(U;s,(R_k,S'_k)) \,\mid\, U_k=U,\ S_k=s \right] &= \mathbb E\left[ J_s\bigl((f_{R_k,\gamma})_\#\eta(S'_k)\bigr) \,\mid\, U_k=U,\ S_k=s \right] \\ 
&= J_s\left( \mathbb E\left[ (f_{R_k,\gamma})_\#\eta(S'_k) \,\mid\, U_k=U,\ S_k=s \right] \right)\\ 
&= J_s\bigl((T^\pi\eta)(s)\bigr)\\ 
&= I_s\bigl((\Pi_{\mathrm M}^{\Theta}T^\pi\eta)(s)\bigr)\\ 
&= (\mathcal O_{\mathrm M}U)(s).
\end{aligned}
\end{equation}
\end{proof}

\begin{proposition}[MTD samplewise Lipschitz continuity]
\label{prop:mtd-lipschitz}
For every $s\in \X$, every $(r,s')\in[0,1]^q\times \X$, and all $U,U'\in I_{\mathrm M}(\mathcal F^{\X}_{\mathrm M,\Theta})$,
\begin{equation}
\left\lVert \widehat T_{\mathrm M}(U;s,(r,s')) - \widehat T_{\mathrm M}(U';s,(r,s')) \right\rVert_2 \le \lVert U-U'\rVert_{2,\infty}.
\end{equation}
Consequently, if
\begin{equation}
\widehat H_{\mathrm M}(U;s,(r,s')) := U+P_s\bigl(\widehat T_{\mathrm M}(U;s,(r,s'))-U(s)\bigr),
\end{equation}
then
\begin{equation}
\lVert \widehat H_{\mathrm M}(U;s,(r,s'))-\widehat H_{\mathrm M}(U';s,(r,s')) \rVert_{2,\infty} \le \lVert U-U'\rVert_{2,\infty}.
\end{equation}
\end{proposition}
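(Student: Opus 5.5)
The plan mirrors the proof of Proposition~\ref{prop:ctd-lipschitz}, now in the MMD geometry. Fix $s\in\X$ and $(r,s')\in[0,1]^q\times\X$, and set $\mu:=I_{s'}^{-1}(U(s'))$ and $\nu:=I_{s'}^{-1}(U'(s'))$. Both are mass-$1$ signed categorical laws on $\Theta(s')$; if $K_{s'}$ is singular, we fix any preimage, since MMD distances do not depend on the choice. The two sampled targets are then $J_s((f_{r,\gamma})_\#\mu)$ and $J_s((f_{r,\gamma})_\#\nu)$, with $J_s$ as in Proposition~\ref{prop:mtd-proj-affine}. The argument chains three facts: nonexpansiveness of the embedded projection, the $\gamma^{c/2}$ scaling of MMD under $f_{r,\gamma}$, and the isometry of Proposition~\ref{prop:mtd-isometry}.

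\emph{Step 1 (nonexpansiveness).} I will show $\lVert J_s(\nu_1)-J_s(\nu_2)\rVert_2\le \mathrm{MMD}_\kappa(\nu_1,\nu_2)$ for mass-$1$ signed measures $\nu_1,\nu_2$. Proposition~\ref{prop:mtd-proj-affine} writes $J_s=\operatorname{proj}_{\mathcal A_s}\circ\zeta_s$. Euclidean projection onto the closed affine set $\mathcal A_s$ is $1$-Lipschitz, so it suffices to bound $\lVert\zeta_s(\nu_1)-\zeta_s(\nu_2)\rVert_2$. From that proof we have $\zeta_s(\nu)=K_s^{1/2}K_s^\dagger\Phi_s^*m_\nu$, and $\lVert K_s^{1/2}x\rVert_2=\lVert\Phi_s x\rVert_{\mathcal H_\kappa}$. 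Hence
\[
\lVert\zeta_s(\nu_1)-\zeta_s(\nu_2)\rVert_2=\lVert\Phi_sK_s^\dagger\Phi_s^*(m_{\nu_1}-m_{\nu_2})\rVert_{\mathcal H_\kappa}=\lVert P_s(m_{\nu_1}-m_{\nu_2})\rVert_{\mathcal H_\kappa}\le\mathrm{MMD}_\kappa(\nu_1,\nu_2).
\]
I expect this to be the main point needing care. The delicate parts are (a) checking that $K_s^{1/2}K_s^\dagger$ equals the $K_s^{\dagger/2}$ used in the definition, and (b) making sure that the MMD between mass-$1$ signed measures is still the RKHS norm of the difference of mean embeddings. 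Both facts are already established in Proposition~\ref{prop:mtd-proj-affine}.

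\emph{Step 2 (pushforward).} The kernel is induced by a shift-invariant, $c$-homogeneous semimetric, so the energy-distance form of the MMD gives
\[
\mathrm{MMD}_\kappa\bigl((f_{r,\gamma})_\#\mu,(f_{r,\gamma})_\#\nu\bigr)=\gamma^{c/2}\,\mathrm{MMD}_\kappa(\mu,\nu).
\]
Shifting by $r$ leaves the distance unchanged, and scaling by $\gamma$ multiplies the squared distance by $\gamma^c$. This is the same pushforward estimate that underlies Proposition~\ref{prop:mtd-contraction-app} via \citet{wiltzer2024foundations}. The only subtlety is that it must hold for signed mass-$1$ measures; I would invoke it as stated there, since that reference covers signed measures. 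By Proposition~\ref{prop:mtd-isometry}, $\mathrm{MMD}_\kappa(\mu,\nu)=\lVert U(s')-U'(s')\rVert_2\le\lVert U-U'\rVert_{2,\infty}$. Chaining Steps 1 and 2 then gives the target bound with the constant $\gamma^{c/2}\le1$.

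\emph{Step 3 (full-sample map).} The map $\widehat H_{\mathrm M}$ leaves every block $x\ne s$ unchanged and replaces block $s$ by the sampled target. Its block-supremum difference is therefore the maximum of $\max_{x\ne s}\lVert U(x)-U'(x)\rVert_2$ and the target difference from Step~2. Both are at most $\lVert U-U'\rVert_{2,\infty}$, exactly as in Proposition~\ref{prop:ctd-lipschitz}.
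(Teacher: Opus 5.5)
Your proposal is correct and follows the paper's proof. It chains the same four facts: the MMD isometry of Proposition~\ref{prop:mtd-isometry}, nonexpansiveness of the embedded projection, the $\gamma^{c/2}$ pushforward scaling, and the blockwise replacement argument for $\widehat H_{\mathrm M}$. The only difference is that you verify the nonexpansiveness explicitly through $\zeta_s$ and the RKHS projection $P_s$, where the paper simply cites Proposition~\ref{prop:mtd-proj-affine}.
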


\begin{proof}
Fix $s\in \X$ and $(r,s')\in[0,1]^q\times \X$. Let
\begin{equation}
\mu:=I_{s'}^{-1}(U(s')), \qquad \nu:=I_{s'}^{-1}(U'(s')).
\end{equation}
Then
\begin{equation}
\widehat T_{\mathrm M}(U;s,(r,s')) = I_s\bigl(\Pi_{\mathrm M}^{\Theta(s)}((f_{r,\gamma})_\# \mu)\bigr)
\end{equation}
and likewise for $U'$. Using the statewise MMD isometry, the Euclidean projection representation from Proposition~\ref{prop:mtd-proj-affine}, and the $\gamma^{c/2}$ contraction of the pushforward in MMD,
\begin{equation}
\begin{aligned}
\left\lVert \widehat T_{\mathrm M}(U;s,(r,s')) - \widehat T_{\mathrm M}(U';s,(r,s')) \right\rVert_2 &= \mathrm{MMD}_\kappa\bigl( \Pi_{\mathrm M}^{\Theta(s)}((f_{r,\gamma})_\# \mu), \Pi_{\mathrm M}^{\Theta(s)}((f_{r,\gamma})_\# \nu) \bigr) \\ 
&\le \mathrm{MMD}_\kappa((f_{r,\gamma})_\# \mu,(f_{r,\gamma})_\# \nu) \\ 
&\le \gamma^{c/2}\,\mathrm{MMD}_\kappa(\mu,\nu) \\ 
&= \gamma^{c/2}\,\lVert U(s')-U'(s') \rVert_2 \\ 
&\le \lVert U-U'\rVert_{2,\infty}.
\end{aligned}
\end{equation}
For the full-sample map, blocks other than $s$ are unchanged, while block $s$ is replaced by the sampled target. Hence, for every $x \neq s$,
\begin{equation}
\bigl(\widehat H_{\mathrm M}(U;s,(r,s'))-\widehat H_{\mathrm M}(U';s,(r,s'))\bigr)(x) = U(x)-U'(x),
\end{equation}
and for the sampled block,
\begin{equation}
\bigl(\widehat H_{\mathrm M}(U;s,(r,s'))-\widehat H_{\mathrm M}(U';s,(r,s'))\bigr)(s) = \widehat T_{\mathrm M}(U;s,(r,s'))-\widehat T_{\mathrm M}(U';s,(r,s')).
\end{equation}
Taking the block supremum and using the target-level bound proves the claim.
\end{proof}

\begin{proposition}[Discounted MTD affine perturbation bound]
\label{prop:mtd-perturbation}
Let
\begin{equation}
U^\star := I_{\mathrm M}(\eta^\star), \qquad W_k:=U_k-U^\star, \qquad \beta_{\mathrm M} := \gamma^{c/2}.
\end{equation}
Here $\eta^\star$ denotes the unique fixed point of $\Pi_{\mathrm M}^{\Theta}T^\pi$. Define
\begin{equation}
B_{\mathrm M}^\star := \max_{s,s'\in \X} \sup_{r\in[0,1]^q} \left\lVert \widehat T_{\mathrm M}(U^\star;s,(r,s'))-(\mathcal O_{\mathrm M}U^\star)(s) \right\rVert_2
\end{equation}
Then $B_{\mathrm M}^\star<\infty$, and for every $s\in\X$, every $(r,s')\in[0,1]^q\times\X$, and every $U\in I_{\mathrm M}(\mathcal F^{\X}_{\mathrm M,\Theta})$,
\begin{equation}
\left\lVert \widehat T_{\mathrm M}(U;s,(r,s'))-(\mathcal O_{\mathrm M}U)(s) \right\rVert_2 \le 2\beta_{\mathrm M}\lVert U-U^\star \rVert_{2,\infty}+B_{\mathrm M}^\star.
\end{equation}
Consequently, for every $k\ge 0$,
\begin{equation}\label{eq:2ndmoment-mtd}
\mathbb E\left[ \left\lVert \widehat T_{\mathrm M}(U_k;S_k,(R_k,S'_k))-(\mathcal O_{\mathrm M}U_k)(S_k) \right\rVert_2^2 \,\mid\, U_k,S_k \right] \le 8\beta_{\mathrm M}^2\lVert W_k \rVert_{2,\infty}^2+2\bigl(B_{\mathrm M}^\star\bigr)^2.
\end{equation}
In particular, the i.i.d.\ conditional second-moment assumption in Proposition~\ref{prop:abstract-iid} holds with
\begin{equation}
A_{\mathrm M}^{\mathrm{iid}}:=8\beta_{\mathrm M}^2+2\bigl(B_{\mathrm M}^\star\bigr)^2.
\end{equation}
Moreover, if
\begin{equation}
F_{\mathrm M}(U,s) := U + P_s\bigl((\mathcal O_{\mathrm M}U)(s)-U(s)\bigr),
\end{equation}
\begin{equation}
\widehat H_{\mathrm M}(U;s,(r, s')) := U + P_s\bigl(\widehat T_{\mathrm M}(U;s,(r,s'))-U(s)\bigr),
\end{equation}
and
\begin{equation}
\Delta_k^{\mathrm M} := \widehat H_{\mathrm M}(U_k;S_k,(R_k,S'_k))-F_{\mathrm M}(U_k,S_k),
\end{equation}
then
\begin{equation}
\mathbb E\left[\Delta_k^{\mathrm M}\,\mid\,\mathcal G_k\right]=0
\end{equation}
and
\begin{equation}
\lVert \Delta_k^{\mathrm M} \rVert_{2,\infty} \le 2\beta_{\mathrm M}\lVert W_k \rVert_{2,\infty}+B_{\mathrm M}^\star \qquad\text{a.s.}
\end{equation}
\end{proposition}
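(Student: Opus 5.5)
The plan is the standard add-and-subtract decomposition around the fixed point $U^\star$, using the samplewise Lipschitz property (Proposition~\ref{prop:mtd-lipschitz}) and the contraction (Proposition~\ref{prop:mtd-contraction-app}). Both are stated with constant $1$, but their proofs actually give the sharper modulus $\beta_{\mathrm M}=\gamma^{c/2}$, and that sharper form is what I will use.

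\textbf{Finiteness of $B_{\mathrm M}^\star$.} By Proposition~\ref{prop:mtd-proj-affine}, $\widehat T_{\mathrm M}(U^\star;s,(r,s'))=J_s\bigl((f_{r,\gamma})_\#\eta^\star(s')\bigr)$, where $J_s=\operatorname{proj}_{\mathcal A_s}\circ K_s^{\dagger/2}b_s$. The pushforward $(f_{r,\gamma})_\#\eta^\star(s')=\sum_i p^\star_i(s')\delta_{r+\gamma\theta_i(s')}$ is a finite signed combination with fixed coefficients. Hence $b_s$ of it has entries $\sum_i p_i^\star(s')\kappa(\theta_j(s),r+\gamma\theta_i(s'))$, which are continuous in $r$. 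Since $r$ ranges over the compact set $[0,1]^q$ and $\kappa$ is continuous, these entries are bounded. Composing with the fixed linear map $K_s^{\dagger/2}$ and the affine projection keeps them bounded. The term $(\mathcal O_{\mathrm M}U^\star)(s)$ is a fixed vector, and the max is over finitely many pairs $(s,s')$, so $B_{\mathrm M}^\star<\infty$.

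\textbf{Affine bound.} I write
\begin{equation*}
\widehat T_{\mathrm M}(U;s,y)-(\mathcal O_{\mathrm M}U)(s)=\bigl[\widehat T_{\mathrm M}(U;s,y)-\widehat T_{\mathrm M}(U^\star;s,y)\bigr]+\bigl[\widehat T_{\mathrm M}(U^\star;s,y)-(\mathcal O_{\mathrm M}U^\star)(s)\bigr]+\bigl[(\mathcal O_{\mathrm M}U^\star)(s)-(\mathcal O_{\mathrm M}U)(s)\bigr].
\end{equation*}
The first bracket is at most $\beta_{\mathrm M}\lVert U-U^\star\rVert_{2,\infty}$ by the chain in the proof of Proposition~\ref{prop:mtd-lipschitz}, stopped before its last inequality. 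The second bracket is at most $B_{\mathrm M}^\star$ by definition. The third is at most $\beta_{\mathrm M}\lVert U-U^\star\rVert_{2,\infty}$ by Proposition~\ref{prop:mtd-contraction-app}, since both $U$ and $U^\star$ lie in $\mathcal D_{\mathrm M}$. The triangle inequality then gives the affine bound.

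\textbf{Second moment and martingale difference.} Squaring the affine bound with $(a+b)^2\le2a^2+2b^2$ gives $8\beta_{\mathrm M}^2\lVert W_k\rVert_{2,\infty}^2+2(B_{\mathrm M}^\star)^2$ pointwise, and conditioning preserves it. This yields \eqref{eq:2ndmoment-mtd} and $A^{\mathrm{iid}}_{\mathrm M}$, using $8\beta_{\mathrm M}^2x^2+2B^2\le(8\beta_{\mathrm M}^2+2B^2)(1+x^2)$. Next, $\Delta_k^{\mathrm M}$ is supported only on block $S_k$, where it equals $\widehat T_{\mathrm M}-(\mathcal O_{\mathrm M}U_k)(S_k)$. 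Its $\lVert\cdot\rVert_{2,\infty}$ norm is therefore the Euclidean norm of that block, and the affine bound gives the pathwise estimate. The conditional mean zero follows from Proposition~\ref{prop:mtd-unbiased}, because $\mathcal G_k$ fixes $U_k$ and $S_k$. Under Markovian sampling, the fresh $(A_k,R_k,S_{k+1})$ is drawn from $\pi$ and $P$ given $S_k$, so the same argument applies.

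\textbf{Main obstacle.} The only nonroutine point is finiteness of $B_{\mathrm M}^\star$. The difficulty is that signed masses prevent the uniform CTD-style norm bound, so compactness and continuity of $\kappa$ must be used instead.
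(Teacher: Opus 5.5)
Your proposal is correct and follows essentially the same route as the paper. It uses add-and-subtract at $U^\star$ with the sharpened $\beta_{\mathrm M}$ moduli, gets finiteness of $B_{\mathrm M}^\star$ from continuity in $r$ on the compact set $[0,1]^q$, squares via $(a+b)^2\le 2a^2+2b^2$, and uses the single-block support of $\Delta_k^{\mathrm M}$ together with Proposition~\ref{prop:mtd-unbiased} for the mean-zero claim. Two small notes: your continuity check through the kernel-mean coordinates $b_s$ just spells out what the paper asserts in one line, and the contraction proposition is already stated with modulus $\gamma^{c/2}$, so only the samplewise Lipschitz bound needs to be sharpened from its proof.
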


\begin{proof}
For fixed $s,s'\in\X$ and $r\in[0,1]^q$, the proof of Proposition~\ref{prop:mtd-lipschitz} together with Proposition~\ref{prop:mtd-contraction-app} gives
\begin{equation}
\lVert \widehat T_{\mathrm M}(U;s,(r,s'))-\widehat T_{\mathrm M}(U';s,(r,s')) \rVert_2 \le \beta_{\mathrm M}\lVert U-U'\rVert_{2,\infty}
\end{equation}
and
\begin{equation}
\lVert (\mathcal O_{\mathrm M}U)(s)-(\mathcal O_{\mathrm M}U')(s) \rVert_2 \le \beta_{\mathrm M}\lVert U-U'\rVert_{2,\infty}.
\end{equation}
Because $[0,1]^q$ is compact, the state set is finite, and the map $r \mapsto \widehat T_{\mathrm M}(U^\star;s,(r,s'))$ is continuous for each fixed $(s,s')$, the constant $B_{\mathrm M}^\star$ is finite. Now add and subtract the same quantities at $U^\star$:
\begin{equation}
\begin{gathered}
\left\lVert \widehat T_{\mathrm M}(U;s,(r,s'))-(\mathcal O_{\mathrm M}U)(s) \right\rVert_2 \le \left\lVert \widehat T_{\mathrm M}(U;s,(r,s'))-\widehat T_{\mathrm M}(U^\star;s,(r,s')) \right\rVert_2 \\
+ \left\lVert \widehat T_{\mathrm M}(U^\star;s,(r,s'))-(\mathcal O_{\mathrm M}U^\star)(s) \right\rVert_2+ \left\lVert (\mathcal O_{\mathrm M}U^\star)(s)-(\mathcal O_{\mathrm M}U)(s) \right\rVert_2\\ 
\le 2\beta_{\mathrm M}\lVert U-U^\star \rVert_{2,\infty}+B_{\mathrm M}^\star,
\end{gathered}
\end{equation}
which is exactly the centered target-level affine bound. Squaring and using $(a+b)^2\le 2a^2+2b^2$ with
\begin{equation}
a:=2\beta_{\mathrm M}\lVert W_k \rVert_{2,\infty}, \qquad b:=B_{\mathrm M}^\star,
\end{equation}
gives the conditional second-moment bound of \eqref{eq:2ndmoment-mtd}. Since both coefficients are dominated by $A_{\mathrm M}^{\mathrm{iid}}$, this bound is at most $A_{\mathrm M}^{\mathrm{iid}}(1+\lVert W_k\rVert_{2,\infty}^2)$. Next, by definition of $\widehat H_{\mathrm M}$ and $F_{\mathrm M}$, only the $S_k$-th block can be nonzero, so
\begin{equation}
\lVert \Delta_k^{\mathrm M} \rVert_{2,\infty} = \left\lVert \widehat T_{\mathrm M}(U_k;S_k,(R_k,S'_k))-(\mathcal O_{\mathrm M}U_k)(S_k) \right\rVert_2,
\end{equation}
and the pathwise affine bound follows immediately. Finally, the conditional mean identity is exactly Proposition~\ref{prop:mtd-unbiased} written in centered form.
\end{proof}

\subsection{Completion of the proof of Theorem~\ref{thm:mtd-main}}

\begin{proof}[Proof of Theorem~\ref{thm:mtd-main}]
We verify the hypotheses of Appendix~\ref{app:common} for discounted MTD.

Proposition~\ref{prop:mtd-domain} verifies the restricted-domain hypotheses with $\mathcal D=\mathcal D_{\mathrm M}$. In particular, the sampled recursion remains in $\mathcal D_{\mathrm M}$ whenever $U_0\in\mathcal D_{\mathrm M}$ and the step sizes are at most one. All contraction, Lipschitz, and perturbation estimates below are therefore required only on $\mathcal D_{\mathrm M}$.

First, Proposition~\ref{prop:mtd-contraction-app} gives the block-supremum contraction with modulus
\begin{equation}
\beta_{\mathrm M} = \gamma^{c/2}.
\end{equation}

Second, Proposition~\ref{prop:mtd-lipschitz} gives the target-level estimate
\begin{equation}
\lVert \widehat T_{\mathrm M}(U;s,(r,s')) - \widehat T_{\mathrm M}(U';s,(r,s')) \rVert_2 \le \beta_{\mathrm M}\lVert U-U' \rVert_{2,\infty} \le \lVert U-U' \rVert_{2,\infty}.
\end{equation}

Third, Proposition~\ref{prop:mtd-perturbation} gives the centered pathwise affine perturbation bound
\begin{equation}
\mathbb{E}\left[\Delta_k^{\mathrm M} \mid \mathcal G_k\right] = 0, \qquad \lVert \Delta_k^{\mathrm M} \rVert_{2,\infty} \le 2\beta_{\mathrm M}\lVert W_k \rVert_{2,\infty} + B_{\mathrm M}^\star \qquad \text{a.s.}
\end{equation}
Hence the Markovian perturbation condition in Proposition~\ref{prop:abstract-markov} holds with
\begin{equation}
A_{3,\mathrm M}=2\beta_{\mathrm M},\qquad B_{3,\mathrm M}=B_{\mathrm M}^\star.
\end{equation}

For the i.i.d.\ case, Proposition~\ref{prop:mtd-perturbation} directly gives
\begin{equation}
\mathbb{E}\bigl[ \lVert \widehat T_{\mathrm M}(U_k;S_k,(R_k,S'_k)) - (\mathcal O_{\mathrm M}U_k)(S_k) \rVert_2^2 \,\mid\, U_k,S_k \bigr] \le A_{\mathrm M}^{\mathrm{iid}}\bigl(1+\lVert W_k\rVert_{2,\infty}^2\bigr),
\end{equation}
where
\begin{equation}
A_{\mathrm M}^{\mathrm{iid}}=8\beta_{\mathrm M}^2+2\bigl(B_{\mathrm M}^\star\bigr)^2.
\end{equation}
Therefore Proposition~\ref{prop:abstract-iid} applies without any further change of coordinates.

Let $q_{\X}:=\lvert\X\rvert^{2/p^\star}$ and choose
\begin{equation}
\vartheta_{\mathrm M,\rho}:=\frac{\rho_{\min}(1-\beta_{\mathrm M})}{q_{\X}}.
\end{equation}
Applying Proposition~\ref{prop:abstract-iid} with
\begin{equation}
\beta = \beta_{\mathrm M}, \qquad A^{\mathrm{iid}} = A_{\mathrm M}^{\mathrm{iid}}, \qquad \vartheta = \vartheta_{\mathrm M,\rho}, \qquad p^\star = \max\{2,\lceil \log \lvert \X \rvert \rceil\},
\end{equation}
yields Theorem~\ref{thm:mtd-main}(i) with
\begin{equation}
a_{\mathrm M,1}^{\mathrm{iid}} := \frac{1 + \vartheta_{\mathrm M,\rho}\lvert \X \rvert^{2/p^\star}}{1+\vartheta_{\mathrm M,\rho}}, \qquad a_{\mathrm M,2}^{\mathrm{iid}} := 1-\bar\beta_{\mathrm M,\rho}\sqrt{\frac{1 + \vartheta_{\mathrm M,\rho}\lvert \X \rvert^{2/p^\star}}{1+\vartheta_{\mathrm M,\rho}}},
\end{equation}
where
\begin{equation}
\bar\beta_{\mathrm M,\rho} := 1-\rho_{\min}(1-\beta_{\mathrm M}),
\end{equation}
and
\begin{equation}
\begin{gathered}
a_{\mathrm M,3}^{\mathrm{iid}} := \frac{4(p^\star-1)\lvert \X \rvert^{2/p^\star}(A_{\mathrm M}^{\mathrm{iid}}+2)(1+\vartheta_{\mathrm M,\rho}\lvert \X \rvert^{2/p^\star})}{\vartheta_{\mathrm M,\rho}},\\
a_{\mathrm M,4}^{\mathrm{iid}} := \frac{2(p^\star-1)\lvert \X \rvert^{2/p^\star}A_{\mathrm M}^{\mathrm{iid}}(1+\vartheta_{\mathrm M,\rho}\lvert \X \rvert^{2/p^\star})}{\vartheta_{\mathrm M,\rho}}.
\end{gathered}
\end{equation}
The compact constants and thresholds in Theorem~\ref{thm:mtd-main}(i) can therefore be chosen as
\begin{equation}
\begin{gathered}
c_{\mathrm M}^{\mathrm{iid}}:=a_{\mathrm M,2}^{\mathrm{iid}},\qquad
\bar\alpha_{\mathrm M}^{\mathrm{iid}}:=\min\left\{1,\frac{a_{\mathrm M,2}^{\mathrm{iid}}}{a_{\mathrm M,3}^{\mathrm{iid}}}\right\},\\
C_{\mathrm M}^{\mathrm{iid}}:=\max\left\{a_{\mathrm M,1}^{\mathrm{iid}},4e a_{\mathrm M,4}^{\mathrm{iid}},\frac{2a_{\mathrm M,4}^{\mathrm{iid}}}{a_{\mathrm M,2}^{\mathrm{iid}}}\right\},\\
h_{\mathrm M}^{\mathrm{iid}}(\alpha):=\left\lceil\max\left\{1,\alpha,\frac{\alpha a_{\mathrm M,3}^{\mathrm{iid}}}{a_{\mathrm M,2}^{\mathrm{iid}}}\right\}\right\rceil,\\
h_{\mathrm M}^{\mathrm{iid}}(\alpha,z):=\left\lceil\max\left\{1,\alpha^{1/z},\left(\frac{\alpha a_{\mathrm M,3}^{\mathrm{iid}}}{a_{\mathrm M,2}^{\mathrm{iid}}}\right)^{1/z},\left(\frac{2z}{a_{\mathrm M,2}^{\mathrm{iid}}\alpha}\right)^{1/(1-z)}\right\}\right\rceil.
\end{gathered}
\end{equation}

For the Markovian case, Proposition~\ref{prop:abstract-markov} applies with
\begin{equation}
A_{3,\mathrm M}=2\beta_{\mathrm M},\qquad B_{3,\mathrm M}=B_{\mathrm M}^\star,\qquad \mathsf A_{\mathrm M}=(3+2\beta_{\mathrm M})^2,\qquad \mathsf B_{\mathrm M}=2(B_{\mathrm M}^\star)^2.
\end{equation}
Choose
\begin{equation}
\vartheta_{\mathrm M,\mu}:=\frac{\mu_{\min}(1-\beta_{\mathrm M})}{q_{\X}},
\end{equation}
and let $\eta_{\mathrm M,\mu}$, $K_{\mathrm M,0}$, and $K_{\mathrm M,1}$ be the explicit constants in Proposition~\ref{prop:abstract-markov} under these substitutions. The theorem drift constant and admissible constant step size can be taken as
\begin{equation}
c_{\mathrm M}^{\mathrm{mk}}:=\frac{\eta_{\mathrm M,\mu}}{2},\qquad \bar\alpha_{\mathrm M}^{\mathrm{mk}}:=\min\left\{1,\frac{\eta_{\mathrm M,\mu}}{\mathsf A_{\mathrm M}(5+2\eta_{\mathrm M,\mu})K_{\mathrm M,1}}\right\}.
\end{equation}
For the diminishing schedules, one may take
\begin{equation}
h_{\mathrm M}^{\mathrm{mk}}(\alpha):=\left\lceil\max\left\{\mathsf A_{\mathrm M}\alpha(5\alpha+8)K_{\mathrm M,1},2\right\}\right\rceil
\end{equation}
and, with $\chi_{\mathrm M,\mu}:=z/\alpha+\eta_{\mathrm M,\mu}$,
\begin{equation}
h_{\mathrm M}^{\mathrm{mk}}(\alpha,z):=\left\lceil\max\left\{\left(\frac{2\mathsf A_{\mathrm M}\alpha(5+2\chi_{\mathrm M,\mu})K_{\mathrm M,1}}{\eta_{\mathrm M,\mu}}\right)^{1/z},2\right\}\right\rceil.
\end{equation}
Set $C_{\mathrm M}^{\mathrm{mk}}:=C_\mu^{\mathrm{mk}}$ with $\mathsf A=\mathsf A_{\mathrm M}$, $\mathsf B=\mathsf B_{\mathrm M}$, $\eta_\mu=\eta_{\mathrm M,\mu}$, and $K_{\mu,1}=K_{\mathrm M,1}$ in the explicit definition following Proposition~\ref{prop:abstract-markov}. This coefficient is independent of $k$ and of the step size.

The constant, linearly-diminishing, and polynomially-diminishing step size bounds in Theorem~\ref{thm:mtd-main} now follow from the corresponding specializations of Proposition~\ref{prop:abstract-iid} and Proposition~\ref{prop:abstract-markov}.
\end{proof}

\subsection{Proof of Corollary~\ref{cor:disc-mtd}}

\begin{proof}
We use the linearly-diminishing step size bounds in Theorem~\ref{thm:mtd-main}.

For part \textup{(i)}, the linearly-diminishing i.i.d.\ bound in Theorem~\ref{thm:mtd-main}(i) gives
\begin{equation}
\mathbb E\left[\ell_{\mathrm M,\infty}(\eta_k,\eta^\star)^2\right] \le a_{\mathrm M,1}^{\mathrm{iid}}\ell_{\mathrm M,\infty}(\eta_0,\eta^\star)^2 \left(\frac{h}{k+h}\right)^{a_{\mathrm M,2}^{\mathrm{iid}}\alpha} + \frac{4e\alpha^2 a_{\mathrm M,4}^{\mathrm{iid}}}{a_{\mathrm M,2}^{\mathrm{iid}}\alpha-1}\cdot \frac{1}{k+h}.
\end{equation}
Since $\alpha>1/a_{\mathrm M,2}^{\mathrm{iid}}$, the first term is also $O((k+h)^{-1})$. Therefore
\begin{equation}
\mathbb E\left[\ell_{\mathrm M,\infty}(\eta_k,\eta^\star)^2\right] = O\left(\frac{1}{k+h}\right),
\end{equation}
and hence $\mathbb E[\ell_{\mathrm M,\infty}(\eta_k,\eta^\star)]\le\varepsilon$ is guaranteed for $k=O(\varepsilon^{-2})$.

For part \textup{(ii)}, the linearly-diminishing Markovian bound in Theorem~\ref{thm:mtd-main}(ii) gives
\begin{equation}
\mathbb E\left[\ell_{\mathrm M,\infty}(\eta_k,\eta^\star)^2\right] \le C_{\mathrm M}^{\mathrm{mk}}\bigl(1+\ell_{\mathrm M,\infty}(\eta_0,\eta^\star)^2\bigr)\left(\frac{h}{k+h}\right)^{c_{\mathrm M}^{\mathrm{mk}}\alpha}+\frac{C_{\mathrm M}^{\mathrm{mk}}\alpha^2}{c_{\mathrm M}^{\mathrm{mk}}\alpha-1}\cdot\frac{1}{k+h}.
\end{equation}
Since $\alpha>1/c_{\mathrm M}^{\mathrm{mk}}$, the first term is $O((k+h)^{-1})$. Therefore
\begin{equation}
\mathbb E\left[\ell_{\mathrm M,\infty}(\eta_k,\eta^\star)^2\right]=O\left(\frac{1}{k+h}\right),
\end{equation}
which implies $\mathbb E[\ell_{\mathrm M,\infty}(\eta_k,\eta^\star)]\le \varepsilon$ for $k=O(\varepsilon^{-2})$.
\end{proof}

\section{Undiscounted fixed-horizon CTD}
\label{app:fh-ctd}

This appendix records the undiscounted fixed-horizon CTD ingredients culminating in the proof of Theorem~\ref{thm:undisc-ctd}.

\subsection{Flattened weighted horizon-state space}

To apply the framework of Appendix~\ref{app:common}, we rewrite the weighted fixed-horizon recursion on a flattened product space indexed by horizon-state pairs. Let
\begin{equation}
\mathcal S_H := \{(h,s): h\in\{1,\dots,H\},\ s\in \X\}, \qquad \lvert \mathcal S_H \rvert = H\lvert \X \rvert.
\end{equation}
Let
\begin{equation}
V_{H, \mathrm C} := \prod_{(h, s) \in \mathcal S_H} \R^d
\end{equation}
denote the corresponding flattened Euclidean product space. For a horizon-stacked iterate $U=(U^h(s))_{1\le h\le H,\ s\in \X}$, define the weighted flattening
\begin{equation}
\bar{U}(h,s):=\lambda^h U^h(s), \qquad (h,s)\in\mathcal S_H.
\end{equation}
Then
\begin{equation}
\lVert U \rVert_{H,2,\infty} = \max_{(h,s)\in\mathcal S_H}\lVert \bar{U}(h,s) \rVert_2, \qquad \lVert U \rVert_{H,2,p} = \left( \sum_{(h,s)\in\mathcal S_H}\lVert \bar{U}(h,s) \rVert_2^p \right)^{1/p}.
\end{equation}
For a single horizon stack $U=(U^1,\dots,U^H)$, also define
\begin{equation}
\lVert U \rVert_{H,2} := \max_{1\le h\le H}\lambda^h\lVert U^h \rVert_2.
\end{equation}
Then
\begin{equation}
\lVert U \rVert_{H,2,\infty} = \max_{s\in \X}\lVert U(s) \rVert_{H,2}.
\end{equation}
Thus the fixed-horizon weighted norm is an ordinary block-supremum norm on a finite product of Euclidean blocks, and Appendix~\ref{app:common} applies with $\lvert \mathcal S_H \rvert$ in place of $\lvert \X \rvert$. In particular, throughout this appendix we set
\begin{equation}
p_H^\star := \max\{2,\lceil \log \lvert \mathcal S_H \rvert\rceil\}.
\end{equation}

\subsection{Fixed-horizon CTD residual map}

For each state $s\in \X$, let
\begin{equation}
U(s):=\bigl(U^1(s),\dots,U^H(s)\bigr)
\end{equation}
denote the horizon stack at state $s$, and let $P_s^H$ denote the coordinate projector onto that stack. Define
\begin{equation}
F_{H,\mathrm C}(U;s,(r,s')) := P_s^H\bigl(\widehat T_{H,\mathrm C}(U;s,(r,s'))-U(s)\bigr),
\end{equation}
where $\widehat T_{H,\mathrm C}(U;s,(r,s'))$ is the full horizon-stacked CTD target from Section~\ref{sec:undisc}. Then the online fixed-horizon CTD recursion is
\begin{equation}
U_{k+1}=U_k+\alpha_k F_{H,\mathrm C}(U_k;S_k,(R_k,S_{k+1})).
\end{equation}
Define also the CTD embedded averaged operator by
\begin{equation}
\mathcal O_{H,\mathrm C} := I_{H,\mathrm C}\circ \Pi_{H,\mathrm C}^{\Theta} T_H^\pi \circ I_{H,\mathrm C}^{-1}.
\end{equation}
Define
\begin{equation}
\mathcal D_{H,\mathrm C}:=I_{H,\mathrm C}(\mathcal F_{H,\mathrm C,\Theta}^{\X}).
\end{equation}
This is a compact convex subset of $V_{H,\mathrm C}$. Both $\mathcal O_{H,\mathrm C}$ and the full-sample replacement $U+F_{H,\mathrm C}(U;s,(r,s'))$ map $\mathcal D_{H,\mathrm C}$ into itself. Hence Lemma~\ref{lem:invariant-domain} shows that the online recursion remains in $\mathcal D_{H,\mathrm C}$ whenever $U_0\in\mathcal D_{H,\mathrm C}$ and $0\le\alpha_k\le1$. All fixed-horizon CTD operator estimates below are imposed only on this invariant domain.

\begin{proposition}[Fixed-horizon CTD weighted contraction]
\label{prop:fh-ctd-contract-app}
For all $U,U' \in I_{H,\mathrm C}(\mathcal F_{H,\mathrm C,\Theta}^\X)$,
\begin{equation}
\lVert \mathcal O_{H,\mathrm C}U-\mathcal O_{H,\mathrm C}U'\rVert_{H,2,\infty} \le \lambda \lVert U-U'\rVert_{H,2,\infty}.
\end{equation}
\end{proposition}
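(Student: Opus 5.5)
The plan is to reduce the weighted contraction to statewise, horizonwise Cram\'er estimates through the isometry of Proposition~\ref{prop:ctd-isometry}, applied at every horizon. Fix $U,U'\in\mathcal D_{H,\mathrm C}$ and set $\eta:=I_{H,\mathrm C}^{-1}(U)$, $\eta':=I_{H,\mathrm C}^{-1}(U')$. Since the embedding $I_{H,\mathrm C,h,s}$ is the Cram\'er isometry on the grid $\Theta_h(s)$, we have
\begin{equation}
\lVert (\mathcal O_{H,\mathrm C}U)^h(s)-(\mathcal O_{H,\mathrm C}U')^h(s)\rVert_2=\ell_{\mathrm C}\bigl((\Pi_{H,\mathrm C}^{\Theta}T_H^\pi\eta)^h(s),(\Pi_{H,\mathrm C}^{\Theta}T_H^\pi\eta')^h(s)\bigr).
\end{equation}
Therefore it suffices to bound the right-hand side by $\lambda^{1-h}\max_{s'}\lambda^{h-1}\ell_{\mathrm C}(\eta^{h-1}(s'),\eta'^{h-1}(s'))$, with the convention that the $h=1$ layer contributes zero.

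First, the linear-interpolation projection $\Pi^{\Theta_h(s)}_{H,\mathrm C}$ is nonexpansive in the Cram\'er metric. This is the same fact used in Proposition~\ref{prop:ctd-lipschitz}, since it is an orthogonal projection in the Cram\'er Hilbert geometry, so it may be dropped. Second, the Bellman layer is a mixture
\begin{equation}
(T_H^\pi\eta)^h(s)=\sum_{a,s'}\pi(a\mid s)P(s'\mid s,a)(f_{R(s,a),1})_\#\eta^{h-1}(s').
\end{equation}
The squared Cram\'er distance is the squared $L^2$ norm of a CDF difference. That difference is the corresponding mixture of CDF differences, so the triangle inequality in $L^2$ (or convexity of the squared norm) bounds the mixture distance by the maximum over $(a,s')$. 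Third, the shift $f_{r,1}$ is an isometry for the Cram\'er metric, because translation preserves $\int(F_\mu-F_\nu)^2$. Combining these three facts gives
\begin{equation}
\ell_{\mathrm C}\bigl((\Pi T\eta)^h(s),(\Pi T\eta')^h(s)\bigr)\le\max_{s'}\ell_{\mathrm C}\bigl(\eta^{h-1}(s'),\eta'^{h-1}(s')\bigr).
\end{equation}
For $h=1$ the right-hand side is zero, because both layers equal $\delta_0$.

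Multiplying by $\lambda^h$ and writing $\lambda^h=\lambda\cdot\lambda^{h-1}$ gives the bound $\lambda\max_{s'}\lambda^{h-1}\ell_{\mathrm C}(\eta^{h-1}(s'),\eta'^{h-1}(s'))\le\lambda\lVert U-U'\rVert_{H,2,\infty}$ for $2\le h\le H$. Taking the maximum over $(h,s)$ then finishes the proof.

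The only step needing care is that the nonexpansiveness of the projection and the mixture bound hold for layers whose supports $\Theta_h(s)$ and $\Theta_{h-1}(s')$ differ. I would handle this by noting that all of these estimates hold in the Cram\'er metric on arbitrary bounded-support laws on $\mathbb R$, and that projection onto any grid is nonexpansive there. Everything else is bookkeeping with the weights.
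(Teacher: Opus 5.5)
Your proposal is correct and follows the paper's proof essentially step for step: the horizonwise Cram\'er isometry, nonexpansiveness of the categorical projection, translation invariance and mixture convexity of the Cram\'er metric, and the factorization $\lambda^h=\lambda\cdot\lambda^{h-1}$ before taking the maximum over $(h,s)$. Your explicit handling of the $h=1$ layer and of mismatched supports is slightly more careful than the paper's; note, though, that for shifted laws that leave $[\theta_{h,1}(s),\theta_{h,d}(s)]$ the orthogonal-projection characterization does not apply directly, so nonexpansiveness is better justified as follows: the projected CDF at $\theta_{h,i}(s)$ equals the average of the original CDF over $[\theta_{h,i}(s),\theta_{h,i+1}(s)]$, and Jensen's inequality then gives the bound.
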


\begin{proof}
Let $\eta:=I_{H,\mathrm C}^{-1}(U)$ and $\eta':=I_{H,\mathrm C}^{-1}(U')$. Fix $(h,s)\in\mathcal S_H$ with $h \ge 1$. By the statewise CTD isometry and nonexpansiveness of the categorical projection in the Cram\'er metric,
\begin{equation}
\begin{aligned}
\lambda^h \lVert (\mathcal O_{H,\mathrm C}U)^h(s)-(\mathcal O_{H,\mathrm C}U')^h(s) \rVert_2 &= \lambda^h \ell_{\mathrm C}\left( (\Pi_{H,\mathrm C}^{\Theta}T_H^\pi\eta)^h(s), (\Pi_{H,\mathrm C}^{\Theta}T_H^\pi\eta')^h(s) \right) \\ 
&\le \lambda^h \ell_{\mathrm C}\left( (T_H^\pi\eta)^h(s), (T_H^\pi\eta')^h(s) \right).
\end{aligned}
\end{equation}
Now
\begin{equation}
(T_H^\pi\eta)^h(s) = \sum_{a\in\A}\pi(a \mid s)\sum_{s'\in\X}P(s' \mid s,a)\, (f_{R(s,a),1})_\#\eta^{h-1}(s'),
\end{equation}
and the same formula holds for $\eta'$. Since translation preserves the Cram\'er metric and the Cram\'er metric is convex under mixtures,
\begin{equation}
\begin{aligned}
\ell_{\mathrm C}\left( (T_H^\pi\eta)^h(s), (T_H^\pi\eta')^h(s) \right) &\le \sum_{a\in\A}\pi(a \mid s)\sum_{s'\in\X}P(s' \mid s,a)\, \ell_{\mathrm C}\left(\eta^{h-1}(s'),{\eta'}^{h-1}(s')\right) \\ 
&\le \max_{x\in\X} \ell_{\mathrm C}\left(\eta^{h-1}(x),{\eta'}^{h-1}(x)\right).
\end{aligned}
\end{equation}
Therefore
\begin{equation}
\begin{aligned}
\lambda^h \lVert (\mathcal O_{H,\mathrm C}U)^h(s)-(\mathcal O_{H,\mathrm C}U')^h(s) \rVert_2 &\le \lambda \max_{x\in\X} \lambda^{h-1} \ell_{\mathrm C}\left(\eta^{h-1}(x),{\eta'}^{h-1}(x)\right) \\ 
&\le \lambda \ell_{H,\mathrm C,\infty}(\eta,\eta') = \lambda \lVert U-U'\rVert_{H,2,\infty}.
\end{aligned}
\end{equation}
Taking the maximum over $(h,s)\in\mathcal S_H$ proves the claim.
\end{proof}

\subsection{One-step bounds}

\begin{proposition}[Fixed-horizon CTD samplewise Lipschitz continuity]
\label{prop:fh-ctd-lipschitz}
For every $s \in \X$, $(r, s') \in [0,1] \times \X$ and $U, U' \in I_{H,\mathrm C}(\mathcal F_{H,\mathrm C,\Theta}^\X)$,
\begin{equation}
\lVert F_{H,\mathrm C}(U;s,(r,s'))-F_{H,\mathrm C}(U';s,(r,s')) \rVert_{H,2,\infty} \le 2\lVert U-U'\rVert_{H,2,\infty}.
\end{equation}
\end{proposition}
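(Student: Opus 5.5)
The plan is to split $F_{H,\mathrm C}$ into the sampled-target part and the current-iterate part and bound each separately in $\lVert\cdot\rVert_{H,2,\infty}$. The map $F_{H,\mathrm C}(U;s,(r,s'))$ is nonzero only in the horizon stack at state $s$. Writing $\widehat T:=\widehat T_{H,\mathrm C}$, we have
\begin{equation}
F_{H,\mathrm C}(U;s,(r,s'))-F_{H,\mathrm C}(U';s,(r,s')) = P_s^H\bigl(\widehat T(U;s,(r,s'))-\widehat T(U';s,(r,s'))\bigr) - P_s^H\bigl(U(s)-U'(s)\bigr).
\end{equation}
By the triangle inequality in $\lVert\cdot\rVert_{H,2,\infty}$, it suffices to show two bounds, each by $\lVert U-U'\rVert_{H,2,\infty}$. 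The second bound is immediate, since $\lVert P_s^H(U(s)-U'(s))\rVert_{H,2,\infty}=\lVert U(s)-U'(s)\rVert_{H,2}\le\lVert U-U'\rVert_{H,2,\infty}$.

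For the target term, I would fix a horizon $h\in\{1,\dots,H\}$ and bound $\lambda^h\lVert \widehat T^h(U;s,(r,s'))-\widehat T^h(U';s,(r,s'))\rVert_2$. Let $\mu:=I_{H,\mathrm C,h-1,s'}^{-1}(U^{h-1}(s'))$ and $\nu$ the analogue for $U'$. The steps, in order, are:
\begin{enumerate}
\item By the statewise Cram\'er isometry (Proposition~\ref{prop:ctd-isometry}, applied to the grid $\Theta_h(s)$), this Euclidean distance equals the Cram\'er distance between the projected pushforwards.
\item Nonexpansiveness of $\Pi^{\Theta_h(s)}_{H,\mathrm C}$ in the Cram\'er metric removes the projection.
\item Translation invariance of the Cram\'er metric under $f_{r,1}$ (undiscounted, so there is no $\sqrt\gamma$ factor but also no expansion) reduces it to $\ell_{\mathrm C}(\mu,\nu)$.
\item Applying the isometry again at horizon $h-1$ gives $\lVert U^{h-1}(s')-U'^{h-1}(s')\rVert_2$.
\end{enumerate}
This yields
\begin{equation}
\lambda^h\lVert \widehat T^h(U)-\widehat T^h(U')\rVert_2\le \lambda\cdot\lambda^{h-1}\lVert U^{h-1}(s')-U'^{h-1}(s')\rVert_2\le \lambda\lVert U-U'\rVert_{H,2,\infty}\le\lVert U-U'\rVert_{H,2,\infty}.
\end{equation}

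The boundary case $h=1$ needs separate handling. There both targets are $I_{H,\mathrm C,1,s}(\Pi(\delta_r))$ by the terminal convention, so the difference vanishes. Taking the maximum over $h$ then gives the target bound.

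The main point to check is the boundary convention at $h=1$ together with the claim that the weighting produces exactly a factor $\lambda\le1$ rather than $\lambda^{-1}$. This works because horizon $h$ reads from $h-1$, so $\lambda^h=\lambda\cdot\lambda^{h-1}$. Everything else mirrors Proposition~\ref{prop:ctd-lipschitz}. Combining the two bounds gives the constant $2$.
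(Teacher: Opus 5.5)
Your proposal is correct and follows the paper's proof essentially step for step. It uses the same split of $F_{H,\mathrm C}$ into a projected target difference and a projected current-stack difference, the same isometry--projection--translation chain showing each horizon-$h$ target is $1$-Lipschitz in the $(h-1)$st block at $s'$, the same observation that the $h=1$ target does not depend on $U$, and the triangle inequality for the factor $2$. Keeping the extra factor $\lambda$ is harmless, whereas the paper discards it here via $\lambda\le1$ and reuses it later in the MTD perturbation bound.
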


\begin{proof}
For $h=1$, the local target $\widehat T_{H,\mathrm C}^h(U;s,(r,s'))$ is independent of $U$ because the layer-zero law is fixed. For $h\ge2$, it depends only on the $(h-1)$st horizon block at $s'$. The Cram\'er isometry, nonexpansiveness of the categorical projection, and translation invariance of the Cram\'er metric show directly that this local target is $1$-Lipschitz in the Euclidean block norm. Multiplying by $\lambda^h$ gives
\begin{equation}
\resizebox{\linewidth}{!}{$
\lambda^h \lVert \widehat T_{H,\mathrm C}^h(U;s,(r,s'))-\widehat T_{H,\mathrm C}^h(U';s,(r,s')) \rVert_2 \le \lambda^{h-1}\lVert U^{h-1}(s')-U'^{\,h-1}(s') \rVert_2 \le \lVert U-U'\rVert_{H,2,\infty}.
$}
\end{equation}
Taking the maximum over $h$ yields
\begin{equation}
\lVert \widehat T_{H,\mathrm C}(U;s,(r,s'))-\widehat T_{H,\mathrm C}(U';s,(r,s')) \rVert_{H,2} \le \lVert U-U'\rVert_{H,2,\infty}.
\end{equation}
Since $F_{H,\mathrm C}$ is the projected residual
\begin{equation}
F_{H,\mathrm C}(U;s,(r,s')) = P_s^H\bigl(\widehat T_{H,\mathrm C}(U;s,(r,s')) - U(s)\bigr),
\end{equation}
the difference $F_{H,\mathrm C}(U)-F_{H,\mathrm C}(U')$ splits into a projected target difference and a projected current-stack difference. Each is bounded by $\lVert U-U' \rVert_{H,2,\infty}$, yielding the factor $2$ through triangle inequality.
\end{proof}

\begin{proposition}[Fixed-horizon CTD pathwise boundedness]
\label{prop:fh-ctd-bounded}
For every $s \in \X$, $(r, s') \in [0,1] \times \X$ and $U \in I_{H,\mathrm C}(\mathcal F_{H,\mathrm C,\Theta}^\X)$,
\begin{equation}
\lVert F_{H,\mathrm C}(U;s,(r,s')) \rVert_{H,2,\infty} \le 2B_{H,\mathrm C},
\end{equation}
where
\begin{equation}
B_{H,\mathrm C}:= \max_{1\le h\le H,\ s\in \X} \lambda^h\sqrt{\theta_{h,d}(s)-\theta_{h,1}(s)}.
\end{equation}
\end{proposition}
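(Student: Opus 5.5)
The bound is a direct analogue of Proposition~\ref{prop:ctd-block-bound}, now applied horizon by horizon, followed by a triangle inequality. First note that $F_{H,\mathrm C}(U;s,(r,s'))$ is zero outside the stack at $s$. So its $\lVert\cdot\rVert_{H,2,\infty}$ norm equals
\begin{equation}
\max_{1\le h\le H}\lambda^h\bigl\lVert \widehat T^h_{H,\mathrm C}(U;s,(r,s'))-U^h(s)\bigr\rVert_2 .
\end{equation}
It therefore suffices to bound each horizon block separately.

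Fix $h\in\{1,\dots,H\}$. The block $U^h(s)$ lies in $I_{H,\mathrm C,h,s}(\mathcal F_{\mathrm C,\Theta_h(s)})$, because $U$ belongs to the invariant domain $\mathcal D_{H,\mathrm C}$.

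The target block $\widehat T^h_{H,\mathrm C}(U;s,(r,s'))$ is also the embedding of a categorical probability law on $\Theta_h(s)$. To see this, note that its input is the pushforward under the translation $f_{r,1}$ of a probability law: this law is $I^{-1}_{H,\mathrm C,h-1,s'}(U^{h-1}(s'))$ when $h\ge2$, and $\delta_0$ when $h=1$. The linear-interpolation projection $\Pi^{\Theta_h(s)}_{H,\mathrm C}$ maps any probability law with bounded support to a probability law on $\Theta_h(s)$. Rewards lie in $[0,1]$ and the laws involved are finitely supported, so no integrability issue arises.

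Both vectors in the difference are embedded probability laws on the grid $\Theta_h(s)$. Repeating the computation of Proposition~\ref{prop:ctd-block-bound} with cumulative masses in $[0,1]$, each has squared Euclidean norm at most $\theta_{h,d}(s)-\theta_{h,1}(s)$. The triangle inequality then gives
\begin{equation}
\lambda^h\bigl\lVert \widehat T^h_{H,\mathrm C}(U;s,(r,s'))-U^h(s)\bigr\rVert_2\le 2\lambda^h\sqrt{\theta_{h,d}(s)-\theta_{h,1}(s)}\le 2B_{H,\mathrm C}.
\end{equation}
Taking the maximum over $h$ proves the claim.

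The only step needing care is verifying that the target at every horizon, including the base case $h=1$ with the fixed law $\delta_0$, is a genuine probability law on $\Theta_h(s)$. This holds because the categorical projection preserves total mass and nonnegativity. No estimate involving the weights $\lambda^h$ is needed beyond multiplying through, since the weights only enter the definition of $B_{H,\mathrm C}$.
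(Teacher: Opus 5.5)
Your proposal is correct and follows essentially the same route as the paper: both observe that the sampled target block and the current block $U^h(s)$ are embedded categorical probability laws on $\Theta_h(s)$, bound each by $\sqrt{\theta_{h,d}(s)-\theta_{h,1}(s)}$ using the cumulative-mass computation of Proposition~\ref{prop:ctd-block-bound}, weight by $\lambda^h$, and conclude with the triangle inequality. Your explicit reduction to the single stack at $s$ and your check of the base case $h=1$ (where the input law is $\delta_0$) add detail that the paper's proof leaves implicit.
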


\begin{proof}
Fix $h$ and $s$. Both the local target $\widehat T_{H,\mathrm C}^h(U;s,(r,s'))$ and the current iterate $U^h(s)$ are embedded categorical laws supported on $\Theta_h(s)$. By the statewise CTD support radius bound, each has Euclidean norm at most $\sqrt{\theta_{h,d}(s)-\theta_{h,1}(s)}$. After weighting by $\lambda^h$, both are bounded by $B_{H,\mathrm C}$. The triangle inequality then yields the claim.
\end{proof}

\subsection{Phasewise averaged maps}
Recall the definition of phase distributions from Section~\ref{sec:undisc}:
\begin{equation}
\rho_t(s):=\Pr(S_t=s), \qquad 0\le t\le H-1,\ s\in \X.
\end{equation}
For each phase $t\in\{0,\dots,H-1\}$, define
\begin{equation}
\Gamma_{t,\mathrm C}(U) := \sum_{s\in \X}\rho_t(s)\, P_s^H\bigl((\mathcal O_{H,\mathrm C}U)(s)-U(s)\bigr), \qquad G_{t,\mathrm C}(U):=U+\Gamma_{t,\mathrm C}(U).
\end{equation}

\begin{proposition}[Episode-averaged CTD contraction]
\label{prop:fh-ctd-avg-contract}
Suppose $\bar\alpha_m>0$ and define the step-size-weighted episode distribution
\begin{equation}
\rho_m^\alpha(s):=\frac{1}{\bar\alpha_m}\sum_{t=0}^{H-1}\alpha_{mH+t}\rho_t(s).
\end{equation}
If the step sizes are nonincreasing within episode $m$ and satisfy
\begin{equation}
\alpha_{mH+H-1}\ge\frac{1}{2}\alpha_{mH},
\label{eq:fh-step-regularity}
\end{equation}
then
\begin{equation}
\min_{s\in\X}\rho_m^\alpha(s)\ge\frac{\rho_{H,\min}}{2}.
\end{equation}
Define
\begin{equation}
G_{m,\mathrm C}^\alpha(U):=U+\sum_{s\in\X}\rho_m^\alpha(s)P_s^H\bigl((\mathcal O_{H,\mathrm C}U)(s)-U(s)\bigr).
\end{equation}
Then
\begin{equation}
\lVert G_{m,\mathrm C}^\alpha(U)-G_{m,\mathrm C}^\alpha(U')\rVert_{H,2,\infty}\le\bar\beta_{H}^{\mathrm{occ}}\lVert U-U'\rVert_{H,2,\infty},
\end{equation}
where
\begin{equation}
\bar\beta_{H}^{\mathrm{occ}}:=1-\frac{\rho_{H,\min}}{2}(1-\lambda),\qquad \kappa_H^{\mathrm{occ}}:=1-\bar\beta_H^{\mathrm{occ}}=\frac{\rho_{H,\min}}{2(H+1)}.
\end{equation}
Moreover, the frozen episode map
\begin{equation}
A_{m,\mathrm C}^{\mathrm{fr}}(U):=U+\sum_{t=0}^{H-1}\alpha_{mH+t}\Gamma_{t,\mathrm C}(U)=(1-\bar\alpha_m)U+\bar\alpha_mG_{m,\mathrm C}^\alpha(U)
\end{equation}
fixes $U_{H,\mathrm C}^\star$ and is a $(1-\kappa_H^{\mathrm{occ}}\bar\alpha_m)$-contraction whenever $\bar\alpha_m\le1$.
\end{proposition}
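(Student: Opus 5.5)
The proof has three parts: the occupancy lower bound, the blockwise contraction of $G_{m,\mathrm C}^\alpha$, and the frozen-map identity.

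\emph{Occupancy lower bound.} Since the steps are nonincreasing within episode $m$, each weight satisfies
\begin{equation*}
\alpha_{mH+t}\ge\alpha_{mH+H-1}\ge\tfrac12\alpha_{mH},
\end{equation*}
using \eqref{eq:fh-step-regularity} for the second inequality. Also $\bar\alpha_m\le H\alpha_{mH}$. Hence, for every $s$,
\begin{equation*}
\rho_m^\alpha(s)\ge\frac{\alpha_{mH}/2}{H\alpha_{mH}}\sum_{t=0}^{H-1}\rho_t(s)=\frac{1}{2}\bar\rho_H(s)\ge\frac{\rho_{H,\min}}{2}.
\end{equation*}
In addition, $\rho_m^\alpha$ is a probability distribution on $\X$, being a convex combination of the $\rho_t$.

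\emph{Contraction of $G_{m,\mathrm C}^\alpha$.} I would repeat the block computation from the proof of Proposition~\ref{prop:abstract-iid}, now for horizon stacks in the norm $\lVert\cdot\rVert_{H,2}$. For each state $s$,
\begin{equation*}
\bigl(G_{m,\mathrm C}^\alpha(U)-G_{m,\mathrm C}^\alpha(U')\bigr)(s)=(1-\rho_m^\alpha(s))\bigl(U(s)-U'(s)\bigr)+\rho_m^\alpha(s)\bigl((\mathcal O_{H,\mathrm C}U)(s)-(\mathcal O_{H,\mathrm C}U')(s)\bigr).
\end{equation*}
The weighted stack norm $\lVert\cdot\rVert_{H,2}$ is a norm, so the triangle inequality applies. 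Combining it with Proposition~\ref{prop:fh-ctd-contract-app} bounds the $s$-block by
\begin{equation*}
\bigl(1-\rho_m^\alpha(s)(1-\lambda)\bigr)\lVert U-U'\rVert_{H,2,\infty}.
\end{equation*}
This coefficient decreases in $\rho_m^\alpha(s)$, so the lower bound from the first step gives $\bar\beta_H^{\mathrm{occ}}$. Taking the maximum over $s$ finishes this part. The value $1-\lambda=1/(H+1)$ gives $\kappa_H^{\mathrm{occ}}=\rho_{H,\min}/(2(H+1))$. The domain is as in Proposition~\ref{prop:fh-ctd-contract-app}, namely $\mathcal D_{H,\mathrm C}$, which is convex, so $G^\alpha_{m,\mathrm C}$ is well defined on it.

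\emph{Frozen map.} By definition,
\begin{equation*}
\sum_t\alpha_{mH+t}\Gamma_{t,\mathrm C}(U)=\bar\alpha_m\sum_s\rho_m^\alpha(s)P_s^H\bigl((\mathcal O_{H,\mathrm C}U)(s)-U(s)\bigr)=\bar\alpha_m\bigl(G_{m,\mathrm C}^\alpha(U)-U\bigr).
\end{equation*}
This yields the identity $A_{m,\mathrm C}^{\mathrm{fr}}=(1-\bar\alpha_m)\,\mathrm{Id}+\bar\alpha_m G_{m,\mathrm C}^\alpha$. The map fixes $U_{H,\mathrm C}^\star$ because $\mathcal O_{H,\mathrm C}U^\star=U^\star$ makes every $\Gamma_{t,\mathrm C}(U^\star)$ vanish. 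When $\bar\alpha_m\le1$ the combination is convex, so the Lipschitz constant is at most
\begin{equation*}
(1-\bar\alpha_m)+\bar\alpha_m\bar\beta_H^{\mathrm{occ}}=1-\kappa_H^{\mathrm{occ}}\bar\alpha_m.
\end{equation*}

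The only step that needs care is the first. Individual phases may miss states, so the comparison with the cumulative occupancy $\bar\rho_H$ must hold uniformly in $s$. The within-episode step-size regularity condition is exactly what makes this go through.
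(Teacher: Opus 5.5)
Your proposal is correct and follows the paper's proof essentially step for step: the same occupancy bound from $\bar\alpha_m\le H\alpha_{mH}$ and $\alpha_{mH+t}\ge\alpha_{mH+H-1}\ge\tfrac12\alpha_{mH}$, the same blockwise triangle-inequality bound using Proposition~\ref{prop:fh-ctd-contract-app}, and the same rewriting of $A_{m,\mathrm C}^{\mathrm{fr}}$ as a relaxation of $G_{m,\mathrm C}^\alpha$. You also make explicit that $\rho_m^\alpha(s)\le1$ (so the coefficient $1-\rho_m^\alpha(s)$ is nonnegative) and that the estimates live on $\mathcal D_{H,\mathrm C}$; the paper leaves both points implicit.
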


\begin{proof}
The denominator in $\rho_m^\alpha$ satisfies $\bar\alpha_m\le H\alpha_{mH}$, while its numerator satisfies
\begin{equation}
\sum_{t=0}^{H-1}\alpha_{mH+t}\rho_t(s)\ge\alpha_{mH+H-1}\sum_{t=0}^{H-1}\rho_t(s)\ge\frac{H\alpha_{mH}}{2}\bar\rho_H(s).
\end{equation}
This proves the lower bound on $\rho_m^\alpha$. For each state block $s$,
\begin{equation}
\begin{gathered}
\left\lVert\bigl(G_{m,\mathrm C}^\alpha(U)-G_{m,\mathrm C}^\alpha(U')\bigr)(s)\right\rVert_{H,2}\\
\le\bigl(1-\rho_m^\alpha(s)+\lambda\rho_m^\alpha(s)\bigr)\lVert U-U'\rVert_{H,2,\infty}\le\bar\beta_H^{\mathrm{occ}}\lVert U-U'\rVert_{H,2,\infty},
\end{gathered}
\end{equation}
where Proposition~\ref{prop:fh-ctd-contract-app} gives the first inequality. Taking the maximum over $s$ proves the contraction of $G_{m,\mathrm C}^\alpha$. The identity for $A_{m,\mathrm C}^{\mathrm{fr}}$ follows from the definition of $\rho_m^\alpha$. Convexity of the norm then gives its contraction factor, and $\Gamma_{t,\mathrm C}(U_{H,\mathrm C}^\star)=0$ for every phase proves the fixed-point claim.
\end{proof}

\subsection{Auxiliary phasewise bounds}

For episode $m$, write
\begin{equation}
U_{m,t}:=U_{mH+t},\qquad t=0,\dots,H.
\end{equation}
Also define the phasewise filtration
\begin{equation}
\mathcal F_{m,t} := \sigma\bigl( U_{m,0}, (S_u^m,R_u^m,S_{u+1}^m)_{0 \le u < t} \bigr), \qquad 0 \le t \le H,
\end{equation}
where the superscript $m$ indicates transitions within episode $m$. Then $U_{m,t}$ is $\mathcal F_{m,t}$-measurable for every $t$.

\begin{lemma}[Within-episode deviation]
\label{lem:fh-ctd-within}
For all $m$ and all $t\in\{0,\dots,H\}$,
\begin{equation}
\lVert U_{m,t}-U_{m,0} \rVert_{H,2,\infty} \le 2B_{H,\mathrm C}\sum_{u=0}^{t-1}\alpha_{mH+u}.
\end{equation}
\end{lemma}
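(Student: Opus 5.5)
The bound follows from a telescoping argument combined with the pathwise boundedness of the residual map in Proposition~\ref{prop:fh-ctd-bounded}. Within episode $m$, the online recursion \eqref{eq:undisc-recur} gives, for every $u\in\{0,\dots,H-1\}$,
\begin{equation}
U_{m,u+1}-U_{m,u}=\alpha_{mH+u}\,F_{H,\mathrm C}\bigl(U_{m,u};S_u^m,(R_u^m,S_{u+1}^m)\bigr).
\end{equation}
Summing from $u=0$ to $t-1$ expresses $U_{m,t}-U_{m,0}$ as a finite sum of step-size-weighted residuals. For $t=0$ the sum is empty and the claim is trivial.

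The key step is to check that every residual appearing in the sum can be bounded by Proposition~\ref{prop:fh-ctd-bounded}. That proposition requires the current iterate to lie in $I_{H,\mathrm C}(\mathcal F_{H,\mathrm C,\Theta}^\X)=\mathcal D_{H,\mathrm C}$. I will obtain this from the invariance statement recorded before Proposition~\ref{prop:fh-ctd-contract-app}. There, $\mathcal D_{H,\mathrm C}$ is compact and convex, the full-sample replacement preserves it, and Lemma~\ref{lem:invariant-domain} shows that the iterates stay in $\mathcal D_{H,\mathrm C}$ when $U_0\in\mathcal D_{H,\mathrm C}$ and $0\le\alpha_k\le1$. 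These step-size conditions are already standing assumptions of the theorem regimes. Every sample $(r,s')$ is admissible, since $r\in[0,1]$ and $s'\in\X$. Hence, pathwise,
\begin{equation}
\bigl\lVert F_{H,\mathrm C}\bigl(U_{m,u};S_u^m,(R_u^m,S_{u+1}^m)\bigr)\bigr\rVert_{H,2,\infty}\le 2B_{H,\mathrm C}.
\end{equation}

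Finally, I will apply the triangle inequality in the norm $\lVert\cdot\rVert_{H,2,\infty}$ to the telescoped sum, using $\alpha_{mH+u}\ge0$. This gives
\begin{equation}
\lVert U_{m,t}-U_{m,0}\rVert_{H,2,\infty}\le\sum_{u=0}^{t-1}\alpha_{mH+u}\cdot 2B_{H,\mathrm C},
\end{equation}
which is the claimed bound.

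I do not expect a real obstacle. The only point needing care is domain membership for the pathwise bound, and the invariant-domain lemma handles it. The argument is deterministic: it uses no expectations and does not need the filtration $\mathcal F_{m,t}$.
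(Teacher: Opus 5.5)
Your proposal is correct and follows essentially the same route as the paper: telescope the within-episode recursion into a step-size-weighted sum of residuals, then apply the pathwise bound of Proposition~\ref{prop:fh-ctd-bounded} term by term via the triangle inequality. Your explicit check that every iterate stays in $\mathcal D_{H,\mathrm C}$ (using $0\le\alpha_k\le1$) is a point the paper leaves implicit, relying on its earlier remark that all fixed-horizon CTD operator estimates are imposed only on the invariant domain.
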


\begin{proof}
From the recursion,
\begin{equation}
U_{m,t}-U_{m,0} = \sum_{u=0}^{t-1} \alpha_{mH+u} F_{H,\mathrm C}\bigl(U_{m,u};S_u^m,(R_u^m,S_{u+1}^m)\bigr).
\end{equation}
Taking $\lVert \cdot \rVert_{H,2,\infty}$ and using Proposition~\ref{prop:fh-ctd-bounded} term by term gives
\begin{equation}
\lVert U_{m,t}-U_{m,0} \rVert_{H,2,\infty} \le \sum_{u=0}^{t-1}\alpha_{mH+u}\, \lVert F_{H,\mathrm C}(U_{m,u};S_u^m,(R_u^m,S_{u+1}^m)) \rVert_{H,2,\infty} \le 2B_{H,\mathrm C}\sum_{u=0}^{t-1}\alpha_{mH+u}.
\end{equation}
\end{proof}

\begin{lemma}[Frozen residual decomposition]
\label{lem:fh-ctd-bias}
For each phase $t \in \{0,\dots,H-1\}$, define
\begin{equation}
\zeta_{m,t}^{\mathrm C} := F_{H,\mathrm C}(U_{m,0};S_t^m,(R_t^m,S_{t+1}^m)) - \Gamma_{t,\mathrm C}(U_{m,0})
\end{equation}
and
\begin{equation}
\chi_{m,t}^{\mathrm C} := F_{H,\mathrm C}(U_{m,t};S_t^m,(R_t^m,S_{t+1}^m)) - F_{H,\mathrm C}(U_{m,0};S_t^m,(R_t^m,S_{t+1}^m)).
\end{equation}
Then
\begin{equation}
\mathbb E\left[\zeta_{m,t}^{\mathrm C}\,\mid\,U_{m,0}\right]=0, \qquad \lVert \zeta_{m,t}^{\mathrm C} \rVert_{H,2,\infty} \le 4B_{H,\mathrm C},
\end{equation}
and
\begin{equation}
\lVert \chi_{m,t}^{\mathrm C} \rVert_{H,2,\infty} \le 2\lVert U_{m,t}-U_{m,0} \rVert_{H,2,\infty} \le 4B_{H,\mathrm C}\bar\alpha_m.
\end{equation}
If
\begin{equation}
D_m^{\mathrm C}:=U_{m,H}-A_{m,\mathrm C}^{\mathrm{fr}}(U_{m,0}),
\end{equation}
then
\begin{equation}
D_m^{\mathrm C}=\sum_{t=0}^{H-1}\alpha_{mH+t}\bigl(\zeta_{m,t}^{\mathrm C}+\chi_{m,t}^{\mathrm C}\bigr).
\end{equation}
Consequently, whenever $\bar\alpha_m\le1$,
\begin{equation}
\left\lVert\mathbb E\left[D_m^{\mathrm C}\mid U_{m,0}\right]\right\rVert_{H,2,\infty}\le4B_{H,\mathrm C}\bar\alpha_m^2,
\end{equation}
and
\begin{equation}
\lVert D_m^{\mathrm C}\rVert_{H,2,\infty}\le8B_{H,\mathrm C}\bar\alpha_m
\end{equation}
pathwise.
\end{lemma}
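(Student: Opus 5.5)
The plan is to verify the four displayed claims in turn: mean-zero, the bound on $\zeta$, the bound on $\chi$, and then the decomposition of $D_m^{\mathrm C}$ with its two consequences. Throughout I use that the step sizes satisfy $0\le\alpha_k\le1$, so every $U_{m,t}$ lies in $\mathcal D_{H,\mathrm C}$ by Lemma~\ref{lem:invariant-domain}. This is what allows Propositions~\ref{prop:fh-ctd-lipschitz} and~\ref{prop:fh-ctd-bounded} to be applied at $U_{m,t}$ and $U_{m,0}$.

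\emph{Mean-zero property.} This is the step I expect to need the most care, because it is where the episodic sampling model enters. The boundary iterate $U_{m,0}$ is a measurable function of $U_0$ and of the transitions in episodes $0,\dots,m-1$. Episodes are i.i.d.\ across resets, so the within-episode tuple $(S_t^m,R_t^m,S_{t+1}^m)$ is independent of $U_{m,0}$, and $S_t^m$ has marginal law $\rho_t$. Conditioning first on $U_{m,0}$ and on $S_t^m=s$, the pair $(R_t^m,S_{t+1}^m)$ follows the one-step law under $\pi$ from $s$. The unbiasedness identity for the stacked target, obtained horizonwise exactly as in Proposition~\ref{prop:ctd-unbiased}, then gives
\begin{equation}
\mathbb E\bigl[F_{H,\mathrm C}(U_{m,0};S_t^m,(R_t^m,S_{t+1}^m))\mid U_{m,0},S_t^m=s\bigr]=P_s^H\bigl((\mathcal O_{H,\mathrm C}U_{m,0})(s)-U_{m,0}(s)\bigr).
\end{equation}
This uses that the base layer $\delta_0$ is deterministic and that the categorical projection and the embedding are linear. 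Averaging over $s\sim\rho_t$ yields $\Gamma_{t,\mathrm C}(U_{m,0})$, which proves $\mathbb E[\zeta^{\mathrm C}_{m,t}\mid U_{m,0}]=0$.

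\emph{Bounds on $\zeta$ and $\chi$.} Proposition~\ref{prop:fh-ctd-bounded} gives $\lVert F_{H,\mathrm C}\rVert_{H,2,\infty}\le 2B_{H,\mathrm C}$. The map $\Gamma_{t,\mathrm C}(U)$ has disjoint state blocks, and block $s$ equals $\rho_t(s)$ times a difference of two embedded categorical stacks. Each such difference is bounded by $2B_{H,\mathrm C}$, as in the proof of Proposition~\ref{prop:fh-ctd-bounded}, so $\lVert\Gamma_{t,\mathrm C}(U)\rVert_{H,2,\infty}\le 2B_{H,\mathrm C}$. The triangle inequality then gives $\lVert\zeta^{\mathrm C}_{m,t}\rVert_{H,2,\infty}\le 4B_{H,\mathrm C}$. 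For $\chi$, the two $F$ terms share the same sample, so Proposition~\ref{prop:fh-ctd-lipschitz} yields the factor $2\lVert U_{m,t}-U_{m,0}\rVert_{H,2,\infty}$. Lemma~\ref{lem:fh-ctd-within} bounds this by $4B_{H,\mathrm C}\sum_{u<t}\alpha_{mH+u}\le 4B_{H,\mathrm C}\bar\alpha_m$.

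\emph{Decomposition and consequences.} Telescoping the recursion over the episode gives
\begin{equation}
U_{m,H}-U_{m,0}=\sum_{t=0}^{H-1}\alpha_{mH+t}F_{H,\mathrm C}(U_{m,t};S_t^m,(R_t^m,S_{t+1}^m)).
\end{equation}
Each term splits as $\Gamma_{t,\mathrm C}(U_{m,0})+\zeta^{\mathrm C}_{m,t}+\chi^{\mathrm C}_{m,t}$. Subtracting $A^{\mathrm{fr}}_{m,\mathrm C}(U_{m,0})-U_{m,0}=\sum_t\alpha_{mH+t}\Gamma_{t,\mathrm C}(U_{m,0})$ gives the stated identity for $D_m^{\mathrm C}$.

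Taking $\mathbb E[\cdot\mid U_{m,0}]$ removes the $\zeta$ terms. Jensen's inequality for the norm together with the pathwise $\chi$ bound then gives
\begin{equation}
\bigl\lVert\mathbb E[D_m^{\mathrm C}\mid U_{m,0}]\bigr\rVert_{H,2,\infty}\le\sum_t\alpha_{mH+t}\,4B_{H,\mathrm C}\bar\alpha_m=4B_{H,\mathrm C}\bar\alpha_m^2.
\end{equation}
Pathwise, $\lVert D_m^{\mathrm C}\rVert_{H,2,\infty}\le\sum_t\alpha_{mH+t}(4B_{H,\mathrm C}+4B_{H,\mathrm C}\bar\alpha_m)\le 8B_{H,\mathrm C}\bar\alpha_m$, using $\bar\alpha_m\le1$.
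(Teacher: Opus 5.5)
Your proposal is correct and follows the paper's proof essentially step for step. It uses independence of episode $m$ from $U_{m,0}$ together with stacked unbiasedness for the mean-zero claim, and the triangle inequality with Proposition~\ref{prop:fh-ctd-bounded} for $\zeta$. It bounds $\chi$ through Proposition~\ref{prop:fh-ctd-lipschitz} and Lemma~\ref{lem:fh-ctd-within}, and obtains $D_m^{\mathrm C}$ by telescoping the episode, with the same two consequences. The only cosmetic difference is that you bound $\Gamma_{t,\mathrm C}$ blockwise rather than as an average of sampled residuals, which yields the same $2B_{H,\mathrm C}$.
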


\begin{proof}
Since episode $m$ is independent of $U_{m,0}$ and its phase-$t$ sample has marginal law $\rho_t$,
\begin{equation}
\mathbb E\left[\zeta_{m,t}^{\mathrm C}\,\mid\,U_{m,0}\right]=0.
\end{equation}
\begin{equation}
\lVert \zeta_{m,t}^{\mathrm C} \rVert_{H,2,\infty} \le \lVert F_{H,\mathrm C}(U_{m,0};S_t^m,(R_t^m,S_{t+1}^m)) \rVert_{H,2,\infty} + \lVert \Gamma_{t,\mathrm C}(U_{m,0}) \rVert_{H,2,\infty}.
\end{equation}
By Proposition~\ref{prop:fh-ctd-bounded},
\begin{equation}
\lVert F_{H,\mathrm C}(U_{m,0};S_t^m,(R_t^m,S_{t+1}^m)) \rVert_{H,2,\infty} \le 2B_{H,\mathrm C},
\end{equation}
and, since $\Gamma_{t,\mathrm C}(U_{m,0})$ is a convex combination of such residuals,
\begin{equation}
\lVert \Gamma_{t,\mathrm C}(U_{m,0}) \rVert_{H,2,\infty} \le 2B_{H,\mathrm C}.
\end{equation}
This gives $\lVert \zeta_{m,t}^{\mathrm C} \rVert_{H,2,\infty} \le 4B_{H,\mathrm C}$.

Proposition~\ref{prop:fh-ctd-lipschitz} gives
\begin{equation}
\left\lVert F_{H,\mathrm C}(U;s,(r,s')) - F_{H,\mathrm C}(U';s,(r,s')) \right\rVert_{H,2,\infty} \le 2\lVert U-U' \rVert_{H,2,\infty}
\end{equation}
for every admissible $(s,r,s')$. Lemma~\ref{lem:fh-ctd-within} therefore gives
\begin{equation}
\lVert\chi_{m,t}^{\mathrm C}\rVert_{H,2,\infty}\le2\lVert U_{m,t}-U_{m,0}\rVert_{H,2,\infty}\le4B_{H,\mathrm C}\bar\alpha_m.
\end{equation}
Summing the online recursion through the episode and subtracting the definition of $A_{m,\mathrm C}^{\mathrm{fr}}$ gives the displayed identity for $D_m^{\mathrm C}$. The frozen terms have conditional mean zero separately, so
\begin{equation}
\left\lVert\mathbb E\left[D_m^{\mathrm C}\mid U_{m,0}\right]\right\rVert_{H,2,\infty}\le\sum_{t=0}^{H-1}\alpha_{mH+t}\mathbb E\left[\lVert\chi_{m,t}^{\mathrm C}\rVert_{H,2,\infty}\mid U_{m,0}\right]\le4B_{H,\mathrm C}\bar\alpha_m^2.
\end{equation}
Finally, the pathwise bounds on $\zeta_{m,t}^{\mathrm C}$ and $\chi_{m,t}^{\mathrm C}$ give
\begin{equation}
\lVert D_m^{\mathrm C}\rVert_{H,2,\infty}\le4B_{H,\mathrm C}\bar\alpha_m+4B_{H,\mathrm C}\bar\alpha_m^2\le8B_{H,\mathrm C}\bar\alpha_m.
\end{equation}
\end{proof}

\subsection{Episode-level finite-iteration drift}
\label{sec:fh-ctd-drift-app}

Let
\begin{equation}
W_{m,0}:=U_{m,0}-U_{H,\mathrm C}^\star,\qquad q_H:=\lvert\mathcal S_H\rvert^{2/p_H^\star},\qquad \kappa_H:=\kappa_H^{\mathrm{occ}}=\frac{\rho_{H,\min}}{2(H+1)}.
\end{equation}
Choose the explicit smoothing parameter
\begin{equation}
\vartheta_{H,\mathrm C}:=\frac{\kappa_H}{q_H}
\end{equation}
and define
\begin{equation}
\begin{gathered}
r_{H,\mathrm C}^{\mathrm{fh}}:=\frac{1+\vartheta_{H,\mathrm C}q_H}{1+\vartheta_{H,\mathrm C}},\qquad L_{H,\mathrm C}^{\mathrm{fh}}:=\frac{p_H^\star-1}{\vartheta_{H,\mathrm C}},\\
\omega_{H,\mathrm C}^{\mathrm{fh}}:=1-(1-\kappa_H)\sqrt{r_{H,\mathrm C}^{\mathrm{fh}}},\qquad d_{H,\mathrm C}^{\mathrm{fh}}:=8L_{H,\mathrm C}^{\mathrm{fh}}q_H(1+\vartheta_{H,\mathrm C}q_H).
\end{gathered}
\end{equation}
The same calculation as in Proposition~\ref{prop:explicit-discounted-constants} gives
\begin{equation}
r_{H,\mathrm C}^{\mathrm{fh}}\le2,\qquad \omega_{H,\mathrm C}^{\mathrm{fh}}\ge\frac{\kappa_H}{2},\qquad d_{H,\mathrm C}^{\mathrm{fh}}\le\frac{16(p_H^\star-1)q_H^2}{\kappa_H}.
\end{equation}
Define
\begin{equation}
\bar\alpha_{H,\mathrm C}:=\min\left\{\frac{1}{H},\frac{\omega_{H,\mathrm C}^{\mathrm{fh}}}{Hd_{H,\mathrm C}^{\mathrm{fh}}}\right\},
\end{equation}
\begin{equation}
c_{H,\mathrm C,1}^{\mathrm{fh}}:=\frac{\omega_{H,\mathrm C}^{\mathrm{fh}}}{2},\qquad c_{H,\mathrm C,2}^{\mathrm{fh}}:=32HL_{H,\mathrm C}^{\mathrm{fh}}q_HB_{H,\mathrm C}^2\left(1+\frac{1}{\omega_{H,\mathrm C}^{\mathrm{fh}}}\right).
\end{equation}

\begin{proposition}[Episodewise CTD potential drift]
\label{prop:fh-ctd-drift}
Suppose that $(\alpha_k)_{k\ge0}$ is nonincreasing, $\alpha_0\le\bar\alpha_{H,\mathrm C}$, and the within-episode regularity condition \eqref{eq:fh-step-regularity} holds. Then
\begin{equation}
\mathbb E\left[M_{H,\mathrm C}(W_{m+1,0})\mid U_{m,0}\right]\le\left(1-c_{H,\mathrm C,1}^{\mathrm{fh}}\bar\alpha_m\right)M_{H,\mathrm C}(W_{m,0})+c_{H,\mathrm C,2}^{\mathrm{fh}}\bar\alpha_m^{(2)},
\end{equation}
where
\begin{equation}
M_{H,\mathrm C}(W):=\inf_{Z\in V_{H,\mathrm C}}\left\{\frac{1}{2}\lVert Z\rVert_{H,2,\infty}^2+\frac{1}{2\vartheta_{H,\mathrm C}q_H}\lVert W-Z\rVert_{H,2,p_H^\star}^2\right\}.
\end{equation}
Moreover,
\begin{equation}
c_{H,\mathrm C,1}^{\mathrm{fh}}\ge\frac{\kappa_H}{4},\qquad c_{H,\mathrm C,2}^{\mathrm{fh}}\le\frac{96H(p_H^\star-1)q_H^2B_{H,\mathrm C}^2}{\kappa_H^2}.
\end{equation}
\end{proposition}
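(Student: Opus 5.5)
The plan is to run the standard one-step Moreau-envelope drift argument of Appendix~\ref{app:common}, but at episode granularity. The deterministic part of the step is the frozen episode map $A_{m,\mathrm C}^{\mathrm{fr}}$, and the perturbation is $D_m^{\mathrm C}$ from Lemma~\ref{lem:fh-ctd-bias}. Since $A_{m,\mathrm C}^{\mathrm{fr}}$ fixes $U_{H,\mathrm C}^\star$,
\begin{equation}
W_{m+1,0}=\bigl(A_{m,\mathrm C}^{\mathrm{fr}}(U_{m,0})-U_{H,\mathrm C}^\star\bigr)+D_m^{\mathrm C}=:\widetilde W_m+D_m^{\mathrm C}.
\end{equation}
By the remark after Proposition~\ref{prop:moreau-envelope}, $M_{H,\mathrm C}$ is the envelope $M_{\vartheta,p}$ on the flattened space $V_{H,\mathrm C}$ with $\lvert\mathcal S_H\rvert$ blocks. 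It is therefore $L_{H,\mathrm C}^{\mathrm{fh}}$-smooth in $\lVert\cdot\rVert_{H,2,p_H^\star}$, with sandwich constants $1+\vartheta_{H,\mathrm C}$ and $1+\vartheta_{H,\mathrm C}q_H$. Expanding by smoothness gives
\begin{equation}
M_{H,\mathrm C}(W_{m+1,0})\le M_{H,\mathrm C}(\widetilde W_m)+\langle\nabla M_{H,\mathrm C}(\widetilde W_m),D_m^{\mathrm C}\rangle+\tfrac{L_{H,\mathrm C}^{\mathrm{fh}}}{2}\lVert D_m^{\mathrm C}\rVert_{H,2,p_H^\star}^2 .
\end{equation}

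\textbf{Step 1: the contraction term.} By Proposition~\ref{prop:fh-ctd-avg-contract}, whenever $\bar\alpha_m\le H\alpha_0\le1$ we have $\lVert\widetilde W_m\rVert_{H,2,\infty}\le(1-\kappa_H\bar\alpha_m)\lVert W_{m,0}\rVert_{H,2,\infty}$. The sandwich inequalities then give
\begin{equation}
M_{H,\mathrm C}(\widetilde W_m)\le r_{H,\mathrm C}^{\mathrm{fh}}(1-\kappa_H\bar\alpha_m)^2M_{H,\mathrm C}(W_{m,0}).
\end{equation}
This is too lossy, because the factor $r$ does not shrink with $\bar\alpha_m$. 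So instead I will linearize around $W_{m,0}$, as in Chen et al.: write $\widetilde W_m=W_{m,0}+\bar\alpha_m(G^\alpha_{m,\mathrm C}(U_{m,0})-U_{m,0})$ and use the dual-norm bound
\begin{equation}
\langle\nabla M,\,G-U\rangle\le -\omega_{H,\mathrm C}^{\mathrm{fh}}\cdot 2M(W_{m,0}).
\end{equation}
This bound is obtained from the $\bar\beta_H^{\mathrm{occ}}$-contraction of $G^\alpha_{m,\mathrm C}$ together with the sandwich, exactly as in Proposition~\ref{prop:abstract-iid}. The quadratic remainder in this linearization is $L\bar\alpha_m^2\lVert G-U\rVert^2_{H,2,p}\le L q_H\bar\alpha_m^2\cdot 4\lVert W_{m,0}\rVert^2_{H,2,\infty}$. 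It is absorbed into half of the drift using $\alpha_0\le\omega/(Hd)$, which is where $d_{H,\mathrm C}^{\mathrm{fh}}$ comes from. The net effect is the factor $(1-\omega\bar\alpha_m)M(W_{m,0})$.

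\textbf{Step 2: the cross term (the main obstacle).} The gradient is evaluated at a point that depends on $U_{m,0}$ only, so conditioning gives $\langle\nabla M(\cdot),\mathbb E[D_m^{\mathrm C}\mid U_{m,0}]\rangle$. This term is not zero: it is a bias of size $4B_{H,\mathrm C}\bar\alpha_m^2$ by Lemma~\ref{lem:fh-ctd-bias}. I will bound it by the dual norm. The gradient's $\lVert\cdot\rVert_{H,2,p}$-dual norm is at most of order $\sqrt{M}$ up to the factor $(1+\vartheta q_H)/(\vartheta q_H)$, and I will use the norm comparison with $q_H$. Young's inequality then splits the product into $\tfrac{\omega}{2}\bar\alpha_m M(W_{m,0})$ plus $\tfrac{C}{\omega}B_{H,\mathrm C}^2\bar\alpha_m^3$-type terms. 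Using $\bar\alpha_m^2\le H\bar\alpha_m^{(2)}$ and $\bar\alpha_m\le1$, these become $O(HLq_HB^2\bar\alpha_m^{(2)}/\omega)$, which explains the $(1+1/\omega)$ factor in $c_{H,\mathrm C,2}^{\mathrm{fh}}$. Replacing the evaluation point $\widetilde W_m$ by $W_{m,0}$ costs a smoothness correction of the same order. Getting the constants to match $32HLq_HB^2$ is the delicate bookkeeping.

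\textbf{Step 3: second-order term and final constants.} The pathwise bound $\lVert D_m^{\mathrm C}\rVert_{H,2,\infty}\le 8B_{H,\mathrm C}\bar\alpha_m$ together with $\lVert\cdot\rVert_{H,2,p}^2\le q_H\lVert\cdot\rVert_{H,2,\infty}^2$ gives a contribution of $32Lq_HB^2\bar\alpha_m^2\le 32HLq_HB^2\bar\alpha_m^{(2)}$. Collecting terms yields the stated recursion with $c_1=\omega/2$. The explicit bounds follow from $\omega\ge\kappa_H/2$, which gives $c_1\ge\kappa_H/4$, and from $L=(p_H^\star-1)q_H/\kappa_H$ and $1+1/\omega\le 3/\kappa_H$, which give the $96H(p_H^\star-1)q_H^2B^2/\kappa_H^2$ bound.
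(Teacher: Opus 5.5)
Your proposal is correct and follows essentially the same route as the paper. You use the split $W_{m+1,0}=\widetilde W_{m,H}+D_m^{\mathrm C}$ and the linearized dual-norm drift for the frozen map, absorbing the $d_{H,\mathrm C}^{\mathrm{fh}}\bar\alpha_m^2$ remainder via $\bar\alpha_m\le\omega_{H,\mathrm C}^{\mathrm{fh}}/d_{H,\mathrm C}^{\mathrm{fh}}$; you then apply smoothness at $\widetilde W_{m,H}$, the self-bounding gradient estimate with Young's inequality for the bias, and the $64B_{H,\mathrm C}^2\bar\alpha_m^2$ second moment. The only superfluous step is moving the gradient's evaluation point to $W_{m,0}$: the paper keeps it at $\widetilde W_{m,H}$ and uses $(1+x/4)(1-x)\le1-x/2$, which works because $M_{H,\mathrm C}(\widetilde W_{m,H})\le M_{H,\mathrm C}(W_{m,0})$ already.
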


\begin{proof}
Set
\begin{equation}
\widetilde U_{m,H}:=A_{m,\mathrm C}^{\mathrm{fr}}(U_{m,0}),\qquad \widetilde W_{m,H}:=\widetilde U_{m,H}-U_{H,\mathrm C}^\star.
\end{equation}
Proposition~\ref{prop:fh-ctd-avg-contract} writes the frozen map as a $\bar\alpha_m$-relaxation of a $(1-\kappa_H)$-contraction. The Moreau-envelope smoothness inequality therefore gives
\begin{equation}
M_{H,\mathrm C}(\widetilde W_{m,H})\le\left(1-2\omega_{H,\mathrm C}^{\mathrm{fh}}\bar\alpha_m+d_{H,\mathrm C}^{\mathrm{fh}}\bar\alpha_m^2\right)M_{H,\mathrm C}(W_{m,0}).
\end{equation}
Indeed, the envelope defines a squared smooth norm, so the contraction and the envelope comparison give
\begin{equation}
\left\langle\nabla M_{H,\mathrm C}(W),G_{m,\mathrm C}^\alpha(U_{H,\mathrm C}^\star+W)-U_{H,\mathrm C}^\star-W\right\rangle\le-2\omega_{H,\mathrm C}^{\mathrm{fh}}M_{H,\mathrm C}(W),
\end{equation}
while the quadratic smoothness remainder is at most $d_{H,\mathrm C}^{\mathrm{fh}}\bar\alpha_m^2M_{H,\mathrm C}(W)$. Since $\bar\alpha_m\le H\alpha_0\le\omega_{H,\mathrm C}^{\mathrm{fh}}/d_{H,\mathrm C}^{\mathrm{fh}}$,
\begin{equation}
M_{H,\mathrm C}(\widetilde W_{m,H})\le\left(1-\omega_{H,\mathrm C}^{\mathrm{fh}}\bar\alpha_m\right)M_{H,\mathrm C}(W_{m,0}).
\end{equation}

By Lemma~\ref{lem:fh-ctd-bias}, $W_{m+1,0}=\widetilde W_{m,H}+D_m^{\mathrm C}$ and
\begin{equation}
\left\lVert\mathbb E[D_m^{\mathrm C}\mid U_{m,0}]\right\rVert_{H,2,\infty}\le4B_{H,\mathrm C}\bar\alpha_m^2,\qquad \mathbb E\left[\lVert D_m^{\mathrm C}\rVert_{H,2,\infty}^2\mid U_{m,0}\right]\le64B_{H,\mathrm C}^2\bar\alpha_m^2.
\end{equation}
Smoothness at $\widetilde W_{m,H}$, the self-bounding estimate $\lVert\nabla M(W)\rVert_*^2\le2L_{H,\mathrm C}^{\mathrm{fh}}M(W)$, Young's inequality with parameter $\omega_{H,\mathrm C}^{\mathrm{fh}}\bar\alpha_m/(4L_{H,\mathrm C}^{\mathrm{fh}})$, and $\lVert Z\rVert_{H,2,p_H^\star}^2\le q_H\lVert Z\rVert_{H,2,\infty}^2$ yield
\begin{equation}
\begin{gathered}
\mathbb E\left[M_{H,\mathrm C}(W_{m+1,0})\mid U_{m,0}\right]\le\left(1+\frac{\omega_{H,\mathrm C}^{\mathrm{fh}}\bar\alpha_m}{4}\right)M_{H,\mathrm C}(\widetilde W_{m,H})\\
+32L_{H,\mathrm C}^{\mathrm{fh}}q_HB_{H,\mathrm C}^2\left(1+\frac{1}{\omega_{H,\mathrm C}^{\mathrm{fh}}}\right)\bar\alpha_m^2.
\end{gathered}
\end{equation}
Because $(1+x/4)(1-x)\le1-x/2$ for $x\in[0,1]$ and $\bar\alpha_m^2\le H\bar\alpha_m^{(2)}$, the stated drift follows. Finally, substituting $L_{H,\mathrm C}^{\mathrm{fh}}=(p_H^\star-1)q_H/\kappa_H$ and $\omega_{H,\mathrm C}^{\mathrm{fh}}\ge\kappa_H/2$ gives the explicit polynomial upper bound.
\end{proof}

\begin{proposition}[Fixed-horizon CTD finite-iteration bound]\label{prop:fh-ctd-abstract}
Define
\begin{equation}
\resizebox{\linewidth}{!}{$
a^{\mathrm{fh}}_{H, \mathrm C, 1}:=e^2r^{\mathrm{fh}}_{H, \mathrm C},\quad a^{\mathrm{fh}}_{H, \mathrm C, 2}:=c_{H,\mathrm C,1}^{\mathrm{fh}},\quad a^{\mathrm{fh}}_{H, \mathrm C,3}:= \frac{c_{H,\mathrm C,1}^{\mathrm{fh}}}{\bar\alpha_{H, \mathrm C}}, \quad a^{\mathrm{fh}}_{H, \mathrm C, 4}:= 2e^2(1+\vartheta_{H, \mathrm C}q_H)c^{\mathrm{fh}}_{H, \mathrm C,2}.
$}
\end{equation}
These constants obey
\begin{equation}
\begin{gathered}
a_{H,\mathrm C,1}^{\mathrm{fh}}\le2e^2,\qquad a_{H,\mathrm C,2}^{\mathrm{fh}}\ge\frac{\kappa_H}{4},\qquad \bar\alpha_{H,\mathrm C}\ge\min\left\{\frac{1}{H},\frac{\kappa_H^2}{32H(p_H^\star-1)q_H^2}\right\},\\
\frac{a_{H,\mathrm C,4}^{\mathrm{fh}}}{a_{H,\mathrm C,2}^{\mathrm{fh}}}\le\frac{1536e^2H(p_H^\star-1)q_H^2B_{H,\mathrm C}^2}{\kappa_H^3}.
\end{gathered}
\end{equation}
Since $q_H\le e^2$ and $\kappa_H=\rho_{H,\min}/(2(H+1))$, every displayed dependence is polynomial in $H$, $1/\rho_{H,\min}$, $1+\log(H\lvert\X\rvert)$, and $B_{H,\mathrm C}$.
If $(\alpha_k)_{k\ge0}$ is nonincreasing, the within-episode regularity condition \eqref{eq:fh-step-regularity} holds, and
\begin{equation}
\alpha_0 \le \frac{a^{\mathrm{fh}}_{H, \mathrm C,2}}{a^{\mathrm{fh}}_{H, \mathrm C, 3}},
\end{equation}
then, for all episodes $m \ge 0$,
\begin{equation}
\begin{gathered}
\E\bigl[\ell_{H, \mathrm C, \infty}^{\mathrm{unw}}(\eta_{mH}, \eta^\star_{H, \mathrm C})^2 \bigr] \le a^{\mathrm{fh}}_{H, \mathrm C,1} \ell_{H, \mathrm C,\infty}^{\mathrm{unw}}(\eta_0, \eta^\star_{H, \mathrm C})^2 \prod_{j=0}^{m-1}(1-a^{\mathrm{fh}}_{H, \mathrm C,2}\bar\alpha_j) \\+ a^{\mathrm{fh}}_{H, \mathrm C,4}\sum_{i=0}^{m-1}\bar \alpha^{(2)}_i \prod_{j=i+1}^{m-1}(1-a^{\mathrm{fh}}_{H, \mathrm C,2}\bar\alpha_j).
\end{gathered}
\end{equation}
\end{proposition}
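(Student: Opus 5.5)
The argument unrolls the episodewise drift of Proposition~\ref{prop:fh-ctd-drift} and then converts the envelope bound to the unweighted metric. Two things need checking first. The hypothesis $\alpha_0\le a^{\mathrm{fh}}_{H,\mathrm C,2}/a^{\mathrm{fh}}_{H,\mathrm C,3}=\bar\alpha_{H,\mathrm C}$ is exactly the step-size condition of that proposition. Together with monotonicity it gives $\bar\alpha_m\le H\alpha_0\le 1$, and since $\alpha_0\le 1/H\le1$, all iterates stay in $\mathcal D_{H,\mathrm C}$ by Lemma~\ref{lem:invariant-domain}. Writing $x_m:=\mathbb E[M_{H,\mathrm C}(W_{m,0})]$, the tower property then gives
\begin{equation}
x_{m+1}\le(1-c^{\mathrm{fh}}_{H,\mathrm C,1}\bar\alpha_m)x_m+c^{\mathrm{fh}}_{H,\mathrm C,2}\bar\alpha_m^{(2)}.
\end{equation}
Because $c^{\mathrm{fh}}_{H,\mathrm C,1}\bar\alpha_m\le 1$, every factor is nonnegative, and induction yields
\begin{equation}
x_m\le x_0\prod_{j=0}^{m-1}(1-a^{\mathrm{fh}}_{H,\mathrm C,2}\bar\alpha_j)+c^{\mathrm{fh}}_{H,\mathrm C,2}\sum_{i=0}^{m-1}\bar\alpha^{(2)}_i\prod_{j=i+1}^{m-1}(1-a^{\mathrm{fh}}_{H,\mathrm C,2}\bar\alpha_j).
\end{equation}

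Next I convert the envelope to norms. The envelope comparison of Proposition~\ref{prop:moreau-envelope}, applied on the flattened space with $|\mathcal S_H|$ blocks, gives $\lVert W\rVert_{H,2,\infty}^2\le 2(1+\vartheta_{H,\mathrm C}q_H)M_{H,\mathrm C}(W)$ and $M_{H,\mathrm C}(W)\le \lVert W\rVert^2_{H,2,\infty}/(2(1+\vartheta_{H,\mathrm C}))$. The isometry of $I_{H,\mathrm C}$ identifies $\lVert W_{m,0}\rVert_{H,2,\infty}$ with $\ell_{H,\mathrm C,\infty}(\eta_{mH},\eta^\star_{H,\mathrm C})$. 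The sandwich $\ell_{H,\infty}\le\ell^{\mathrm{unw}}_{H,\infty}\le e\,\ell_{H,\infty}$ then gives
\begin{equation}
\mathbb E[\ell^{\mathrm{unw}}(\eta_{mH},\eta^\star)^2]\le 2e^2(1+\vartheta_{H,\mathrm C} q_H)\,x_m
\end{equation}
and
\begin{equation}
2e^2(1+\vartheta_{H,\mathrm C} q_H)\,x_0\le e^2 r^{\mathrm{fh}}_{H,\mathrm C}\,\ell^{\mathrm{unw}}(\eta_0,\eta^\star)^2.
\end{equation}
On the second line I use $\ell_{H,\infty}\le\ell^{\mathrm{unw}}$ at initialization. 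This reproduces exactly $a^{\mathrm{fh}}_{H,\mathrm C,1}=e^2r^{\mathrm{fh}}_{H,\mathrm C}$ and $a^{\mathrm{fh}}_{H,\mathrm C,4}=2e^2(1+\vartheta_{H,\mathrm C}q_H)c^{\mathrm{fh}}_{H,\mathrm C,2}$.

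The explicit estimates on the constants are direct arithmetic. From $r^{\mathrm{fh}}_{H,\mathrm C}\le2$ we get $a_1\le 2e^2$, and $a_2\ge\kappa_H/4$ is stated in Proposition~\ref{prop:fh-ctd-drift}. For $\bar\alpha_{H,\mathrm C}$ I combine $\omega^{\mathrm{fh}}_{H,\mathrm C}\ge\kappa_H/2$ with $d^{\mathrm{fh}}_{H,\mathrm C}\le 16(p_H^\star-1)q_H^2/\kappa_H$, which gives $\omega/(Hd)\ge \kappa_H^2/(32H(p^\star_H-1)q_H^2)$. For the ratio, I use $1+\vartheta_{H,\mathrm C}q_H=1+\kappa_H\le2$, the bound $c^{\mathrm{fh}}_{H,\mathrm C,2}\le 96H(p_H^\star-1)q_H^2B_{H,\mathrm C}^2/\kappa_H^2$, and $1/a_2\le 4/\kappa_H$. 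Together these give $a_4/a_2\le 4e^2\cdot 96\cdot 4\,H(p_H^\star-1)q_H^2B_{H,\mathrm C}^2/\kappa_H^3=1536e^2\cdots$, which matches the stated constant. Polynomial dependence then follows from $q_H\le e^2$ (Proposition~\ref{prop:pstar-choice} applied with $|\mathcal S_H|$).

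The main obstacle is bookkeeping rather than any new idea. The envelope comparison constants are stated relative to $U^\star$ with the $q$-scaled proximal term, and $M_{H,\mathrm C}$ must be checked to coincide with $M_{\vartheta,p}$ on the flattened space, so that both directions of the comparison carry the right factors $1+\vartheta$ and $1+\vartheta q_H$. It is also easy to misplace the single factor $e^2$ coming from $\lambda^{-H}\le e$: it enters once at the end, on the final error, and not on the initial term, where the weighted metric is already below the unweighted one.
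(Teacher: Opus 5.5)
Your proposal is correct and follows the paper's proof. Like the paper, you unroll the episodewise drift of Proposition~\ref{prop:fh-ctd-drift}, then apply the two-sided Moreau-envelope comparison, the isometry of $I_{H,\mathrm C}$, and the $\lambda^{-H}\le e$ conversion, using the conversion only on the final error and $\ell_{H,\infty}\le\ell^{\mathrm{unw}}_{H,\infty}$ at initialization. Two parts of your argument are checks the paper leaves implicit: the nonnegativity of the factors $1-c^{\mathrm{fh}}_{H,\mathrm C,1}\bar\alpha_m$, and the arithmetic for the explicit constant bounds (for example $4\cdot 96\cdot 4=1536$). Both are correct.
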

\begin{proof}
Iterating Proposition~\ref{prop:fh-ctd-drift} gives
\begin{equation}
\E\bigl[M_{H, \mathrm C}(W_{m, 0}) \bigr] \le M_{H, \mathrm C}(W_{0, 0}) \prod_{j=0}^{m-1}(1- a^{\mathrm{fh}}_{H, \mathrm C,2}\bar\alpha_j) + c^{\mathrm{fh}}_{H, \mathrm C,2}\sum_{i=0}^{m-1}\bar\alpha^{(2)}_i \prod_{j=i+1}^{m-1}(1- a^{\mathrm{fh}}_{H, \mathrm C,2} \bar\alpha_j).
\end{equation}
By the Moreau envelope comparison,
\begin{equation}
\E[\lVert W_{m, 0}\rVert^2_{H, 2, \infty}] \le 2(1+\vartheta_{H, \mathrm C}q_H)\E\bigl[ M_{H, \mathrm C}(W_{m, 0})\bigr]
\end{equation}
and
\begin{equation}
M_{H, \mathrm C}(W_{0, 0}) \le \frac{1}{2(1+\vartheta_{H, \mathrm C})}\lVert W_{0, 0}\rVert^2_{H, 2, \infty}.
\end{equation}
The embedding isometric identity and the choice $\lambda=H/(H+1)$ give
\begin{equation}
\ell_{H,\mathrm C,\infty}^{\mathrm{unw}}(\eta_{mH},\eta_{H,\mathrm C}^\star)^2\le e^2\lVert W_{m,0}\rVert_{H,2,\infty}^2,
\end{equation}
while $\lVert W_{0,0}\rVert_{H,2,\infty}\le\ell_{H,\mathrm C,\infty}^{\mathrm{unw}}(\eta_0,\eta_{H,\mathrm C}^\star)$. These inequalities provide the claim.
\end{proof}

\begin{corollary}[Boundary-iterate step size consequences]
\label{cor:fh-ctd-steps}
Under the hypotheses of Proposition~\ref{prop:fh-ctd-abstract}:

\textup{(a)} if $\alpha_k \equiv \alpha$ and
\begin{equation}
\alpha \le \frac{a_{H,\mathrm C,2}^{\mathrm{fh}}}{a_{H,\mathrm C,3}^{\mathrm{fh}}},
\end{equation}
then for all episodes $m \ge 0$,
\begin{equation}
\mathbb E\left[ \ell_{H,\mathrm C,\infty}^{\mathrm{unw}}(\eta_{mH},\eta_{H,\mathrm C}^\star)^2 \right] \le a_{H,\mathrm C,1}^{\mathrm{fh}} \ell_{H,\mathrm C,\infty}^{\mathrm{unw}}(\eta_0,\eta_{H,\mathrm C}^\star)^2 \bigl(1-a_{H,\mathrm C,2}^{\mathrm{fh}}H\alpha\bigr)^m + \frac{a_{H,\mathrm C,4}^{\mathrm{fh}}}{a_{H,\mathrm C,2}^{\mathrm{fh}}}\alpha.
\end{equation}

For the two diminishing-step cases below, where $g$ is the step-size offset, write $\tau_m:=mH+g+H-1$, so $\tau_0=g+H-1$.

\textup{(b)} if $\alpha_k = \alpha/(k+g)$, $\alpha > 1/a_{H,\mathrm C,2}^{\mathrm{fh}}$, and
\begin{equation}
g \ge \max\left\{H-1,1,\alpha,\frac{\alpha a_{H,\mathrm C,3}^{\mathrm{fh}}}{a_{H,\mathrm C,2}^{\mathrm{fh}}}\right\},
\end{equation}
then the boundary iterates satisfy
\begin{equation}
\mathbb E\left[ \ell_{H,\mathrm C,\infty}^{\mathrm{unw}}(\eta_{mH},\eta_{H,\mathrm C}^\star)^2 \right] \le a_{H,\mathrm C,1}^{\mathrm{fh}}\ell_{H,\mathrm C,\infty}^{\mathrm{unw}}(\eta_0,\eta_{H,\mathrm C}^\star)^2 \left(\frac{\tau_0}{\tau_m}\right)^{a_{H,\mathrm C,2}^{\mathrm{fh}}\alpha}
+\frac{a_{H,\mathrm C,4}^{\mathrm{fh}}H^2\alpha^2}{a_{H,\mathrm C,2}^{\mathrm{fh}}\alpha-1}\cdot\frac{1}{\tau_m}.
\end{equation}

\textup{(c)} if $\alpha_k = \alpha/(k+g)^z$ with $z \in (0,1)$ and
\begin{equation}
g \ge \max\left\{ H-1,1,\alpha^{1/z}, \left(\frac{\alpha a_{H,\mathrm C,3}^{\mathrm{fh}}}{a_{H,\mathrm C,2}^{\mathrm{fh}}}\right)^{1/z}, \left(\frac{2z}{a_{H,\mathrm C,2}^{\mathrm{fh}}\alpha}\right)^{1/(1-z)} \right\},
\end{equation}
then the boundary iterates satisfy
\begin{equation}
\begin{gathered}
\mathbb E\left[ \ell_{H,\mathrm C,\infty}^{\mathrm{unw}}(\eta_{mH},\eta_{H,\mathrm C}^\star)^2 \right] \le a_{H,\mathrm C,1}^{\mathrm{fh}}\ell_{H,\mathrm C,\infty}^{\mathrm{unw}}(\eta_0,\eta_{H,\mathrm C}^\star)^2 \cdot\exp\left(-\frac{a_{H,\mathrm C,2}^{\mathrm{fh}}\alpha}{1-z}\bigl(\tau_m^{1-z}-\tau_0^{1-z}\bigr)\right)\\ +\frac{2a_{H,\mathrm C,4}^{\mathrm{fh}}H^2\alpha}{a_{H,\mathrm C,2}^{\mathrm{fh}}}\cdot\frac{1}{\tau_m^z}.
\end{gathered}
\end{equation}
\end{corollary}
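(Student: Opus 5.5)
The plan is to specialize the unrolled recursion of Proposition~\ref{prop:fh-ctd-abstract} to each schedule. In each case I first check its hypotheses: the sequence is nonincreasing, the within-episode regularity condition \eqref{eq:fh-step-regularity} holds, and $\alpha_0\le a_{H,\mathrm C,2}^{\mathrm{fh}}/a_{H,\mathrm C,3}^{\mathrm{fh}}=\bar\alpha_{H,\mathrm C}$. I then bound the product $\prod_j(1-a_2\bar\alpha_j)$ and the weighted sum $\sum_i\bar\alpha_i^{(2)}\prod_{j>i}(1-a_2\bar\alpha_j)$ exactly as in the discounted specialization of Proposition~\ref{prop:abstract-iid}. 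The episode masses $\bar\alpha_m,\bar\alpha_m^{(2)}$ take the place of $\alpha_k,\alpha_k^2$. Throughout I write $a_2:=a_{H,\mathrm C,2}^{\mathrm{fh}}$.

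\textbf{(a) Constant step.} Here $\bar\alpha_m=H\alpha$ and $\bar\alpha_m^{(2)}=H\alpha^2$, and regularity is trivial. The product equals $(1-a_2H\alpha)^m$. The geometric sum is at most
\[
H\alpha^2\sum_{i\ge0}(1-a_2H\alpha)^i=\frac{\alpha}{a_2},
\]
which gives the displayed $a_{H,\mathrm C,4}^{\mathrm{fh}}\alpha/a_2$.

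\textbf{(b), (c) Diminishing steps.} Regularity needs $(mH+H-1+g)^{-z}\ge\tfrac12(mH+g)^{-z}$ for $z\in(0,1]$. This holds because
\[
\frac{mH+g+H-1}{mH+g}\le 1+\frac{H-1}{g}\le 2
\]
once $g\ge H-1$, which explains that term in the thresholds. The condition $\alpha_0\le\bar\alpha_{H,\mathrm C}$ follows from the remaining thresholds, as in the discounted case.

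Next I compare the episode sums with integrals. Since $\alpha_k$ is decreasing, $\bar\alpha_j\ge H\alpha_{jH+H-1}=H\alpha/\tau_j^z$, where $\tau_j=jH+g+H-1$. Also $\bar\alpha_i^{(2)}\le H\alpha^2/(iH+g)^{2z}$, and because $iH+g\ge\tau_i/2$ this is at most $4H\alpha^2/\tau_i^{2z}$.

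Then I use $1-x\le e^{-x}$ and $\sum_j H/\tau_j^z\ge\int_{\tau_0}^{\tau_m}u^{-z}\,du$, valid since $\tau$ increases by $H$ per episode. For $z=1$ this yields the transient $(\tau_0/\tau_m)^{a_2\alpha}$. For $z<1$ it yields $\exp\!\big(-\tfrac{a_2\alpha}{1-z}(\tau_m^{1-z}-\tau_0^{1-z})\big)$.

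For the residual in the linear case, the standard estimate is
\[
\sum_i\frac{H\alpha^2}{\tau_i^2}\Big(\frac{\tau_{i+1}}{\tau_m}\Big)^{a_2\alpha}\lesssim\frac{\alpha^2}{(a_2\alpha-1)\tau_m}.
\]
The ratio $\tau_{i+1}/\tau_i\le2$ produces a constant factor, which I absorb into the stated $H^2$ slack. In the polynomial case, the usual induction shows that $\sum_i\bar\alpha_i^{(2)}\prod_{j>i}(1-a_2\bar\alpha_j)\le 2H\alpha/(a_2\tau_m^z)$, times harmless constants. That induction uses $g\ge(2z/(a_2\alpha))^{1/(1-z)}$ to ensure the per-episode decrement dominates the growth of $\tau^{z}$.

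\textbf{Main obstacle.} The hardest step is the index bookkeeping between transition time and episode time. I must keep every lower bound on $\bar\alpha_j$ and upper bound on $\bar\alpha_i^{(2)}$ expressed in $\tau_j$, and absorb the factor-$2$ losses from $\tau_{i+1}/\tau_i$ and $(iH+g)/\tau_i$. The aim is for these losses to fit inside the generous $H^2$ and $2$ prefactors claimed. If they do not fit directly, my fallback is to apply the discounted lemma from \citet{chen2020finite} verbatim to the effective sequence $H\alpha/\tau_j^z$. That sequence is itself a valid schedule, with offset $\tau_0/H$ in episode units.
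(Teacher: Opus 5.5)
Your architecture is the paper's, and several steps are exactly right: monotonicity, the regularity condition \eqref{eq:fh-step-regularity} via $g\ge H-1$, and $\alpha_0\le a_2/a_3=\bar\alpha_{H,\mathrm C}$ from the remaining thresholds. The same goes for the lower bound $\bar\alpha_j\ge H\alpha/\tau_j^z$, the product bound through $\sum_j H\tau_j^{-z}\ge\int_{\tau_0}^{\tau_m}u^{-z}\,du$, part (a), and both transient terms.

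The step that fails is the residual bookkeeping. At $H=1$ one has $\tau_i=iH+g$, and the prefactors $H^2$ and $2H^2$ equal $1$ and $2$, so nothing can be absorbed. In (b), writing $\Lambda:=a_2\alpha$, the target $\alpha^2/((\Lambda-1)\tau_m)$ is even asymptotically sharp for the exact product--sum. Each lossy step therefore breaks the argument:
\begin{itemize}
\item The bound $iH+g\ge\tau_i/2$ costs $4^z>1$.
\item Replacing the product by $(\tau_{i+1}/\tau_m)^{\Lambda}$ is too lossy: for $\Lambda\ge2$ your displayed intermediate sum exceeds the target once $m$ is large.
\item Your fallback inherits the $4e$ of the discounted linear-schedule estimate (Proposition~\ref{prop:abstract-iid}), which exceeds $H^2$ for small $H$.
\end{itemize}
Separately, in a sum--integral route a ratio of consecutive $\tau$'s enters raised to a power of order $\Lambda$, for instance $(\tau_{m+1}/\tau_m)^{\Lambda-1}$. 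Then $\tau_{i+1}/\tau_i\le2$ only gives $2^{\Lambda-1}$, which is unbounded in $\alpha$. Controlling it needs $(1+H/\tau_0)^{\Lambda}\le e^{\Lambda H/\tau_0}$ together with $\tau_0\ge g\ge\alpha a_3/a_2=\alpha/\bar\alpha_{H,\mathrm C}\ge H\alpha$.

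The fix is to run, in (b) as well, the one-step induction you already plan for (c). Also replace $iH+g\ge\tau_i/2$ by $\tau_m/(mH+g)=1+(H-1)/(mH+g)\le 1+(H-1)/g\le H$, which holds since $g\ge1$. Let $y_m$ be the weighted sum, so $y_0=0$ and $y_{m+1}=(1-a_2\bar\alpha_m)y_m+\bar\alpha_m^{(2)}$. Here $0\le a_2\bar\alpha_m\le1$ because $\bar\alpha_m\le H\alpha_0\le1$. Moreover $a_2\bar\alpha_m\ge\Lambda H/\tau_m$ and $\bar\alpha_m^{(2)}\le H\alpha^2/(mH+g)^2\le H^3\alpha^2/\tau_m^2$. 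Set $C:=H^2\alpha^2/(\Lambda-1)$ and assume $y_m\le C/\tau_m$. Then
\[
y_{m+1}\le\Bigl(1-\frac{\Lambda H}{\tau_m}\Bigr)\frac{C}{\tau_m}+\frac{H(\Lambda-1)C}{\tau_m^2}=\frac{C}{\tau_m}-\frac{HC}{\tau_m^2}\le\frac{C}{\tau_m+H}=\frac{C}{\tau_{m+1}},
\]
which is exactly the residual in (b).

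In (c), take $C:=2H^{2z}\alpha/a_2$. The same step closes because $\tau_m^{-z}-(\tau_m+H)^{-z}\le zH\tau_m^{-1-z}\le a_2\alpha H\tau_m^{-2z}/2$. The last inequality is $\tau_m^{1-z}\ge g^{1-z}\ge 2z/(a_2\alpha)$. Finally, $H^{2z}\le H^2$ gives the displayed constant.

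These two bounds reproduce the paper's intermediate estimates $\frac{H\alpha^2}{\Lambda-1}\cdot\frac{1}{m+q}$ and $\frac{2H\alpha^2}{A}\cdot\frac{1}{(m+q)^z}$, once rewritten with $\tau_m=H(m+q)$.
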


\begin{proof}
For a constant step size, condition \eqref{eq:fh-step-regularity} is immediate. For either diminishing schedule, the requirement $g\ge H-1$ gives
\begin{equation}
\frac{\alpha_{mH+H-1}}{\alpha_{mH}}=\left(\frac{mH+g}{mH+g+H-1}\right)^z\ge\left(\frac{g}{g+H-1}\right)^z\ge\frac{1}{2},
\end{equation}
where $z=1$ in part \textup{(b)}. Thus the regularity hypothesis of Proposition~\ref{prop:fh-ctd-abstract} holds in all three cases. For part \textup{(a)}, substitution of $\bar\alpha_m = H\alpha$ and $\bar\alpha_m^{(2)} = H\alpha^2$ into that proposition yields the claim.

For part \textup{(b)}, set $q=\tau_0/H$ and $\lambda=a_{H,\mathrm C,2}^{\mathrm{fh}}\alpha$. The bounds
\begin{equation}
\frac{H\alpha}{\tau_m} \le \bar\alpha_m \le \frac{H\alpha}{mH+g}, \qquad \bar\alpha_m^{(2)} \le \frac{H\alpha^2}{(mH+g)^2}.
\end{equation}
imply
\begin{equation}
\prod_{j=0}^{m-1}(1-a_{H,\mathrm C,2}^{\mathrm{fh}}\bar\alpha_j)\le \left(\frac{q}{m+q}\right)^\lambda.
\end{equation}
The same elementary product-sum estimate gives
\begin{equation}
\sum_{i=0}^{m-1}\bar\alpha_i^{(2)}\prod_{j=i+1}^{m-1}(1-a_{H,\mathrm C,2}^{\mathrm{fh}}\bar\alpha_j)\le \frac{H\alpha^2}{\lambda-1}\cdot\frac{1}{m+q},
\end{equation}
and the displayed bound follows from Proposition~\ref{prop:fh-ctd-abstract} after substituting $q=\tau_0/H$ and $\tau_m=H(m+q)$.

For part \textup{(c)}, keep the same $q$ and set $A=a_{H,\mathrm C,2}^{\mathrm{fh}}\alpha H^{1-z}$. The bounds
\begin{equation}
\frac{H\alpha}{\tau_m^z} \le \bar\alpha_m \le \frac{H\alpha}{(mH+g)^z}, \qquad \bar\alpha_m^{(2)} \le \frac{H\alpha^2}{(mH+g)^{2z}},
\end{equation}
imply
\begin{equation}
\prod_{j=0}^{m-1}(1-a_{H,\mathrm C,2}^{\mathrm{fh}}\bar\alpha_j)\le \exp\left(-\frac{A}{1-z}\bigl((m+q)^{1-z}-q^{1-z}\bigr)\right).
\end{equation}
The lower bound on $g$ implies $q\ge (2z/A)^{1/(1-z)}$. The elementary polynomial product-sum estimate gives
\begin{equation}
\sum_{i=0}^{m-1}\bar\alpha_i^{(2)}\prod_{j=i+1}^{m-1}(1-a_{H,\mathrm C,2}^{\mathrm{fh}}\bar\alpha_j)\le \frac{2H\alpha^2}{A}\cdot\frac{1}{(m+q)^z}.
\end{equation}
Substituting the definitions of $A$ and $q$ into Proposition~\ref{prop:fh-ctd-abstract}, using $\tau_m=H(m+q)$, and using $H^{2z}\le H^2$, gives the displayed bound.
\end{proof}

\begin{proof}[Proof of Theorem~\ref{thm:undisc-ctd}]
Combine Proposition~\ref{prop:fh-ctd-abstract} with Corollary~\ref{cor:fh-ctd-steps}. Explicitly, take
\begin{equation}
\begin{gathered}
c_{\mathrm C}^H:=a_{H,\mathrm C,2}^{\mathrm{fh}},\qquad r_{H,\mathrm C}:=\frac{a_{H,\mathrm C,3}^{\mathrm{fh}}}{a_{H,\mathrm C,2}^{\mathrm{fh}}},\qquad \bar\alpha_{\mathrm C}^H:=\frac{1}{r_{H,\mathrm C}}, \\
C_{\mathrm C}^H:=\max\left\{a_{H,\mathrm C,1}^{\mathrm{fh}},\frac{a_{H,\mathrm C,4}^{\mathrm{fh}}}{a_{H,\mathrm C,2}^{\mathrm{fh}}},a_{H,\mathrm C,4}^{\mathrm{fh}}H^2,\frac{2a_{H,\mathrm C,4}^{\mathrm{fh}}H^2}{a_{H,\mathrm C,2}^{\mathrm{fh}}}\right\}, \\
g_{\mathrm C}^H(\alpha):=\left\lceil\max\left\{1,\alpha,\alpha r_{H,\mathrm C}\right\}\right\rceil, \\
g_{\mathrm C}^H(\alpha,z):=\left\lceil\max\left\{1,\alpha^{1/z},(\alpha r_{H,\mathrm C})^{1/z},\left(\frac{2z}{a_{H,\mathrm C,2}^{\mathrm{fh}}\alpha}\right)^{1/(1-z)}\right\}\right\rceil.
\end{gathered}
\end{equation}
Each main-text display follows directly from the corresponding part of Corollary~\ref{cor:fh-ctd-steps}.
\end{proof}

\begin{proof}[Proof of Corollary~\ref{cor:undisc-ctd}]
By Corollary~\ref{cor:fh-ctd-steps}\textup{(b)},
\begin{equation}
\mathbb E\left[ \ell_{H,\mathrm C,\infty}^{\mathrm{unw}}(\eta_{mH},\eta_{H,\mathrm C}^\star)^2 \right] = O\left(\frac{1}{m+1}\right).
\end{equation}
Jensen's inequality then gives
\begin{equation}
\mathbb E\left[ \ell_{H,\mathrm C,\infty}^{\mathrm{unw}}(\eta_{mH},\eta_{H,\mathrm C}^\star) \right] = O\left(\frac{1}{\sqrt{m+1}}\right),
\end{equation}
so $m = O(\varepsilon^{-2})$ episodes suffice, with the explicit parameter dependence inherited from Corollary~\ref{cor:fh-ctd-steps}. The transition count is $k=mH$.
\end{proof}

\section{Fixed-horizon MTD}
\label{app:fh-mtd}

This appendix records the fixed-horizon MTD ingredients and the proof of Theorem~\ref{thm:undisc-mtd}.

\subsection{Flattened weighted state--horizon space}

We use the same weighted flattening as in Appendix~\ref{app:fh-ctd}. Let
\begin{equation}
\mathcal S_H := \{(h,s): h\in\{1,\dots,H\},\ s\in \X\}, \qquad \lvert \mathcal S_H \rvert = H\lvert \X \rvert, \qquad p_H^\star := \max\{2,\lceil \log \lvert \mathcal S_H \rvert\rceil\}.
\end{equation}
Let
\begin{equation}
V_{H, \mathrm M} := \prod_{(h,s) \in \mathcal S_H} \R^d
\end{equation}
denote the corresponding flattened Euclidean product space. For a horizon-stacked iterate $U=(U^h(s))$, define
\begin{equation}
\bar{U}(h,s):=\lambda^h U^h(s).
\end{equation}
Then
\begin{equation}
\lVert U \rVert_{H,2,\infty} = \max_{(h,s)\in\mathcal S_H}\lVert \bar{U}(h,s) \rVert_2, \qquad \lVert U \rVert_{H,2,p} = \left( \sum_{(h,s)\in\mathcal S_H}\lVert \bar{U}(h,s) \rVert_2^p \right)^{1/p}.
\end{equation}
For a single horizon stack $U=(U^1,\dots,U^H)$, also set
\begin{equation}
\lVert U \rVert_{H,2} := \max_{1\le h\le H}\lambda^h\lVert U^h \rVert_2.
\end{equation}
Then
\begin{equation}
\lVert U \rVert_{H,2,\infty} = \max_{s\in \X}\lVert U(s) \rVert_{H,2}.
\end{equation}
Thus Appendix~\ref{app:common} applies directly to the flattened weighted process.

\subsection{Fixed-horizon MTD residual map}

For each state $s\in \X$, let
\begin{equation}
U(s):=\bigl(U^1(s),\dots,U^H(s)\bigr),
\end{equation}
and let $P_s^H$ denote the coordinate projector onto that stack. Define
\begin{equation}
F_{H,\mathrm M}(U;s,(r,s')) := P_s^H\bigl(\widehat T_{H,\mathrm M}(U;s,(r,s'))-U(s)\bigr),
\end{equation}
where $\widehat T_{H,\mathrm M}(U;s,(r,s'))$ is the full horizon-stacked MTD target from Section~\ref{sec:undisc}. Then the online fixed-horizon MTD recursion is
\begin{equation}
U_{k+1}=U_k+\alpha_k F_{H,\mathrm M}(U_k;S_k,(R_k,S_{k+1})).
\end{equation}
Define also the MTD embedded averaged operator by
\begin{equation}
\mathcal O_{H,\mathrm M} := I_{H,\mathrm M}\circ \Pi_{H,\mathrm M}^{\Theta} T_H^\pi \circ I_{H,\mathrm M}^{-1}.
\end{equation}

Define the invariant MTD domain
\begin{equation}
\mathcal D_{H,\mathrm M}:=I_{H,\mathrm M}(\mathcal F_{H,\mathrm M,\Theta}^{\X})\subseteq V_{H,\mathrm M}.
\end{equation}
The mass-one coefficient constraint is affine and closed at every horizon-state block, so $\mathcal D_{H,\mathrm M}$ is closed and convex. Both $\mathcal O_{H,\mathrm M}$ and every full-sample replacement $U+F_{H,\mathrm M}(U;s,(r,s'))$ preserve this domain. Lemma~\ref{lem:invariant-domain} therefore shows that the online recursion remains in $\mathcal D_{H,\mathrm M}$ whenever $U_0\in\mathcal D_{H,\mathrm M}$ and $0\le\alpha_k\le1$. All operator estimates in this appendix are required only on this invariant domain.

\begin{proposition}[Fixed-horizon MTD weighted contraction]
\label{prop:fh-mtd-contract-app}
For all $U,U' \in I_{H,\mathrm M}(\mathcal F_{H,\mathrm M,\Theta}^\X)$,
\begin{equation}
\lVert \mathcal O_{H,\mathrm M}U-\mathcal O_{H,\mathrm M}U'\rVert_{H,2,\infty} \le \lambda \lVert U-U'\rVert_{H,2,\infty}.
\end{equation}
\end{proposition}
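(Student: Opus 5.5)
The plan is to mirror the proof of Proposition~\ref{prop:fh-ctd-contract-app} with the Cram\'er metric replaced by MMD. Set $\eta:=I_{H,\mathrm M}^{-1}(U)$ and $\eta':=I_{H,\mathrm M}^{-1}(U')$, and fix $(h,s)\in\mathcal S_H$. Applying the statewise MMD isometry of Proposition~\ref{prop:mtd-isometry} at horizon $h$ gives
\begin{equation}
\lambda^h\lVert(\mathcal O_{H,\mathrm M}U)^h(s)-(\mathcal O_{H,\mathrm M}U')^h(s)\rVert_2=\lambda^h\,\mathrm{MMD}_\kappa\bigl((\Pi_{H,\mathrm M}^{\Theta}T_H^\pi\eta)^h(s),(\Pi_{H,\mathrm M}^{\Theta}T_H^\pi\eta')^h(s)\bigr).
\end{equation}
To remove the projection, I will use Proposition~\ref{prop:mtd-proj-affine}. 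In embedded coordinates the projection acts as $\nu\mapsto\operatorname{proj}_{\mathcal A_s}(\zeta_s(\nu))$. Here $\mathrm{MMD}(\mu_p,\nu)^2=\lVert K_s^{1/2}p-\zeta_s(\nu)\rVert_2^2+c_s(\nu)$, so $\lVert\zeta_s(\nu)-\zeta_s(\nu')\rVert_2=\lVert P_s(m_\nu-m_{\nu'})\rVert_{\mathcal H_\kappa}\le\mathrm{MMD}_\kappa(\nu,\nu')$. Euclidean projection onto the affine set $\mathcal A_s$ is nonexpansive, so $\Pi^{\Theta_h(s)}_{H,\mathrm M}$ is nonexpansive in MMD on mass-one signed measures. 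This is the same fact already used in Proposition~\ref{prop:mtd-lipschitz}. The display above is therefore at most $\lambda^h\,\mathrm{MMD}_\kappa((T_H^\pi\eta)^h(s),(T_H^\pi\eta')^h(s))$.

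Next I bound the Bellman step. By definition of $T_H^\pi$, $(T_H^\pi\eta)^h(s)-(T_H^\pi\eta')^h(s)=\sum_{a,s'}\pi(a\mid s)P(s'\mid s,a)\,(f_{R(s,a),1})_\#\bigl(\eta^{h-1}(s')-{\eta'}^{h-1}(s')\bigr)$. The kernel mean embedding is linear, so the RKHS triangle inequality gives convexity of MMD under common mixtures. The kernel is induced by a shift-invariant semimetric, so the pure translation $f_{r,1}$ preserves MMD. This is the $\gamma=1$ case of the homogeneity used for the $\gamma^{c/2}$ contraction. Together these give
\begin{equation}
\mathrm{MMD}_\kappa\bigl((T_H^\pi\eta)^h(s),(T_H^\pi\eta')^h(s)\bigr)\le\max_{x\in\X}\mathrm{MMD}_\kappa\bigl(\eta^{h-1}(x),{\eta'}^{h-1}(x)\bigr).
\end{equation}
For $h=1$ both terminal layers equal $\delta_0$, so the right-hand side vanishes.

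Multiplying by $\lambda^h=\lambda\cdot\lambda^{h-1}$ bounds the weighted block by $\lambda\max_x\lambda^{h-1}\mathrm{MMD}_\kappa(\eta^{h-1}(x),{\eta'}^{h-1}(x))\le\lambda\,\ell_{H,\mathrm M,\infty}(\eta,\eta')$. By the isometry this equals $\lambda\lVert U-U'\rVert_{H,2,\infty}$. Taking the maximum over $(h,s)$ finishes the argument.

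I expect the main obstacle to be technical rather than structural: making sure the MMD facts hold for signed mass-one measures. Shift invariance and linearity of mean embeddings extend to signed measures directly. Finiteness of MMD needs the pushforwards of finitely supported signed measures to remain finitely supported, which they do. So I anticipate no genuine difficulty there.
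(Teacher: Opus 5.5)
Your proposal is correct and follows the paper's proof step for step. The chain is the same: the isometry plus nonexpansiveness of the signed-categorical projection, then translation invariance and mixture convexity of MMD to pass from horizon $h$ to horizon $h-1$, and finally the split $\lambda^h=\lambda\cdot\lambda^{h-1}$. You also justify the projection's nonexpansiveness explicitly through the affine-projector representation of Proposition~\ref{prop:mtd-proj-affine}, and you treat the $h=1$ terminal layer separately; the paper's proof asserts both points without spelling them out.
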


\begin{proof}
Let $\eta:=I_{H,\mathrm M}^{-1}(U)$ and $\eta':=I_{H,\mathrm M}^{-1}(U')$. Fix $(h,s)\in\mathcal S_H$ with $h \ge 1$. By the statewise MMD isometry and nonexpansiveness of the signed-categorical projection,
\begin{equation}
\begin{aligned}
\lambda^h \lVert (\mathcal O_{H,\mathrm M}U)^h(s)-(\mathcal O_{H,\mathrm M}U')^h(s) \rVert_2 &= \lambda^h \ell_{\mathrm M}\left( (\Pi_{H,\mathrm M}^{\Theta}T_H^\pi\eta)^h(s), (\Pi_{H,\mathrm M}^{\Theta}T_H^\pi\eta')^h(s) \right) \\ 
&\le \lambda^h \ell_{\mathrm M}\left( (T_H^\pi\eta)^h(s), (T_H^\pi\eta')^h(s) \right).
\end{aligned}
\end{equation}
Now
\begin{equation}
(T_H^\pi\eta)^h(s) = \sum_{a\in\A}\pi(a \mid s)\sum_{s'\in\X}P(s' \mid s,a)\, (f_{R(s,a),1})_\#\eta^{h-1}(s'),
\end{equation}
and likewise for $\eta'$. Since $(f_{r,1})_\#$ is an isometry in the MMD metric associated with a shift-invariant kernel and MMD is convex under mixtures,
\begin{equation}
\begin{aligned}
\ell_{\mathrm M}\left( (T_H^\pi\eta)^h(s), (T_H^\pi\eta')^h(s) \right) &\le \sum_{a\in\A}\pi(a \mid s)\sum_{s'\in\X}P(s' \mid s,a)\, \ell_{\mathrm M}\left(\eta^{h-1}(s'),{\eta'}^{h-1}(s')\right) \\ 
&\le \max_{x\in\X} \ell_{\mathrm M}\left(\eta^{h-1}(x),{\eta'}^{h-1}(x)\right).
\end{aligned}
\end{equation}
Therefore
\begin{equation}
\begin{aligned}
\lambda^h \lVert (\mathcal O_{H,\mathrm M}U)^h(s)-(\mathcal O_{H,\mathrm M}U')^h(s) \rVert_2 &\le \lambda \max_{x\in\X} \lambda^{h-1} \ell_{\mathrm M}\left(\eta^{h-1}(x),{\eta'}^{h-1}(x)\right) \\ 
&\le \lambda \ell_{H,\mathrm M,\infty}(\eta,\eta') = \lambda \lVert U-U'\rVert_{H,2,\infty}.
\end{aligned}
\end{equation}
Taking the maximum over $(h,s)\in\mathcal S_H$ proves the claim.
\end{proof}

\subsection{One-step bounds}

\begin{proposition}[Fixed-horizon MTD samplewise Lipschitzness]
\label{prop:fh-mtd-lipschitz}
For every $s \in \X$, $(r, s') \in [0,1]^q \times \X$ and $U, U' \in I_{H,\mathrm M}(\mathcal F_{H,\mathrm M,\Theta}^\X)$,
\begin{equation}
\lVert F_{H,\mathrm M}(U;s,(r,s'))-F_{H,\mathrm M}(U';s,(r,s')) \rVert_{H,2,\infty} \le 2\lVert U-U'\rVert_{H,2,\infty}.
\end{equation}
\end{proposition}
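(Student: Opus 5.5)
The argument mirrors Proposition~\ref{prop:fh-ctd-lipschitz}, with the Cram\'er ingredients replaced by their MMD counterparts. Fix $s\in\X$, $(r,s')\in[0,1]^q\times\X$ and $U,U'\in\mathcal D_{H,\mathrm M}$. The first step is a horizonwise bound on the local targets $\widehat T_{H,\mathrm M}^h$.

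For $h=1$ the continuation law is the fixed terminal layer $\delta_0$, so the local target does not depend on the iterate and the difference vanishes. For $h\ge2$ the local target depends on the iterate only through the block $U^{h-1}(s')$. Set $\mu:=I_{H,\mathrm M,h-1,s'}^{-1}(U^{h-1}(s'))$ and $\nu:=I_{H,\mathrm M,h-1,s'}^{-1}(U'^{\,h-1}(s'))$; both are mass-$1$ signed categorical laws on $\Theta_{h-1}(s')$. Four facts then chain together:
\begin{itemize}
\item the statewise MMD isometry (Proposition~\ref{prop:mtd-isometry}) converts the block distance into an MMD;
\item the signed-categorical projection is nonexpansive in MMD, because by Proposition~\ref{prop:mtd-proj-affine} the embedded projection equals $\operatorname{proj}_{\mathcal A_s}\circ\zeta_s$, and on mass-$1$ measures $\lVert\zeta_s(\mu')-\zeta_s(\nu')\rVert_2=\lVert P_s(m_{\mu'}-m_{\nu'})\rVert_{\mathcal H_\kappa}\le\mathrm{MMD}_\kappa(\mu',\nu')$;
\item the shift $(f_{r,1})_\#$ is an MMD isometry for a shift-invariant kernel;
\item applying the isometry again returns to the block norm.
\end{itemize}
Together these give
\begin{equation}
\bigl\lVert \widehat T_{H,\mathrm M}^h(U;s,(r,s'))-\widehat T_{H,\mathrm M}^h(U';s,(r,s'))\bigr\rVert_2\le \lVert U^{h-1}(s')-U'^{\,h-1}(s')\rVert_2 .
\end{equation}
Multiplying by $\lambda^h\le\lambda^{h-1}$ bounds the weighted difference by $\lambda^{h-1}\lVert U^{h-1}(s')-U'^{\,h-1}(s')\rVert_2\le\lVert U-U'\rVert_{H,2,\infty}$. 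Taking the maximum over $h$ then gives
\begin{equation}
\lVert\widehat T_{H,\mathrm M}(U;s,(r,s'))-\widehat T_{H,\mathrm M}(U';s,(r,s'))\rVert_{H,2}\le\lVert U-U'\rVert_{H,2,\infty}.
\end{equation}

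Next, write
\begin{equation}
F_{H,\mathrm M}(U)-F_{H,\mathrm M}(U')=P_s^H\bigl(\widehat T_{H,\mathrm M}(U)-\widehat T_{H,\mathrm M}(U')\bigr)-P_s^H\bigl(U(s)-U'(s)\bigr).
\end{equation}
Only block $s$ is nonzero, so the $\lVert\cdot\rVert_{H,2,\infty}$ norm of this difference equals the $\lVert\cdot\rVert_{H,2}$ norm of the stack at $s$. The triangle inequality bounds that by the target term plus $\lVert U(s)-U'(s)\rVert_{H,2}\le\lVert U-U'\rVert_{H,2,\infty}$, which yields the factor $2$.

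The main obstacle is the nonexpansiveness of the projection. That property has to be read off from Proposition~\ref{prop:mtd-proj-affine}, specifically from the Pythagorean decomposition in the RKHS and the identity $\Phi_sK_s^\dagger\Phi_s^*=P_s$, while keeping all measures in the mass-$1$ affine class so that the projection is well defined. The remaining steps are routine.
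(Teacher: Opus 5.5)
Your proposal is correct and follows the paper's proof essentially step for step: a horizonwise target bound from the MMD isometry, projection nonexpansiveness and shift invariance, then the $\lambda^h\le\lambda^{h-1}$ reweighting, then the triangle-inequality split of $F_{H,\mathrm M}(U)-F_{H,\mathrm M}(U')$ into a target difference and a current-stack difference, which gives the factor $2$. The paper only cites Proposition~\ref{prop:mtd-proj-affine} for nonexpansiveness, whereas you derive it explicitly via $\lVert\zeta_s(\mu')-\zeta_s(\nu')\rVert_2=\lVert P_s(m_{\mu'}-m_{\nu'})\rVert_{\mathcal H_\kappa}\le\mathrm{MMD}_\kappa(\mu',\nu')$ followed by nonexpansiveness of Euclidean projection onto $\mathcal A_s$, which is a correct and useful filling-in of that step.
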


\begin{proof}
For $h=1$, the local target $\widehat T_{H,\mathrm M}^h(U;s,(r,s'))$ is independent of $U$ because the layer-zero law is fixed. For $h\ge2$, it depends only on the $(h-1)$st horizon block at $s'$. The MMD isometry, the Euclidean projection representation in Proposition~\ref{prop:mtd-proj-affine}, and translation invariance of the shift-invariant-kernel MMD give
\begin{equation}
\left\lVert \widehat T_{H,\mathrm M}^h(U;s,(r,s')) - \widehat T_{H,\mathrm M}^h(U';s,(r,s')) \right\rVert_2 \le \lVert U^{h-1}(s')-{U'}^{h-1}(s') \rVert_2.
\end{equation}
Multiplying by $\lambda^h$ yields
\begin{equation}
\resizebox{\linewidth}{!}{$
\lambda^h \left\lVert \widehat T_{H,\mathrm M}^h(U;s,(r,s')) - \widehat T_{H,\mathrm M}^h(U';s,(r,s')) \right\rVert_2 \le \lambda^{h-1}\lVert U^{h-1}(s')-{U'}^{h-1}(s') \rVert_2 \le \lVert U-U'\rVert_{H,2,\infty}.
$}
\end{equation}
Taking the maximum over $h$ gives
\begin{equation}
\lVert \widehat T_{H,\mathrm M}(U;s,(r,s'))-\widehat T_{H,\mathrm M}(U';s,(r,s')) \rVert_{H,2} \le \lVert U-U'\rVert_{H,2,\infty}.
\end{equation}
Since
\begin{equation}
F_{H,\mathrm M}(U;s,(r,s')) = P_s^H\bigl(\widehat T_{H,\mathrm M}(U;s,(r,s'))-U(s)\bigr),
\end{equation}
the residual difference splits into a projected target difference and a projected current-stack difference. Bounding both by $\lVert U-U'\rVert_{H,2,\infty}$ gives the stated factor $2$.
\end{proof}

\begin{proposition}[Fixed-horizon MTD affine perturbation bound]
\label{prop:fh-mtd-perturb}
Let
\begin{equation}
U^\star_{H,\mathrm M} := I_{H,\mathrm M}(\eta_{H,\mathrm M}^\star).
\end{equation}
Here $\eta_{H,\mathrm M}^\star$ denotes the unique fixed point of $\Pi_{H,\mathrm M}^{\Theta}T_H^\pi$. Define
\begin{equation}
B^{\star,\mathrm{tar}}_{H,\mathrm M}:=\max_{s,s'\in\X}\sup_{r\in[0,1]^q}\left\lVert\widehat T_{H,\mathrm M}(U^\star_{H,\mathrm M};s,(r,s'))-(\mathcal O_{H,\mathrm M}U^\star_{H,\mathrm M})(s)\right\rVert_{H,2},
\end{equation}
\begin{equation}
B^{\mathrm{tar}}_{H,\mathrm M}:=2\lambda\lVert U^\star_{H,\mathrm M}\rVert_{H,2,\infty}+B^{\star,\mathrm{tar}}_{H,\mathrm M},
\end{equation}
\begin{equation}
B^{\star,\mathrm{res}}_{H,\mathrm M}:=\max_{s,s'\in\X}\sup_{r\in[0,1]^q}\lVert F_{H,\mathrm M}(U^\star_{H,\mathrm M};s,(r,s'))\rVert_{H,2,\infty},
\end{equation}
and
\begin{equation}
B^{\mathrm{res}}_{H,\mathrm M}:=2\lVert U^\star_{H,\mathrm M}\rVert_{H,2,\infty}+B^{\star,\mathrm{res}}_{H,\mathrm M}.
\end{equation}
These constants are finite. For every admissible sample $(s,(r,s'))$ and every admissible iterate $U$,
\begin{equation}
\left\lVert \widehat T_{H,\mathrm M}(U; s, (r,s')) - (\mathcal O_{H,\mathrm M}U)(s) \right\rVert_{H,2} \le 2 \lambda \lVert U \rVert_{H,2,\infty} + B^{\mathrm{tar}}_{H,\mathrm M},
\end{equation}
and
\begin{equation}
\lVert F_{H,\mathrm M}(U; s, (r,s')) \rVert_{H,2,\infty} \le 2 \lVert U \rVert_{H,2,\infty} + B^{\mathrm{res}}_{H,\mathrm M}.
\end{equation}
Consequently,
\begin{equation}
\mathbb E \Bigl[ \left\lVert \widehat T_{H,\mathrm M}(U_k; S_k, (R_k,S_{k+1})) - (\mathcal O_{H,\mathrm M}U_k)(S_k) \right\rVert_{H,2}^2 \mid U_k, S_k \Bigr] \le 2 \bigl(B^{\mathrm{tar}}_{H,\mathrm M}\bigr)^2 + 8 \lambda^2 \lVert U_k \rVert_{H,2,\infty}^2.
\end{equation}
\end{proposition}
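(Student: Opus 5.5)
The plan follows the discounted affine bound of Proposition~\ref{prop:mtd-perturbation}. The difference is that we center at the origin rather than at $U^\star_{H,\mathrm M}$, so that the bound depends on $\lVert U\rVert_{H,2,\infty}$ alone.

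\textbf{Finiteness of the constants.} For fixed $(s,s')$ and $h$, the map $r\mapsto \widehat T^h_{H,\mathrm M}(U^\star_{H,\mathrm M};s,(r,s'))$ is continuous. By Proposition~\ref{prop:mtd-proj-affine}, it is the affine map $J_s$ applied to $(f_{r,1})_\#\nu$ with $\nu$ fixed. The coordinates of $b_s$ are then $r\mapsto\int\kappa(\theta_i,r+y)\,d\nu(y)$, a finite combination of kernel evaluations, and these are continuous because $\kappa$ is continuous. Since $[0,1]^q$ is compact and $\X$ and the horizon index set are finite, the suprema $B^{\star,\mathrm{tar}}_{H,\mathrm M}$ and $B^{\star,\mathrm{res}}_{H,\mathrm M}$ are finite.

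\textbf{Target-level affine bound.} We use the triangle-inequality decomposition at $U^\star:=U^\star_{H,\mathrm M}$:
\begin{equation*}
\widehat T(U)-(\mathcal O U)(s)=\bigl[\widehat T(U)-\widehat T(U^\star)\bigr]+\bigl[\widehat T(U^\star)-(\mathcal O U^\star)(s)\bigr]+\bigl[(\mathcal O U^\star)(s)-(\mathcal O U)(s)\bigr].
\end{equation*}
We bound each bracket in turn.
\begin{itemize}
\item \emph{First bracket.} The proof of Proposition~\ref{prop:fh-mtd-lipschitz} gives, horizonwise, $\lambda^h\lVert\cdot\rVert_2\le\lambda\cdot\lambda^{h-1}\lVert U^{h-1}(s')-U^{\star,h-1}(s')\rVert_2$. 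Keeping the factor $\lambda$ instead of discarding it, the weighted stack norm is at most $\lambda\lVert U-U^\star\rVert_{H,2,\infty}$. The case $h=1$ contributes zero.
\item \emph{Third bracket.} Proposition~\ref{prop:fh-mtd-contract-app} bounds it by $\lambda\lVert U-U^\star\rVert_{H,2,\infty}$.
\item \emph{Middle bracket.} By definition it is at most $B^{\star,\mathrm{tar}}_{H,\mathrm M}$.
\end{itemize}
Finally, $\lVert U-U^\star\rVert_{H,2,\infty}\le\lVert U\rVert_{H,2,\infty}+\lVert U^\star\rVert_{H,2,\infty}$. Substituting this converts $2\lambda\lVert U-U^\star\rVert_{H,2,\infty}$ into $2\lambda\lVert U\rVert_{H,2,\infty}+2\lambda\lVert U^\star\rVert_{H,2,\infty}$, which is exactly the first claimed bound with $B^{\mathrm{tar}}_{H,\mathrm M}$.

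\textbf{Residual bound.} Write $F(U)=F(U^\star)+[F(U)-F(U^\star)]$. By Proposition~\ref{prop:fh-mtd-lipschitz}, the second term is at most $2\lVert U-U^\star\rVert_{H,2,\infty}\le 2\lVert U\rVert_{H,2,\infty}+2\lVert U^\star\rVert_{H,2,\infty}$. The first term is at most $B^{\star,\mathrm{res}}_{H,\mathrm M}$. This gives the second claim.

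\textbf{Second moment.} Square the pathwise target bound and apply $(a+b)^2\le2a^2+2b^2$ with $a=2\lambda\lVert U_k\rVert_{H,2,\infty}$ and $b=B^{\mathrm{tar}}_{H,\mathrm M}$. This yields $8\lambda^2\lVert U_k\rVert^2_{H,2,\infty}+2(B^{\mathrm{tar}}_{H,\mathrm M})^2$ pathwise, hence also in conditional expectation.

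The main obstacle is retaining the sharp factor $\lambda$ in the Lipschitz step. Proposition~\ref{prop:fh-mtd-lipschitz} as stated discards it, so we will redo its horizon-shift computation rather than cite the stated factor. All estimates are applied only to $U$ in the invariant domain $\mathcal D_{H,\mathrm M}$, which contains $U^\star$.
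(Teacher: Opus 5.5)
Your proposal is correct and follows essentially the same route as the paper. Both use the three-term triangle decomposition at $U^\star_{H,\mathrm M}$, keeping the horizon-shift factor $\lambda$ in the target Lipschitz step and using the weighted contraction for the averaged term, then $2$-Lipschitz centering for the residual and $(a+b)^2\le 2a^2+2b^2$ for the second moment; your finiteness argument via the affine representation in Proposition~\ref{prop:mtd-proj-affine} and continuity of $\kappa$ is just a more explicit version of the paper's compactness remark.
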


\begin{proof}
For fixed $s,s' \in \X$ and $r \in [0,1]^q$, the target-level estimate established in the proof of Proposition~\ref{prop:fh-mtd-lipschitz} gives, for each horizon $h$,
\begin{equation}
\left\lVert \widehat T_{H,\mathrm M}^h(U; s, (r,s')) - \widehat T_{H,\mathrm M}^h(U'; s, (r,s')) \right\rVert_2 \le \lVert U^{h-1}(s')-{U'}^{h-1}(s') \rVert_2.
\end{equation}
After multiplying by $\lambda^h$ and taking the maximum over $h$, the horizon shift contributes exactly one factor $\lambda$. Therefore
\begin{equation}
\left\lVert \widehat T_{H,\mathrm M}(U; s, (r,s')) - \widehat T_{H,\mathrm M}(U'; s, (r,s')) \right\rVert_{H,2} \le \lambda \lVert U-U' \rVert_{H,2,\infty}.
\end{equation}
Likewise, Proposition~\ref{prop:fh-mtd-contract-app} yields
\begin{equation}
\lVert (\mathcal O_{H,\mathrm M}U)(s) - (\mathcal O_{H,\mathrm M}U')(s) \rVert_{H,2} \le \lambda \lVert U-U' \rVert_{H,2,\infty}.
\end{equation}

The constant $B^{\star,\mathrm{tar}}_{H,\mathrm M}$ is finite because $[0,1]^q$ is compact, there are only finitely many state pairs, and the embedded projected target depends continuously on $r$.

Then
\begin{equation}
\begin{gathered}
\left\lVert \widehat T_{H,\mathrm M}(U; s, (r,s')) - (\mathcal O_{H,\mathrm M}U)(s) \right\rVert_{H,2} \le \left\lVert \widehat T_{H,\mathrm M}(U; s, (r,s')) - \widehat T_{H,\mathrm M}(U^\star_{H,\mathrm M}; s, (r,s')) \right\rVert_{H,2} \\ 
+ \left\lVert \widehat T_{H,\mathrm M}(U^\star_{H,\mathrm M}; s, (r,s')) - (\mathcal O_{H,\mathrm M}U^\star_{H,\mathrm M})(s) \right\rVert_{H,2} + \left\lVert (\mathcal O_{H,\mathrm M}U^\star_{H,\mathrm M})(s) - (\mathcal O_{H,\mathrm M}U)(s) \right\rVert_{H,2} \\ 
\le 2 \lambda \lVert U-U^\star_{H,\mathrm M} \rVert_{H,2,\infty} + B^{\star,\mathrm{tar}}_{H,\mathrm M} \\
\le 2 \lambda \lVert U \rVert_{H,2,\infty} + \Bigl( 2 \lambda \lVert U^\star_{H,\mathrm M} \rVert_{H,2,\infty} + B^{\star,\mathrm{tar}}_{H,\mathrm M} \Bigr).
\end{gathered}
\end{equation}
Hence the target-level affine bound holds with the stated $B^{\mathrm{tar}}_{H,\mathrm M}$.

For the residual map, Proposition~\ref{prop:fh-mtd-lipschitz} gives
\begin{equation}
\left\lVert F_{H,\mathrm M}(U; s, (r,s')) - F_{H,\mathrm M}(U'; s, (r,s')) \right\rVert_{H,2,\infty} \le 2 \lVert U-U' \rVert_{H,2,\infty}.
\end{equation}
The constant $B^{\star,\mathrm{res}}_{H,\mathrm M}$ is finite by the same continuity and compactness argument. Then
\begin{equation}
\resizebox{\linewidth}{!}{$
\lVert F_{H,\mathrm M}(U; s, (r,s')) \rVert_{H,2,\infty} \le 2 \lVert U-U^\star_{H,\mathrm M} \rVert_{H,2,\infty} + B^{\star,\mathrm{res}}_{H,\mathrm M} \le 2 \lVert U \rVert_{H,2,\infty} + \Bigl( 2 \lVert U^\star_{H,\mathrm M} \rVert_{H,2,\infty} + B^{\star,\mathrm{res}}_{H,\mathrm M} \Bigr).
$}
\end{equation}
Thus the residual-level affine bound holds with the stated $B^{\mathrm{res}}_{H,\mathrm M}$.

Finally, the conditional second-moment estimate follows from the target-level affine bound by squaring and using
\begin{equation}
(a+b)^2 \le 2a^2 + 2b^2
\end{equation}
with
\begin{equation}
a := 2 \lambda \lVert U_k \rVert_{H,2,\infty}, \qquad b := B^{\mathrm{tar}}_{H,\mathrm M}.
\end{equation}
\end{proof}

\subsection{Phasewise averaged maps}

For each phase $t\in\{0,\dots,H-1\}$, define
\begin{equation}
\Gamma_{t,\mathrm M}(U) := \sum_{s\in \X}\rho_t(s)\, P_s^H\bigl((\mathcal O_{H,\mathrm M}U)(s)-U(s)\bigr), \qquad G_{t,\mathrm M}(U):=U+\Gamma_{t,\mathrm M}(U).
\end{equation}

\begin{proposition}[Episode-averaged MTD contraction]
\label{prop:fh-mtd-avg-contract}
Suppose $\bar\alpha_m>0$ and define
\begin{equation}
\bar\alpha_m:=\sum_{t=0}^{H-1}\alpha_{mH+t},\qquad \rho_m^\alpha(s):=\frac{1}{\bar\alpha_m}\sum_{t=0}^{H-1}\alpha_{mH+t}\rho_t(s).
\end{equation}
If the step sizes are nonincreasing within episode $m$ and satisfy \eqref{eq:fh-step-regularity}, then
\begin{equation}
\min_{s\in\X}\rho_m^\alpha(s)\ge\frac{\rho_{H,\min}}{2}.
\end{equation}
Define
\begin{equation}
G_{m,\mathrm M}^\alpha(U):=U+\sum_{s\in\X}\rho_m^\alpha(s)P_s^H\bigl((\mathcal O_{H,\mathrm M}U)(s)-U(s)\bigr).
\end{equation}
Then
\begin{equation}
\lVert G_{m,\mathrm M}^\alpha(U)-G_{m,\mathrm M}^\alpha(U')\rVert_{H,2,\infty}\le\bar\beta_H^{\mathrm{occ}}\lVert U-U'\rVert_{H,2,\infty},
\end{equation}
where
\begin{equation}
\bar\beta_H^{\mathrm{occ}}:=1-\frac{\rho_{H,\min}}{2}(1-\lambda),\qquad \kappa_H^{\mathrm{occ}}:=1-\bar\beta_H^{\mathrm{occ}}=\frac{\rho_{H,\min}}{2(H+1)}.
\end{equation}
Moreover, the frozen episode map
\begin{equation}
A_{m,\mathrm M}^{\mathrm{fr}}(U):=U+\sum_{t=0}^{H-1}\alpha_{mH+t}\Gamma_{t,\mathrm M}(U)=(1-\bar\alpha_m)U+\bar\alpha_mG_{m,\mathrm M}^\alpha(U)
\end{equation}
fixes $U_{H,\mathrm M}^\star$ and is a $(1-\kappa_H^{\mathrm{occ}}\bar\alpha_m)$-contraction whenever $\bar\alpha_m\le1$.
\end{proposition}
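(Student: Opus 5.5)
The argument will follow the proof of Proposition~\ref{prop:fh-ctd-avg-contract}, whose structure never used anything CTD-specific beyond the weighted contraction. That contraction is now supplied by Proposition~\ref{prop:fh-mtd-contract-app}. There are three steps: the occupancy lower bound, the contraction of $G^\alpha_{m,\mathrm M}$, and the relaxation identity together with the fixed-point claim.

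\emph{Occupancy bound.} Monotonicity within the episode gives $\bar\alpha_m\le H\alpha_{mH}$. The regularity condition \eqref{eq:fh-step-regularity}, again with monotonicity, gives $\alpha_{mH+t}\ge\alpha_{mH+H-1}\ge\alpha_{mH}/2$ for every $t$. Hence
\begin{equation}
\sum_{t=0}^{H-1}\alpha_{mH+t}\rho_t(s)\ge\frac{\alpha_{mH}}{2}\sum_{t=0}^{H-1}\rho_t(s)=\frac{H\alpha_{mH}}{2}\bar\rho_H(s).
\end{equation}
Dividing by $\bar\alpha_m$ yields $\rho_m^\alpha(s)\ge\rho_{H,\min}/2$.

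\emph{Contraction of $G^\alpha_{m,\mathrm M}$.} Since $P_s^H$ acts blockwise, the $s$-stack of $G^\alpha_{m,\mathrm M}(U)-G^\alpha_{m,\mathrm M}(U')$ is
\begin{equation}
(1-\rho_m^\alpha(s))\bigl(U(s)-U'(s)\bigr)+\rho_m^\alpha(s)\bigl((\mathcal O_{H,\mathrm M}U)(s)-(\mathcal O_{H,\mathrm M}U')(s)\bigr).
\end{equation}
The weights are nonnegative because $\rho_m^\alpha$ is a probability vector, being a convex combination of the $\rho_t$. I apply the triangle inequality in $\lVert\cdot\rVert_{H,2}$ and bound the second term by Proposition~\ref{prop:fh-mtd-contract-app}, using $\lVert(\cdot)(s)\rVert_{H,2}\le\lVert\cdot\rVert_{H,2,\infty}$. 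This gives the factor $1-\rho_m^\alpha(s)(1-\lambda)$. That factor is decreasing in $\rho_m^\alpha(s)$, so by the lower bound it is at most $\bar\beta_H^{\mathrm{occ}}$. Taking the maximum over $s$ finishes this step, and $1-\lambda=1/(H+1)$ gives the stated value of $\kappa_H^{\mathrm{occ}}$. As the appendix stipulates, these operator estimates are applied only on the invariant affine domain $\mathcal D_{H,\mathrm M}$, where Proposition~\ref{prop:fh-mtd-contract-app} holds. Since $\rho_m^\alpha(s)\le 1$, the map $G^\alpha_{m,\mathrm M}$ is a convex combination of $U$ and an element of that domain blockwise, so it stays in $\mathcal D_{H,\mathrm M}$.

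\emph{Frozen map.} The identity for $A^{\mathrm{fr}}_{m,\mathrm M}$ follows by exchanging the finite sums:
\begin{equation}
\sum_t\alpha_{mH+t}\Gamma_{t,\mathrm M}(U)=\bar\alpha_m\sum_s\rho_m^\alpha(s)P_s^H\bigl((\mathcal O_{H,\mathrm M}U)(s)-U(s)\bigr)=\bar\alpha_m\bigl(G^\alpha_{m,\mathrm M}(U)-U\bigr).
\end{equation}
When $\bar\alpha_m\le1$, the triangle inequality gives the Lipschitz factor $(1-\bar\alpha_m)+\bar\alpha_m\bar\beta_H^{\mathrm{occ}}=1-\kappa_H^{\mathrm{occ}}\bar\alpha_m$. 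Finally, $U^\star_{H,\mathrm M}$ is a fixed point of $\mathcal O_{H,\mathrm M}$, so every $\Gamma_{t,\mathrm M}(U^\star_{H,\mathrm M})$ vanishes and $A^{\mathrm{fr}}_{m,\mathrm M}$ fixes $U^\star_{H,\mathrm M}$.

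No step is a real obstacle. The only point needing care is that the MTD domain is affine but unbounded. This is harmless because the argument uses only the contraction of Proposition~\ref{prop:fh-mtd-contract-app} and no boundedness.
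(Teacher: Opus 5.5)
Your proposal is correct and matches the paper's proof step for step. It uses the same occupancy lower bound via $\bar\alpha_m\le H\alpha_{mH}$ and $\alpha_{mH+t}\ge\alpha_{mH}/2$, the same blockwise factor $1-\rho_m^\alpha(s)(1-\lambda)$ from Proposition~\ref{prop:fh-mtd-contract-app}, and the same relaxation identity and vanishing-$\Gamma_{t,\mathrm M}(U^\star_{H,\mathrm M})$ argument for $A^{\mathrm{fr}}_{m,\mathrm M}$; your remark that $G^\alpha_{m,\mathrm M}$ preserves $\mathcal D_{H,\mathrm M}$ is a correct addition that the paper leaves implicit.
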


\begin{proof}
The same weighted-occupancy calculation as in Proposition~\ref{prop:fh-ctd-avg-contract} gives
\begin{equation}
\rho_m^\alpha(s)\ge\frac{\alpha_{mH+H-1}}{H\alpha_{mH}}\sum_{t=0}^{H-1}\rho_t(s)\ge\frac{\rho_{H,\min}}{2}.
\end{equation}
For each state block $s$, Proposition~\ref{prop:fh-mtd-contract-app} gives
\begin{equation}
\left\lVert\bigl(G_{m,\mathrm M}^\alpha(U)-G_{m,\mathrm M}^\alpha(U')\bigr)(s)\right\rVert_{H,2}\le\bigl(1-\rho_m^\alpha(s)(1-\lambda)\bigr)\lVert U-U'\rVert_{H,2,\infty}\le\bar\beta_H^{\mathrm{occ}}\lVert U-U'\rVert_{H,2,\infty}.
\end{equation}
Taking the maximum over $s$ proves the contraction. The identity for $A_{m,\mathrm M}^{\mathrm{fr}}$ follows from the definition of $\rho_m^\alpha$. Convexity of the norm gives its contraction factor, and $\Gamma_{t,\mathrm M}(U_{H,\mathrm M}^\star)=0$ for every phase gives the fixed-point claim.
\end{proof}

\subsection{Auxiliary phasewise bounds}

For episode $m$, write
\begin{equation}
U_{m,t}:=U_{mH+t},\qquad t=0,\dots,H.
\end{equation}

\begin{lemma}[Within-episode growth and movement]
\label{lem:fh-mtd-within}
Suppose $\bar\alpha_m\le1$. Define
\begin{equation}
\begin{gathered}
C_{H,\mathrm M}^{\mathrm{grow}}:=e^2\bigl(1+B_{H,\mathrm M}^{\mathrm{res}}\bigr),\\
C_{H,\mathrm M}^{\mathrm{move}}:=2C_{H,\mathrm M}^{\mathrm{grow}}+B_{H,\mathrm M}^{\mathrm{res}}.
\end{gathered}
\end{equation}
Then, pathwise,
\begin{equation}
\max_{0\le t\le H}\lVert U_{m,t}\rVert_{H,2,\infty}\le C_{H,\mathrm M}^{\mathrm{grow}}\bigl(1+\lVert U_{m,0}\rVert_{H,2,\infty}\bigr)
\end{equation}
and
\begin{equation}
\max_{0\le t\le H}\lVert U_{m,t}-U_{m,0}\rVert_{H,2,\infty}\le C_{H,\mathrm M}^{\mathrm{move}}\bar\alpha_m\bigl(1+\lVert U_{m,0}\rVert_{H,2,\infty}\bigr).
\end{equation}
\end{lemma}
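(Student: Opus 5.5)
The plan is a discrete Gr\"onwall argument along the episode, driven by the residual-level affine bound of Proposition~\ref{prop:fh-mtd-perturb}. Write $x_t:=\lVert U_{m,t}\rVert_{H,2,\infty}$ and $a_u:=\alpha_{mH+u}$. Starting from the recursion
\begin{equation}
U_{m,t+1}=U_{m,t}+a_t F_{H,\mathrm M}\bigl(U_{m,t};S_t^m,(R_t^m,S_{t+1}^m)\bigr),
\end{equation}
the triangle inequality and the bound $\lVert F_{H,\mathrm M}(U;s,(r,s'))\rVert_{H,2,\infty}\le 2\lVert U\rVert_{H,2,\infty}+B_{H,\mathrm M}^{\mathrm{res}}$ give, pathwise,
\begin{equation}
x_{t+1}\le (1+2a_t)x_t+a_tB_{H,\mathrm M}^{\mathrm{res}}.
\end{equation}
This holds for every realization, because the affine bound is stated for every admissible sample and every admissible iterate. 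The iterates stay in $\mathcal D_{H,\mathrm M}$ because $\bar\alpha_m\le1$ forces each $a_t\le1$, so Lemma~\ref{lem:invariant-domain} applies.

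For the growth bound I would set $y_t:=1+x_t$. The recursion then becomes
\begin{equation}
y_{t+1}\le (1+2a_t)y_t+a_t(B_{H,\mathrm M}^{\mathrm{res}}-1)\le \bigl(1+a_t(2+B_{H,\mathrm M}^{\mathrm{res}})\bigr)y_t,
\end{equation}
using $y_t\ge1$. The last step is loose, so I would keep the cleaner form: bound $y_{t+1}\le(1+2a_t)y_t+a_tB^{\mathrm{res}}y_t$ and unroll it, or handle the additive term separately. Unrolling the first part gives $\prod_u(1+2a_u)\le\exp(2\bar\alpha_m)\le e^2$. The affine term contributes at most $e^2B^{\mathrm{res}}_{H,\mathrm M}\sum_u a_u\le e^2B^{\mathrm{res}}_{H,\mathrm M}$, again using $\bar\alpha_m\le1$. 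Together these give
\begin{equation}
x_t\le e^2x_0+e^2B^{\mathrm{res}}_{H,\mathrm M}\le e^2(1+B^{\mathrm{res}}_{H,\mathrm M})(1+x_0)=C^{\mathrm{grow}}_{H,\mathrm M}(1+x_0)
\end{equation}
for every $t\le H$. That is the first claim.

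For the movement bound I would telescope exactly as in Lemma~\ref{lem:fh-ctd-within}:
\begin{equation}
\lVert U_{m,t}-U_{m,0}\rVert_{H,2,\infty}\le\sum_{u<t}a_u\bigl(2x_u+B^{\mathrm{res}}_{H,\mathrm M}\bigr).
\end{equation}
Inserting the growth bound for $x_u$ gives
\begin{equation}
\lVert U_{m,t}-U_{m,0}\rVert_{H,2,\infty}\le\bar\alpha_m\bigl(2C^{\mathrm{grow}}_{H,\mathrm M}(1+x_0)+B^{\mathrm{res}}_{H,\mathrm M}\bigr).
\end{equation}
Since $1\le 1+x_0$, the right-hand side is at most $C^{\mathrm{move}}_{H,\mathrm M}\bar\alpha_m(1+x_0)$.

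The only delicate point is the bookkeeping of the Gr\"onwall constant, so that it matches the stated $e^2(1+B^{\mathrm{res}}_{H,\mathrm M})$ rather than some larger exponential. That is why I would use $\prod_u(1+2a_u)\le e^{2\bar\alpha_m}\le e^2$ together with the discrete variation-of-constants form
\begin{equation}
x_t\le\prod_{u<t}(1+2a_u)\,x_0+\sum_{u<t}a_uB^{\mathrm{res}}_{H,\mathrm M}\prod_{v=u+1}^{t-1}(1+2a_v),
\end{equation}
rather than absorbing $B^{\mathrm{res}}_{H,\mathrm M}$ into the growth rate. Everything else is routine.
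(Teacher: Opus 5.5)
Your proposal is correct and follows essentially the paper's route: the same pathwise one-step inequality $x_{t+1}\le(1+2a_t)x_t+a_tB_{H,\mathrm M}^{\mathrm{res}}$ from Proposition~\ref{prop:fh-mtd-perturb}, a discrete Gr\"onwall bound using $\prod_u(1+2a_u)\le e^{2\bar\alpha_m}\le e^2$, and then telescoping the residuals to get the movement bound. The only cosmetic difference is how the additive term is handled. The paper makes the recursion homogeneous by adding $B_{H,\mathrm M}^{\mathrm{res}}/2$ to both sides, which gives $x_t+B_{H,\mathrm M}^{\mathrm{res}}/2\le e^{2\bar\alpha_m}(x_0+B_{H,\mathrm M}^{\mathrm{res}}/2)$. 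You instead use the variation-of-constants sum, and both land inside $C_{H,\mathrm M}^{\mathrm{grow}}(1+x_0)$.
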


\begin{proof}
Proposition~\ref{prop:fh-mtd-perturb} gives
\begin{equation}
\lVert U_{m,t+1}\rVert_{H,2,\infty}\le(1+2\alpha_{mH+t})\lVert U_{m,t}\rVert_{H,2,\infty}+\alpha_{mH+t}B_{H,\mathrm M}^{\mathrm{res}}.
\end{equation}
Adding $B_{H,\mathrm M}^{\mathrm{res}}/2$ to both sides and iterating gives
\begin{equation}
\lVert U_{m,t}\rVert_{H,2,\infty}+\frac{B_{H,\mathrm M}^{\mathrm{res}}}{2}\le\exp\left(2\sum_{u=0}^{t-1}\alpha_{mH+u}\right)\left(\lVert U_{m,0}\rVert_{H,2,\infty}+\frac{B_{H,\mathrm M}^{\mathrm{res}}}{2}\right).
\end{equation}
The first claim follows from $\bar\alpha_m\le1$. Summing the residuals in the online recursion and applying that claim gives
\begin{equation}
\begin{gathered}
\lVert U_{m,t}-U_{m,0}\rVert_{H,2,\infty}\le\sum_{u=0}^{t-1}\alpha_{mH+u}\bigl(2\lVert U_{m,u}\rVert_{H,2,\infty}+B_{H,\mathrm M}^{\mathrm{res}}\bigr)\\
\le C_{H,\mathrm M}^{\mathrm{move}}\bar\alpha_m\bigl(1+\lVert U_{m,0}\rVert_{H,2,\infty}\bigr).
\end{gathered}
\end{equation}
\end{proof}

\begin{lemma}[Frozen residual decomposition]
\label{lem:fh-mtd-bias}
For each phase $t\in\{0,\dots,H-1\}$, define
\begin{equation}
\zeta_{m,t}^{\mathrm M}:=F_{H,\mathrm M}(U_{m,0};S_t^m,(R_t^m,S_{t+1}^m))-\Gamma_{t,\mathrm M}(U_{m,0})
\end{equation}
and
\begin{equation}
\chi_{m,t}^{\mathrm M}:=F_{H,\mathrm M}(U_{m,t};S_t^m,(R_t^m,S_{t+1}^m))-F_{H,\mathrm M}(U_{m,0};S_t^m,(R_t^m,S_{t+1}^m)).
\end{equation}
Then
\begin{equation}
\mathbb E\left[\zeta_{m,t}^{\mathrm M}\mid U_{m,0}\right]=0,\qquad
\lVert\zeta_{m,t}^{\mathrm M}\rVert_{H,2,\infty}\le\bigl(4\lVert U_{m,0}\rVert_{H,2,\infty}+2B_{H,\mathrm M}^{\mathrm{res}}\bigr),
\end{equation}
and, whenever $\bar\alpha_m\le1$,
\begin{equation}
\lVert\chi_{m,t}^{\mathrm M}\rVert_{H,2,\infty}\le2C_{H,\mathrm M}^{\mathrm{move}}\bar\alpha_m\bigl(1+\lVert U_{m,0}\rVert_{H,2,\infty}\bigr).
\end{equation}
Define
\begin{equation}
\begin{gathered}
C_{H,\mathrm M}^{\mathrm{noise}}:=4+2B_{H,\mathrm M}^{\mathrm{res}},\qquad
C_{H,\mathrm M}^{\mathrm{end}}:=C_{H,\mathrm M}^{\mathrm{noise}}+2C_{H,\mathrm M}^{\mathrm{move}},\\
D_m^{\mathrm M}:=U_{m,H}-A_{m,\mathrm M}^{\mathrm{fr}}(U_{m,0}).
\end{gathered}
\end{equation}
Then
\begin{equation}
D_m^{\mathrm M}=\sum_{t=0}^{H-1}\alpha_{mH+t}\bigl(\zeta_{m,t}^{\mathrm M}+\chi_{m,t}^{\mathrm M}\bigr).
\end{equation}
Consequently,
\begin{equation}
\left\lVert\mathbb E[D_m^{\mathrm M}\mid U_{m,0}]\right\rVert_{H,2,\infty}\le2C_{H,\mathrm M}^{\mathrm{move}}\bar\alpha_m^2\bigl(1+\lVert U_{m,0}\rVert_{H,2,\infty}\bigr)
\end{equation}
and
\begin{equation}
\lVert D_m^{\mathrm M}\rVert_{H,2,\infty}\le C_{H,\mathrm M}^{\mathrm{end}}\bar\alpha_m\bigl(1+\lVert U_{m,0}\rVert_{H,2,\infty}\bigr)
\end{equation}
pathwise.
\end{lemma}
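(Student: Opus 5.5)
I would mirror the proof of Lemma~\ref{lem:fh-ctd-bias}, replacing every uniform bound by the affine bounds of Proposition~\ref{prop:fh-mtd-perturb} and Lemma~\ref{lem:fh-mtd-within}.

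\emph{Mean-zero claim.} I would argue exactly as in the CTD case. Episodes are i.i.d.\ across resets, and they are generated independently of the boundary iterate $U_{m,0}$, which depends only on earlier episodes. Conditionally on $U_{m,0}$, the phase-$t$ transition $(S_t^m,R_t^m,S_{t+1}^m)$ has query marginal $\rho_t$ and the correct one-step conditional law. Proposition~\ref{prop:mtd-unbiased}, applied horizonwise through the affine projection of Proposition~\ref{prop:mtd-proj-affine}, then gives
\[
\mathbb E[F_{H,\mathrm M}(U_{m,0};S_t^m,\cdot)\mid U_{m,0},S_t^m=s]=P_s^H((\mathcal O_{H,\mathrm M}U_{m,0})(s)-U_{m,0}(s)).
\]
Averaging over $s\sim\rho_t$ yields $\Gamma_{t,\mathrm M}(U_{m,0})$, so $\mathbb E[\zeta_{m,t}^{\mathrm M}\mid U_{m,0}]=0$.

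\emph{Pathwise bound on $\zeta$.} Proposition~\ref{prop:fh-mtd-perturb} gives $\lVert F_{H,\mathrm M}(U_{m,0};\cdot)\rVert_{H,2,\infty}\le 2\lVert U_{m,0}\rVert_{H,2,\infty}+B^{\mathrm{res}}_{H,\mathrm M}$. The term $\Gamma_{t,\mathrm M}(U_{m,0})$ is a $\rho_t$-average of blockwise conditional expectations of such residuals; the blocks are disjoint, so its norm is at most the maximal expected residual, which satisfies the same bound. The triangle inequality then gives $4\lVert U_{m,0}\rVert_{H,2,\infty}+2B^{\mathrm{res}}_{H,\mathrm M}$.

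\emph{Bound on $\chi$.} Proposition~\ref{prop:fh-mtd-lipschitz} gives $\lVert\chi_{m,t}^{\mathrm M}\rVert\le 2\lVert U_{m,t}-U_{m,0}\rVert$. This requires $U_{m,t}$ to lie in the invariant domain, which holds because $\alpha_k\le\bar\alpha_m\le1$. Lemma~\ref{lem:fh-mtd-within} then supplies the factor $C^{\mathrm{move}}_{H,\mathrm M}\bar\alpha_m(1+\lVert U_{m,0}\rVert)$.

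\emph{Decomposition identity.} Summing the recursion over the episode gives
\[
U_{m,H}=U_{m,0}+\sum_t\alpha_{mH+t}F_{H,\mathrm M}(U_{m,t};\cdot).
\]
Writing $F(U_{m,t})=\Gamma_t(U_{m,0})+\zeta_{m,t}^{\mathrm M}+\chi_{m,t}^{\mathrm M}$ and subtracting $A^{\mathrm{fr}}_{m,\mathrm M}(U_{m,0})$ leaves exactly $\sum_t\alpha_{mH+t}(\zeta_{m,t}^{\mathrm M}+\chi_{m,t}^{\mathrm M})$.

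\emph{Consequences.} For the conditional mean, the $\zeta$ terms vanish. I would then bound $\lVert\mathbb E[\cdot]\rVert\le\mathbb E\lVert\cdot\rVert$, which is valid by Jensen's inequality since the norm is convex. Using the pathwise $\chi$ bound and $\sum_t\alpha_{mH+t}=\bar\alpha_m$ gives $2C^{\mathrm{move}}_{H,\mathrm M}\bar\alpha_m^2(1+\lVert U_{m,0}\rVert)$.

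For the pathwise bound, the $\zeta$ part contributes
\[
\bar\alpha_m(4\lVert U_{m,0}\rVert+2B^{\mathrm{res}}_{H,\mathrm M})\le C^{\mathrm{noise}}_{H,\mathrm M}\bar\alpha_m(1+\lVert U_{m,0}\rVert),
\]
where I use $4x+2B\le(4+2B)(1+x)$. The $\chi$ part contributes $2C^{\mathrm{move}}_{H,\mathrm M}\bar\alpha_m^2(\cdots)$, which is at most $2C^{\mathrm{move}}_{H,\mathrm M}\bar\alpha_m(\cdots)$ since $\bar\alpha_m\le1$. Adding the two gives $C^{\mathrm{end}}_{H,\mathrm M}$.

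\emph{Main obstacle.} The only delicate step is the mean-zero claim. It needs the within-episode sample to be independent of $U_{m,0}$, and it needs the affine (rather than linear) projection to commute with expectation. Both facts are already established, so the rest is bookkeeping.
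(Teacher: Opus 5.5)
Your proposal is correct and follows essentially the same route as the paper's proof. The shared steps are: independence of episode $m$ from $U_{m,0}$ for the mean-zero claim, averaging the affine residual bound of Proposition~\ref{prop:fh-mtd-perturb} to control $\Gamma_{t,\mathrm M}(U_{m,0})$, Proposition~\ref{prop:fh-mtd-lipschitz} with Lemma~\ref{lem:fh-mtd-within} for $\chi_{m,t}^{\mathrm M}$, and summing the episode recursion against $A_{m,\mathrm M}^{\mathrm{fr}}(U_{m,0})$. One side remark needs correcting: $\bar\alpha_m\le1$ says nothing about step sizes before episode $m$, so it does not give membership of $U_{m,t}$ in $\mathcal D_{H,\mathrm M}$; that membership holds anyway because $\mathcal D_{H,\mathrm M}$ is affine, so every update is an affine combination that stays in it whatever the step size.
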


\begin{proof}
The episode is independent of $U_{m,0}$, and its phase-$t$ sample has the marginal law defining $\Gamma_{t,\mathrm M}$. This proves the conditional mean identity. Proposition~\ref{prop:fh-mtd-perturb} gives
\begin{equation}
\lVert F_{H,\mathrm M}(U_{m,0};S_t^m,(R_t^m,S_{t+1}^m))\rVert_{H,2,\infty}\le2\lVert U_{m,0}\rVert_{H,2,\infty}+B_{H,\mathrm M}^{\mathrm{res}}.
\end{equation}
The same bound holds for $\Gamma_{t,\mathrm M}(U_{m,0})$ by averaging, which proves the bound on $\zeta_{m,t}^{\mathrm M}$. Proposition~\ref{prop:fh-mtd-lipschitz} and Lemma~\ref{lem:fh-mtd-within} give the bound on $\chi_{m,t}^{\mathrm M}$.

Summing the online recursion and subtracting $A_{m,\mathrm M}^{\mathrm{fr}}(U_{m,0})$ gives the identity for $D_m^{\mathrm M}$. The frozen terms have conditional mean zero, so summing the bound on $\chi_{m,t}^{\mathrm M}$ proves the conditional-mean estimate. Finally,
\begin{equation}
\lVert\zeta_{m,t}^{\mathrm M}\rVert_{H,2,\infty}\le C_{H,\mathrm M}^{\mathrm{noise}}\bigl(1+\lVert U_{m,0}\rVert_{H,2,\infty}\bigr),
\end{equation}
and $\bar\alpha_m\le1$ give the pathwise endpoint estimate.
\end{proof}

\subsection{Episode-level finite-iteration drift}
\label{sec:fh-mtd-drift-app}

Let
\begin{equation}
W_{m,0}:=U_{m,0}-U_{H,\mathrm M}^\star,\qquad \bar\alpha_m^{(2)}:=\sum_{t=0}^{H-1}\alpha_{mH+t}^2,
\end{equation}
and set
\begin{equation}
q_H:=\lvert\mathcal S_H\rvert^{2/p_H^\star},\qquad \kappa_H:=\kappa_H^{\mathrm{occ}}=\frac{\rho_{H,\min}}{2(H+1)}.
\end{equation}
Choose
\begin{equation}
\vartheta_{H,\mathrm M}:=\frac{\kappa_H}{q_H}
\end{equation}
and define
\begin{equation}
M_{H,\mathrm M}(W):=\inf_{Z\in V_{H,\mathrm M}}\left\{\frac{1}{2}\lVert Z\rVert_{H,2,\infty}^2+\frac{1}{2\vartheta_{H,\mathrm M}q_H}\lVert W-Z\rVert_{H,2,p_H^\star}^2\right\}.
\end{equation}
The Moreau-envelope comparison gives
\begin{equation}
(1+\vartheta_{H,\mathrm M})M_{H,\mathrm M}(W)\le\frac{1}{2}\lVert W\rVert_{H,2,\infty}^2\le(1+\vartheta_{H,\mathrm M}q_H)M_{H,\mathrm M}(W).
\end{equation}
Define
\begin{equation}
\begin{gathered}
r_{H,\mathrm M}^{\mathrm{fh}}:=\frac{1+\vartheta_{H,\mathrm M}q_H}{1+\vartheta_{H,\mathrm M}},\qquad
L_{H,\mathrm M}^{\mathrm{fh}}:=\frac{p_H^\star-1}{\vartheta_{H,\mathrm M}},\\
\omega_{H,\mathrm M}^{\mathrm{fh}}:=1-(1-\kappa_H)\sqrt{r_{H,\mathrm M}^{\mathrm{fh}}},\qquad
d_{H,\mathrm M}^{\mathrm{fh}}:=8L_{H,\mathrm M}^{\mathrm{fh}}q_H(1+\vartheta_{H,\mathrm M}q_H).
\end{gathered}
\end{equation}
As in Proposition~\ref{prop:explicit-discounted-constants},
\begin{equation}
r_{H,\mathrm M}^{\mathrm{fh}}\le2,\qquad
\omega_{H,\mathrm M}^{\mathrm{fh}}\ge\frac{\kappa_H}{2},\qquad
d_{H,\mathrm M}^{\mathrm{fh}}\le\frac{16(p_H^\star-1)q_H^2}{\kappa_H}.
\end{equation}
The endpoint error coefficient and a convenient polynomial upper bound are
\begin{equation}
\begin{gathered}
C_{H,\mathrm M}^{\mathrm{pot}}:=L_{H,\mathrm M}^{\mathrm{fh}}q_H\left(\frac{8(C_{H,\mathrm M}^{\mathrm{move}})^2}{\omega_{H,\mathrm M}^{\mathrm{fh}}}+\frac{(C_{H,\mathrm M}^{\mathrm{end}})^2}{2}\right),\\
\overline C_{H,\mathrm M}^{\mathrm{pot}}:=(p_H^\star-1)q_H^2\left(\frac{16(C_{H,\mathrm M}^{\mathrm{move}})^2}{\kappa_H^2}+\frac{(C_{H,\mathrm M}^{\mathrm{end}})^2}{2\kappa_H}\right).
\end{gathered}
\end{equation}
Then $C_{H,\mathrm M}^{\mathrm{pot}}\le\overline C_{H,\mathrm M}^{\mathrm{pot}}$. Define
\begin{equation}
\bar\alpha_{H,\mathrm M}:=\min\left\{\frac{1}{H},\frac{\omega_{H,\mathrm M}^{\mathrm{fh}}}{Hd_{H,\mathrm M}^{\mathrm{fh}}},\frac{\omega_{H,\mathrm M}^{\mathrm{fh}}}{32H C_{H,\mathrm M}^{\mathrm{pot}}(1+\vartheta_{H,\mathrm M}q_H)}\right\},
\end{equation}
and
\begin{equation}
c_{H,\mathrm M,1}^{\mathrm{fh}}:=\frac{\omega_{H,\mathrm M}^{\mathrm{fh}}}{4},\qquad
c_{H,\mathrm M,2}^{\mathrm{fh}}:=2H C_{H,\mathrm M}^{\mathrm{pot}}\left(1+2\lVert U_{H,\mathrm M}^\star\rVert_{H,2,\infty}^2\right).
\end{equation}

\begin{proposition}[Episodewise MTD potential drift]
\label{prop:fh-mtd-drift}
Suppose that $(\alpha_k)_{k\ge0}$ is nonincreasing, $\alpha_0\le\bar\alpha_{H,\mathrm M}$, and the within-episode regularity condition \eqref{eq:fh-step-regularity} holds. Then
\begin{equation}
\mathbb E\left[M_{H,\mathrm M}(W_{m+1,0})\mid U_{m,0}\right]\le\left(1-c_{H,\mathrm M,1}^{\mathrm{fh}}\bar\alpha_m\right)M_{H,\mathrm M}(W_{m,0})+c_{H,\mathrm M,2}^{\mathrm{fh}}\bar\alpha_m^{(2)}.
\end{equation}
Moreover,
\begin{equation}
c_{H,\mathrm M,1}^{\mathrm{fh}}\ge\frac{\kappa_H}{8}
\end{equation}
and
\begin{equation}
\bar\alpha_{H,\mathrm M}\ge\min\left\{\frac{1}{H},\frac{\kappa_H^2}{32H(p_H^\star-1)q_H^2},\frac{\kappa_H}{128H\overline C_{H,\mathrm M}^{\mathrm{pot}}}\right\}.
\end{equation}
\end{proposition}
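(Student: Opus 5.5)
I will mirror the proof of Proposition~\ref{prop:fh-ctd-drift}, replacing the uniform CTD bounds by the affine MTD bounds of Lemmas~\ref{lem:fh-mtd-within} and~\ref{lem:fh-mtd-bias}. Set $\widetilde W_{m,H}:=A^{\mathrm{fr}}_{m,\mathrm M}(U_{m,0})-U^\star_{H,\mathrm M}$, so that $W_{m+1,0}=\widetilde W_{m,H}+D_m^{\mathrm M}$. Since $\alpha_0\le 1/H$ gives $\bar\alpha_m\le1$, Proposition~\ref{prop:fh-mtd-avg-contract} shows the frozen map is an $\bar\alpha_m$-relaxation of the $(1-\kappa_H)$-contraction $G^\alpha_{m,\mathrm M}$ fixing $U^\star_{H,\mathrm M}$. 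The negative-drift inequality $\langle\nabla M_{H,\mathrm M}(W),G^\alpha_{m,\mathrm M}(U^\star+W)-U^\star-W\rangle\le-2\omega^{\mathrm{fh}}_{H,\mathrm M}M_{H,\mathrm M}(W)$ holds as before. Combined with $L^{\mathrm{fh}}_{H,\mathrm M}$-smoothness and $\bar\alpha_m\le H\alpha_0\le\omega^{\mathrm{fh}}_{H,\mathrm M}/d^{\mathrm{fh}}_{H,\mathrm M}$, it yields
\begin{equation*}
M_{H,\mathrm M}(\widetilde W_{m,H})\le(1-\omega^{\mathrm{fh}}_{H,\mathrm M}\bar\alpha_m)M_{H,\mathrm M}(W_{m,0}).
\end{equation*}

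Next I expand $M_{H,\mathrm M}$ around $\widetilde W_{m,H}$. Write $D=D_m^{\mathrm M}$ and $\Phi:=1+\lVert U_{m,0}\rVert_{H,2,\infty}$. Smoothness gives
\begin{equation*}
\mathbb E[M(\widetilde W+D)\mid U_{m,0}]\le M(\widetilde W)+\langle\nabla M(\widetilde W),\mathbb E[D\mid U_{m,0}]\rangle+\tfrac{L^{\mathrm{fh}}_{H,\mathrm M}}{2}\mathbb E\bigl[\lVert D\rVert^2_{H,2,p^\star_H}\mid U_{m,0}\bigr].
\end{equation*}
Here $\widetilde W$ is $U_{m,0}$-measurable. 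For the linear term, I use the dual-norm bound, $\lVert\cdot\rVert^2_{H,2,p}\le q_H\lVert\cdot\rVert^2_{H,2,\infty}$, the self-bound $\lVert\nabla M\rVert_*^2\le2L^{\mathrm{fh}}_{H,\mathrm M}M$, and the bias estimate $2C^{\mathrm{move}}_{H,\mathrm M}\bar\alpha_m^2\Phi$. Young's inequality with parameter proportional to $\omega^{\mathrm{fh}}_{H,\mathrm M}\bar\alpha_m$ then splits it into $\tfrac{\omega^{\mathrm{fh}}\bar\alpha_m}{4}M(\widetilde W)$ plus $\tfrac{8L q_H(C^{\mathrm{move}})^2}{\omega^{\mathrm{fh}}}\bar\alpha_m^3\Phi^2$. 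The quadratic term is at most $\tfrac{L q_H}{2}(C^{\mathrm{end}})^2\bar\alpha_m^2\Phi^2$ by the pathwise endpoint bound. Using $\bar\alpha_m\le1$, both remainders sum to at most $C^{\mathrm{pot}}_{H,\mathrm M}\bar\alpha_m^2\Phi^2$, which matches the definition of $C^{\mathrm{pot}}_{H,\mathrm M}$.

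The main obstacle is that, unlike CTD, this remainder grows with $\lVert U_{m,0}\rVert$ and must be absorbed into the drift. I will use
\begin{equation*}
\Phi^2\le 2+2\lVert U_{m,0}\rVert^2\le 2+4\lVert U^\star\rVert^2+4\lVert W_{m,0}\rVert^2,
\end{equation*}
so that $\Phi^2\le 2(1+2\lVert U^\star\rVert^2_{H,2,\infty})+8(1+\vartheta_{H,\mathrm M}q_H)M(W_{m,0})$ by the envelope comparison. This gives:
\begin{itemize}
\item a constant part $2C^{\mathrm{pot}}(1+2\lVert U^\star\rVert^2)\bar\alpha_m^2\le c^{\mathrm{fh}}_{H,\mathrm M,2}\bar\alpha^{(2)}_m$, using $\bar\alpha_m^2\le H\bar\alpha^{(2)}_m$;
\item a state-dependent part $8C^{\mathrm{pot}}(1+\vartheta q_H)\bar\alpha_m^2M(W_{m,0})$, which the third entry of $\bar\alpha_{H,\mathrm M}$ (via $\bar\alpha_m\le H\alpha_0$) makes at most $\tfrac{\omega^{\mathrm{fh}}}{4}\bar\alpha_m M(W_{m,0})$.
\end{itemize}
Collecting terms, $(1+x/4)(1-x)+x/4\le 1-x/4$ with $x=\omega^{\mathrm{fh}}_{H,\mathrm M}\bar\alpha_m\in[0,1]$ gives the contraction factor $1-c^{\mathrm{fh}}_{H,\mathrm M,1}\bar\alpha_m$. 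I will check that the Young constant is compatible with the factor $8$ in $C^{\mathrm{pot}}$, possibly trading the $\bar\alpha_m^3$ term down to $\bar\alpha_m^2$.

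Finally, $c^{\mathrm{fh}}_{H,\mathrm M,1}\ge\kappa_H/8$ follows from $\omega^{\mathrm{fh}}_{H,\mathrm M}\ge\kappa_H/2$. For the lower bound on $\bar\alpha_{H,\mathrm M}$, I substitute $d^{\mathrm{fh}}\le16(p^\star_H-1)q_H^2/\kappa_H$, $\omega^{\mathrm{fh}}\ge\kappa_H/2$, $C^{\mathrm{pot}}\le\overline C^{\mathrm{pot}}$ and $1+\vartheta_{H,\mathrm M}q_H=1+\kappa_H\le2$ into the three entries of the minimum.
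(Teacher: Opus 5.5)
Your proposal is correct and follows the paper's proof essentially step for step. It uses the same frozen-map bound $M_{H,\mathrm M}(\widetilde W_{m,H})\le(1-\omega^{\mathrm{fh}}_{H,\mathrm M}\bar\alpha_m)M_{H,\mathrm M}(W_{m,0})$, and the same smoothness, self-bounding and Young expansion around $\widetilde W_{m,H}$; your bookkeeping, together with $\bar\alpha_m^3\le\bar\alpha_m^2$, reproduces $C^{\mathrm{pot}}_{H,\mathrm M}$ exactly. It also uses the same envelope-comparison split of $\Phi^2$, with the third entry of $\bar\alpha_{H,\mathrm M}$ absorbing the state-dependent part. The only cosmetic difference is that you absorb that part before invoking $(1+x/4)(1-x)\le 1-x/2$ rather than after, which yields the same factor $1-\omega^{\mathrm{fh}}_{H,\mathrm M}\bar\alpha_m/4$.
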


\begin{proof}
Set
\begin{equation}
\widetilde U_{m,H}:=A_{m,\mathrm M}^{\mathrm{fr}}(U_{m,0}),\qquad
\widetilde W_{m,H}:=\widetilde U_{m,H}-U_{H,\mathrm M}^\star.
\end{equation}
Proposition~\ref{prop:fh-mtd-avg-contract} writes the frozen map as a $\bar\alpha_m$-relaxation of a $(1-\kappa_H)$-contraction. The Moreau-envelope smoothness inequality gives
\begin{equation}
M_{H,\mathrm M}(\widetilde W_{m,H})\le\left(1-2\omega_{H,\mathrm M}^{\mathrm{fh}}\bar\alpha_m+d_{H,\mathrm M}^{\mathrm{fh}}\bar\alpha_m^2\right)M_{H,\mathrm M}(W_{m,0}).
\end{equation}
The linear term follows from the dual-norm bound
\begin{equation}
\left\langle\nabla M_{H,\mathrm M}(W),G_{m,\mathrm M}^\alpha(U_{H,\mathrm M}^\star+W)-U_{H,\mathrm M}^\star-W\right\rangle\le-2\omega_{H,\mathrm M}^{\mathrm{fh}}M_{H,\mathrm M}(W),
\end{equation}
and the quadratic smoothness remainder is bounded by $d_{H,\mathrm M}^{\mathrm{fh}}\bar\alpha_m^2M_{H,\mathrm M}(W)$. Since $\bar\alpha_m\le H\alpha_0\le\omega_{H,\mathrm M}^{\mathrm{fh}}/d_{H,\mathrm M}^{\mathrm{fh}}$,
\begin{equation}
M_{H,\mathrm M}(\widetilde W_{m,H})\le\left(1-\omega_{H,\mathrm M}^{\mathrm{fh}}\bar\alpha_m\right)M_{H,\mathrm M}(W_{m,0}).
\end{equation}

Lemma~\ref{lem:fh-mtd-bias} gives $W_{m+1,0}=\widetilde W_{m,H}+D_m^{\mathrm M}$ together with
\begin{equation}
\left\lVert\mathbb E[D_m^{\mathrm M}\mid U_{m,0}]\right\rVert_{H,2,\infty}\le2C_{H,\mathrm M}^{\mathrm{move}}\bar\alpha_m^2Y_m,\qquad
\mathbb E\left[\lVert D_m^{\mathrm M}\rVert_{H,2,\infty}^2\mid U_{m,0}\right]\le(C_{H,\mathrm M}^{\mathrm{end}})^2\bar\alpha_m^2Y_m^2,
\end{equation}
where $Y_m:=1+\lVert U_{m,0}\rVert_{H,2,\infty}$. Smoothness, the self-bounding gradient estimate, Young's inequality with parameter $\omega_{H,\mathrm M}^{\mathrm{fh}}\bar\alpha_m/(4L_{H,\mathrm M}^{\mathrm{fh}})$, and $\lVert Z\rVert_{H,2,p_H^\star}^2\le q_H\lVert Z\rVert_{H,2,\infty}^2$ give
\begin{equation}
\mathbb E\left[M_{H,\mathrm M}(W_{m+1,0})\mid U_{m,0}\right]\le\left(1+\frac{\omega_{H,\mathrm M}^{\mathrm{fh}}\bar\alpha_m}{4}\right)M_{H,\mathrm M}(\widetilde W_{m,H})+C_{H,\mathrm M}^{\mathrm{pot}}\bar\alpha_m^2Y_m^2.
\end{equation}
Because $(1+x/4)(1-x)\le1-x/2$ for $x\in[0,1]$,
\begin{equation}
\mathbb E\left[M_{H,\mathrm M}(W_{m+1,0})\mid U_{m,0}\right]\le\left(1-\frac{\omega_{H,\mathrm M}^{\mathrm{fh}}}{2}\bar\alpha_m\right)M_{H,\mathrm M}(W_{m,0})+C_{H,\mathrm M}^{\mathrm{pot}}\bar\alpha_m^2Y_m^2.
\end{equation}
The envelope comparison implies
\begin{equation}
Y_m^2\le8(1+\vartheta_{H,\mathrm M}q_H)M_{H,\mathrm M}(W_{m,0})+2\left(1+2\lVert U_{H,\mathrm M}^\star\rVert_{H,2,\infty}^2\right).
\end{equation}
Using $\bar\alpha_m^2\le H\bar\alpha_m^{(2)}$ and $\bar\alpha_m^{(2)}\le\alpha_0\bar\alpha_m$, the last condition in the definition of $\bar\alpha_{H,\mathrm M}$ absorbs the state-dependent term into one half of the negative drift. This proves the stated recursion. The explicit lower bounds follow from $\omega_{H,\mathrm M}^{\mathrm{fh}}\ge\kappa_H/2$, $d_{H,\mathrm M}^{\mathrm{fh}}\le16(p_H^\star-1)q_H^2/\kappa_H$, $1+\vartheta_{H,\mathrm M}q_H=1+\kappa_H\le2$, and $C_{H,\mathrm M}^{\mathrm{pot}}\le\overline C_{H,\mathrm M}^{\mathrm{pot}}$.
\end{proof}

\begin{proposition}[Fixed-horizon MTD finite-iteration bound]
\label{prop:fh-mtd-abstract}
Define
\begin{equation}
\begin{gathered}
a_{H,\mathrm M,1}^{\mathrm{fh}}:=e^2r_{H,\mathrm M}^{\mathrm{fh}},\qquad
a_{H,\mathrm M,2}^{\mathrm{fh}}:=c_{H,\mathrm M,1}^{\mathrm{fh}},\qquad
a_{H,\mathrm M,3}^{\mathrm{fh}}:=\frac{c_{H,\mathrm M,1}^{\mathrm{fh}}}{\bar\alpha_{H,\mathrm M}},\\
a_{H,\mathrm M,4}^{\mathrm{fh}}:=2e^2(1+\vartheta_{H,\mathrm M}q_H)c_{H,\mathrm M,2}^{\mathrm{fh}}.
\end{gathered}
\end{equation}
These constants obey
\begin{equation}
\begin{gathered}
a_{H,\mathrm M,1}^{\mathrm{fh}}\le2e^2,\qquad
a_{H,\mathrm M,2}^{\mathrm{fh}}\ge\frac{\kappa_H}{8},\\
\bar\alpha_{H,\mathrm M}\ge\min\left\{\frac{1}{H},\frac{\kappa_H^2}{32H(p_H^\star-1)q_H^2},\frac{\kappa_H}{128H\overline C_{H,\mathrm M}^{\mathrm{pot}}}\right\},\\
\frac{a_{H,\mathrm M,4}^{\mathrm{fh}}}{a_{H,\mathrm M,2}^{\mathrm{fh}}}\le\frac{64e^2H\overline C_{H,\mathrm M}^{\mathrm{pot}}}{\kappa_H}\left(1+2\lVert U_{H,\mathrm M}^\star\rVert_{H,2,\infty}^2\right).
\end{gathered}
\end{equation}
Since $q_H\le e^2$ and $\kappa_H=\rho_{H,\min}/(2(H+1))$, the displayed dependence is polynomial in $H$, $1/\rho_{H,\min}$, $1+\log(H\lvert\X\rvert)$, $B_{H,\mathrm M}^{\mathrm{res}}$, and $\lVert U_{H,\mathrm M}^\star\rVert_{H,2,\infty}$. If $(\alpha_k)_{k\ge0}$ is nonincreasing, \eqref{eq:fh-step-regularity} holds, and
\begin{equation}
\alpha_0\le\frac{a_{H,\mathrm M,2}^{\mathrm{fh}}}{a_{H,\mathrm M,3}^{\mathrm{fh}}},
\end{equation}
then, for all episodes $m\ge0$,
\begin{equation}
\begin{gathered}
\mathbb E\left[\ell_{H,\mathrm M,\infty}^{\mathrm{unw}}\bigl(\eta_{mH},\eta_{H,\mathrm M}^\star\bigr)^2\right]\le
a_{H,\mathrm M,1}^{\mathrm{fh}}\ell_{H,\mathrm M,\infty}^{\mathrm{unw}}\bigl(\eta_0,\eta_{H,\mathrm M}^\star\bigr)^2
\prod_{j=0}^{m-1}\bigl(1-a_{H,\mathrm M,2}^{\mathrm{fh}}\bar\alpha_j\bigr)\\
+a_{H,\mathrm M,4}^{\mathrm{fh}}\sum_{i=0}^{m-1}\bar\alpha_i^{(2)}
\prod_{j=i+1}^{m-1}\bigl(1-a_{H,\mathrm M,2}^{\mathrm{fh}}\bar\alpha_j\bigr).
\end{gathered}
\end{equation}
\end{proposition}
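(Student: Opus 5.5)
The argument mirrors the proof of Proposition~\ref{prop:fh-ctd-abstract}, with the episodewise MTD drift of Proposition~\ref{prop:fh-mtd-drift} in place of the CTD drift.

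\emph{Constants.} First I will check the inequalities on the constants.
\begin{itemize}
\item From $r_{H,\mathrm M}^{\mathrm{fh}}\le2$ we get $a_{H,\mathrm M,1}^{\mathrm{fh}}\le2e^2$.
\item The bounds $a_{H,\mathrm M,2}^{\mathrm{fh}}=c_{H,\mathrm M,1}^{\mathrm{fh}}\ge\kappa_H/8$ and the lower bound on $\bar\alpha_{H,\mathrm M}$ are exactly the explicit statements already proved in Proposition~\ref{prop:fh-mtd-drift}.
\item For the ratio, write
\[
a_{H,\mathrm M,4}^{\mathrm{fh}}=2e^2(1+\kappa_H)\cdot2HC_{H,\mathrm M}^{\mathrm{pot}}\bigl(1+2\lVert U_{H,\mathrm M}^\star\rVert_{H,2,\infty}^2\bigr).
\]
Then use $1+\kappa_H\le2$, $C^{\mathrm{pot}}_{H,\mathrm M}\le\overline C^{\mathrm{pot}}_{H,\mathrm M}$ and divide by $\kappa_H/8$. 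This gives $64e^2H\overline C^{\mathrm{pot}}_{H,\mathrm M}(1+2\lVert U^\star\rVert^2)/\kappa_H$.
\item Polynomiality follows from $q_H\le e^2$ and $(p_H^\star-1)=O(1+\log(H\lvert\X\rvert))$ (Proposition~\ref{prop:pstar-choice}). It also uses that $C^{\mathrm{move}}_{H,\mathrm M}$ and $C^{\mathrm{end}}_{H,\mathrm M}$ are affine in $B^{\mathrm{res}}_{H,\mathrm M}$.
\end{itemize}

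\emph{Drift hypotheses.} The step-size hypothesis $\alpha_0\le a_{H,\mathrm M,2}^{\mathrm{fh}}/a_{H,\mathrm M,3}^{\mathrm{fh}}=\bar\alpha_{H,\mathrm M}$ is precisely what Proposition~\ref{prop:fh-mtd-drift} requires. Since $\alpha_0\le1/H$, every $\bar\alpha_m\le1$ and every $\alpha_k\le1$. Proposition~\ref{prop:fh-mtd-avg-contract} and Lemma~\ref{lem:fh-mtd-bias} therefore apply for every episode. By Lemma~\ref{lem:invariant-domain}, the iterates stay in $\mathcal D_{H,\mathrm M}$.

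\emph{Unrolling the drift.} I take total expectations in the drift inequality, using the tower property over the sigma-fields generated by $U_{m,0}$. Unrolling the linear recursion $x_{m+1}\le(1-a_2\bar\alpha_m)x_m+c_2\bar\alpha_m^{(2)}$ gives
\[
\mathbb E[M_{H,\mathrm M}(W_{m,0})]\le M_{H,\mathrm M}(W_{0,0})\prod_j(1-a_2\bar\alpha_j)+c_{H,\mathrm M,2}^{\mathrm{fh}}\sum_i\bar\alpha_i^{(2)}\prod_{j>i}(1-a_2\bar\alpha_j).
\]
Each factor $1-a_2\bar\alpha_j$ lies in $[0,1]$ because $a_2\le1$ and $\bar\alpha_j\le1$, so the unrolling is monotone-safe.

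\emph{Converting to the unweighted metric.} I use the envelope comparisons in both directions:
\[
\mathbb E\lVert W_{m,0}\rVert_{H,2,\infty}^2\le2(1+\vartheta_{H,\mathrm M}q_H)\,\mathbb E M_{H,\mathrm M}(W_{m,0}),\qquad M_{H,\mathrm M}(W_{0,0})\le\frac{\lVert W_{0,0}\rVert_{H,2,\infty}^2}{2(1+\vartheta_{H,\mathrm M})}.
\]
The horizonwise MMD isometry (Proposition~\ref{prop:mtd-isometry}, applied at each $(h,s)$) together with $\lambda^{-H}\le e$ gives
\[
\ell^{\mathrm{unw}}_{H,\mathrm M,\infty}(\eta_{mH},\eta^\star)^2\le e^2\lVert W_{m,0}\rVert_{H,2,\infty}^2,\qquad \lVert W_{0,0}\rVert_{H,2,\infty}\le\ell^{\mathrm{unw}}_{H,\mathrm M,\infty}(\eta_0,\eta^\star).
\]
Combining these gives the transient coefficient $e^2(1+\vartheta q_H)/(1+\vartheta)=a_{H,\mathrm M,1}^{\mathrm{fh}}$ and the residual coefficient $2e^2(1+\vartheta q_H)c_{H,\mathrm M,2}^{\mathrm{fh}}=a_{H,\mathrm M,4}^{\mathrm{fh}}$.

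\emph{Expected obstacle.} The substantive work, namely absorbing the iterate-dependent affine noise $Y_m^2$ into the negative drift, is already contained in Proposition~\ref{prop:fh-mtd-drift}. The only delicate point left is to confirm that the constant $c_{H,\mathrm M,2}^{\mathrm{fh}}$ is deterministic (it involves only $\lVert U^\star\rVert$), so that the recursion can be iterated after taking expectations.
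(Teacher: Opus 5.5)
Your proposal is correct and follows the paper's own proof. You take total expectations in the episodewise drift of Proposition~\ref{prop:fh-mtd-drift}, unroll it, and convert back using the two-sided Moreau-envelope comparison together with the weighted-to-unweighted factor $\lambda^{-H}\le e$; this yields exactly $a_{H,\mathrm M,1}^{\mathrm{fh}}=e^2r_{H,\mathrm M}^{\mathrm{fh}}$ and $a_{H,\mathrm M,4}^{\mathrm{fh}}=2e^2(1+\vartheta_{H,\mathrm M}q_H)c_{H,\mathrm M,2}^{\mathrm{fh}}$. Your explicit checks of the constant bounds, namely the ratio bound via $1+\kappa_H\le2$ and $a_{H,\mathrm M,2}^{\mathrm{fh}}\ge\kappa_H/8$, and $\bar\alpha_m\le1$, supply details the paper leaves implicit.
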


\begin{proof}
Iterating Proposition~\ref{prop:fh-mtd-drift} gives
\begin{equation}
\mathbb E\left[M_{H,\mathrm M}(W_{m,0})\right]\le
M_{H,\mathrm M}(W_{0,0})\prod_{j=0}^{m-1}\bigl(1-a_{H,\mathrm M,2}^{\mathrm{fh}}\bar\alpha_j\bigr)
+c_{H,\mathrm M,2}^{\mathrm{fh}}\sum_{i=0}^{m-1}\bar\alpha_i^{(2)}
\prod_{j=i+1}^{m-1}\bigl(1-a_{H,\mathrm M,2}^{\mathrm{fh}}\bar\alpha_j\bigr).
\end{equation}
The envelope comparison and $\lambda=H/(H+1)$ give
\begin{equation}
\ell_{H,\mathrm M,\infty}^{\mathrm{unw}}(\eta_{mH},\eta_{H,\mathrm M}^\star)^2
\le e^2\lVert W_{m,0}\rVert_{H,2,\infty}^2
\le2e^2(1+\vartheta_{H,\mathrm M}q_H)M_{H,\mathrm M}(W_{m,0}),
\end{equation}
while
\begin{equation}
M_{H,\mathrm M}(W_{0,0})\le\frac{1}{2(1+\vartheta_{H,\mathrm M})}
\ell_{H,\mathrm M,\infty}^{\mathrm{unw}}(\eta_0,\eta_{H,\mathrm M}^\star)^2.
\end{equation}
Substitution proves the claim.
\end{proof}

\begin{corollary}[Boundary-iterate step size consequences]
\label{cor:fh-mtd-steps}
Under the hypotheses of Proposition~\ref{prop:fh-mtd-abstract}:

\textup{(a)} if $\alpha_k \equiv \alpha$ and
\begin{equation}
\alpha \le \frac{a_{H,\mathrm M,2}^{\mathrm{fh}}}{a_{H,\mathrm M,3}^{\mathrm{fh}}},
\end{equation}
then for all episodes $m \ge 0$,
\begin{equation}
\mathbb E\left[ \ell_{H,\mathrm M,\infty}^{\mathrm{unw}}(\eta_{mH},\eta_{H,\mathrm M}^\star)^2 \right] \le a_{H,\mathrm M,1}^{\mathrm{fh}} \ell_{H,\mathrm M,\infty}^{\mathrm{unw}}(\eta_0,\eta_{H,\mathrm M}^\star)^2 \bigl(1-a_{H,\mathrm M,2}^{\mathrm{fh}}H\alpha\bigr)^m + \frac{a_{H,\mathrm M,4}^{\mathrm{fh}}}{a_{H,\mathrm M,2}^{\mathrm{fh}}}\alpha.
\end{equation}

For the two diminishing-step cases below, where $g$ is the step-size offset, write $\tau_m:=mH+g+H-1$, so $\tau_0=g+H-1$.

\textup{(b)} if $\alpha_k = \alpha/(k+g)$, $\alpha > 1/a_{H,\mathrm M,2}^{\mathrm{fh}}$, and
\begin{equation}
g \ge \max\left\{H-1,1,\alpha,\frac{\alpha a_{H,\mathrm M,3}^{\mathrm{fh}}}{a_{H,\mathrm M,2}^{\mathrm{fh}}}\right\},
\end{equation}
then the boundary iterates satisfy
\begin{equation}
\mathbb E\left[ \ell_{H,\mathrm M,\infty}^{\mathrm{unw}}(\eta_{mH},\eta_{H,\mathrm M}^\star)^2 \right] \le a_{H,\mathrm M,1}^{\mathrm{fh}}\ell_{H,\mathrm M,\infty}^{\mathrm{unw}}(\eta_0,\eta_{H,\mathrm M}^\star)^2 \left(\frac{\tau_0}{\tau_m}\right)^{a_{H,\mathrm M,2}^{\mathrm{fh}}\alpha}+\frac{a_{H,\mathrm M,4}^{\mathrm{fh}}H^2\alpha^2}{a_{H,\mathrm M,2}^{\mathrm{fh}}\alpha-1}\cdot\frac{1}{\tau_m}.
\end{equation}

\textup{(c)} if $\alpha_k = \alpha/(k+g)^z$ with $z \in (0,1)$ and
\begin{equation}
g \ge \max\left\{H-1,1,\alpha^{1/z}, \left(\frac{\alpha a_{H,\mathrm M,3}^{\mathrm{fh}}}{a_{H,\mathrm M,2}^{\mathrm{fh}}}\right)^{1/z}, \left(\frac{2z}{a_{H,\mathrm M,2}^{\mathrm{fh}}\alpha}\right)^{1/(1-z)} \right\},
\end{equation}
then the boundary iterates satisfy
\begin{equation}
\begin{gathered}
\mathbb E\left[ \ell_{H,\mathrm M,\infty}^{\mathrm{unw}}(\eta_{mH},\eta_{H,\mathrm M}^\star)^2 \right] \le a_{H,\mathrm M,1}^{\mathrm{fh}}\ell_{H,\mathrm M,\infty}^{\mathrm{unw}}(\eta_0,\eta_{H,\mathrm M}^\star)^2\cdot\exp\left(-\frac{a_{H,\mathrm M,2}^{\mathrm{fh}}\alpha}{1-z}\bigl(\tau_m^{1-z}-\tau_0^{1-z}\bigr)\right)\\+\frac{2a_{H,\mathrm M,4}^{\mathrm{fh}}H^2\alpha}{a_{H,\mathrm M,2}^{\mathrm{fh}}}\cdot\frac{1}{\tau_m^z}.
\end{gathered}
\end{equation}
\end{corollary}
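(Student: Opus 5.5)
The plan is to follow the proof of Corollary~\ref{cor:fh-ctd-steps} line by line. All that changes is which constants are plugged in: Proposition~\ref{prop:fh-mtd-abstract} replaces Proposition~\ref{prop:fh-ctd-abstract}. That proposition already gives the master recursion bound. Its hypotheses are a nonincreasing schedule, condition \eqref{eq:fh-step-regularity}, and $\alpha_0\le a^{\mathrm{fh}}_{H,\mathrm M,2}/a^{\mathrm{fh}}_{H,\mathrm M,3}=\bar\alpha_{H,\mathrm M}$. So the work reduces to three steps: check these hypotheses for each schedule, bound the episode masses $\bar\alpha_m$ and $\bar\alpha_m^{(2)}$, and evaluate the resulting product and product-sum.

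The first step is verifying the hypotheses. For a constant step, regularity is trivial, and the assumed bound on $\alpha$ is exactly the admissibility condition. For the diminishing schedules, $g\ge H-1$ gives
\[
\frac{\alpha_{mH+H-1}}{\alpha_{mH}}\ge\left(\frac{g}{g+H-1}\right)^z\ge\frac12 .
\]
The remaining admissibility condition $\alpha_0\le a_2/a_3$ follows from $g\ge\alpha a_3/a_2$ (or its $1/z$ power). Since $a_3/a_2\ge H\ge1$, the same bound also forces $\alpha_0\le1$ and $\bar\alpha_m\le1$, which the supporting lemmas need.

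The second step handles the constant step. Here $\bar\alpha_m=H\alpha$ and $\bar\alpha_m^{(2)}=H\alpha^2$, so the product is $(1-a_2H\alpha)^m$. The sum is a geometric series bounded by $H\alpha^2/(a_2H\alpha)=\alpha/a_2$, which gives (a).

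The third step handles the linear schedule. Monotonicity gives
\[
\frac{H\alpha}{\tau_m}\le\bar\alpha_m\le\frac{H\alpha}{mH+g},\qquad \bar\alpha^{(2)}_m\le\frac{H\alpha^2}{(mH+g)^2}.
\]
Set $q=\tau_0/H$, so that $\tau_m=H(m+q)$. Then $1-x\le e^{-x}$ together with an integral comparison gives $\prod_j(1-a_2\bar\alpha_j)\le(q/(m+q))^{a_2\alpha}$. The product-sum is handled by the standard estimate $\sum_i (i+q)^{-2}\bigl((i+1+q)/(m+q)\bigr)^{a_2\alpha}\lesssim 1/((a_2\alpha-1)(m+q))$. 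This yields a residual of order $H\alpha^2/((a_2\alpha-1)(m+q))$, and since $1/(m+q)=H/\tau_m$, it becomes the stated $a_4H^2\alpha^2/((a_2\alpha-1)\tau_m)$.

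The fourth step handles the polynomial schedule. Analogous bounds with exponent $z$, setting $A=a_2\alpha H^{1-z}$, give the exponential product $\exp\bigl(-\tfrac{A}{1-z}((m+q)^{1-z}-q^{1-z})\bigr)$, which equals the stated $\tau$-form. The product-sum is bounded by $2H\alpha^2/(A(m+q)^z)$ using the standard polynomial lemma. That lemma needs $q\ge(2z/A)^{1/(1-z)}$, which is implied by the last condition on $g$ because $A\ge a_2\alpha$ and $q\ge g/H$. I will double-check this implication, since the powers of $H$ must be tracked. Converting back with $(m+q)^{-z}=H^z\tau_m^{-z}$ and $H^{2z}\le H^2$ gives (c).

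I expect the main obstacle to be the bookkeeping of the $H$ factors when passing from episode index $m$ to $\tau_m$. In particular, the offset conditions must still put the elementary product-sum lemmas in their valid regime after rescaling by $H$. I would first try to redo the CTD conversions verbatim and confirm that nothing MTD-specific enters. The MTD-specific affine growth is already absorbed into $c^{\mathrm{fh}}_{H,\mathrm M,2}$ and $\bar\alpha_{H,\mathrm M}$ by Proposition~\ref{prop:fh-mtd-drift}.
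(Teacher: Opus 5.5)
Your plan is correct and follows the paper's proof essentially line by line. The shared steps are: regularity from $g\ge H-1$, substitution of $\bar\alpha_m=H\alpha$ and $\bar\alpha_m^{(2)}=H\alpha^2$ for (a), and for (b) and (c) the rescaling $q=\tau_0/H$, $\tau_m=H(m+q)$, $A=a_{H,\mathrm M,2}^{\mathrm{fh}}\alpha H^{1-z}$, the product and product-sum estimates, and $H^{2z}\le H^2$.

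Two small repairs are needed. First, the offset condition in (c) holds because $(2z/A)^{1/(1-z)}=H^{-1}\bigl(2z/(a_{H,\mathrm M,2}^{\mathrm{fh}}\alpha)\bigr)^{1/(1-z)}\le g/H\le q$; the inequality $A\ge a_{H,\mathrm M,2}^{\mathrm{fh}}\alpha$ alone does not give it. Second, to obtain the exact constant in (b) rather than an order bound or the $4e$ of the i.i.d.\ lemma, run the one-step induction $S_{m+1}\le\bigl(1-\frac{a_{H,\mathrm M,2}^{\mathrm{fh}}\alpha}{m+q}\bigr)S_m+\frac{H\alpha^2}{(m+q)^2}$ on the product-sum $S_m$, using $mH+g\ge m+q$; it closes with $S_m\le H\alpha^2/\bigl((a_{H,\mathrm M,2}^{\mathrm{fh}}\alpha-1)(m+q)\bigr)$.
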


\begin{proof}
For a constant step size, condition \eqref{eq:fh-step-regularity} is immediate. For either diminishing schedule, the requirement $g\ge H-1$ gives
\begin{equation}
\frac{\alpha_{mH+H-1}}{\alpha_{mH}}=\left(\frac{mH+g}{mH+g+H-1}\right)^z\ge\left(\frac{g}{g+H-1}\right)^z\ge\frac{1}{2},
\end{equation}
where $z=1$ in part \textup{(b)}. Thus the regularity hypothesis of Proposition~\ref{prop:fh-mtd-abstract} holds in all three cases. For part \textup{(a)}, substitution of $\bar\alpha_m = H\alpha$ and $\bar\alpha_m^{(2)} = H\alpha^2$ into Proposition~\ref{prop:fh-mtd-abstract} yields part \textup{(a)}.

For part \textup{(b)}, set $q=\tau_0/H$ and $\lambda=a_{H,\mathrm M,2}^{\mathrm{fh}}\alpha$. The bounds
\begin{equation}
\frac{H\alpha}{\tau_m} \le \bar\alpha_m \le \frac{H\alpha}{mH+g}, \qquad \bar\alpha_m^{(2)} \le \frac{H\alpha^2}{(mH+g)^2}.
\end{equation}
imply
\begin{equation}
\prod_{j=0}^{m-1}(1-a_{H,\mathrm M,2}^{\mathrm{fh}}\bar\alpha_j)\le \left(\frac{q}{m+q}\right)^\lambda.
\end{equation}
The same elementary product-sum estimate gives
\begin{equation}
\sum_{i=0}^{m-1}\bar\alpha_i^{(2)}\prod_{j=i+1}^{m-1}(1-a_{H,\mathrm M,2}^{\mathrm{fh}}\bar\alpha_j)\le \frac{H\alpha^2}{\lambda-1}\cdot\frac{1}{m+q},
\end{equation}
and the displayed bound follows from Proposition~\ref{prop:fh-mtd-abstract} after substituting $q=\tau_0/H$ and $\tau_m=H(m+q)$.

For part \textup{(c)}, keep the same $q$ and set $A=a_{H,\mathrm M,2}^{\mathrm{fh}}\alpha H^{1-z}$. The bounds
\begin{equation}
\frac{H\alpha}{\tau_m^z} \le \bar\alpha_m \le \frac{H\alpha}{(mH+g)^z}, \qquad \bar\alpha_m^{(2)} \le \frac{H\alpha^2}{(mH+g)^{2z}}.
\end{equation}
imply
\begin{equation}
\prod_{j=0}^{m-1}(1-a_{H,\mathrm M,2}^{\mathrm{fh}}\bar\alpha_j)\le \exp\left(-\frac{A}{1-z}\bigl((m+q)^{1-z}-q^{1-z}\bigr)\right).
\end{equation}
The lower bound on $g$ implies $q\ge (2z/A)^{1/(1-z)}$. The elementary polynomial product-sum estimate gives
\begin{equation}
\sum_{i=0}^{m-1}\bar\alpha_i^{(2)}\prod_{j=i+1}^{m-1}(1-a_{H,\mathrm M,2}^{\mathrm{fh}}\bar\alpha_j)\le \frac{2H\alpha^2}{A}\cdot\frac{1}{(m+q)^z}.
\end{equation}
Substituting the definitions of $A$ and $q$ into Proposition~\ref{prop:fh-mtd-abstract}, using $\tau_m=H(m+q)$, and using $H^{2z}\le H^2$, gives the displayed bound.
\end{proof}

\begin{proof}[Proof of Theorem~\ref{thm:undisc-mtd}]
Combine Proposition~\ref{prop:fh-mtd-abstract} with Corollary~\ref{cor:fh-mtd-steps}. Explicitly, take
\begin{equation}
\begin{gathered}
c_{\mathrm M}^H:=a_{H,\mathrm M,2}^{\mathrm{fh}},\qquad r_{H,\mathrm M}:=\frac{a_{H,\mathrm M,3}^{\mathrm{fh}}}{a_{H,\mathrm M,2}^{\mathrm{fh}}},\qquad \bar\alpha_{\mathrm M}^H:=\frac{1}{r_{H,\mathrm M}}, \\
C_{\mathrm M}^H:=\max\left\{a_{H,\mathrm M,1}^{\mathrm{fh}},\frac{a_{H,\mathrm M,4}^{\mathrm{fh}}}{a_{H,\mathrm M,2}^{\mathrm{fh}}},a_{H,\mathrm M,4}^{\mathrm{fh}}H^2,\frac{2a_{H,\mathrm M,4}^{\mathrm{fh}}H^2}{a_{H,\mathrm M,2}^{\mathrm{fh}}}\right\}, \\
g_{\mathrm M}^H(\alpha):=\left\lceil\max\left\{1,\alpha,\alpha r_{H,\mathrm M}\right\}\right\rceil, \\
g_{\mathrm M}^H(\alpha,z):=\left\lceil\max\left\{1,\alpha^{1/z},(\alpha r_{H,\mathrm M})^{1/z},\left(\frac{2z}{a_{H,\mathrm M,2}^{\mathrm{fh}}\alpha}\right)^{1/(1-z)}\right\}\right\rceil.
\end{gathered}
\end{equation}
Each main-text display follows directly from the corresponding part of Corollary~\ref{cor:fh-mtd-steps}.
\end{proof}

\begin{proof}[Proof of Corollary~\ref{cor:undisc-mtd}]
By Corollary~\ref{cor:fh-mtd-steps}\textup{(b)},
\begin{equation}
\mathbb E\left[ \ell_{H,\mathrm M,\infty}^{\mathrm{unw}}(\eta_{mH},\eta_{H,\mathrm M}^\star)^2 \right] = O\left(\frac{1}{m+1}\right).
\end{equation}
By Jensen's inequality,
\begin{equation}
\mathbb E\left[ \ell_{H,\mathrm M,\infty}^{\mathrm{unw}}(\eta_{mH},\eta_{H,\mathrm M}^\star) \right] = O\left(\frac{1}{\sqrt{m+1}}\right).
\end{equation}
Thus $m = O(\varepsilon^{-2})$ episodes suffice, with the explicit dependence on $H$ and coverage inherited from Corollary~\ref{cor:fh-mtd-steps}. The transition count is $k=mH$.
\end{proof}

\section{Further discussion of the results}\label{app:discus}

\noindent\textbf{Bounded and affine perturbation regimes.}
After the statewise isometric embeddings, both CTD and MTD fit the same asynchronous contractive stochastic-approximation template. The main difference is the geometry of the sampled perturbation. In the scalar categorical case, bounded supports together with the Cram\'er geometry yield uniform samplewise bounds in both the discounted and fixed-horizon undiscounted settings. In the discounted setting,
\begin{equation}
\lVert \widehat T_{\mathrm C}(U;s,(r,s'))-(\mathcal O_{\mathrm C}U)(s) \rVert_2 \le 2B_{\mathrm C}.
\end{equation}
In the fixed-horizon setting,
\begin{equation}
\lVert F_{H,\mathrm C}(U;s,(r,s')) \rVert_{H,2,\infty} \le 2B_{H,\mathrm C}.
\end{equation}
Thus CTD falls into a bounded-noise regime throughout the work. In particular, the i.i.d.\ conditional second moment is uniformly bounded in the discounted analysis, and the fixed-horizon episodewise argument inherits only support-radius constants. By contrast, in the multivariate signed-categorical case one obtains a centered affine perturbation bound. In the discounted setting,
\begin{equation}
\lVert \widehat T_{\mathrm M}(U;s,(r,s'))-(\mathcal O_{\mathrm M}U)(s) \rVert_2 \le 2\beta_{\mathrm M}\lVert U-U^\star \rVert_{2,\infty}+B_{\mathrm M}^\star,
\end{equation}
together with a local second-moment estimate
\begin{equation}
\mathbb E\left[ \lVert \widehat T_{\mathrm M}(U_k;S_k,(R_k,S'_k))-(\mathcal O_{\mathrm M}U_k)(S_k) \rVert_2^2 \,\mid\, U_k,S_k \right] \le A_{\mathrm M}^{\mathrm{iid}}\bigl(1+\lVert U_k-U^\star \rVert_{2,\infty}^2\bigr).
\end{equation}
In the fixed-horizon setting, the analogous residual bound is
\begin{equation}
\lVert F_{H,\mathrm M}(U;s,(r,s')) \rVert_{H,2,\infty} \le 2 \lVert U \rVert_{H,2,\infty}+B^{\mathrm{res}}_{H,\mathrm M},
\end{equation}
again with an affine conditional second-moment estimate. Consequently, the MTD theorem constants contain additional growth, bias, and absorption quantities, whereas the CTD bounds depend only on support-radius constants.

\noindent\textbf{Smoothing constants and the number of asynchronous blocks.}
The finite-iteration analysis is driven by a block-supremum contraction norm and a smoothed block-$\ell_p$ potential. The only explicit dimension loss in this smoothing step comes from the comparison between $\lVert \cdot \rVert_{2,\infty}$ and $\lVert \cdot \rVert_{2,p}$, and therefore depends only on the number of asynchronously updated blocks. In the discounted appendices, those blocks are indexed only by the state variable and the relevant count is $\lvert \X \rvert$. In the fixed-horizon appendices, the stacked process is first flattened over horizon and state, so the relevant count becomes $\lvert \mathcal S_H \rvert = H\lvert \X \rvert$. The inner representation dimension $d$, and in the multivariate case the reward dimension $q$, still affect problem-dependent constants, but they do not enter through the Moreau-envelope smoothing comparison itself.

\noindent\textbf{Discounted and fixed-horizon contraction mechanisms.}
The fixed-horizon undiscounted theorems results are not obtained by setting $\gamma=1$ in the discounted analysis. In the discounted setting, the contraction comes directly from the Bellman discount factor and each update modifies only one sampled state block. In the fixed-horizon setting there is no discount-driven contraction. Instead, the contraction is recovered by weighting the horizon layers and exploiting the fact that horizon $h$ bootstraps from horizon $h-1$. The corresponding online recursion is also structurally different: one sampled transition updates the entire horizon stack at the sampled state. Thus the fixed-horizon setting is different both in its contraction mechanism and in its stochastic approximation structure.

\noindent\textbf{Weighted and unweighted fixed-horizon errors.}
The fixed-horizon proofs use the weighted metric, but Theorems~\ref{thm:undisc-ctd} and~\ref{thm:undisc-mtd} are stated in the natural unweighted metric. If
\begin{equation}
\ell_{H,\infty}^{\mathrm{unw}}(\eta,\eta'):=\max_{1\le h\le H,\ s\in\X}\ell\bigl(\eta^h(s),{\eta'}^h(s)\bigr),
\end{equation}
then
\begin{equation}
\ell_{H,\infty}(\eta,\eta')\le\ell_{H,\infty}^{\mathrm{unw}}(\eta,\eta')\le\lambda^{-H}\ell_{H,\infty}(\eta,\eta').
\end{equation}
With $\lambda=H/(H+1)$ the conversion factor satisfies $\lambda^{-H}\le e$, hence its dependence on H is uniformly bounded. The contraction gap is $1-\lambda=1/(H+1)$, which enters the explicit constants polynomially through $\kappa_H=\rho_{H,\min}/(2(H+1))$. Each sampled transition updates all $H$ horizon layers, so a dense implementation has arithmetic cost proportional to $H$ times the cost of one categorical block update.

\noindent\textbf{Interpretation of the step size regimes.}
Across all theorem statements, constant step sizes yield geometric convergence to a controllable $O(\alpha)$ neighborhood of the projected fixed point. Linearly-diminishing step sizes yield $O(1/k)$ squared error and hence $O(\varepsilon^{-2})$ sample complexity for mean error at most $\varepsilon$. Polynomially-diminishing step sizes trade a weaker asymptotic decay for milder admissibility requirements. In the discounted Markovian case, the Poisson-equation argument preserves these rates without a logarithmic mixing-time factor. The constant $A_Q$ records the temporal dependence of the trajectory.

\end{document}